\documentclass{article}

\pdfoutput=1

\usepackage[utf8]{inputenc} % allow utf-8 input
\usepackage[T1]{fontenc}    % use 8-bit T1 fonts
\PassOptionsToPackage{hyphens}{url} 
\usepackage[hyperfootnotes=false]{hyperref}
\usepackage{booktabs}       % professional-quality tables
\usepackage{amsfonts}       % blackboard math symbols
\usepackage{nicefrac}       % compact symbols for 1/2, etc.
\usepackage{microtype}      % microtypography
\usepackage{xcolor}         % colors
\usepackage[hang,flushmargin]{footmisc}
\usepackage{algorithm}
\usepackage{algpseudocode}
\usepackage{graphicx}
\usepackage{enumerate}
\usepackage{caption}
\usepackage{subcaption}
\usepackage{comment}
\usepackage[export]{adjustbox}
\usepackage{mathtools}
\usepackage{amsthm}
\usepackage{authblk}
\usepackage{fullpage}
\usepackage{bbm}
\usepackage{enumitem}

\usepackage[sort&compress]{natbib}

\usepackage{tikz}
\usetikzlibrary{positioning}
\usepackage[HTML]{xcolor}
\usepackage{tcolorbox}

\usepackage{Definitions}
\usepackage{dsfont}

\definecolor{mycommentcolor}{HTML}{4f9739}
\algrenewcommand\algorithmiccomment[1]{\hfill\textcolor{mycommentcolor}{\(\triangleright\) #1}}

\newlength{\inlineheight}
\title{Learning the Cost of Reliable Inference}

\author{Dimitrios Rontogiannis}
\author{Ander Artola Velasco}
\author{Manuel~Gomez~Rodriguez}

\affil{Max Planck Institute for Software Systems \\ Kaiserslautern, Germany \\
\{drontogi, avelasco, manuel\}@mpi-sws.org}

\date{}

\begin{document}

\maketitle

\begin{abstract}
% 1. Benchmarking and routing platforms
%
Benchmarking and routing platforms increasingly act as intermediaries connecting large language model providers with end-users.
%
% 2. Single price per token, not the most competitive price per task.
%
However, providers on these platforms typically use a fixed price per token, preventing users from achieving the most competitive price for their tasks. % workloads. 
%
% 3. Our platform: competitive price for guaranteed quality level
%
In this work, we design a procurement platform where token prices for each task are driven by provider competition, enabling users to secure competitive pricing for guaranteed quality levels.
%
% 4. Reserve second-price auction: truthfulness
%
To this end, the platform sequentially routes queries via a reverse second-price auction that incentivizes model providers to truthfully bid their best estimate of the average cost to serve a user's query.
%
% 5. Learning mechanism under truthfulness
%
As it routes queries, the platform learns the quality offered by each provider and progressively routes queries to the most cost-competitive provider among those meeting a desired quality threshold.
%
% 6. Experiments
%
To validate our design, we conduct experiments with multiple LLMs from the \texttt{Llama} and \texttt{Qwen} families on popular mathematical reasoning and question-answering benchmarks. 
The results show that the pricing margin of the most cost-competitive provider on our platform varies significantly---from $10\%$ to $71\%$---depending on the task and quality threshold.
This suggests a substantial inefficiency in the current fixed-price market, and it demonstrates that our platform may enable users to capture maximum savings whenever competitive market conditions permit.
\end{abstract}

\section{Introduction}
\label{sec:intro}
The rapid proliferation of large language models (LLMs) has led to a highly competitive market of online service providers offering models of varying capabilities at different prices.
To help users navigate their choice of models, benchmarking and routing platforms have become popular, letting users compare model performance on well-established tasks and even automatically selecting a provider to serve their queries.\footnote{See, \eg, \url{https://artificialanalysis.ai}, \url{https://openrouter.ai}, and \url{https://thegrid.ai}.\label{fn:platforms}}

However, existing platforms do not allow users to achieve the most competitive price for their tasks. This is because providers on these platforms typically charge users a fixed price per token,\footnote{\url{https://ai.google.dev/gemini-api/docs/pricing}, \url{https://openai.com/api/pricing/}, \url{https://www.claude.com/pricing}.} which fails to reflect that a provider's cost to serve a model and the response quality the user obtains in return depend heavily on the specific task.
As a concrete illustration, consider a user who wants to use an LLM to assist with medical diagnosis and can choose between two providers, each offering a different model. Suppose the providers charge $\$0.06$ and $\$0.07$ per token, while generating a token costs them $\$0.01$ and $\$0.04$, respectively. Then, if both models produce responses of similar quality and number of tokens for the user's task, the user---acting rationally---would select the cheapest model and pay $\$0.06$ per token. 
However, if the providers competed on price for the user's task, the user could in principle pay as little as $\$0.04$ per token: the first provider would lower its per-token price down to this point, since below this the second provider---which cannot profitably price below its own cost---would no longer be a competitive alternative~\citep{MILGROM1982280, mahmood2024pricing, bergemann2025menu}.
%
%  Contributions

In this work, we design a procurement platform that addresses this price inefficiency, enabling users to secure competitive pricing on their tasks for guaranteed quality levels. 
To this end, the platform sequentially routes queries via a reverse second-price auction that incentivizes providers to truthfully bid their best estimate of the average cost to serve a user's query. 
As it routes queries, the platform learns the quality offered by each provider and progressively routes queries to the most cost-competitive provider among those meeting a desired quality threshold.
More formally, we show that, with high probability, the platform routes all but a vanishing fraction of the user's queries to the most cost-competitive provider, whose (average) pricing margin converges to the difference between the second-lowest average cost among qualified providers and its own average cost.

% Experiments
%
To validate our platform, we conduct experiments with multiple LLMs from the \texttt{Llama} and \texttt{Qwen} families on popular mathematical reasoning and question-answering benchmarks. 
We find that the pricing margin of the most cost-competitive provider varies significantly---from $10\%$ to $71\%$---depending on the task and quality threshold.
This suggests a substantial inefficiency in the current fixed-price market, and it demonstrates that our platform may enable users to capture maximum savings whenever competitive market conditions permit.

%
%  Further related work
%
\xhdr{Further related work}
%
% 1. Economics of generative AI
Our work contributes to a rapidly growing body of literature on the economic aspects of LLMs-as-a-service~\citep{raghavan2024competition, mahmood2024pricing, la2024language, LauferFine, Bergemann, bergemann2025menu, olmedo2026computational, velasco2025auditingpaypertokenlargelanguage, velasco2026overcharging, velasco2026ttcgames}. Within this literature, the works most closely related to ours characterize how LLM providers price their models when serving users in a competitive market. 
Specifically,~\citet{mahmood2024pricing} study the price competition between providers serving models of varying quality across different tasks, while \citet{bergemann2025menu} characterize the menu of token prices that maximizes a provider's revenue across heterogeneous users.
%1.1 Economic evaluation of models
Our work also relates to a recent line that evaluates LLMs not only by their capabilities but also by the price users pay~\citep{wang2024reasoning, zellinger2025economicevaluationllms, erol2026costofpass, chen2026price}.
%
%1.2 Routing platforms and compute as a commodity
This alternative view has very recently materialized in already-deployed platforms: some platforms 
publish provider rankings that explicitly factor in their prices, while others act as routers that lower the price by letting providers compete on their rates.\footref{fn:platforms}
In contrast to these solutions, our platform  explicitly ensures that the user asymptotically pays the most competitive market price for their specific task at guaranteed quality levels.

% 2. Dynamic/sequential auctions
Our work also builds on a large body of literature on mechanism and auction design~\citep{McAfee1986-cx, ledyard1987incentive, Che1993, bar2002incentive, roughgarden2017price, Akbarpour2020, AWAYA2025103083, CAI2026130083}. In this context, our sequential setting connects to a strand on dynamic auctions, where bidders sequentially learn private information, such as their type, that determines their utility, and the objective is to design mechanisms that ensure truthful bidding over multiple time steps~\citep{Hossain08, PavanDynamic2012, pmlr-v49-weed16, FengLearning, Bergemann2019, DirkData}. To the best of our knowledge, however, we are the first to consider a setting where LLM providers act as bidders whose operational costs and serving qualities vary across tasks.

% 3. Multiarmed-bandits with strategic arms
Lastly, at a technical level, our work relates to a strand of the multi-armed bandit literature in which the learner must choose among alternatives that are themselves strategic agents seeking to maximize their utility, requiring learning algorithms that explicitly account for these incentives~\citep{Auer2002-xv, evendar06a, Babaioff2009-tj, Bubeck2012-mz, jain2015incentivecompatiblemultiarmed, pmlr-v99-braverman19b, Lattimore2020-gm, pmlr-v119-freeman20a, Padmanabhan2022-xv}. 
Therein, the closest works to ours are those that study procurement auctions through a bandit lens~\citep{Singla2013-mp, chen2026contextual, Patra20264}. Specifically, both~\citet{chen2026contextual} and~\citet{Patra20264} formulate procurement as a contextual bandit problem, with the latter studying the selection of LLM providers through a reverse auction where providers report their generation costs and the user learns query-dependent model quality. 
Unlike \citet{Patra20264}, however, we do not assume that providers know their model's cost on the user's task, and our payment rule asymptotically converges to the most competitive price, equal to the second-lowest average cost among qualified providers. 
In a similar LLM routing setting,~\citet{Cao26} instead design an allocation and payment mechanism that disincentivizes providers from misreporting their capabilities or manipulating the token bills charged to the user.

\section{A Procurement Platform for Reliable Inference}
\label{sec:model}
In this section, we introduce a procurement platform that acts as an intermediary between a user who seeks to solve a specific task (\eg, coding, mathematical reasoning, or question answering) and a set of $N$ LLM providers.
By incentivizing providers to compete on price, the platform ensures the user secures the most competitive pricing at a guaranteed quality level on all but a vanishing fraction of queries.

The platform operates sequentially over time steps $t = 1, 2, \dots, T$, where the total number of queries $T$ is unknown to the providers but assumed to be at least the number of providers $N$. 
At any given time step, each provider participates in the platform by maintaining a standing bid $b_{i,t} \in [0, c_{\max}]$, where $c_{\max} > 0$ is an upper bound on the cost of serving a single query.
Given the current bid vector $\mathbf{b}_t = (b_{1,t}, \dots, b_{N,t})$, the platform receives a query $x_t \in \Xcal$ that the user wishes to solve, where $\Xcal$ is the space of possible queries characterizing the user's task, and selects the provider $I_t \in [N]$ that serves it. 
Provider $I_t$ generates a response $y_t \in \Ycal$ to the query, where $\Ycal$ is the space of possible responses,\footnote{In practice, the provider generates $y_t$ by sampling a sequence of tokens from its model, and so $\Ycal$ is the set of finite-length token sequences.} and, by doing so, it incurs a generation cost $c_{I_t,t} \in [0, c_{\max}]$, which includes the cost of energy, compute, or hardware.
The platform then grades the response using a score $r(x_t, y_t) \in [0, 1]$, ranging from unsatisfactory $(0)$ to fully satisfactory $(1)$, and sets the payment the user makes to provider $I_t$. 
Finally, provider $I_t$ may update its standing bid to $b_{I_t, t+1}$, while the bid of every other provider remains unchanged: $b_{i, t+1} = b_{i,t}$ for all $i \neq I_t$.
%
% As detailed below, both the provider selection mechanism and the payment rule depend strictly on the historical bids and scores observed up to time step $t-1$.
%
In what follows, we denote the distribution of the query as $P^{x}$, the distribution of the cost incurred by provider $i$ as $P^{c}_i$, and the distribution of the score received by provider $i$'s response as $P^{r}_i$, with a mean $q_i \in [0,1]$ that we refer to as the (average) quality of the provider.

We now describe the rules used by the platform to allocate queries and pay the providers. Our key idea is to combine an allocation rule based on confidence bounds---which progressively narrows the search to the \emph{qualified} provider offering the lowest price to the user---with a second-price payment rule that incentivizes providers to bid as low as possible without risking an unprofitable win. Specifically, the platform begins with an initialization phase spanning the first $N$ time steps, during which it assigns one query to each provider in turn and pays them $c_{\max}$. This ensures every provider serves at least one query, rendering the estimates below well-defined from round $N + 1$ onward.

Thereafter, at each round $t > N$, the platform maintains a running estimate of each provider's
quality,
\begin{equation}\label{eq:quality-estimate}
    \hat{q}_{i,t} = \frac{1}{m_{i,t}} \sum_{\tau=1}^{t-1} r(x_\tau, y_\tau)\, \mathbbm{1}\{I_\tau = i\},
\end{equation}
where $m_{i,t} = \sum_{\tau=1}^{t-1} \mathbbm{1}\{I_\tau = i\}$ counts how many times provider $i$ has been selected before round $t$. The goal of these estimates is to identify the set of qualified providers $\Qcal = \{i \in [N]: q_i \geq q_{\min}\}$, meeting a user-defined quality threshold $q_{\min}$, which we assume contains at least two providers\footnote{This assumption is needed only for the analysis. If a single provider is estimated to be qualified, the minimum in Eq.~\ref{eq:critical-payment} is over an empty set and the provider is paid $c_{\max}$.}. Since the true qualities $q_i$ are unknown, the platform instead forms an optimistic estimate of $\Qcal$,
\begin{equation}\label{eq:qualified-set}
    \hat{\Qcal}_t = \bigl\{ i \in [N] : \hat{q}_{i,t} + \beta(m_{i,t}) \geq q_{\min} \bigr\},
    \quad \text{where} \quad
    \beta(m_{i,t}) = \sqrt{\frac{1}{2 m_{i,t}} \ln\!\left(\frac{2\pi^2 N\, m_{i,t}^2}{3\delta}\right)}.
\end{equation}
Here, $\beta(m_{i,t})$ is a confidence radius at a level $\delta \in (0,1)$ fixed by the platform that accounts for the error in the empirical estimate $\hat{q}_{i,t}$, so that $\hat{\Qcal}_t$ contains every provider whose upper confidence bound on quality exceeds the threshold $q_{\min}$. In Appendices~\ref{app:good-event} and~\ref{app:eligibility-exploration}, we show that, with probability at least $1 - \delta$, this optimistic estimate never excludes a qualified provider, \ie, $\Qcal \subseteq \hat{\Qcal}_t$ holds simultaneously across all time steps $t > N$.

Using the set $\hat{\Qcal}_t$, the platform allocates the query by prioritizing providers with lower bids while giving an explicit advantage to those that have not yet been selected often.
More concretely, let $g(t)$ be an increasing function specified by the platform that grows sublinearly in $t$, and $\Scal_t = \{\, i \in \hat{\Qcal}_t: m_{i,t} < g(t) \,\}$ denote the set of estimated-qualified providers selected fewer than $g(t)$ times so far. Then, at each round $t > N$, the platform selects:\footnote{If $\hat{\Qcal}_t = \emptyset$, then $I_t$ is chosen uniformly at random. However, this occurs with probability at most $\delta$ (see Appendices~\ref{app:good-event} and~\ref{app:eligibility-exploration}).}
\begin{equation}\label{eq:allocation}
    I_t =
    \begin{dcases}
        \text{Uniform}\left( \Scal_t \right)
        & \text{if } \Scal_t \neq \emptyset, \\[10pt]
        \argmin_{i \in \hat{\Qcal}_t}\, \bigl( b_{i,t} - c_{\max}\, \beta(m_{i,t}) \bigr)
        & \text{otherwise,}
    \end{dcases}
\end{equation}
with ties broken arbitrarily. That is, if $\Scal_t \neq \emptyset$, which we refer to as an exploration step, the platform selects at random an estimated-qualified provider that has not been sampled sufficiently according to the function $g(t)$. 
Otherwise, the rule selects the provider minimizing an optimistic bidding index, where the term $c_{\max}\,
\beta(m_{i,t})$ gives an additional advantage to providers whose confidence radius remains large. 
Altogether, this allocation rule ensures that qualified providers are selected often enough for the platform to reliably estimate their quality and, as we will
see in Section~\ref{sec:provider}, for the providers to learn their own costs.

To design the payment rule, we build on the classical second-price rule from the auction and mechanism design literature~\citep{Vickrey, Nisan2011-op}. Specifically, for any provider $i$, we define the critical payment $P_{i, t}$ as  
\begin{equation}\label{eq:critical-payment}
     P_{i,t} = c_{\max}\, \beta(m_{i,t}) + \min_{j \in \hat{\Qcal}_t \setminus \{i\}}
    \bigl( b_{j,t} - c_{\max}\, \beta(m_{j,t}) \bigr),
\end{equation}
where the minimum over an empty set is taken to be $+\infty$ by convention. The actual payment made to the selected provider $I_t$ is then set to
\begin{equation}\label{eq:payment}
    \pi_{t} =
    \begin{dcases}
        c_{\max}& \text{if } \Scal_t \neq \emptyset, \\
         \min \left\{P_{I_t, t}, c_{\max} \right\}& \text{otherwise}.
    \end{dcases}
\end{equation}
In words, in an exploration step, the provider selected uniformly at random from the set $\Scal_t$ is paid $c_{\max}$. Otherwise, the selected provider $I_t$ receives a payment that depends on how often the providers have been selected in the past and on the other providers' bids $\mathbf{b}_{-I_t,t} = (b_{j,t})_{j \neq I_t}$, but not on its own bid $b_{I_t, t}$. 
Overall, the complete operation of our procurement platform is summarized in Algorithm~\ref{alg:platform}.

\setlength{\textfloatsep}{10pt}
\begin{algorithm}[t]
\caption{It routes the user's queries to a qualified provider at the lowest competitive price.}
\label{alg:platform}
\begin{algorithmic}[1]

\State \textbf{Input:} providers $[N]$, quality threshold $q_{\min}$, confidence level $\delta$,
allocation parameters $k, \alpha$, user-defined score $r$, cost bound $c_{\max}$
\vspace{2pt}

\State \textbf{Initialization:} assign one query to each provider $i \in [N]$, pay them $c_{\max}$,
and receive their first standing bid $b_i$
\State Set $m_i \gets 1$ and $\hat{q}_i \gets r(x_i, y_i)$ for all $i \in [N]$
\vspace{2pt}

\For{$t = N+1, N+2, \dots$}
    \State Receive user query $x_t \sim P^{x}$
    \State $\beta_i \gets \sqrt{\tfrac{1}{2 m_i}\ln\!\big(\tfrac{2\pi^2 N m_i^2}{3\delta}\big)}$
    for all $i \in [N]$ \Comment{confidence radius}
    \State $\hat{\Qcal} \gets \{\, i \in [N] : \hat{q}_i + \beta_i \geq q_{\min} \,\}$
    \Comment{optimistic qualified set}
    \State $g \gets k\,(t/N)^{\alpha}$
    \Comment{target selection count}
    \If{$\hat{\Qcal} = \emptyset$}
        \State $I_t \gets$ a provider chosen uniformly at random from $[N]$;\quad
        $\pi_t \gets c_{\max}$
    \ElsIf{$m_i < g$ for some $i \in \hat{\Qcal}$}
        \State $I_t \gets$ a provider chosen uniformly at random from $\{\, i \in \hat{\Qcal}
        : m_i < g \,\}$;\quad $\pi_{t} \gets c_{\max}$
        
    \Else
        \State $I_t \gets \argmin_{i \in \hat{\Qcal}}\, \big( b_i - c_{\max}\, \beta_i \big)$
        \Comment{allocation rule, ties broken arbitrarily}
        \State $P \gets c_{\max}\, \beta_{I_t} + \min_{j \in \hat{\Qcal} \setminus \{I_t\}}
        \big( b_j - c_{\max}\, \beta_j \big)$
        \State $\pi_{t} \gets \min\{ P, c_{\max} \}$
        \Comment{second-price payment}
    \EndIf
    \State The user pays provider $I_t$ the amount $\pi_{t}$
    \State Provider $I_t$ serves the query, returning a response $y_t$
    \State Observe the score $r(x_t, y_t)$ and set
    $\hat{q}_{I_t} \gets \tfrac{m_{I_t}\, \hat{q}_{I_t} + r(x_t, y_t)}{m_{I_t} + 1}$ and
    $m_{I_t} \gets m_{I_t} + 1$
    \State Provider $I_t$ may revise its standing bid $b_{I_t}$; all other bids are unchanged
\EndFor

\end{algorithmic}
\end{algorithm}

\section{Pricing Inference Through Competition}
\label{sec:provider}
In this section, we study the incentives that providers face when participating in our procurement platform and characterize their optimal bidding strategy. 
As a first step, we analyze the conditions under which a provider is selected to serve a query as a function of their bid. 
Concretely, the following proposition shows that, when $\mathcal{S}_t = \emptyset$, a provider's assignment is determined by whether their standing bid falls below the critical payment $P_{i,t}$.\footnote{All proofs are deferred to Appendix~\ref{app:proofs}.}
\begin{proposition}\label{prop:selection-threshold}
    Fix a step $t > N$ with $\mathcal{S}_t = \emptyset$, a provider $i \in \hat{\mathcal{Q}}_t$, and the other providers’ bids $\mathbf{b}_{-i, t}$. Then, provider $i$ is assigned the query $x_t$ if they bid $b_{i,t} < P_{i,t}$, and is not assigned the query if $b_{i,t} > P_{i,t}$.
\end{proposition}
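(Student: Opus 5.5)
The argument is a direct unwinding of the allocation rule in Eq.~\eqref{eq:allocation} in the non-exploration branch, followed by a comparison of optimistic indices. Since $\mathcal{S}_t = \emptyset$ and $i \in \hat{\mathcal{Q}}_t$, the set $\hat{\mathcal{Q}}_t$ is nonempty. The platform therefore selects $I_t \in \argmin_{j \in \hat{\mathcal{Q}}_t} \bigl( b_{j,t} - c_{\max}\,\beta(m_{j,t}) \bigr)$. Write the index $\phi_{j,t} = b_{j,t} - c_{\max}\,\beta(m_{j,t})$, and let $\phi^{\ast}_{-i,t} = \min_{j \in \hat{\mathcal{Q}}_t \setminus \{i\}} \phi_{j,t}$, with the convention $+\infty$ for an empty set.

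First I will record that the bid $b_{i,t}$ does not affect $\hat{\mathcal{Q}}_t$, $\mathcal{S}_t$, or any confidence radius $\beta(m_{j,t})$. These depend only on past scores and selection counts. Hence $\phi^{\ast}_{-i,t}$ depends only on $\mathbf{b}_{-i,t}$ and the history. By Eq.~\eqref{eq:critical-payment}, $P_{i,t} = c_{\max}\,\beta(m_{i,t}) + \phi^{\ast}_{-i,t}$. The condition $b_{i,t} < P_{i,t}$ is therefore equivalent to $\phi_{i,t} < \phi^{\ast}_{-i,t}$, and likewise with the strict inequalities reversed. This is a subtraction of the finite quantity $c_{\max}\beta(m_{i,t})$ from both sides; if $\phi^{\ast}_{-i,t} = +\infty$, both sides are trivially consistent.

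Second, I will treat the two cases.
\begin{itemize}
\item If $b_{i,t} < P_{i,t}$, then $\phi_{i,t} < \phi_{j,t}$ for every $j \in \hat{\mathcal{Q}}_t \setminus \{i\}$. So $i$ is the unique minimizer of the index, tie-breaking is irrelevant, and $I_t = i$.
\item If $b_{i,t} > P_{i,t}$, then $\phi^{\ast}_{-i,t}$ must be finite. Otherwise $P_{i,t} = +\infty$ and the inequality could not hold. Some $j \neq i$ in $\hat{\mathcal{Q}}_t$ then attains $\phi_{j,t} < \phi_{i,t}$, so $i$ is not a minimizer and $I_t \neq i$.
\end{itemize}
Ties at $b_{i,t} = P_{i,t}$ are excluded by the statement, consistent with arbitrary tie-breaking.

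There is no real obstacle; the only care point is the empty-set and $+\infty$ convention in the second case. One should also note that the payment's truncation at $c_{\max}$ in Eq.~\eqref{eq:payment} is irrelevant here, because the proposition concerns the untruncated $P_{i,t}$ and allocation alone.
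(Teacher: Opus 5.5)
Your proposal is correct and follows the paper's proof. Both arguments unwind the non-exploration branch of the allocation rule and rearrange the strict comparison $b_{i,t} - c_{\max}\beta(m_{i,t}) < \min_{j \in \hat{\mathcal{Q}}_t \setminus \{i\}} \bigl(b_{j,t} - c_{\max}\beta(m_{j,t})\bigr)$ into $b_{i,t} < P_{i,t}$, and symmetrically for $>$. Your explicit treatment of the empty-set $+\infty$ convention and your remark that $\hat{\mathcal{Q}}_t$, $\mathcal{S}_t$ and the radii do not depend on $b_{i,t}$ are extra care that the paper leaves implicit.
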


The above proposition does not fully characterize how a provider \emph{should} bid; however, it already offers valuable intuition. When $\mathcal{S}_t \neq \emptyset$, the query is assigned independently of the bids $\mathbf{b}_t$, meaning a provider's bid only influences the allocation when $\Scal_t = \emptyset$.
In this case, a provider is assigned the query only if it bids below the threshold $P_{i,t}$. 
Because this threshold depends on the other providers' bids, it is unknown to provider $i$ in advance, which incentivizes it to bid as low as possible.
Conversely, once a provider bids low enough to secure the query, due to the payment rule in Eq.~\ref{eq:payment}, their final payout remains independent of their own bid.
Together, these properties suggest that providers are incentivized to bid truthfully according to the generation costs they incur. In the remainder of this section, we formalize this intuition and characterize the resulting equilibrium.

We cast the interaction among the $N$ providers on the platform as a multi-stage game~\citep{fudenberg1991game}. 
In this game, each provider $i$ is characterized by its score and cost distributions $\theta_i = (P^r_i, P^c_i)$ on the user's task, which need not be known even to the provider itself; we write $\thetab = (\theta_1, \dots, \theta_N)$.\footnote{In the game theory literature, this corresponds to an incomplete-information setting in which $\theta_i$ is the (private) \emph{type} of each provider~\citep{Harsanyi, Hossain08, FengLearning}.} Moreover, its action in the game corresponds to choosing a bidding strategy $\sigma_{i}$ that determines its standing bid $b_{i,t}$ based on its observed history $H_{i,t}$, which collects the costs $c_{i,\tau}$ and scores $r(x_{\tau}, y_{\tau})$ of every query that provider $i$ has served before time step $t$, \ie,\footnote{Our theoretical results also hold if each provider $i$ is allowed to use a time-varying bidding strategy $\sigma_{i,t}$. This is because, as we will show later, our notion of bidding strategy optimality applies to any arbitrary strategy at any time $t$. 
However, to lighten the notation, we focus on time-invariant bidding strategies mapping histories to bids.}
\begin{equation}\label{eq:provider-history}
    b_{i,t} = \sigma_{i}(H_{i,t}) \quad \text{where} \quad H_{i,t} = \bigl(c_{i,\tau}, r(x_{\tau}, y_{\tau}) \bigr)_{\tau < t \,:\, I_\tau = i}.
\end{equation}
In what follows, we denote the providers' overall history as $H_t$ and say that provider $i$'s history $H_{i,t}$ is compatible with $H_t$ if it is contained in $H_t$.

Crucially, although a provider does not necessarily know its true mean cost $\bar{c}_i$ of serving a user query, it can use its observed history to compute an estimate of the conditional mean cost of serving the next query, $\mathbb{E}\bigl[c_{i,t} \mid H_{i,t}\bigr]$.
For ease of exposition, we focus on providers that compute this estimate as the empirical average of past costs, given by 
\begin{equation*}
\hat{c}_{i,t} = \frac{1}{m_{i,t}} \sum_{\tau = 1}^{t-1} c_{i,\tau}\,\mathbbm{1}\{ I_{\tau} = i\}.    
\end{equation*}
In Appendix~\ref{app:bayesian}, we show that Theorem~\ref{thm:robust-dominant} holds for any cost estimator and, if the platform widens the cost radius $c_{\max}\beta(\cdot)$ to $(1 + \gamma)c_{\max}\beta(\cdot)$, 
Theorems~\ref{thm:sample-complexity} and~\ref{thm:regret} hold for providers that compute $\hat{c}_{i,t}$ through Bayesian estimators whose estimates deviates from the empirical mean by at most a constant multiple $\gamma$ of the confidence radius.

Then, at time step $t > N$, the utility that a provider $i$ expects to receive from participating in the platform under any possible history $H_t$ that is compatible with its observed history $H_{i,t}$ is the conditional average of payments received minus the mean cost it estimates to incur by serving queries over the remaining horizon from $t$ to $T$, \ie,
\begin{equation}\label{eq:utilities}
    U_{i \mid H_t}(\sigma_i, \boldsymbol{\sigma}_{-i}; \thetab_{-i}, T)
    = \mathbb{E}\left[  \sum_{s=t}^{T} \left( \pi_{s}
      - \hat{c}_{i,s} \right) \cdot \mathbbm{1}\{I_s = i\} \given H_{t} \right],
\end{equation}
where $\boldsymbol{\sigma}_{-i} = (\sigma_j)_{j \neq i}$ denotes the strategies of all providers other than $i$, and the expectation is over the randomness in the providers' generations conditional on the history $H_{t}$, which depends implicitly on $\thetab$.

Having formalized the providers' interaction as a game, we now turn to characterizing their equilibrium bidding strategies. Specifically, since a provider $i$ knows neither the score and cost distributions $\thetab_{-i}$ characterizing the other providers nor the total number of user queries $T$, we focus on strategies that maximize its utility under this uncertainty using the framework of robust game theory~\citep{AghassiRobust}.
Formally, the next theorem shows that the strategy $\sigma_i^*$ that bids the provider's cost estimate for the next query, \ie,
\begin{equation}\label{eq:truthful-strategy}
    \sigma_i^*(H_{i,t}) = \hat{c}_{i,t},
\end{equation}
offers every provider the best worst-case utility over its uncertainty about $\thetab_{-i}$ and $T$ for any given history $H_t$ that is compatible with its observed history $H_{i,t}$.
\begin{theorem}\label{thm:robust-dominant}
    For any provider $i \in [N]$, 
    any strategy profile $\boldsymbol{\sigma}_{-i}$ of all other providers, 
    and any history $H_t$ with $t > N$ that is compatible with provider $i$'s observed history $H_{i, t}$, 
    the strategy $\sigma_i^*$ satisfies:
    \begin{enumerate}[label=\roman*)]
        \item for every $T \geq t$ and $\thetab_{-i}$, $U_{i \mid H_t}(\sigma^*_i, \boldsymbol{\sigma}_{-i}; \thetab_{-i}, T) \geq 0$;
        \item for every $\sigma_i$, $\inf_{T \geq t,\, \thetab_{-i}} U_{i \mid H_t}(\sigma^*_i, \boldsymbol{\sigma}_{-i}; \thetab_{-i}, T) \geq \inf_{T \geq t,\, \thetab_{-i}} U_{i \mid H_t}(\sigma_i, \boldsymbol{\sigma}_{-i}; \thetab_{-i}, T)$;
        \item for every $\sigma_i$, if $\Scal_t = \emptyset$, $i \in \hat{\Qcal}_t$ and $(\sigma^*_i(H_{i,t}) - P_{i,t}) (\sigma_i(H_{i,t}) - P_{i,t}) < 0$, the inequality in ii) is strict.
    \end{enumerate}
\end{theorem}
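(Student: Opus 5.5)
The approach is to decompose the utility in Eq.~\ref{eq:utilities} round by round and show that $\sigma_i^*$ never produces a negative per-round term. The worst case over $(T,\thetab_{-i})$ is then pinned at a level that no other strategy can beat.

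\textbf{Step 1: per-round sign for part i).} Write $U_{i\mid H_t}$ as $\sum_{s=t}^T \mathbb{E}[(\pi_s-\hat c_{i,s})\mathbbm{1}\{I_s=i\}\mid H_t]$ and check each term pointwise on every continuation history. There are three cases for a round $s$.
\begin{itemize}
\item In an exploration step, or when $\hat{\Qcal}_s=\emptyset$, the payment is $\pi_s=c_{\max}\geq \hat c_{i,s}$, since all costs lie in $[0,c_{\max}]$.
\item In an exploitation step where $i$ wins with bid $b_{i,s}=\hat c_{i,s}$, winning means $i$ attains the argmin. Hence $b_{i,s}-c_{\max}\beta(m_{i,s})\le \min_{j\neq i}(b_{j,s}-c_{\max}\beta(m_{j,s}))$, i.e.\ $\hat c_{i,s}\le P_{i,s}$. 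Together with $\hat c_{i,s}\le c_{\max}$, this gives $\pi_s=\min\{P_{i,s},c_{\max}\}\ge \hat c_{i,s}$. This is the weak-inequality version of Proposition~\ref{prop:selection-threshold}, and it also covers ties.
\item In every other round the indicator is zero.
\end{itemize}
So every summand is nonnegative almost surely. Taking expectations gives $U\ge0$ for all $T\ge t$ and all $\thetab_{-i}$.

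\textbf{Step 2: a worst-case instance pinning the infimum at zero, for part ii).} For any strategy $\sigma_i$, I will exhibit an environment where its utility is at most $0$; this is the step I expect to be the main obstacle. The first candidate is $T=t$ together with a $\thetab_{-i}$ under which the round-$t$ contribution is $\le 0$. This works cleanly when $i$ is not selected at $t$ in the exploitation branch, or when the history forces the outcome.

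The difficulty is that with $T=t$ and an exploration round, $i$ might be selected and paid $c_{\max}$, which gives positive utility regardless of strategy. However, $H_t$ is fixed, so whether $\Scal_t\neq\emptyset$ is already determined and the round-$t$ term does not depend on $\sigma_i$ in that case. I would therefore handle it by comparing, not by forcing zero. The plan is to show directly that $\inf U(\sigma^*)\ge \inf U(\sigma_i)$ by constructing, for each $\sigma_i$, an adversarial $\thetab_{-i}$ and $T$ under which
\[
U(\sigma_i)\le \text{(round-$t$ term common to both strategies)} \le \inf U(\sigma^*).
\]
The adversarial choice takes $T=t$. If round $t$ is exploitation and $\sigma_i$'s bid exceeds $\hat c_{i,t}$, choose a scenario in which $i$ loses, so its utility is $0$. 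If $\sigma_i$ bids below $\hat c_{i,t}$, exploit the fact that the other bids, i.e.\ the realized $\boldsymbol\sigma_{-i}(H_{j,t})$ fixed by $H_t$, set a payment $P_{i,t}$; the loss then shows up whenever $P_{i,t}<\hat c_{i,t}$. The lower bound on $\inf U(\sigma^*)$ uses Step 1 applied to rounds $s>t$, which contribute $\ge0$, so $\inf U(\sigma^*)$ equals the round-$t$ value attained at $T=t$.

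\textbf{Step 3: strictness for part iii).} Assume $\Scal_t=\emptyset$, $i\in\hat\Qcal_t$, and that the two bids lie strictly on opposite sides of $P_{i,t}$. By Proposition~\ref{prop:selection-threshold}, exactly one of the two strategies wins round $t$.
\begin{itemize}
\item If $\sigma^*$ wins, then $\hat c_{i,t}<P_{i,t}$. Assuming $\hat c_{i,t}<c_{\max}$, its round-$t$ term is $\min\{P_{i,t},c_{\max}\}-\hat c_{i,t}>0$, while $\sigma_i$ gets $0$ at $T=t$. Hence $\inf U(\sigma_i)\le 0<\inf U(\sigma^*)$, where the last inequality uses Step 1 for $s>t$.
\item If $\sigma_i$ wins, then $\hat c_{i,t}>P_{i,t}\ge\pi_t$. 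Its $T=t$ utility is then strictly negative, while $\inf U(\sigma^*)\ge0$ by part i).
\end{itemize}
The delicate point, to be checked along the way, is the edge case $\hat c_{i,t}=c_{\max}$ in the first bullet.
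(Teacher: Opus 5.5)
Your skeleton matches the paper's. Step~1 is the per-round nonnegativity that makes the truthful utility non-decreasing in $T$, so its worst case sits at $T=t$. Step~3 is correct, and the edge case you flag cannot occur: in the first bullet $\hat c_{i,t}<P_{i,t}<\sigma_i(H_{i,t})\le c_{\max}$. The gap is Step~2, which as written does not establish ii). Conditional on $H_t$, everything entering the allocation and payment at step $t$ is already fixed: the other bids $\sigma_j(H_{j,t})$, the counts $m_{j,t}$, $\hat{\mathcal{Q}}_t$, $\mathcal{S}_t$, and hence $P_{i,t}$. So no choice of $(T,\thetab_{-i})$ can make $i$ lose step $t$ after bidding below $P_{i,t}$; the adversary controls only steps $s>t$. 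For the same reason, your opening plan of pinning every strategy's worst case at $0$ is false, and not only on exploration steps. On a non-exploration step with $i\in\hat{\mathcal{Q}}_t$ and $\hat c_{i,t}<\min\{P_{i,t},c_{\max}\}$, even $\sigma_i^*$ has worst-case utility $\min\{P_{i,t},c_{\max}\}-\hat c_{i,t}>0$. In your case of a bid above $\hat c_{i,t}$ but below $P_{i,t}$, the deviator wins and earns this same positive amount, not $0$.

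What replaces the adversarial construction is a deterministic, pointwise comparison at step $t$; this is the paper's one-step lemma. Let $G_{i,t}(b)$ be $i$'s step-$t$ net payment when it bids $b$, and write $U(\sigma;T)$ for $U_{i\mid H_t}(\sigma,\boldsymbol{\sigma}_{-i};\thetab_{-i},T)$. If $\hat{\mathcal{Q}}_t=\emptyset$, $\mathcal{S}_t\neq\emptyset$, or $i\notin\hat{\mathcal{Q}}_t$, then $G_{i,t}(b)$ does not depend on $b$. Otherwise, Proposition~\ref{prop:selection-threshold} and the fact that the winner is paid $\min\{P_{i,t},c_{\max}\}$ regardless of its own bid give $G_{i,t}(b)\in\{0,\ \min\{P_{i,t},c_{\max}\}-\hat c_{i,t}\}$ for every $b$. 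By contrast, $G_{i,t}(\hat c_{i,t})=(\min\{P_{i,t},c_{\max}\}-\hat c_{i,t})^+$. The tie $\hat c_{i,t}=P_{i,t}$ must be checked separately: there the payoff is $0$ whether or not $i$ is selected. Hence $G_{i,t}(b)\le G_{i,t}(\hat c_{i,t})$ on every realization of the step-$t$ randomness. With $b=\sigma_i(H_{i,t})$ and any fixed $\thetab_{-i}$, $\inf_{T\ge t,\,\thetab_{-i}}U(\sigma_i;T)\le U(\sigma_i;t)=\mathbb{E}[G_{i,t}(b)\mid H_t]\le\mathbb{E}[G_{i,t}(\hat c_{i,t})\mid H_t]\le\inf_{T\ge t,\,\thetab_{-i}}U(\sigma_i^*;T)$, where the last inequality is Step~1. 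Your Step~3 is this comparison restricted to bids on opposite sides of $P_{i,t}$. When both bids lie strictly on the same side, the two payoffs coincide, so no adversarial instance is needed anywhere.
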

In the above theorem, claim i) states that $\sigma^*_i$ never incurs an expected loss, and it holds because a provider bidding its cost estimate serves a query only when the payment covers that estimate. 
Claim ii) states that $\sigma_i^*$ maximizes the worst-case utility, and it holds because a provider who does not know $T$ cannot rule out that the current query is the last one; consequently, the worst-case utility of any strategy is bounded above by its net payment from the current query. Moreover, under $\sigma_i^*$, the worst-case utility matches this upper bound exactly because, by claim i), no later query can reduce it. 
Since this worst-case scenario effectively isolates the current query, and because the payment does not depend on the provider's own bid, the optimal bid is the lowest possible value that avoids an expected loss---which is precisely the provider's cost estimate. 
Claim iii) states that any deviation changing whether the provider is assigned the query yields a strictly lower worst-case utility, and it holds because, if the current query happens to be the last one, such a deviation either causes the provider to forgo a profitable query or forces it to serve an unprofitable one.

Furthermore, as an immediate consequence of Theorem~\ref{thm:robust-dominant}, the choice of bidding strategies $\boldsymbol{\sigma}^* = (\sigma^*_1, \dots, \sigma^*_N)$ is a robust equilibrium, \ie, for every $\sigma_i$
\begin{equation}
    \inf_{T \geq t,\, \thetab_{-i}} U_{i \mid H_t}(\sigma^*_i, \boldsymbol{\sigma}^*_{-i}; \thetab_{-i}, T)
    \;\geq\;
    \inf_{T \geq t,\, \thetab_{-i}} U_{i \mid H_t}(\sigma_i, \boldsymbol{\sigma}^*_{-i}; \thetab_{-i}, T).
\end{equation}
In words, at every step $t > N$ and after every history, no provider $i$ can improve its worst-case utility by deviating from $\sigma^*_i$ when the other providers follow $\boldsymbol{\sigma}^*_{-i}$.

\section{Finding the Lowest Competitive Inference Price}
% for a Task}
\label{sec:user}
In this section, we show that, when all providers choose their equilibrium strategy $\sigma_i^*$, the user is served by a provider meeting their quality threshold $q_{\min}$ on all but a vanishing fraction of the queries, while the price they pay converges to the most competitive price the market can offer for their task.
For concreteness, in the remainder of the section, we specify the function $g(t)$ to be of the form $g(t) = k(t/N)^{\alpha}$, where $\alpha \in (0, 1)$ and $k > 0$ are given parameters.

We begin by identifying the provider that serves most of the user's queries. More precisely, we show that, on all but a vanishing fraction of queries, the platform selects the qualified provider $i^*$ with the lowest cost, \ie,
\begin{equation}\label{eq:optimal-provider}
    i^* = \argmin_{i \in \Qcal} \bar{c}_i,
\end{equation}
which we refer to as the optimal provider and assume to be unique.
The number of queries not served by the optimal provider depends on the specific parameters $\alpha$ and $k$ chosen by the platform as well as on the difficulty of distinguishing each provider from $i^*$, which we characterize using two quantities for each provider.
First, for an unqualified provider $i \notin \Qcal$, we let $\varepsilon_i = q_{\min} - q_i > 0$ measure how far its quality falls below the quality threshold. Second, for a qualified provider $i \in \Qcal \setminus \{i^*\}$, we let $\Delta_i = \bar{c}_i - \bar{c}_{i^*} > 0$ measure how far its mean cost exceeds that of $i^*$. 
Intuitively, the smaller $\varepsilon_i$ or $\Delta_i$, the harder the corresponding provider is to distinguish from $i^*$, and the more queries the platform may allocate to them, as formalized in the next theorem.
\begin{theorem}\label{thm:sample-complexity}
    When all providers choose their equilibrium strategy $\boldsymbol{\sigma}^*$, with probability at least $1 - \delta$, the number of time steps in which the platform does not assign the query to the lowest-cost qualified provider, \ie, $I_t \neq i^*$, is at most $N - 1 + B_\mathrm{id}(T)$, where
    \begin{equation}
        B_\mathrm{id}(T) \coloneqq \sum_{i \notin \Qcal} M(\varepsilon_i)
        + \sum_{i \in \Qcal \setminus \{i^*\}} \max\left\{ M\!\left(\frac{\Delta_i}{c_{\max}}\right),\;
        k\cdot\left(\frac{T}{N}\right)^{\!\alpha}\right\},
    \end{equation}
    where, for any $x > 0$,
    \begin{equation*}
        M(x) = \left\lceil \frac{4}{x^2}\, \ln\!\left(\frac{57N}{\delta x^4}\right) \right\rceil.
    \end{equation*}
\end{theorem}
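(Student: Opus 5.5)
The argument runs on a single good event $\mathcal{E}$ of probability at least $1-\delta$, on which every empirical average lies within its confidence radius of the truth. Concretely, $\mathcal{E}$ requires that for every provider $i$ and every count $m \geq 1$, the empirical quality after $m$ services satisfies $|\hat{q}_{i} - q_i| \leq \beta(m)$. In addition, when all providers follow $\boldsymbol{\sigma}^*$, it requires $|\hat{c}_{i} - \bar{c}_i| \leq c_{\max}\beta(m)$, since bids then equal empirical mean costs.

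By Hoeffding's inequality, each of the two deviations fails with probability at most $\frac{3\delta}{\pi^2 N m^2}$ for a given $i$ and $m$. Summing over $m$ with $\sum_m 1/m^2=\pi^2/6$ and over the $N$ providers and the two quantities gives total failure probability $\delta$. Because the bound is stated over sample counts rather than time steps, it covers the adaptive sampling; one can formalize this with a per-provider i.i.d. "tape" of cost/score draws. On $\mathcal{E}$ we immediately get $\Qcal\subseteq\hat{\Qcal}_t$ for all $t>N$, which is the statement the paper defers to the appendices. All remaining steps are deterministic on $\mathcal{E}$.

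The count of non-$i^*$ steps then splits into three parts.

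\textbf{Initialization.} This contributes $N-1$ steps.

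\textbf{Unqualified providers $i\notin\Qcal$.} Such a provider can be selected, whether by exploration or by exploitation, only while $i\in\hat{\Qcal}_t$, i.e. while $\hat q_{i,t}+\beta(m_{i,t})\ge q_{\min}$. On $\mathcal{E}$ this forces $q_i+2\beta(m_{i,t})\ge q_{\min}$, i.e. $\beta(m_{i,t})\ge\varepsilon_i/2$. I will show by a routine calculation that $\beta(m)<x/2$ whenever $m\ge M(x)$. The constant $57$ should arise from bounding $\ln(2\pi^2N m^2/(3\delta))$ at $m\approx \frac{4}{x^2}\ln(\cdot)$, using $\ln y\le$ a multiple of $\sqrt{y}$ to absorb the $\ln\ln$ term. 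Hence $i$ is selected at most $M(\varepsilon_i)$ times after initialization.

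\textbf{Qualified providers $i\in\Qcal\setminus\{i^*\}$.} I bound selections by $\max\{M(\Delta_i/c_{\max}),\,g(T)\}$. Exploration selections of $i$ happen only when $m_{i,t}<g(t)\le g(T)$, so every exploration selection occurs while its count is below $g(T)$. For an exploitation selection at time $t$, note that $i^*\in\hat{\Qcal}_t$ on $\mathcal{E}$, so the argmin rule gives $b_{i,t}-c_{\max}\beta(m_{i,t})\le b_{i^*,t}-c_{\max}\beta(m_{i^*,t})$. On $\mathcal{E}$ the left side is at least $\bar c_i-2c_{\max}\beta(m_{i,t})$ and the right side is at most $\bar c_{i^*}$. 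Therefore $\beta(m_{i,t})\ge \Delta_i/(2c_{\max})$, which forces $m_{i,t}<M(\Delta_i/c_{\max})$. Each selection of $i$, of either type, thus occurs while $m_{i,t}<\max\{M(\Delta_i/c_{\max}),g(T)\}$. Since each selection increments $m_{i,t}$, the total is at most that maximum. The same increment argument applies to the unqualified case. Summing the three parts gives $N-1+B_{\mathrm{id}}(T)$.

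\textbf{Main obstacle.} I expect the hardest part to be making the good event rigorous under adaptive, strategy-dependent sampling. Here the cost concentration must be expressed for the bids themselves, so the tape/optional-skipping argument for $\hat c_{i,t}$ has to be stated carefully. A secondary difficulty is the explicit inversion $\beta(m)<x/2$ for $m\ge M(x)$ that yields the constant $57$. I would handle this by substituting $m=\lceil 4L/x^2\rceil$ with $L=\ln(57N/(\delta x^4))$ and verifying $\ln(2\pi^2Nm^2/(3\delta))<L\cdot(mx^2/4)\cdot$(appropriate factor), checking monotonicity for larger $m$.
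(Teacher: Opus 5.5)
Your proposal is correct and follows the paper's proof essentially step for step. It uses the same good event $\mathcal E$, obtained from Hoeffding and a union bound over providers and selection counts, and the same elimination argument for unqualified providers. It also splits a qualified competitor's selections the same way: exploration steps occur only while its count is below $g(T)$, and on non-exploration steps the optimistic index forces $\Delta_i \le 2c_{\max}\beta(m_{i,t})$; these are then summed with the $N-1$ initialization steps.

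The only real difference is the radius inversion. The paper uses $\ln m \le m/r + \ln r - 1$ with $r = 8/a^2$. Your check at $m_0 = 4L/x^2$, plus monotonicity of $m x^2/2 - \ln(Cm^2)$ for $m \ge m_0$ (valid because $x \le 1$ gives $L \ge 1$), also works. However, it requires the sharp inequality $\ln y \le (2/e)\sqrt{y}$, because it reproduces exactly the paper's constant $128\pi^2/(3e^2) \approx 56.99 < 57$ with essentially no slack.
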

In the above theorem, the bound consists of the $N - 1$ initialization steps not served by $i^*$ alongside a sum of per-provider terms. 
For an unqualified provider, the term grows as $1/\varepsilon_i^2$ up to logarithmic factors. 
Conversely, for a qualified provider other than $i^*$, the term is determined by the maximum of two terms: 
a term growing as $1/\Delta_i^2$ that bounds the steps in which it outbids $i^*$ and a term $k(T/N)^{\alpha}$ that bounds the exploration steps in which it is selected.
Crucially, this bound depends sublinearly on the total number of queries $T$. 
Hence, the platform allocates the query to the optimal provider on all but a vanishing fraction of the user's queries.

We now turn to the price the user pays on the platform for a query, \ie, the payment $\pi_t$ made to the provider serving it. More concretely, we focus on comparing this payment with the second-lowest mean cost among qualified providers, $\bar{c}_{(2)} = \min_{i \in \Qcal \setminus \{i^*\}} \bar{c}_i$.
This cost represents the most competitive (average) price the user could hope to pay for a query: it is the lowest price the optimal provider $i^*$ would need to offer to remain cheaper than every other qualified provider, since the next-cheapest cannot profitably offer a price below its own mean cost $\bar{c}_{(2)}$.
The next theorem shows that the average payment converges to the second-lowest mean cost among qualified providers.
\begin{theorem}\label{thm:regret}
    When all providers choose their equilibrium strategy $\boldsymbol{\sigma}^*$ and the user submits a minimum number $T_0$ of queries, with probability at least $1 - \delta$ the payments made by the user satisfy
    \begin{align}
        \frac{1}{T} \sum_{t=1}^{T} \bigl| \pi_t - \bar{c}_{(2)} \bigr|
        \;\leq\;& \frac{T_0}{T}\, c_{\max}
        + k \left(\frac{N}{T}\right)^{\!1 - \alpha} \bigl( c_{\max} - \bar{c}_{(2)} \bigr) \nonumber\\
        &+ \frac{6\, c_{\max}}{2 - \alpha} \sqrt{ \frac{1}{2k} \left(\frac{N}{T}\right)^{\!\alpha}
        \ln\!\left( \frac{2\pi^2 N k^2}{3\delta} \left(\frac{T}{N}\right)^{\!2\alpha} \right) },
        \label{eq:average-absolute-payment-user}
    \end{align}
where $T_0 = N \max\bigl\{1,\, (\bar{M}/k)^{1/\alpha}\bigr\}$ and $\bar{M} = \max\bigl\{1, \max_{i \notin \Qcal} M(\varepsilon_i)\bigr\}$. Hence, $\frac{1}{T} \sum_{t=1}^{T} \pi_t  \to \bar c_{(2)}$ as $T \to \infty$.
\end{theorem}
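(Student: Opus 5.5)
We work on the good event $\mathcal{G}$ of Appendices~\ref{app:good-event} and~\ref{app:eligibility-exploration}, which has probability at least $1-\delta$. On $\mathcal{G}$, for all $t>N$ and all $i$, we have $|\hat q_{i,t}-q_i|\le\beta(m_{i,t})$ and $|\hat c_{i,t}-\bar c_i|\le c_{\max}\beta(m_{i,t})$. The second inequality is a Hoeffding bound for costs in $[0,c_{\max}]$, with the same union-bound constant $\pi^2/6\cdot N$ used for scores. In particular $\Qcal\subseteq\hat\Qcal_t$. We split the horizon into $t\le T_0$ and $t>T_0$, and within $t>T_0$ into exploration steps ($\Scal_t\neq\emptyset$) and exploitation steps. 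For $t\le T_0$ we use the trivial bound $|\pi_t-\bar c_{(2)}|\le c_{\max}$, which contributes $T_0c_{\max}/T$.

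The role of $T_0$ is to guarantee that, for $t>T_0$, no unqualified provider is in $\hat\Qcal_t$. An unqualified provider remains in $\hat\Qcal_t$ only while $\beta(m_{i,t})\ge\varepsilon_i/2$ roughly, i.e.\ while $m_{i,t}<M(\varepsilon_i)$; this is the same computation as in Theorem~\ref{thm:sample-complexity}. While it is in $\hat\Qcal_t$ and $m_{i,t}<g(t)$, it is forced into $\Scal_t$. Since $g(T_0)=k(T_0/N)^\alpha\ge\bar M$, every unqualified provider has either been explored up to $M(\varepsilon_i)$ samples or been eliminated. I expect to need a short argument that exploration indeed drives $m_{i,t}$ up to $g(t)$, but elimination makes this moot. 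Hence for $t>T_0$ we have $\hat\Qcal_t=\Qcal$.

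Exploration steps after $T_0$ pay $c_{\max}$, so each contributes $c_{\max}-\bar c_{(2)}$. Every exploration step increments some $m_{i,t}$ with $m_{i,t}<g(t)\le g(T)$. The total number of such steps is therefore at most $N g(T)=k N^{1-\alpha}T^\alpha$. Dividing by $T$ gives the term $k(N/T)^{1-\alpha}(c_{\max}-\bar c_{(2)})$.

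On exploitation steps with $t>T_0$, every $j\in\Qcal$ has $m_{j,t}\ge g(t)$, so $\beta(m_{j,t})\le\beta(g(t))$, since $\beta$ is decreasing for the relevant range. Under $\boldsymbol\sigma^*$, bids equal $\hat c_{j,t}$, so $b_{j,t}-c_{\max}\beta(m_{j,t})\in[\bar c_j-2c_{\max}\beta(g(t)),\,\bar c_j]$. Consider the payment $\pi_t=\min\{P_{I_t,t},c_{\max}\}$. The minimum over $\Qcal\setminus\{I_t\}$ of the optimistic indices lies within $2c_{\max}\beta(g(t))$ of the second-lowest index value, and the indices are within $2c_{\max}\beta(g(t))$ of the true costs. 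Whether $I_t=i^*$ or not, the minimum over the others is therefore within $O(c_{\max}\beta(g(t)))$ of $\bar c_{(2)}$. For $I_t=i^*$ this is immediate. For $I_t\ne i^*$, note that $I_t$ beat $i^*$, so $\bar c_{I_t}-\bar c_{i^*}\le 2c_{\max}\beta$, and the minimum over the others is at most $\bar c_{i^*}\le\bar c_{(2)}$ and at least $\bar c_{(2)}-$ slack. Adding $c_{\max}\beta(m_{I_t,t})\le c_{\max}\beta(g(t))$ and using that capping at $c_{\max}\ge\bar c_{(2)}$ only reduces the error, we get $|\pi_t-\bar c_{(2)}|\le 3c_{\max}\beta(g(t))$. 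The constant $3$ should become $6/(2-\alpha)$ after summation.

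Finally we sum $\sum_{t\le T}3c_{\max}\beta(k(t/N)^\alpha)$. We bound the logarithm by its value at $t=T$, namely $\ln\bigl(\tfrac{2\pi^2Nk^2}{3\delta}(T/N)^{2\alpha}\bigr)$. What remains is $\sum_t (t/N)^{-\alpha/2}\le N^{\alpha/2}\tfrac{T^{1-\alpha/2}}{1-\alpha/2}=\tfrac{2}{2-\alpha}N^{\alpha/2}T^{1-\alpha/2}$. Dividing by $T$ yields the third term with constant $6/(2-\alpha)$. Every term vanishes as $T\to\infty$ because $\alpha\in(0,1)$, which gives convergence of the average payment.

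I expect the main obstacle to be the exploitation-step sandwich bound when $I_t\neq i^*$ and ties or near-ties occur. The fix is to argue on the indices themselves rather than on identities of providers. The constant bookkeeping in the final sum will then need care to match $6/(2-\alpha)$.
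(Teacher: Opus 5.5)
Your decomposition (steps $t\le T_0$, exploration steps after $T_0$, exploitation steps after $T_0$), the $3c_{\max}\beta(g(t))$ sandwich, and the final summation all coincide with the paper's proof of Theorem~\ref{thm:payment-convergence}. The gap is in the step that removes unqualified providers.

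You claim $\hat{\Qcal}_t=\Qcal$ for \emph{every} $t>T_0$. Your reason is that $g(T_0)\ge\bar M$ forces each unqualified provider to have been explored up to $M(\varepsilon_i)$ samples or eliminated. But $g(T_0)\ge\bar M$ only says the exploration \emph{target} has passed $\bar M$. Exploration samples one provider per step, so the realized counts can lag far behind $g(t)$.

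Concretely, take $k\ge\bar M$, so that $T_0=N$. At $t=N+1$ every provider has $m_{i,t}=1$, and $\beta(1)>1\ge q_{\min}$ because $N\ge 2$ and $\delta<1$. Hence $\hat{\Qcal}_{N+1}=[N]$, which contains every unqualified provider. The claim is false at exploration steps after $T_0$, so no argument that exploration drives $m_{i,t}$ up to $g(t)$ in time can rescue it.

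Your exploitation-step analysis needs only $\hat{\Qcal}_t=\Qcal$ at steps with $\Scal_t=\emptyset$. That gives $I_t\in\Qcal$ and makes the minimum in $P_{I_t,t}$ run over $\Qcal\setminus\{I_t\}$. This weaker fact follows without any dynamics:
\begin{itemize}
\item If $\Scal_t=\emptyset$, every $j\in\hat{\Qcal}_t$, including any unqualified one, has $m_{j,t}\ge g(t)>\bar M\ge M(\varepsilon_j)$.
\item On the good event, any unqualified $j\in\hat{\Qcal}_t$ satisfies $\varepsilon_j\le 2\beta(m_{j,t})$, hence $m_{j,t}\le M(\varepsilon_j)$.
\end{itemize}
These two facts contradict each other, so no unqualified provider survives in $\hat{\Qcal}_t$ at such a step. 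This is the paper's Lemma~\ref{lem:ordinary-qualified-set}, via Lemma~\ref{lem:unqualified-exclusion}. Exploration steps after $T_0$ never needed the claim, since they pay $c_{\max}$ regardless.

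With this replacement the rest of your argument goes through. One bookkeeping point for the lower bound when $I_t\neq i^*$: keep the term $c_{\max}\beta(m_{I_t,t})$ in $P_{I_t,t}$ exact. It then cancels one of the two copies coming from $\bar c_{I_t}-\bar c_{i^*}\le 2c_{\max}\beta(m_{I_t,t})$. Bounding it by $c_{\max}\beta(g(t))$ only serves the upper side, and dropping it on the lower side yields the constant $4$ instead of $3$.
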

In the above theorem, the first term accounts for the first $T_0$ time steps, which include the $N$ initialization steps and the non-exploration steps during which an unqualified provider may still belong to $\hat{\Qcal}_t$; the second term accounts for the exploration steps after $T_0$, in which the selected provider is paid $c_{\max}$; and the third term accounts for the remaining steps. As $T$ grows, the average difference between the payment and the second-lowest mean cost $\bar{c}_{(2)}$ of a qualified provider vanishes at a rate $\max\{ T^{-(1-\alpha)},\, T^{-\alpha/2} \}$, up to logarithmic factors. 
In Corollary~\ref{cor:optimal-alpha} in Appendix~\ref{app:regret}, we show that this rate is maximized when the platform sets $\alpha = 2/3$, giving a rate of $T^{-1/3}$.

\section{Experiments}
\label{sec:experiments}
In this section, we evaluate our platform on simulated providers serving real LLMs from the \texttt{Llama} and \texttt{Qwen} families.
Specifically, we evaluate whether the platform learns the set of qualified providers and progressively routes queries to the optimal provider $i^*$. 
We then compare the price paid on our platform against the average per-token price listed by inference providers on Hugging Face\footnote{\url{https://huggingface.co/docs/inference-providers/index}, consulted on December 30, 2025.}.
Appendix~\ref{app:experimental-details} and Appendices~\ref{app:gsm8k}--\ref{app:gpqa-strong} contain further details on our experimental setup and additional experimental results, respectively.

\begin{table}[t]
    \centering
    \footnotesize
    \setlength{\tabcolsep}{4pt}
    \begin{tabular}{lccccccc@{\hspace{12pt}}ccc}
        \toprule
        Benchmark & $N$ & $q_{\min}$ & $|\Qcal|$ & $i^*$ & $\bar{c}_{i^*}$ & $\bar{v}_{i^*}$ & $\bar{c}_{(2)}$ & Payment $\pi_t$ & Margin & Share of $i^*$ \\
        \midrule
        \texttt{GSM8K} &  9 & 0.764 & 3 & \texttt{Qwen2.5-3B}   & 16.1 & 20.1 & 34.4 & 30.3 (27.5) & $71\%$ & 0.96  \\
        \texttt{GPQA}  &  4 & 0.145 & 2 & \texttt{Llama-3.1-8B} & 43.0 & 53.8 & 50.7 & 48.3 (47.4) & $10\%$ & 0.97 \\
        \bottomrule
    \end{tabular}
    \caption{{\bf Our procurement platform on queries from \texttt{GSM8K} and \texttt{GPQA}.} 
    The table reports the number of simulated providers $N$, 
    the quality threshold $q_{\min}$, 
    the number of qualified providers $|\Qcal|$, 
    the most cost-competitive provider $i^{*}$ and its mean cost $\bar{c}_{i^{*}}$ and price $\bar{v}_{i^{*}}$, 
    the mean cost of the second most cost-competitive provider $\bar{c}_{(2)}$, 
    the average payment $\pi_t$ over all steps and, in parentheses, over non-exploration steps, 
    the average pricing margin of the winner over non-exploration steps, 
    and the share of queries routed to $i^{*}$.  
    The last three quantities are computed over the last $K$ queries, averaged across $30$ independent runs, where $K$ is the size of the benchmark.
    Monetary quantities are in units of $10^{-6}$ USD.}
    \label{tab:main-environments}
\end{table}

\xhdr{Experimental setup}
% Provider
Each of the $N$ simulated providers serves one LLM, which we use to identify them. We consider two settings: (i) $N=9$ providers, each serving a model from the \texttt{Llama} family (\texttt{Llama-\{3-8B, 3.1-8B, 3.2-1B, 3.2-3B\}-Instruct}) or the \texttt{Qwen} family (\texttt{Qwen-\{2-0.5B, 2-1.5B, 2-7B, 2.5-3B, 2.5-7B\}-Instruct}), and (ii) $N=4$ providers serving the newest of the largest models and the smallest model of each family, \ie, \texttt{Llama-3.1-8B}, \texttt{Llama-3.2-1B}, \texttt{Qwen2.5-7B} and \texttt{Qwen2-0.5B}.

% Queries
In both settings, the user's queries are drawn from one of two widely used benchmarks. 
In the setting with $N=9$ providers, we draw queries from the \texttt{GSM8K} dataset~\citep{cobbe2021trainingverifierssolvemath}, which consists of $K = 1{,}319$ grade-school mathematical reasoning problems. 
In the setting with $N=4$ providers, we draw queries from the \texttt{GPQA}~\citep{rein2023gpqagraduatelevelgoogleproofqa} dataset, which consists of $K = 546$ graduate-level science questions. 
Both benchmarks feature verifiable ground-truth answers, and the score $r(x_t, y_t)$ represents whether the response is correct (1) or incorrect (0). Consequently, the quality $q_i$ of a provider corresponds to the probability that the model generates a correct answer. 
We rely on a public dataset of recorded generations,\footnote{\url{https://huggingface.co/datasets/Human-Centric-Machine-Learning/strategic-ttc-data}} which contains $128$ generations per model for every question across both benchmarks.
In each experiment, we simulate $T = 70{,}000$ queries by taking $53.07$ and $128.21$ passes over all queries in \texttt{GSM8K} and \texttt{GPQA}, respectively, sampling without replacement.
At each step, once a provider is selected, its response is sampled uniformly at random from its $128$ recorded generations for that query.
We repeat each experiment across $30$ independent runs to report averaged results.

% Qualities, costs, and prices
Since the public dataset we use lacks generation costs, we construct them using real-world market signals.
In particular, the cost of each recorded generation is determined by its token count and the model's average per-token price on the Hugging Face inference-provider list---on the order of $\$0.1$ per million tokens---discounted by a fixed $25\%$ margin (\ie, divided by $1.25$), common to all providers and unknown to the platform (see Appendix~\ref{app:experimental-details}, Table~\ref{tab:experimental-list-prices}). 
We define $\bar{v}_i = 1.25\,\bar{c}_i$ as the mean query price of provider $i$, representing the average amount a user pays per query under publicly listed prices. 
Further, we place the quality threshold $q_{\min}$ at the midpoint between two adjacent provider qualities, so that three providers qualify on \texttt{GSM8K} and two on \texttt{GPQA}.

We purposely choose these two settings to vary the intensity of competition among qualified providers, which fundamentally determines the user's final payment. 
Specifically, since the average payment converges to the second-lowest mean cost $\bar{c}_{(2)}$ among qualified providers by Theorem~\ref{thm:regret}, we call competition strong when $\bar{c}_{(2)}$ is smaller than the listed query price of the optimal provider $\bar{v}_{i^*}$, and weak otherwise. 
On \texttt{GSM8K}, the qualified competitors of $i^*$ are considerably more expensive ($\bar{c}_{(2)} > \bar{v}_{i^*}$, weak), whereas on \texttt{GPQA}, the sole competitor is only slightly more expensive ($\bar{c}_{(2)} < \bar{v}_{i^*}$, strong). 
Table~\ref{tab:main-environments} summarizes both settings.

%Platform and baselines
The platform runs Algorithm~\ref{alg:platform} with confidence level $\delta = 0.05$ and exploration function $g(t) = k(t/N)^{\alpha}$, with $k = 2$ and $\alpha = 3/4$.
We choose this value over the rate-optimal $\alpha = 2/3$ from Corollary~\ref{cor:optimal-alpha} to induce greater early exploration at the expense of a slower asymptotic rate. 
Under this setup, the simulated providers follow their equilibrium strategy $\boldsymbol{\sigma}^*$.

\begin{figure}[t]
    \centering
    \includegraphics[width=\linewidth]{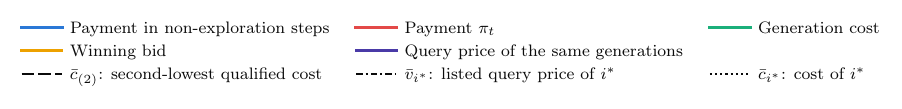}\\[2pt]
    \begin{subfigure}[t]{0.48\textwidth}
        \centering
        \includegraphics[width=\linewidth]{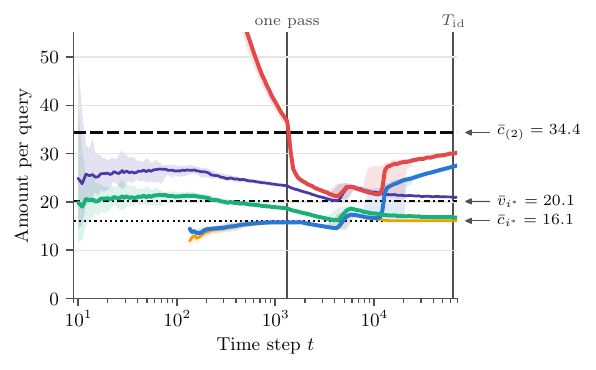}
        \caption{\texttt{GSM8K}, weak competition}
    \end{subfigure}\hfill
    \begin{subfigure}[t]{0.48\textwidth}
        \centering
        \includegraphics[width=\linewidth]{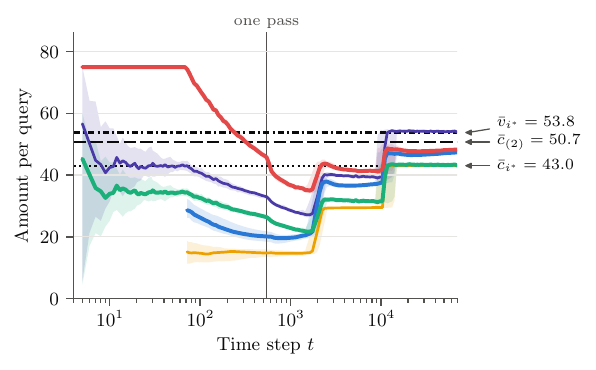}
        \caption{\texttt{GPQA}, strong competition}
    \end{subfigure}
    \caption{{\bf Temporal evolution of the winning bids, payments, and costs.} 
    For each reported quantity, every point corresponds to the median over $30$ independent runs of a moving average whose window size equals the benchmark size $K$, and the shaded regions span the $10$th to $90$th percentiles across runs. 
    The vertical lines mark one complete pass over the benchmark queries and, where applicable within the time horizon, the time step $T_{\mathrm{id}}$ beyond which Theorem~\ref{thm:sample-complexity} guarantees that $i^*$ becomes the unique most selected provider. Amount per query is in units of $10^{-6}$ USD.}
    \label{fig:main-payments}
\end{figure}

\xhdr{Results}
Table~\ref{tab:main-environments} summarizes our main results. 
We find that the platform successfully identifies the optimal provider: 
over the final pass of the benchmark, it routes at least $96\%$ of all queries to $i^*$, and $i^{*}$ becomes the most selected provider after roughly $10{,}000$ and $30{,}000$ queries on \texttt{GSM8K} and \texttt{GPQA} respectively. 
In both cases, this occurs well before the time step $T_{\mathrm{id}} = \min \{T \,:\, T > N + 2B_\mathrm{id}(T)\}$, \ie, the time step beyond which Theorem~\ref{thm:sample-complexity} guarantees, with probability at least $1-\delta$, that $i^*$ is the unique most-selected provider ($63{,}740$ and $153{,}697$ queries, respectively; see Appendices~\ref{app:gsm8k-env} and~\ref{app:strong-env}). 
This gap demonstrates that our sample complexity bounds are highly conservative in practice.
The delayed convergence on \texttt{GPQA} stems from the presence of the unqualified \texttt{Llama-3.2-1B} model; because its quality falls a mere $0.037$ below $q_{\min}$, the platform requires approximately $11{,}000$ queries to statistically exclude it from $\hat{\Qcal}_t$.

Figure~\ref{fig:main-payments} provides additional insight into the temporal evolution of the winning bids, payments and costs. 
In agreement with Theorem~\ref{thm:regret}, we find that, for the initial queries, the payment on non-exploration steps lies below $\bar{c}_{i^*}$; this occurs because cheaper, unqualified providers that the platform has not yet excluded from $\hat{\Qcal}_t$ are still selected and influence the pricing mechanism. 
However, as the platform routes more queries, the payment converges towards $\bar{c}_{(2)}$.
Importantly, we also find that the user's payment for a guaranteed quality level is fundamentally determined by the intensity of competition among qualified providers on each benchmark. 
On the final pass, under strong competition on \texttt{GPQA}, the average payment per query on non-exploration steps lies $10\%$ above the generation cost $\bar{c}_{i^*}$ and $12\%$ below the listed price $\bar{v}_{i^*}$ that the user would pay in the current fixed-price market.
Conversely, under weak competition on \texttt{GSM8K}, the average payment lies $71\%$ above $\bar{c}_{i^*}$ and $37\%$ above $\bar{v}_{i^*}$, due to the higher cost of the nearest qualified competitor.
Altogether, the results suggest that our platform allows users to pay a price that approaches the one supported by provider competition on their specific task, capturing substantial savings whenever the market-clearing price falls below public fixed listings.

\section{Discussion and Limitations}
\label{sec:discussion}
In this section, we highlight several limitations of our work and discuss avenues for future research.

\xhdr{Methodology}
We assume that a provider maximizes its worst-case utility by bidding its current cost estimate, since it knows neither the total number of user queries nor the score and cost distributions of its competitors.
However, because providers may violate this assumption in practice, we extend our theoretical results in Appendix~\ref{app:robustness} to a setting where bids can deviate above or below the cost estimate by bounded amounts.
In this context, analyzing our platform using other game-theoretic solution concepts remains an interesting direction for future work.
Further, whenever the cost difference across providers is small, or the providers' quality sits right on the threshold, our platform requires a relatively large number of queries to secure the best price for the user.
In such cases, future work could lower query requirements by leveraging quality and cost estimates across tasks.
Finally, in our platform, users carry the cost of exploration. However, because the most cost-competitive provider directly benefits from this exploration, one may argue that this burden should be shifted to, or at least shared with, the providers, who typically possess greater financial resources.
Exploring alternative payment rules that redistribute such a burden represents an interesting direction for future work.

\xhdr{Evaluation}
We evaluated our platform using simulated providers who compete to serve queries from benchmarking datasets with verifiable ground truth.
An important next step would be to evaluate our platform in settings where providers compete to serve open-ended queries and quality is measured automatically without relying on predefined correct outputs. 
Further, in our experiments, we estimated the generation costs by discounting publicly listed token prices by a fixed $25\%$ profit margin.
Future research should leverage specialized profiling tools to measure these true costs precisely.
Ultimately, deploying and evaluating the proposed platform in a real-world production environment remains crucial.

\xhdr{Practical considerations}
We assume that our platform can accurately estimate the quality of the responses generated by the providers. In many domains where LLMs are routinely used, such as coding or mathematical reasoning, automated evaluation procedures make this feasible with minimal overhead. 
Moving forward, the viability of our platform in settings with less well-established quality metrics and benchmarks will depend on continued progress in the LLM evaluation literature. 
Lastly, we note that, while second-price auctions possess many desirable properties, they also make a provider's payment dependent on the bids of its competitors. This dependence leaves the mechanism potentially vulnerable to manipulation or collusion~\citep{Hendricks1989, MAILATH1991467, CHE2018398}.

\section{Conclusions}
\label{sec:conclusions}
In this work, we introduced a procurement platform that connects large language model providers with end-users, where token prices are driven by provider competition via reverse second-price auctions. 
More concretely, this mechanism incentivizes providers to truthfully bid their best estimate of the average cost to serve a user's query.
We demonstrated that, with high probability, the platform routes all but a vanishing fraction of the user's queries to the most cost-competitive provider meeting a desired quality threshold. 
Moreover, this provider's average pricing margin converges to the difference between the second-lowest average cost among such providers and its own average cost.
Empirically, we demonstrated that the pricing margin on our platform varies significantly---from $10\%$ to $71\%$---depending on the task and quality threshold, revealing a substantial inefficiency in the current fixed-price market.
Ultimately, we hope that our work inspires users, providers, and routing platforms to explore alternative, more efficient economic paradigms for the market of LLM-as-a-service.

\vspace{2mm}
\xhdr{Acknowledgements} 
We thank Stratis Tsirtsis for feedback and fruitful discussions. Gomez-Rodriguez acknowledges support from the European Research Council (ERC) under the European Union'{}s Horizon 2020 research and innovation programme (grant agreement No. 101169607).

{ 
\small
\bibliographystyle{unsrtnat}
\bibliography{learning-cost-reliable-inference}
}

\clearpage
\newpage

\appendix

\section{Proofs}
\label{app:proofs}

\subsection{Concentration inequalities for the qualities and costs}
\label{app:good-event}

In this section, we characterize the relationship between the estimated confidence sets $\hat{\mathcal{Q}}_t$, introduced in Eq.~\ref{eq:qualified-set}, and the set $\mathcal{Q}$ of qualified providers. To this end, we first define a high-probability event under which, for every provider and every selection count, the empirical quality and cost averages are simultaneously close to their corresponding population averages. For each provider $i \in [N]$ and $m \geq 1$, let $\hat q_{i,m}$ and $\hat c_{i,m}$ denote the empirical means of the scores and generation costs observed over the first $m$ queries served by provider $i$, respectively. Recall that
\begin{equation*}
    \beta(m) = \sqrt{\frac{1}{2m}\ln\!\left(\frac{2\pi^2 N m^2}{3\delta}\right)}.
\end{equation*}

\begin{lemma}
\label{lem:good-event}
Let $\mathcal E$ denote the event that, simultaneously for every provider $i \in [N]$ and every $m \geq 1$,
\begin{equation} \label{eq:good-event}
    \left|\hat q_{i,m}-q_i\right| \leq \beta(m), \qquad \left|\hat c_{i,m}-\bar c_i\right| \leq c_{\max}\beta(m).
\end{equation}
Then, $\Pr(\mathcal E) \geq 1-\delta$. 
\end{lemma}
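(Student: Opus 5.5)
The plan is to apply Hoeffding's inequality for each fixed provider $i$ and each fixed count $m$, and then take a union bound over providers, counts, and the two quantities (scores and costs).

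The one subtlety is that $\hat q_{i,m}$ and $\hat c_{i,m}$ are averages over the first $m$ queries served by $i$. Which queries these are is decided adaptively by the platform. To handle this, I will use a standard "stack of rewards" (tabular) construction. For each provider $i$, imagine drawing in advance an i.i.d. sequence of pairs $(r_{i,1}, c_{i,1}), (r_{i,2}, c_{i,2}), \dots$, where the scores follow $P^r_i$ and the costs follow $P^c_i$. The $m$-th time provider $i$ is selected, it returns the $m$-th pair. The queries $x_t \sim P^x$ are i.i.d. and independent of the past, and the response generation is fresh randomness. Hence the score and cost obtained on the $m$-th selection are independent of the selection history, with the stated marginals. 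Under this coupling, $\hat q_{i,m}$ and $\hat c_{i,m}$ are exactly the empirical means of the first $m$ entries of a fixed i.i.d. sequence, so adaptivity plays no role. Equivalently, one could use a martingale (Azuma--Hoeffding) argument with optional skipping; the coupling is the cleanest route.

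For fixed $i$ and $m$, the scores lie in $[0,1]$, so Hoeffding gives
\[
\Pr\bigl(|\hat q_{i,m}-q_i|>\beta(m)\bigr)\le 2\exp(-2m\beta(m)^2)=\frac{3\delta}{\pi^2 N m^2}.
\]
The costs lie in $[0,c_{\max}]$, so Hoeffding with deviation $c_{\max}\beta(m)$ gives the same bound:
\[
2\exp\!\bigl(-2m c_{\max}^2\beta(m)^2/c_{\max}^2\bigr)=\frac{3\delta}{\pi^2 N m^2}.
\]
Summing over the two quantities, over $m\ge1$ using $\sum_m 1/m^2=\pi^2/6$, and over the $N$ providers gives a total failure probability of
\[
N\cdot 2\cdot \frac{3\delta}{\pi^2 N}\cdot\frac{\pi^2}{6}=\delta .
\]
Therefore $\Pr(\mathcal E)\ge 1-\delta$.

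The main obstacle is making the reduction from adaptive sampling to fixed i.i.d. sequences rigorous. Two facts must be checked:
\begin{enumerate}[label=\roman*)]
\item The bids, which depend on past costs, influence only which provider is selected and when. They never change the distribution of the outcome of the $m$-th service of a given provider.
\item The score and the cost of the same query may be correlated. This is harmless, because the union bound treats the two concentration statements separately.
\end{enumerate}
With the coupling in place, everything else is a routine calculation.
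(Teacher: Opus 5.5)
Your proposal is correct and follows the paper's proof essentially step for step: Hoeffding for each fixed provider and count $m$ (with deviation $c_{\max}\beta(m)$ for the costs, giving the same tail $3\delta/(\pi^2 N m^2)$), then a union bound over the two estimates, the $N$ providers, and all $m \geq 1$ via $\sum_m m^{-2} = \pi^2/6$. The only difference is that you justify the i.i.d.\ structure under adaptive selection with an explicit stack-of-rewards coupling, whereas the paper asserts it in one line from the fact that the provider is selected before its score and cost are realized. Your version is a more careful rendering of the same argument.
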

\begin{proof}
    Fix a provider $i \in [N]$, and let $\tau_i(1)<\tau_i(2)<\cdots$ denote the steps in which provider $i$ is selected. Since the platform selects a provider before the score and generation cost of that step are realized, the successive observations
    \begin{equation*}
        r(x_{\tau_i(1)},y_{\tau_i(1)}), r(x_{\tau_i(2)},y_{\tau_i(2)}),\ldots
    \end{equation*}
    are i.i.d.\ draws from $\mathcal P_i^r$ with mean $q_i$, and the successive realized costs
    \begin{equation*}
        c_{i,\tau_i(1)},c_{i,\tau_i(2)},\ldots
    \end{equation*}
    are i.i.d.\ draws from $\mathcal P_i^c$ with mean $\bar c_i$. Thus, for every fixed selection count $m\geq1$, Hoeffding's inequality and the fact that scores lie in $[0,1]$ give
    \begin{equation*}
        \Pr \! \left(\left| \hat q_{i,m} - q_i \right| > \beta(m) \right) \leq 2\exp\!\left(-2m \beta(m)^2\right).
    \end{equation*}
    Similarly, since generation costs lie in $[0,c_{\max}]$,
    \begin{equation*}
        \Pr \! \left(\left| \hat c_{i,m} - \bar c_i \right| > c_{\max}\beta(m) \right) \leq 2\exp\!\left(-2m \beta(m)^2\right).
    \end{equation*}
    By the definition of $\beta(m)$,
    \begin{equation*}
        2\exp\!\left(-2m \beta(m)^2\right) = \frac{3\delta}{\pi^2 N m^2}.
    \end{equation*}
    A union bound over the two estimates, all providers, and all selection counts therefore gives
    \begin{align*}
        \Pr(\mathcal E^c) &\leq \sum_{i=1}^N \sum_{m=1}^{\infty} 2\frac{3\delta}{\pi^2 N m^2} \\
        &= \frac{6\delta}{\pi^2} \sum_{m=1}^{\infty}\frac{1}{m^2} = \delta,
    \end{align*}
    where $\sum_{m=1}^{\infty}m^{-2} = \pi^2/6$. Hence,
    \begin{equation*}
        \Pr(\mathcal E) \geq 1 - \delta.
    \end{equation*}
\end{proof}

We next prove a technical result showing that a lower bound on the confidence radius $\beta(m)$ implies an upper bound on the selection count $m$. We will use this result in the proof of Lemma~\ref{lem:ordinary-selection-threshold} and Lemma~\ref{lem:unqualified-elimination}.

\begin{lemma}\label{lem:radius-inversion}
For any $a > 0$ and $m \geq 1$, if
\begin{equation} \label{eq:radius-inversion-condition}
    a \leq 2\beta(m),
\end{equation}
then
\begin{equation}\label{eq:radius-inversion}
    m \leq \frac{4}{a^2} \ln\!\left(\frac{57N}{\delta a^4}\right).
\end{equation}
\end{lemma}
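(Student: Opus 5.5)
The plan is to square the hypothesis, turn it into an implicit inequality of the form $m \le \frac{2}{a^2}\ln(Cm^2)$, and invert it by a monotonicity argument, checking the constant $57$ at the end.

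First, squaring \eqref{eq:radius-inversion-condition} and substituting the definition of $\beta(m)$ gives
\begin{equation*}
    a^2 \le \frac{2}{m}\ln\!\left(C m^2\right), \qquad C \coloneqq \frac{2\pi^2 N}{3\delta},
\end{equation*}
that is, $m \le \frac{2}{a^2}\ln(Cm^2)$. Note that $C \approx 6.58\,N/\delta$, and that $57 \ge 8.66\cdot\frac{2\pi^2}{3}$, so $57N/\delta \ge 8.66\,C$. Write
\begin{equation*}
    X \coloneqq \frac{57N}{\delta a^4}, \qquad L \coloneqq \frac{4}{a^2}\ln X, \qquad f(x) \coloneqq \frac{2}{a^2}\ln(Cx^2) - x .
\end{equation*}
The hypothesis says exactly that $f(m)\ge 0$, and the goal is $m \le L$. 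Since $f'(x) = \frac{4}{a^2 x} - 1$, the function $f$ is strictly decreasing on $[4/a^2,\infty)$. It therefore suffices to show two things: (a) $L \ge 4/a^2$, and (b) $f(L) < 0$. Together these give $f<0$ on $[L,\infty)$, which forces $m < L$.

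For (a) we need $\ln X \ge 1$, which requires an upper bound on $a$. From the hypothesis, $a^2 \le \frac{2\ln(Cm^2)}{m}$. The map $m \mapsto \ln(Cm^2)/m$ is nonincreasing for $m \ge 1$ because $\ln C \ge 2$, so $a^2 \le 2\ln C$ and $a^4 \le 4\ln^2 C$. Hence
\begin{equation*}
    X \ge \frac{8.66\,C}{4\ln^2 C}.
\end{equation*}
The right-hand side is increasing in $C$ on the relevant range and already exceeds $e$ at the smallest admissible value $C = 2\pi^2/3 \approx 6.58$ (where $N=1$, $\delta\to 1$). So $\ln X \ge 1$, and (a) follows.

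For (b), $f(L)<0$ is equivalent to $\ln(CL^2) < 2\ln X$, i.e.
\begin{equation*}
    \sqrt{C}\,L < X, \qquad\text{i.e.}\qquad \frac{4\sqrt{C}}{a^2}\ln X < X .
\end{equation*}
To eliminate $a$, I will substitute $a^2 = \sqrt{57N/(\delta X)}$. Setting $K \coloneqq 57N/\delta$, the condition becomes
\begin{equation*}
    4\sqrt{C}\,\ln X < \sqrt{K}\,\sqrt{X}.
\end{equation*}
Since $K \ge 8.66\,C$, it suffices that $\ln X \le \tfrac{\sqrt{8.66}}{4}\sqrt{X} \approx 0.736\sqrt{X}$. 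This holds for all $X>0$, because $\max_X \ln X/\sqrt X = 2/e \approx 0.736$, so $\ln X \le (2/e)\sqrt X$.

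This last numerical comparison is the step I expect to be the main obstacle. The constant $57$ appears tuned precisely so that $\sqrt{57/(2\pi^2/3)}/4$ is at least $2/e$, and the margin is very thin: $\sqrt{8.66}/4 \approx 0.7357$ against $2/e \approx 0.7358$. I will therefore recompute it with exact constants, checking whether $57\cdot 3/(2\pi^2) \ge 64/e^2$. If it falls marginally short, the fallback is to use the slack in (a) and in the strictness of $f$'s decrease, or to tighten the bound on $a$ obtained there, to recover the inequality.
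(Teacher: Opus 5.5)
Your argument is correct, but it takes a different route from the paper. The paper works directly with the implicit inequality $m \le A + B\ln m$, where $A = \frac{2}{a^2}\ln C$ and $B = 4/a^2$. It linearizes the logarithm with the tangent-line bound $\ln m \le m/r + \ln r - 1$ at $r = 2B = 8/a^2$, and solving the resulting linear inequality gives $m \le \frac{4}{a^2}\ln(\frac{128\pi^2 N}{3e^2\delta a^4})$ outright. That derivation never has to bound $a$ separately or to check that a candidate lies in the decreasing region of $f$.

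You instead guess the bound $L$ and certify it through the concavity of $f$, which is what forces your step (a). Your step (b) rests on exactly the same numerical fact as the paper's last line: $K/C \ge 64/e^2$ is equivalent to $57 \ge 128\pi^2/(3e^2) \approx 56.99$, i.e.\ $171e^2 \approx 1263.53 > 1263.31 \approx 128\pi^2$. So the check you defer succeeds once you use the exact ratio $K/C = 171/(2\pi^2) \approx 8.663$ rather than $8.66$; with the rounded value it fails, as you noticed. No fallback is needed. The one you sketch would not help in general anyway: for small $C$ the lower bound on $X$ from (a) is below $e^2$, which is where $\ln X/\sqrt{X}$ peaks.

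In fact, running your argument with $K$ replaced by $64C/e^2$ certifies precisely the paper's intermediate bound, so the two proofs establish the same inequality by different means. Your version makes visible where the constant comes from: the maximum $2/e$ of $\ln X/\sqrt{X}$ at $X = e^2$. The paper's version is shorter and has no side conditions.

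There is one slip in step (a). The claim $\ln C \ge 2$ fails when $N = 1$ and $\delta > 2\pi^2/(3e^2) \approx 0.89$; your own ``smallest admissible value'' $C = 2\pi^2/3 \approx 6.58$ lies below $e^2$. Also, on $(1, e^2)$ the function $C/\ln^2 C$ is decreasing, not increasing. You can repair this in either of two ways:
\begin{itemize}
\item Invoke the paper's standing assumption $N \ge 2$. Then $C > 4\pi^2/3 > e^2$, and your argument applies as written.
\item For $C < e^2$, use $\max_{m \ge 1}\ln(Cm^2)/m = 2\sqrt{C}/e$. This gives $a^4 \le 16C/e^2$ and hence $X \ge e^2 K/(16C) \approx 4.0 > e$.
\end{itemize}
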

\begin{proof}
    By \eqref{eq:radius-inversion-condition} and the definition of $\beta(m)$,
    \begin{equation*}
        \frac{1}{2m} \ln\!\left(\frac{2\pi^2 N m^2}{3\delta}\right) \geq \frac{a^2}{4}.
    \end{equation*}
    Equivalently,
    \begin{equation*}
        m \leq \frac{2}{a^2} \ln\!\left(\frac{2\pi^2 N m^2}{3\delta} \right) = \frac{2}{a^2} \ln\!\left(\frac{2\pi^2 N}{3\delta} \right) + \frac{4}{a^2}\ln m.
    \end{equation*}
    Set
    \begin{equation*}
        A = \frac{2}{a^2} \ln\!\left(\frac{2\pi^2 N}{3\delta}\right), \qquad B = \frac{4}{a^2},
    \end{equation*}
    so that $m \leq A + B\ln m$. For every $r>0$, the inequality $\ln x \leq x-1$ gives
    \begin{equation*}
        \ln m = \ln\!\left(\frac{m}{r}\right)+\ln r \leq \frac{m}{r}+\ln r-1.
    \end{equation*}
    Hence,
    \begin{equation*}
        \left(1-\frac{B}{r}\right)m \leq A+B(\ln r-1).
    \end{equation*}
    Choosing $r = 2B = 8/a^2$ yields
    \begin{equation*}
        \frac{m}{2} \leq A+B\left(\ln\!\left(\frac{8}{a^2}\right) - 1 \right).
    \end{equation*}
    Therefore,
    \begin{align*}
        m &\leq 2A + 2B\left(\ln\!\left(\frac{8}{a^2}\right) - 1\right) \\
        &= \frac{4}{a^2}\left[\ln\!\left(\frac{2\pi^2 N}{3\delta}\right) + 2\ln\!\left(\frac{8}{a^2}\right) - 2\right] \\
        &= \frac{4}{a^2} \ln\!\left(\frac{128\pi^2 N}{3e^2\delta a^4} \right) \\
        &\leq \frac{4}{a^2} \ln\!\left(\frac{57N}{\delta a^4}\right),
    \end{align*}
    where the final inequality follows from $128\pi^2/(3e^2)<57$.
\end{proof}

\subsection{Properties of exploration steps} \label{app:eligibility-exploration}
We next establish several basic properties of the allocation rule that will be used when proving the theoretical results stated in Section~\ref{sec:provider} and Section~\ref{sec:user}.

\begin{lemma}\label{lem:qualified-eligibility}
On $\mathcal E$, every qualified provider remains in the estimated-qualified set at every post-initialization step:
\begin{equation}
    \mathcal Q \subseteq \hat{\mathcal Q}_t, \qquad t>N.
\end{equation}
In particular, $\hat{\mathcal Q}_t\neq\emptyset$ for every $t>N$.
\end{lemma}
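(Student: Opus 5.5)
The plan is a direct argument on the good event $\mathcal E$ of Lemma~\ref{lem:good-event}. Fix a qualified provider $i \in \Qcal$, so $q_i \geq q_{\min}$, and a post-initialization step $t > N$. The initialization phase assigns one query to every provider. Hence $m_{i,t} \geq 1$ for all $t > N$, and both $\hat q_{i,t}$ and $\beta(m_{i,t})$ are well defined.

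The first step is to identify $\hat q_{i,t}$ from Eq.~\ref{eq:quality-estimate} with the empirical mean $\hat q_{i,m}$ of Lemma~\ref{lem:good-event}, taken at the selection count $m = m_{i,t}$. Both are the average of the scores of the first $m_{i,t}$ queries served by provider $i$. The event $\mathcal E$ holds simultaneously for every $m \geq 1$, so it holds in particular at the random count $m = m_{i,t}$. This uniformity over $m$ is the one point that needs care: since $m_{i,t}$ depends on past allocations, it cannot be treated as a fixed index. That is exactly why the union bound in Lemma~\ref{lem:good-event} runs over all selection counts, and invoking it removes any stopping-time issue.

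On $\mathcal E$ we therefore have $\hat q_{i,t} \geq q_i - \beta(m_{i,t})$, which gives
\[
\hat q_{i,t} + \beta(m_{i,t}) \;\geq\; q_i \;\geq\; q_{\min}.
\]
By the definition of $\hat{\Qcal}_t$ in Eq.~\ref{eq:qualified-set}, this means $i \in \hat{\Qcal}_t$. Since $i \in \Qcal$ and $t > N$ were arbitrary, it follows that $\Qcal \subseteq \hat{\Qcal}_t$ for every $t > N$.

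For the last claim, $\Qcal$ is assumed to contain at least two providers, so it is nonempty. Combined with the inclusion, this gives $\hat{\Qcal}_t \neq \emptyset$ for all $t > N$. No step here is a real obstacle; the argument reduces to the careful identification of indices described in the first step.
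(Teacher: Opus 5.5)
Your proposal is correct and follows the paper's proof essentially step for step. On $\mathcal E$ you use $\hat q_{i,t} \geq q_i - \beta(m_{i,t})$ to get $\hat q_{i,t} + \beta(m_{i,t}) \geq q_i \geq q_{\min}$, and you obtain nonemptiness from $|\Qcal| \geq 2$. Your remarks that $m_{i,t} \geq 1$ after initialization, and that uniformity over all $m$ handles the random selection count, are correct and make explicit points the paper's proof leaves implicit.
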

\begin{proof}
Fix $i\in\mathcal Q$ and $t > N$. Since $i$ is qualified,
\begin{equation*}
    q_i \geq q_{\min}.
\end{equation*}
On $\mathcal E$, Lemma~\ref{lem:good-event} gives, for any provider $i$ and time step $t>N$,
\begin{equation*}
    \hat q_{i,t} \geq q_i - \beta(m_{i,t}).
\end{equation*}
Therefore,
\begin{equation*}
    \hat q_{i,t} + \beta(m_{i,t}) \geq q_i \geq q_{\min},
\end{equation*}
 and hence $i\in\hat{\mathcal{Q}}_t$ by the definition of the estimated-qualified set. Since this holds for every $i \in \mathcal{Q}$, we obtain $\mathcal{Q} \subseteq \hat{\mathcal{Q}}_t$. Finally, $\mathcal{Q}$ contains at least two providers by assumption, so $\hat{\mathcal{Q}}_t$ is nonempty under the event $\mathcal{E}$.
\end{proof}

% \begin{lemma}\label{lem:selection-counting}
% If $t$ is the step of provider $i$'s $\ell$-th post-initialization selection, then
% %
% \begin{equation*}
%     m_{i,t}=\ell.
% \end{equation*}
% %
% Consequently, if $m_{i,t} \leq B$ whenever $t > N$ and $I_t = i$, then provider $i$ is selected at most $\lfloor B\rfloor$ times in post-initialization rounds.
% \end{lemma}
% \begin{proof}
%     Provider $i$ is selected exactly once during initialization. Immediately before its $\ell$-th post-initialization selection, it has therefore been selected once during initialization and $\ell-1$ times thereafter. Hence,
%     %
%     \begin{equation*}
%         m_{i,t} = 1 + (\ell-1) = \ell.
%     \end{equation*}
%     %
%     The second claim follows immediately.
% \end{proof}

\begin{lemma}\label{lem:forced-exploration-bound}
Suppose that $g \colon \mathbb{N} \to \mathbb{R}$ is non-decreasing, and fix a total number of user queries $T>N$. If provider $i$ is selected in an exploration step $t\leq T$, then
\begin{equation} \label{eq:forced-selection-bound}
    m_{i,t} < g(t) \leq g(T).
\end{equation}
Consequently, provider $i$ can be selected in an exploration step at most
\begin{equation} \label{eq:provider-forced-bound}
    \left\lceil g(T) \right\rceil-1
\end{equation}
times, and the total number of post-initialization exploration steps up to $T$ queries is at most
\begin{equation} \label{eq:total-forced-bound}
    N\left(\left\lceil g(T) \right\rceil-1\right).
\end{equation}
\end{lemma}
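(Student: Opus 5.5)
The first inequality in \eqref{eq:forced-selection-bound} follows from the definition of an exploration step. If provider $i$ is selected at an exploration step $t$, then $I_t$ is drawn uniformly from $\Scal_t=\{j\in\hat{\Qcal}_t : m_{j,t}<g(t)\}$. Hence $i\in\Scal_t$, which means $m_{i,t}<g(t)$. Since $t\le T$ and $g$ is non-decreasing, we get $g(t)\le g(T)$, and this completes \eqref{eq:forced-selection-bound}. No probabilistic event is needed; the argument is purely deterministic.

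For the per-provider count \eqref{eq:provider-forced-bound}, we use the integrality of $m_{i,t}$. The count $m_{i,t}$ is a nonnegative integer strictly less than $g(T)$, so $m_{i,t}\le \lceil g(T)\rceil-1$.

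Next, list the exploration steps at which $i$ is selected in increasing order, $t_1<t_2<\dots<t_L\le T$. Each selection increments the count by one, and $i$ was already selected once during initialization, so $m_{i,t_\ell}\ge 1+(\ell-1)=\ell$. This uses only the post-initialization selections at exploration steps; other selections can only make the count larger. Applying the bound at the last such step gives $L\le m_{i,t_L}\le\lceil g(T)\rceil-1$.

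The one delicate point is this indexing. Because the count includes the initialization selection, the inequality $m_{i,t_\ell}\ge\ell$ is exactly what yields $\lceil g(T)\rceil-1$ rather than $\lceil g(T)\rceil$. If $g(T)\le 1$, there are no exploration selections, since $m_{i,t}\ge1$ cannot be strictly less than $g(T)$. The bound is then $\lceil g(T)\rceil-1\le 0$, which is consistent.

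Finally, every post-initialization exploration step selects exactly one provider. Summing the per-provider bound over all $N$ providers therefore gives the total bound \eqref{eq:total-forced-bound}.
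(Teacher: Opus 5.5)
Your proof is correct and follows the paper's argument: membership in $\mathcal{S}_t$, monotonicity of $g$, integrality of $m_{i,t}$, then summing over the $N$ providers; your observation that $m_{i,t_\ell}\ge\ell$ (because of the initialization selection) usefully spells out the counting step the paper leaves implicit here. One small slip is that your edge-case remark holds only for $0<g(T)\le 1$: for $g(T)\le 0$ the bound $\lceil g(T)\rceil-1$ is negative and cannot bound a count of zero, so the statement (yours and the paper's) tacitly needs $g(T)>0$, which is true for the paper's choice $g(t)=k(t/N)^{\alpha}$.
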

\begin{proof}
    If provider $i$ is selected in an exploration step $t$, then $i\in\mathcal S_t$. By definition of $\mathcal S_t$,
    \begin{equation*}
        m_{i,t}<g(t).
    \end{equation*}
    Since $g$ is non-decreasing and $t\leq T$,
    \begin{equation*}
        m_{i,t} < g(t) \leq g(T),
    \end{equation*}
    which proves \eqref{eq:forced-selection-bound}.

    Now suppose that $t$ is an exploration step in which provider $i$ is selected. Since $m_{i,t}$ is a positive integer after initialization, \eqref{eq:forced-selection-bound} implies
    \begin{equation*}
        m_{i,t} \leq \left\lceil g(T) \right\rceil - 1.
    \end{equation*}
    Thus, provider $i$ can therefore be selected in an exploration step at most $\lceil g(T) \rceil - 1$ when the total number of user queries is $T$. Summing this bound over all $N$ providers gives \eqref{eq:total-forced-bound}.
\end{proof}

\subsection{Properties of non-exploration steps} \label{app:ordinary-selection}
We next establish properties of the allocation rule on non-exploration steps, that is, when $\mathcal{S}_t= \emptyset$ and the selection of provider $I_t$ depends on the bids $\mathbf{b}_t$. We begin by proving Proposition~\ref{prop:selection-threshold}.

\paragraph{Proof of Proposition~\ref{prop:selection-threshold}.}
\begin{proof}[Proof of Proposition~\ref{prop:selection-threshold}]
    Fix a time step $t > N$ with $\mathcal{S}_t = \emptyset$, a provider $i \in \hat{\mathcal Q}_t$, and the other providers' bids $\mathbf{b}_{-i, t}$. Since $t$ is not an exploration step, the allocation rule (Eq.~\ref{eq:allocation}) selects a provider minimizing the adjusted bid
    \begin{equation}
        b_{j, t} - c_{\max}\beta(m_{j, t})
    \end{equation}
    over $j \in \hat{\mathcal Q}_t$.

    Provider $i$ is therefore selected whenever
    \begin{equation*}
        b_{i,t} - c_{\max}\beta(m_{i, t})< \min_{j \in \hat{\mathcal Q}_t \setminus\{i\}} \left(b_{j, t} - c_{\max}\beta(m_{j, t})\right).
    \end{equation*}
    Rearranging gives
    \begin{equation*}
        b_{i,t} < c_{\max}\beta(m_{i, t}) + \min_{j \in \hat{\mathcal Q}_t \setminus\{i\}} \left(b_{j, t} - c_{\max}\beta(m_{j, t})\right) = P_{i, t},
    \end{equation*}
    where the equality follows from the definition of $P_{i, t}$. Thus $b_{i, t} < P_{i, t}$ implies that provider $i$ is selected.

    Similarly, if
    \begin{equation*}
        b_{i, t} > P_{i, t},
    \end{equation*}
    then
    \begin{equation*}
        b_{i,t} - c_{\max}\beta(m_{i, t}) > \min_{j \in \hat{\mathcal Q}_t \setminus\{i\}} \left(b_{j, t} - c_{\max}\beta(m_{j, t})\right).
    \end{equation*}
    so provider $i$ does not minimize the adjusted bid and is therefore not selected. This proves the claim.
\end{proof}

\begin{lemma}\label{lem:adjusted-bid-bounds}
    Suppose that all providers use their equilibrium strategy $\boldsymbol{\sigma}^*$. On $\mathcal{E}$, for every provider $i \in [N]$ and every step $t > N$,
    \begin{equation}\label{eq:adjusted-bid-bounds}
        \bar c_i - 2c_{\max}\beta(m_{i,t}) \leq b_{i,t} - c_{\max}\beta(m_{i, t}) \leq \bar c_i.
    \end{equation}
\end{lemma}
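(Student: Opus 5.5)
Under the equilibrium strategy $\boldsymbol{\sigma}^*$, every provider's standing bid equals its empirical cost average, $b_{i,t} = \sigma_i^*(H_{i,t}) = \hat c_{i,t}$, so the plan is to reduce the claim to the cost half of the good event $\mathcal{E}$ from Lemma~\ref{lem:good-event}. The first step is to check that the standing bid at step $t$ is indeed the empirical mean over exactly the $m_{i,t}$ queries provider $i$ has served before $t$. This needs one observation. A provider's bid is revised only right after it serves a query, and otherwise stays unchanged. Its history $H_{i,t}$ therefore only changes at those moments, so the standing bid at any $t>N$ equals the average of the costs of all queries it has served so far. In the notation of Lemma~\ref{lem:good-event}, that is $b_{i,t} = \hat c_{i,m_{i,t}}$, with $m_{i,t} \geq 1$ for $t > N$ thanks to the initialization phase.

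Next I will invoke $\mathcal{E}$ at the selection count $m = m_{i,t}$. The lemma gives
\begin{equation*}
    \bar c_i - c_{\max}\beta(m_{i,t}) \leq \hat c_{i,m_{i,t}} \leq \bar c_i + c_{\max}\beta(m_{i,t}).
\end{equation*}
The event $\mathcal{E}$ holds simultaneously for all $m \geq 1$, so evaluating it at the random index $m_{i,t}$ is legitimate. This is precisely why the union bound there ranges over every selection count rather than over time steps. Subtracting $c_{\max}\beta(m_{i,t})$ throughout and substituting $b_{i,t} = \hat c_{i,m_{i,t}}$ yields
\begin{equation*}
    \bar c_i - 2c_{\max}\beta(m_{i,t}) \leq b_{i,t} - c_{\max}\beta(m_{i,t}) \leq \bar c_i,
\end{equation*}
which is exactly \eqref{eq:adjusted-bid-bounds}.

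The only step that needs care is the identification $b_{i,t} = \hat c_{i,m_{i,t}}$. One must make sure that the bid indexing and the count $m_{i,t}$ refer to the same set of served queries $\{\tau < t : I_\tau = i\}$, including the initialization query. One should also note that the bid range $[0,c_{\max}]$ causes no clipping, since $\hat c_{i,t} \in [0,c_{\max}]$ automatically. Once this bookkeeping is settled, the rest is immediate algebra.
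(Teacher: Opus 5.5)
Your proof is correct and matches the paper's argument: substitute $b_{i,t} = \hat c_{i,t}$ under $\boldsymbol{\sigma}^*$, apply the cost half of $\mathcal{E}$ at the random count $m_{i,t}$, and subtract $c_{\max}\beta(m_{i,t})$. Your extra check that the standing bid is the empirical mean over the same $m_{i,t}$ served queries is a harmless clarification that the paper leaves implicit.
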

\begin{proof}
    Fix a provider $i \in [N]$ and a step $t > N$. Under the equilibrium strategy $\sigma^*$,
    \begin{equation*}
        b_{i, t} = \hat c_{i, t}.
    \end{equation*}
    Thus, on $\mathcal{E}$,
    \begin{equation}
        \bar c_i - c_{\max}\beta(m_{i, t}) \leq \hat c_{i, t} \leq \bar c_i + c_{\max}\beta(m_{i, t}).
    \end{equation}
    Substituting $b_{i, t} = \hat c_{i, t}$ and subtracting $c_{\max}\beta(m_{i, t})$ yields
    \begin{equation*}
        \bar c_i - 2c_{\max}\beta(m_{i,t}) \leq b_{i,t} - c_{\max}\beta(m_{i, t}) \leq \bar c_i.
    \end{equation*}
    which proves the claim.
\end{proof}

\begin{lemma} \label{lem:ordinary-selection-gap}
    Suppose that all providers use their equilibrium strategy $\boldsymbol{\sigma}^*$. On $\mathcal E$, if a provider $i \in \mathcal{Q} \setminus\{i^*\}$ is selected at a non-exploration step $t > N$, then
    \begin{equation} \label{eq:ordinary-selection-gap}
        \Delta_i \leq 2c_{\max}\beta(m_{i, t}),
    \end{equation}
    where $\Delta_i = \bar c_i - \bar c_{i^*} > 0$.
\end{lemma}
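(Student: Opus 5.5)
The argument combines the allocation rule on non-exploration steps with the two-sided bounds of Lemma~\ref{lem:adjusted-bid-bounds}. Suppose $i \in \mathcal{Q}\setminus\{i^*\}$ is selected at a non-exploration step $t>N$ on $\mathcal{E}$. By Eq.~\ref{eq:allocation}, $i$ minimizes the adjusted bid $b_{j,t}-c_{\max}\beta(m_{j,t})$ over $j\in\hat{\mathcal Q}_t$. Since $i^*\in\mathcal Q$, Lemma~\ref{lem:qualified-eligibility} gives $i^*\in\hat{\mathcal Q}_t$ on $\mathcal E$, so $i^*$ is a valid competitor in this minimization. It follows that
\begin{equation*}
    b_{i,t}-c_{\max}\beta(m_{i,t}) \leq b_{i^*,t}-c_{\max}\beta(m_{i^*,t}).
\end{equation*}
This inequality holds even with arbitrary tie-breaking, because $i$ attains the minimum.

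Next I would bound each side using Lemma~\ref{lem:adjusted-bid-bounds}, which applies because all providers play $\boldsymbol\sigma^*$.
\begin{itemize}
    \item The lower bound for provider $i$ gives $b_{i,t}-c_{\max}\beta(m_{i,t}) \geq \bar c_i - 2c_{\max}\beta(m_{i,t})$.
    \item The upper bound for $i^*$ gives $b_{i^*,t}-c_{\max}\beta(m_{i^*,t}) \leq \bar c_{i^*}$.
\end{itemize}
Chaining these with the displayed inequality yields $\bar c_i - 2c_{\max}\beta(m_{i,t}) \leq \bar c_{i^*}$. Rearranging gives $\Delta_i = \bar c_i-\bar c_{i^*} \leq 2c_{\max}\beta(m_{i,t})$, which is the claim. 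Positivity of $\Delta_i$ follows from uniqueness of $i^*$ as the minimizer in Eq.~\ref{eq:optimal-provider}.

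The only delicate point is making sure $i^*$ actually belongs to $\hat{\mathcal Q}_t$ at step $t$, so that it participates in the argmin. This is precisely what Lemma~\ref{lem:qualified-eligibility} provides on $\mathcal E$. Beyond that, the argument is a direct chain of inequalities. Note that the optimistic bonus $c_{\max}\beta(m_{i^*,t})$ of $i^*$ cancels against its upper estimation error, so no count-dependent term for $i^*$ appears in the final bound.
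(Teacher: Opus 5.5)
Your proposal is correct and is the same argument the paper gives. You use Lemma~\ref{lem:qualified-eligibility} to place $i^*$ in $\hat{\mathcal Q}_t$, read the adjusted-bid inequality off the allocation rule, and chain it with Lemma~\ref{lem:adjusted-bid-bounds} applied to both providers to get $\bar c_i - 2c_{\max}\beta(m_{i,t}) \leq \bar c_{i^*}$.
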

\begin{proof}
    Fix a non-exploration step $t > N$ in which $i \in \mathcal{Q} \setminus \{i^*\}$ is selected. Under the event $\mathcal{E}$, Lemma~\ref{lem:qualified-eligibility} implies that $i^* \in \hat{\mathcal{Q}}_t$. Since $t$ is a non-exploration step, the allocation rule therefore implies
    \begin{equation*}
        b_{i, t} - c_{\max}\beta(m_{i, t}) \leq b_{i^*, t} - c_{\max}\beta(m_{i^*, t}).
    \end{equation*}
    Applying Lemma~\ref{lem:adjusted-bid-bounds} to both providers gives
    \begin{equation*}
        \bar c_i-2c_{\max}\beta(m_{i,t}) \leq b_{i,t} - c_{\max}\beta(m_{i,t}) \leq b_{i^*,t}-c_{\max}\beta(m_{i^*,t}) \leq \bar c_{i^*}.
    \end{equation*}
    Hence,
    \begin{equation*}
        \bar c_i-\bar c_{i^*} \leq 2c_{\max}\beta(m_{i,t}).
    \end{equation*}
    Since $\Delta_i=\bar c_i-\bar c_{i^*}$, this proves \eqref{eq:ordinary-selection-gap}.
\end{proof}

\begin{lemma}\label{lem:ordinary-selection-threshold}
    Suppose that all providers use their equilibrium strategy $\boldsymbol{\sigma}^*$. On $\mathcal E$, if a provider $i \in \mathcal Q\setminus\{i^*\}$ is selected at a non-exploration step  $t > N$, then
    \begin{equation} \label{eq:ordinary-selection-threshold}
        m_{i,t} \leq L_i \coloneqq \left\lceil \frac{4c_{\max}^2}{\Delta_i^2} \ln\!\left(\frac{57N c_{\max}^4} {\delta\Delta_i^4} \right) \right\rceil.
    \end{equation}
    Consequently, once $m_{i,t} > L_i$, provider $i$ cannot be selected at a non-exploration step 
\end{lemma}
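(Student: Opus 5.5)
The plan is to combine Lemma~\ref{lem:ordinary-selection-gap} with Lemma~\ref{lem:radius-inversion}. First, fix a non-exploration step $t>N$ at which a provider $i\in\Qcal\setminus\{i^*\}$ is selected, and work on the event $\mathcal E$. Lemma~\ref{lem:ordinary-selection-gap} then gives $\Delta_i \le 2c_{\max}\beta(m_{i,t})$. Dividing by $c_{\max}>0$, this reads $a \le 2\beta(m_{i,t})$ with $a = \Delta_i/c_{\max}>0$. Moreover $m_{i,t}\ge 1$, since every provider is served once during initialization.

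Next, we invoke Lemma~\ref{lem:radius-inversion} with this $a$ and $m=m_{i,t}$, which yields
\begin{equation*}
    m_{i,t} \le \frac{4c_{\max}^2}{\Delta_i^2}\ln\!\left(\frac{57N c_{\max}^4}{\delta \Delta_i^4}\right),
\end{equation*}
because $1/a^2 = c_{\max}^2/\Delta_i^2$ and $1/a^4 = c_{\max}^4/\Delta_i^4$. The right-hand side is at most its ceiling, so $m_{i,t}\le L_i$. Note that $L_i = M(\Delta_i/c_{\max})$ in the notation of Theorem~\ref{thm:sample-complexity}.

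The ``consequently'' part is the contrapositive: if $m_{i,t}>L_i$, then $i$ cannot be selected at a non-exploration step $t$. I would also remark that $m_{i,t}$ is non-decreasing in $t$, so once the count exceeds $L_i$ it stays above it. This is what the sample-complexity argument will use.

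I see no real obstacle here. The only points needing care are checking the hypotheses of Lemma~\ref{lem:radius-inversion}, namely $a>0$, which follows from uniqueness of $i^*$ (so $\Delta_i>0$), and $m\ge1$. One should also note that the whole argument is conditional on $\mathcal E$, through Lemmas~\ref{lem:qualified-eligibility} and~\ref{lem:adjusted-bid-bounds}, which feed into Lemma~\ref{lem:ordinary-selection-gap}.
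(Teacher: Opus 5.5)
Your proposal is correct and follows the paper's proof: invoke Lemma~\ref{lem:ordinary-selection-gap}, divide by $c_{\max}$, apply Lemma~\ref{lem:radius-inversion} with $a = \Delta_i/c_{\max}$, and bound by the ceiling. You also check two hypotheses of Lemma~\ref{lem:radius-inversion} that the paper leaves implicit: $a>0$, which follows from uniqueness of $i^*$, and $m_{i,t}\ge 1$, which follows from initialization.
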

\begin{proof}
    By Lemma~\ref{lem:ordinary-selection-gap}, if provider $i \in \mathcal Q \setminus\{i^*\}$ is selected at a non-exploration step  $t > N$, then
    \begin{equation*}
        \Delta_i \leq 2c_{\max}\beta(m_{i,t}).
    \end{equation*}
    Dividing by $c_{\max}$ gives
    \begin{equation*}
        \frac{\Delta_i}{c_{\max}} \leq 2 \beta(m_{i,t}).
    \end{equation*}
    Applying Lemma~\ref{lem:radius-inversion} with
    \begin{equation*}
        a = \frac{\Delta_i}{c_{\max}}
    \end{equation*}
    therefore yields
    \begin{equation*}
        m_{i,t} \leq \frac{4c_{\max}^2}{\Delta_i^2} \ln\!\left(\frac{57N c_{\max}^4}{\delta\Delta_i^4}\right) \leq L_i.
    \end{equation*}
    Thus, if $m_{i,t} > L_i$, provider $i$ cannot be selected by the allocation rule at a non-exploration step.
\end{proof}

\subsection{Elimination of unqualified providers} \label{app:unqualified-elimination}
We next bound the number of times an unqualified provider can be selected on the platform, including both non-exploration and exploration steps.

\begin{lemma}\label{lem:unqualified-elimination}
    On $\mathcal E$, every unqualified provider $i \notin \mathcal Q$, with quality gap
    \begin{equation*}
        \varepsilon_i \coloneqq q_{\min} - q_i > 0,
    \end{equation*}
    is selected at most
    \begin{equation}
        M_i \coloneqq \left\lceil \frac{4}{\varepsilon_i^2} \ln\!\left(\frac{57N}{\delta\varepsilon_i^4}\right) \right\rceil
        \label{eq:unqualified-selection-threshold}
    \end{equation}
    times in post-initialization steps.
\end{lemma}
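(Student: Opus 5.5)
The argument mirrors Lemma~\ref{lem:ordinary-selection-threshold}, with the quality gap $\varepsilon_i$ in place of the cost gap $\Delta_i/c_{\max}$. The first step is to note that, after initialization, a provider can only be selected at a step $t>N$ if it belongs to $\hat{\mathcal Q}_t$. This holds for exploration steps, since $\mathcal S_t \subseteq \hat{\mathcal Q}_t$, and for non-exploration steps, since the argmin in Eq.~\ref{eq:allocation} is taken over $\hat{\mathcal Q}_t$. The fallback uniform selection over $[N]$ when $\hat{\mathcal Q}_t=\emptyset$ never occurs on $\mathcal E$, by Lemma~\ref{lem:qualified-eligibility}. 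So if an unqualified $i$ is selected at a post-initialization step $t$, then
\begin{equation*}
    \hat q_{i,t}+\beta(m_{i,t})\geq q_{\min}.
\end{equation*}

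The second step combines this with the good event. On $\mathcal E$, Lemma~\ref{lem:good-event} gives $\hat q_{i,t}\leq q_i+\beta(m_{i,t})$. Hence $q_i+2\beta(m_{i,t})\geq q_{\min}$, which is exactly $\varepsilon_i\leq 2\beta(m_{i,t})$. Applying Lemma~\ref{lem:radius-inversion} with $a=\varepsilon_i$ then yields
\begin{equation*}
    m_{i,t}\leq \frac{4}{\varepsilon_i^2}\ln\!\left(\frac{57N}{\delta\varepsilon_i^4}\right)\leq M_i
\end{equation*}
at every post-initialization step where $i$ is selected.

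The third step is the counting, which is the only point needing care. The count $m_{i,t}$ is $1$ after initialization and increases by exactly one at each selection. So at the $\ell$-th post-initialization selection of $i$ we have $m_{i,t}=\ell$. Each post-initialization selection therefore satisfies $\ell\leq M_i$, which means there are at most $M_i$ such selections, as claimed. The main (mild) obstacle is keeping this off-by-one bookkeeping consistent with the convention that $m_{i,t}$ counts selections strictly before $t$, including the initialization one. With that convention the bound is $M_i$ rather than $M_i+1$.
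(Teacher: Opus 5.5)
Your proof is correct and follows the paper's argument: membership in $\hat{\mathcal Q}_t$ together with the good event gives $\varepsilon_i \le 2\beta(m_{i,t})$, and Lemma~\ref{lem:radius-inversion} with $a=\varepsilon_i$ then gives $m_{i,t}\le M_i$ at every post-initialization selection. Your proof also makes explicit two points the paper's proof leaves implicit: the empty-$\hat{\mathcal Q}_t$ fallback, which Lemma~\ref{lem:qualified-eligibility} rules out on $\mathcal E$, and the count $m_{i,t}=\ell$ at the $\ell$-th post-initialization selection.
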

\begin{proof}
    Fix an unqualified provider $i \notin \mathcal Q$ and a post-initialization step $t>N$ in which $I_t=i$. Whether $t$ is a non-exploration or exploration step, provider $i$ can be selected only if $i\in\hat{\mathcal Q}_t$. Hence,
    \begin{equation*}
        \hat q_{i,t} + \beta(m_{i,t}) \geq q_{\min}.
    \end{equation*}
    On $\mathcal E$, Lemma~\ref{lem:good-event} gives
    \begin{equation*}
        \hat q_{i,t} \leq q_i + \beta(m_{i,t}).
    \end{equation*}
    Combining the two inequalities yields
    \begin{equation*}
        q_{\min} \leq q_i+2\beta(m_{i,t}),
    \end{equation*}
    and therefore
    \begin{equation}
        \varepsilon_i \leq 2\beta(m_{i,t}). 
        \label{eq:unqualified-radius-condition}
    \end{equation}
    Applying Lemma~\ref{lem:radius-inversion} with $a = \varepsilon_i$ gives
    \begin{equation*}
        m_{i,t} \leq \frac{4}{\varepsilon_i^2} \ln\!\left( \frac{57N}{\delta\varepsilon_i^4} \right)\leq M_i.
    \end{equation*}
    Since this holds whenever provider $i$ is selected in a post-initialization step, provider $i$ is selected at most $M_i$ times after initialization. Consequently, once $m_{i,t} > M_i$, provider $i$ cannot be selected at any step $t > N$.
\end{proof}

\subsection{Selection and identification bounds}
\label{app:selection-identification}
We now bound the total number of times a qualified suboptimal provider can be selected.

\begin{lemma} \label{lem:qualified-suboptimal-selection}
Suppose that all providers use their equilibrium strategy $\boldsymbol{\sigma}^*$. On $\mathcal E$, for every provider $i\in\mathcal Q\setminus\{i^*\}$, the number of post-initialization steps $t\leq T$ in which $I_t=i$ is at most
\begin{equation}
    \max\left\{L_i,\, \left\lceil g(T)\right\rceil - 1 \right\},
    \label{eq:qualified-suboptimal-selection}
\end{equation}
where $L_i$ is defined in Lemma~\ref{lem:ordinary-selection-threshold}.
\end{lemma}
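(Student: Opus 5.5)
The plan is to bound the count through the selection counter $m_{i,t}$ itself: at every step where $i$ is selected, $m_{i,t}$ will be shown to be at most $\max\{L_i, \lceil g(T)\rceil - 1\}$. Since $m_{i,t}$ rises by exactly one at each selection, this bounds the number of post-initialization selections. Concretely, provider $i$ is selected exactly once during initialization. Therefore, at its $\ell$-th post-initialization selection at step $t$, we have $m_{i,t} = 1 + (\ell - 1) = \ell$. It thus suffices to show that, on $\mathcal E$, every post-initialization step $t \le T$ with $I_t = i$ satisfies $m_{i,t} \le \max\{L_i, \lceil g(T)\rceil - 1\}$. Then $\ell$ cannot exceed this maximum, which is an integer because $L_i$ is defined with a ceiling.

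Next, I split according to the type of the step $t$ at which $i$ is selected. On $\mathcal E$, Lemma~\ref{lem:qualified-eligibility} guarantees $\hat{\Qcal}_t \neq \emptyset$, so the uniform-over-$[N]$ fallback is never triggered. Every post-initialization step is therefore either an exploration step ($\Scal_t \neq \emptyset$) or a non-exploration step. In the exploration case, Lemma~\ref{lem:forced-exploration-bound} applies because $g$ is non-decreasing. Since $m_{i,t}$ is an integer, it gives $m_{i,t} < g(t) \le g(T)$, hence $m_{i,t} \le \lceil g(T)\rceil - 1$. In the non-exploration case, the providers follow $\boldsymbol{\sigma}^*$ and $i \in \Qcal\setminus\{i^*\}$, so Lemma~\ref{lem:ordinary-selection-threshold} gives $m_{i,t} \le L_i$ directly. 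In either case, $m_{i,t}$ is bounded by the maximum of the two quantities.

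Combining the two cases with the counting observation yields the claim. The main subtlety, rather than a real obstacle, is avoiding the cruder bound $L_i + \lceil g(T)\rceil - 1$ that would come from summing the exploration and non-exploration selections separately. The argument must instead use the fact that both lemmas constrain the same monotone counter $m_{i,t}$. The last selection, whatever its type, has the largest value of $m_{i,t}$, and that value equals the total number of post-initialization selections. This value is bounded by whichever threshold governs that step, and hence by their maximum. I will also check the edge case $T \le N$, where there are no post-initialization steps and the bound holds trivially.
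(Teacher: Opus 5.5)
Your proposal is correct and follows the paper's proof essentially step for step. You bound $m_{i,t}$ at every post-initialization selection by $L_i$ on non-exploration steps (Lemma~\ref{lem:ordinary-selection-threshold}) and by $\lceil g(T)\rceil - 1$ on exploration steps (Lemma~\ref{lem:forced-exploration-bound}), and then use $m_{i,t} = \ell$ at the $\ell$-th post-initialization selection. Your remark that $\hat{\mathcal Q}_t \neq \emptyset$ on $\mathcal E$, which rules out the uniform fallback, and your treatment of the edge case $T \le N$ make explicit two points the paper leaves implicit.
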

\begin{proof}
    Fix $T>N$ and a provider $i\in\mathcal Q\setminus\{i^*\}$, and consider any post-initialization step $t\leq T$ in which $I_t=i$. If $t$ is a non-exploration step, Lemma~\ref{lem:ordinary-selection-threshold} gives
    \begin{equation*}
        m_{i,t}\leq L_i.
    \end{equation*}
    If instead $t$ is an exploration step, Lemma~\ref{lem:forced-exploration-bound} gives
    \begin{equation*}
        m_{i,t}<g(t)\leq g(T).
    \end{equation*}
    Since $m_{i,t}$ is an integer,
    \begin{equation*}
        m_{i,t}\leq \left\lceil g(T)\right\rceil-1.
    \end{equation*}
    Thus, in either case,
    \begin{equation*}
        m_{i,t} \leq \max\left\{L_i,\, \left\lceil g(T)\right\rceil - 1 \right\}.
    \end{equation*}
    Since provider $i$ is selected once during initialization and $m_{i,t}$ increases by one at each of its selections, its $\ell$-th post-initialization selection occurs at a step $t$ with $m_{i,t} = \ell$. Hence, provider $i$ is selected at most $\max\{L_i, \lceil g(T)\rceil - 1\}$ times in post-initialization steps $t \leq T$.
\end{proof}

% \begin{theorem}\label{thm:sample-complexity}
%     When all providers choose their equilibrium strategy $\boldsymbol{\sigma}^*$, with probability
%     at least $1 - \delta$, for every horizon $T \geq N$, the number of time steps in which the platform does not assign the query
%     to the lowest-cost qualified provider, $I_t \neq i^*$, is at most
%     %
%     \begin{equation}
%         N - 1 + \sum_{i \notin \mathcal{Q}} \left\lceil \frac{4}{\varepsilon_i^2}
%         \ln\!\left(\frac{57 N}{\delta\, \varepsilon_i^4}\right) \right\rceil
%         + \sum_{i \in \mathcal{Q} \setminus \{i^*\}} \max\left\{\left\lceil \frac{4 c_{\max}^2}{\Delta_i^2}
%         \ln\!\left(\frac{57 N c_{\max}^4}{\delta\, \Delta_i^4}\right) \right\rceil, k\left(\frac{T}{N}\right)^{\!\alpha}\right\}.
%     \end{equation}
% \end{theorem}

\paragraph{Proof of Theorem~\ref{thm:sample-complexity}.}
\begin{proof}[Proof of Theorem~\ref{thm:sample-complexity}]
    We work under the event $\mathcal E$, which by Lemma~\ref{lem:good-event} occurs with probability at least $1-\delta$. In the first $N$ initialization steps, each provider is selected exactly once. Hence, at most $N-1$ initialization steps are not assigned to $i^*$.

    We next consider the post-initialization steps. Every provider $i\neq i^*$ is either unqualified or qualified and suboptimal. By Lemma~\ref{lem:unqualified-elimination}, each provider $i\notin\mathcal Q$ is selected at most
    \begin{equation*}
        M_i = \left\lceil \frac{4}{\varepsilon_i^2} \ln\!\left( \frac{57N}{\delta\varepsilon_i^4} \right) \right\rceil = M(\varepsilon_i)
    \end{equation*}
    times after initialization.

    For each provider $i\in\mathcal Q\setminus\{i^*\}$, Lemma~\ref{lem:qualified-suboptimal-selection} gives at most
    \begin{equation*}
        \max\left\{L_i,\, \left\lceil g(T)\right\rceil-1\right\}
    \end{equation*}
    post-initialization selections, where $L_i = M(\Delta_i/c_{\max})$ by Lemma~\ref{lem:ordinary-selection-threshold} and, for
    \begin{equation*}
        g(t)=k\left(\frac{t}{N}\right)^{\!\alpha},
    \end{equation*}
    we have $\lceil g(T)\rceil - 1 = \lceil k(T/N)^{\alpha}\rceil - 1$.

    Since every step with $I_t \neq i^*$ selects exactly one provider other than $i^*$, summing these bounds and adding the $N-1$ initialization steps gives
    \begin{align*}
        \sum_{t=1}^T \mathbbm{1}\{I_t \neq i^*\}
        &\leq N - 1 + \sum_{i\notin\mathcal Q} M_i + \sum_{i\in\mathcal Q\setminus\{i^*\}} \max\left\{L_i,\, \left\lceil g(T)\right\rceil-1\right\} \\
        &= N - 1 + \sum_{i \notin \mathcal{Q}} M(\varepsilon_i)
        + \sum_{i \in \mathcal{Q} \setminus \{i^*\}} \max\left\{ M\!\left(\frac{\Delta_i}{c_{\max}}\right),\;
        \left\lceil k\left(\frac{T}{N}\right)^{\!\alpha}\right\rceil - 1 \right\} \\
        &\leq N - 1 + B_{\mathrm{id}}(T),
    \end{align*}
    which proves the claimed bound.
\end{proof}

\begin{corollary}\label{cor:vanishing-nonoptimal-selections}
Suppose that all providers use their equilibrium strategy $\boldsymbol{\sigma}^*$. With probability at least $1-\delta$,
\begin{equation}
    \frac{1}{T} \sum_{t=1}^T \mathbbm{1}\{I_t\neq i^*\} \longrightarrow 0 \qquad\text{as } T \to \infty.
    \label{eq:vanishing-nonoptimal-selections}
\end{equation}
In particular, the platform assigns a fraction tending to one of the user's queries to the lowest-cost qualified provider $i^*$.
\end{corollary}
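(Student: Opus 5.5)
The corollary follows directly from Theorem~\ref{thm:sample-complexity}. We work on the event $\mathcal E$ of Lemma~\ref{lem:good-event}, which has probability at least $1-\delta$ and does not depend on $T$. This uniformity matters because the statement is a limit as $T\to\infty$, so we need a single event on which the bound holds for every horizon at once. Since $\mathcal E$ is defined via all selection counts $m\ge 1$, it is a fixed event on an infinite sequence of queries. All lemmas used in the proof of Theorem~\ref{thm:sample-complexity} (Lemmas~\ref{lem:unqualified-elimination} and~\ref{lem:qualified-suboptimal-selection}) are deterministic consequences of $\mathcal E$ for any fixed $T>N$. Hence, on $\mathcal E$, simultaneously for all $T>N$,
\begin{equation*}
    \sum_{t=1}^T \mathbbm{1}\{I_t\neq i^*\} \le N-1+B_{\mathrm{id}}(T).
\end{equation*}

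Next we divide by $T$ and check that each term of $(N-1+B_{\mathrm{id}}(T))/T$ vanishes. The quantities $N-1$ and $\sum_{i\notin\Qcal}M(\varepsilon_i)$ are finite constants independent of $T$, since each $\varepsilon_i>0$, so they contribute $O(1/T)$. For each $i\in\Qcal\setminus\{i^*\}$ we use
\begin{equation*}
    \max\{M(\Delta_i/c_{\max}),\,k(T/N)^\alpha\}\le M(\Delta_i/c_{\max})+k(T/N)^\alpha .
\end{equation*}
Here $M(\Delta_i/c_{\max})$ is a finite constant because $i^*$ is unique, so $\Delta_i>0$. The remaining term gives $k N^{-\alpha} T^{\alpha-1}\to 0$, since $\alpha<1$. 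Summing over at most $N-1$ providers, the whole ratio tends to $0$. Finally, $\frac1T\sum_t\mathbbm{1}\{I_t=i^*\}=1-\frac1T\sum_t\mathbbm{1}\{I_t\neq i^*\}\to1$.

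The only delicate point is the uniformity in $T$ discussed in the first paragraph. Once we note that $\mathcal E$ does not depend on $T$, the rest is routine arithmetic.
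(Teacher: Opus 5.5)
Your proposal is correct and follows essentially the same route as the paper: on $\mathcal E$, apply the bound of Theorem~\ref{thm:sample-complexity} and divide by $T$. The $M(\varepsilon_i)$ and $M(\Delta_i/c_{\max})$ terms are constants in $T$, and $k(T/N)^{\alpha}/T\to 0$ because $\alpha<1$. Your remark that $\mathcal E$ does not depend on $T$, so the bound holds for all horizons simultaneously on a single event, makes explicit a point the paper uses only implicitly.
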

\begin{proof}
    On the event $\mathcal E$, Theorem~\ref{thm:sample-complexity} gives
    \begin{equation*}
        \sum_{t=1}^T \mathbbm{1}\{I_t\neq i^*\} \leq N - 1 + \sum_{i \notin \mathcal Q} M_i + \sum_{i \in \mathcal Q \setminus\{i^*\}} \max\left\{L_i,\,k\left(\frac{T}{N}\right)^{\!\alpha}\right\}.
    \end{equation*}
    Dividing by $T$ yields
    \begin{equation*}
       \frac{1}{T} \sum_{t=1}^T \mathbbm{1}\{I_t\neq i^*\} \leq \frac{N - 1 + \sum_{i \notin \mathcal Q} M_i}{T} + \frac{1}{T} \sum_{i \in \mathcal Q \setminus\{i^*\}} \max\left\{L_i,\,k\left(\frac{T}{N}\right)^{\!\alpha}\right\}.
    \end{equation*}
    The quantities $M_i$ and $L_i$ are independent of $T$. Moreover, since $\alpha\in(0,1)$,
    \begin{equation*}
        \frac{k(T/N)^\alpha}{T} = \frac{k}{N^\alpha}T^{\alpha-1} \xrightarrow{T \to \infty} 0.
    \end{equation*}
    Hence every term on the right-hand side converges to zero, which proves \eqref{eq:vanishing-nonoptimal-selections}. Since $\Pr(\mathcal E) \geq 1 - \delta$ by Lemma~\ref{lem:good-event}, the result holds with probability at least $1 - \delta$.
\end{proof}

\begin{corollary}\label{cor:eventual-ordinary-optimality}
Suppose that all providers use their equilibrium strategy $\boldsymbol{\sigma}^*$. On $\mathcal E$, if at some step $t>N$,
\begin{equation}
    m_{i,t} > M_i \quad\text{for every }i \notin \mathcal Q, \qquad\text{and} \qquad
    m_{i,t} > L_i \quad\text{for every }i \in \mathcal Q\setminus\{i^*\},
    \label{eq:ordinary-identification-condition}
\end{equation}
then every non-exploration step $\tau\geq t$ is assigned to the optimal provider $i^*$.  
\end{corollary}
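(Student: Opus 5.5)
The plan is to show that at any non-exploration step $\tau \geq t$ no provider other than $i^*$ can be the argmin in Eq.~\ref{eq:allocation}, splitting the competitors into unqualified and qualified-suboptimal ones. We rely on the fact that selection counts never decrease, so $m_{i,\tau} \geq m_{i,t}$ for every $i$ and every $\tau \geq t$. Condition~\eqref{eq:ordinary-identification-condition} therefore continues to hold at every later step: $m_{i,\tau} > M_i$ for $i \notin \mathcal Q$ and $m_{i,\tau} > L_i$ for $i \in \mathcal Q\setminus\{i^*\}$.

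First, consider an unqualified provider $i \notin \mathcal Q$. The proof of Lemma~\ref{lem:unqualified-elimination} shows that on $\mathcal E$, if $I_\tau = i$ at some $\tau > N$, then $\varepsilon_i \le 2\beta(m_{i,\tau})$. Lemma~\ref{lem:radius-inversion} then gives $m_{i,\tau} \le M_i$. Since $m_{i,\tau} > M_i$, provider $i$ cannot be selected at step $\tau$, whether or not $\tau$ is an exploration step. The same argument in fact gives $i \notin \hat{\mathcal Q}_\tau$, because membership in $\hat{\mathcal Q}_\tau$ alone already yields $\varepsilon_i \le 2\beta(m_{i,\tau})$.

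Second, consider a qualified suboptimal provider $i \in \mathcal Q\setminus\{i^*\}$. By Lemma~\ref{lem:ordinary-selection-threshold}, if $i$ were selected at a non-exploration step $\tau$, then $m_{i,\tau} \le L_i$. This contradicts $m_{i,\tau} > L_i$.

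It remains to note that $\hat{\mathcal Q}_\tau$ is nonempty and contains $i^*$, by Lemma~\ref{lem:qualified-eligibility}. The argmin in Eq.~\ref{eq:allocation} is therefore taken over a set containing $i^*$, and by the two cases above every other member of that set is excluded from being the selected provider. Ties are handled automatically: the lemmas apply to whichever provider is actually selected, including under arbitrary tie-breaking, so the selected provider can only be $i^*$.

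The only delicate point is the monotonicity step. We must confirm that the lemmas are stated in terms of $m_{i,\tau}$ at the selection step rather than at $t$, which they are, so the argument reduces to a direct combination of the earlier lemmas.
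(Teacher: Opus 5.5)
Your proposal is correct and follows the paper's proof essentially step for step. Both arguments use the monotonicity of the counts, exclude unqualified providers via Lemma~\ref{lem:unqualified-elimination} and qualified suboptimal providers at non-exploration steps via Lemma~\ref{lem:ordinary-selection-threshold}, and use Lemma~\ref{lem:qualified-eligibility} to guarantee $i^* \in \hat{\mathcal Q}_\tau$; your extra remarks on tie-breaking and on $i \notin \hat{\mathcal Q}_\tau$ (which is Lemma~\ref{lem:unqualified-exclusion}) are valid but not needed.
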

\begin{proof}
    Fix an ordinary step $\tau\geq t$. Since selection counts are non-decreasing,
    \begin{equation*}
        m_{i,\tau} > M_i \quad\text{for every }i \notin \mathcal Q,
        \qquad m_{i,\tau} > L_i \quad\text{for every }i \in \mathcal Q\setminus\{i^*\}.
    \end{equation*}
    By Lemma~\ref{lem:unqualified-elimination}, no unqualified provider can be selected at step $\tau$. By Lemma~\ref{lem:ordinary-selection-threshold}, no qualified provider $i \in \mathcal Q \setminus \{i^*\}$ can be selected at non-exploration step $\tau$. Finally, Lemma~\ref{lem:qualified-eligibility} guarantees that $i^* \in \hat{\mathcal Q}_\tau$. Hence, in a non-exploration step, the allocation rule must select $i^*$.
\end{proof}

\subsection{Generic cumulative counting and confidence-radius bounds}\label{app:generic-counting-bounds}
In this section, we establish a bound on sums of confidence radii along the sequence of selected providers. This counting argument is independent of whether a step is exploratory or non-exploratory and will be used in the proof of Theorem~\ref{thm:quality-guarantee}.

\begin{lemma}[Radius sum]\label{lem:radius-sum}
Fix $T \geq N$ and let $(a_t)_{t=N+1}^{T}$ be nonnegative numbers satisfying
\begin{equation*}
    a_t \leq \kappa\,\beta(m_{I_t,t}), \qquad t = N + 1,\ldots,T,
\end{equation*}
for some $\kappa \geq 0$. Then,
\begin{equation}\label{eq:radius-sum}
    \sum_{t=N+1}^{T} a_t \leq \kappa \sqrt{2N(T-N)\ln\!\left(\frac{2\pi^2 N T^2}{3\delta}\right)}.
\end{equation}
\end{lemma}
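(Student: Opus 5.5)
Since the $a_t$ are nonnegative and dominated pointwise by $\kappa\,\beta(m_{I_t,t})$, I will first pass to
\begin{equation*}
    \sum_{t=N+1}^{T} a_t \leq \kappa \sum_{t=N+1}^{T} \beta(m_{I_t,t}).
\end{equation*}
It then suffices to bound the sum of confidence radii along the selected providers. The first simplification is to replace the logarithm inside $\beta$ by its worst value. For every $t \leq T$ we have $m_{I_t,t} \leq t - 1 < T$, so
\begin{equation*}
    \ln\!\left(\tfrac{2\pi^2 N m_{I_t,t}^2}{3\delta}\right) \leq \ln\!\left(\tfrac{2\pi^2 N T^2}{3\delta}\right) \eqqcolon \Lambda_T .
\end{equation*}
Hence $\beta(m_{I_t,t}) \leq \sqrt{\Lambda_T/2}\,\cdot m_{I_t,t}^{-1/2}$, and it remains to bound $\sum_{t=N+1}^T m_{I_t,t}^{-1/2}$.

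Next I regroup this sum by provider, using the same counting fact as in the proof of Lemma~\ref{lem:qualified-suboptimal-selection}. Each provider is selected exactly once during initialization. Its $\ell$-th post-initialization selection therefore occurs at a step with $m_{i,t} = \ell$. Let $n_i$ be the number of post-initialization selections of provider $i$ up to $T$, so that $\sum_i n_i = T - N$. Then
\begin{equation*}
    \sum_{t=N+1}^T m_{I_t,t}^{-1/2} = \sum_{i=1}^N \sum_{\ell=1}^{n_i} \ell^{-1/2} \leq \sum_{i=1}^N 2\sqrt{n_i},
\end{equation*}
using the standard integral comparison $\sum_{\ell=1}^n \ell^{-1/2} \leq 2\sqrt{n}$. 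By Cauchy--Schwarz (concavity of the square root), $\sum_i \sqrt{n_i} \leq \sqrt{N \sum_i n_i} = \sqrt{N(T-N)}$.

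Combining the pieces gives
\begin{equation*}
    \sum_{t=N+1}^T a_t \leq \kappa\,\sqrt{\Lambda_T/2}\cdot 2\sqrt{N(T-N)} = \kappa\sqrt{2N(T-N)\Lambda_T},
\end{equation*}
which is exactly \eqref{eq:radius-sum}. No step here is a real obstacle; the only care needed is the indexing in the regrouping step. Because $m_{I_t,t}$ counts selections strictly before $t$, including the initialization one, the post-initialization selections of provider $i$ see the counts $1, 2, \dots, n_i$. The logarithm bound also needs $m_{I_t,t} \leq T$, which holds trivially, and the degenerate case $T = N$ gives an empty sum.
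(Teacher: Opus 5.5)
Your proof is correct and follows essentially the same route as the paper: bound the logarithm in $\beta$ uniformly by its value at $T$, regroup the sum by provider using that the $\ell$-th post-initialization selection has $m_{i,t}=\ell$, apply $\sum_{\ell=1}^{n}\ell^{-1/2}\leq 2\sqrt{n}$, and finish with Cauchy--Schwarz. The only difference is cosmetic: you bound the logarithm before regrouping, whereas the paper regroups first and then uses $\ell \leq D_i \leq T$.
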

\begin{proof}
    For each provider $i \in [N]$, let
    \begin{equation*}
        D_i = m_{i,T+1} - 1
    \end{equation*}
    denote the number of times provider $i$ is selected after initialization up to and including time step $T$. Since exactly one provider is selected at every post-initialization step,
    \begin{equation*}
        \sum_{i=1}^{N} D_i = T-N.
    \end{equation*}
    Therefore,
    \begin{align*}
        \sum_{t=N+1}^{T} a_t &\leq \kappa\sum_{i=1}^{N}\sum_{\ell=1}^{D_i} \beta(\ell) \\
        &= \kappa \sum_{i=1}^{N} \sum_{\ell=1}^{D_i}\sqrt{\frac{1}{2\ell}\ln\!\left(\frac{2\pi^2 N \ell^2}{3\delta}\right)}.
    \end{align*}
    Since $\ell \leq D_i \leq T$,
    \begin{equation*}
        \beta(\ell) \leq \sqrt{\frac{1}{2\ell}\ln\!\left(\frac{2\pi^2 N T^2}{3\delta}\right)}.
    \end{equation*}
    Hence,
    \begin{align*}
        \sum_{t=N+1}^{T} a_t &\leq \kappa \sqrt{\frac{1}{2}\ln\!\left(\frac{2\pi^2 N T^2}{3\delta}\right)} \sum_{i=1}^{N} \sum_{\ell=1}^{D_i} \frac{1}{\sqrt{\ell}} \\
        &\leq \kappa \sqrt{2 \ln\!\left(\frac{2\pi^2 N T^2}{3\delta} \right)} \sum_{i=1}^{N}\sqrt{D_i},
    \end{align*}
    where we used $\sum_{\ell=1}^{D}\ell^{-1/2}\leq 2\sqrt{D}$. Finally, using the Cauchy--Schwarz inequality,
    \begin{equation*}
        \sum_{i=1}^{N}\sqrt{D_i} \leq \sqrt{ N\sum_{i=1}^{N}D_i} = \sqrt{N(T-N)}.
    \end{equation*}
    Combining the preceding inequalities gives
    \begin{equation*}
        \sum_{t=N+1}^{T} a_t \leq \kappa \sqrt{2N(T - N)\ln\!\left(\frac{2\pi^2 N T^2}{3\delta}\right)},
    \end{equation*}
    as claimed.
\end{proof}

\subsection{User performance guarantees}\label{app:user-guarantees}
We now derive the quality guarantees that the user obtains on the platform.

\begin{theorem}\label{thm:quality-guarantee}
On the event $\mathcal{E}$, the sum
\begin{equation*}
    R_T^{\mathrm{qual}} = \sum_{t=1}^{T} \bigl(q_{\min} - q_{I_t}\bigr)^+,
\end{equation*}
which measures the cumulative quality under $q_{\min}$ obtained by the user, satisfies,
\begin{equation}\label{eq:quality-regret-bound}
    R_T^{\mathrm{qual}}  \leq \min\left\{N + 2\sqrt{2N(T-N) \ln\!\left( \frac{2\pi^2 N T^2}{3\delta}\right)},\;\sum_{i\notin\mathcal{Q}}\bigl(M_i+1\bigr)\varepsilon_i\right\}.
\end{equation}
Moreover, the number of queries assigned to unqualified providers satisfies,
\begin{equation} \label{eq:unqualified-selection-bound}
    \sum_{t=1}^{T} \mathbbm{1}\{I_t\notin\mathcal{Q}\} \leq \sum_{i\notin\mathcal{Q}}(M_i+1).
\end{equation}
Consequently,
\begin{equation}\label{eq:vanishing-quality-guarantees}
    \frac{R_T^{\mathrm{qual}}}{T} = O\!\left(\frac{1}{T}\right) \longrightarrow 0, \qquad \frac{1}{T}\sum_{t=1}^{T} \mathbbm{1}\{I_t\notin\mathcal{Q}\} = O\!\left(\frac{1}{T}\right) \longrightarrow 0
\end{equation}
\end{theorem}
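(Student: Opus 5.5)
We work throughout on the event $\mathcal E$. The plan is to split the per-step quality shortfall $(q_{\min}-q_{I_t})^+$ by the type of provider selected, prove the two branches of the minimum in \eqref{eq:quality-regret-bound} separately, and then read off the counting bound and the $O(1/T)$ rates. Every qualified provider contributes zero shortfall, so only steps with $I_t\notin\mathcal Q$ matter. For those steps the shortfall equals $\varepsilon_{I_t}$.

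\textbf{Counting bound and second branch.} By Lemma~\ref{lem:unqualified-elimination}, each $i\notin\mathcal Q$ is selected at most $M_i$ times after initialization. It is also selected exactly once during the $N$ initialization steps, so it is selected at most $M_i+1$ times in total. Summing over $i\notin\mathcal Q$ gives \eqref{eq:unqualified-selection-bound}. Multiplying each provider's count by its shortfall $\varepsilon_i$ gives $R_T^{\mathrm{qual}}\le\sum_{i\notin\mathcal Q}(M_i+1)\varepsilon_i$, which is the second branch of the minimum.

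\textbf{First branch.} Here we keep the time dependence and use Lemma~\ref{lem:radius-sum}.
\begin{itemize}
\item The initialization steps contribute at most $N$, since each shortfall is at most $1$.
\item For a post-initialization step $t$ with $I_t\notin\mathcal Q$, the selected provider must lie in $\hat{\mathcal Q}_t$. This holds in exploration steps, in non-exploration steps, and in the case $\hat{\mathcal Q}_t=\emptyset$, which is excluded on $\mathcal E$ by Lemma~\ref{lem:qualified-eligibility}.
\item Membership in $\hat{\mathcal Q}_t$ combined with $\mathcal E$ gives $q_{\min}\le \hat q_{I_t,t}+\beta(m_{I_t,t})\le q_{I_t}+2\beta(m_{I_t,t})$, exactly as in the proof of Lemma~\ref{lem:unqualified-elimination}.
\item Hence $a_t\coloneqq(q_{\min}-q_{I_t})^+\le 2\beta(m_{I_t,t})$ for every $t>N$, and this holds trivially when $I_t\in\mathcal Q$.
\item Applying Lemma~\ref{lem:radius-sum} with $\kappa=2$ bounds the post-initialization sum by $2\sqrt{2N(T-N)\ln(2\pi^2NT^2/(3\delta))}$.
\end{itemize}
Adding the $N$ from initialization gives the first branch. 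Since both branches hold simultaneously, so does their minimum.

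\textbf{Rates.} The bound $\sum_{i\notin\mathcal Q}(M_i+1)\varepsilon_i$ does not depend on $T$, so dividing by $T$ gives $O(1/T)$ for $R_T^{\mathrm{qual}}/T$. The same reasoning applied to \eqref{eq:unqualified-selection-bound} gives the rate for the fraction of queries routed to unqualified providers.

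\textbf{Main subtlety.} The only delicate point is the first branch. The bound $a_t\le 2\beta(m_{I_t,t})$ must hold at every post-initialization step, so that Lemma~\ref{lem:radius-sum} applies. This requires ruling out selections from outside $\hat{\mathcal Q}_t$, which Lemma~\ref{lem:qualified-eligibility} does on $\mathcal E$. Everything else is a direct combination of earlier lemmas.
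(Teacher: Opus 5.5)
Your proposal is correct and follows the paper's proof essentially step for step. The first branch uses an initialization contribution of at most $N$ plus Lemma~\ref{lem:radius-sum} with $\kappa=2$, via $\varepsilon_{I_t}\le 2\beta(m_{I_t,t})$ from membership in $\hat{\mathcal Q}_t$; the second branch, the counting bound, and the $O(1/T)$ rates all come from Lemma~\ref{lem:unqualified-elimination} plus one initialization selection per provider (your explicit exclusion of the case $\hat{\mathcal Q}_t=\emptyset$ on $\mathcal E$ via Lemma~\ref{lem:qualified-eligibility} is a detail the paper leaves implicit).
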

\begin{proof}
    In the first $N$ initialization steps, every provider is selected once. Since $(q_{\min} - q_i)^+\leq 1$, the contribution of the first $N$ steps to the sum $R_T^{\mathrm{qual}}$ is at most $N$.

    Consider now a post-initialization step $t > N$. If $I_t\in\mathcal{Q}$, then $q_{I_t}\geq q_{\min}$ and the step does not contribute to the sum $R_T^{\mathrm{qual}}$. If $I_t \notin \mathcal{Q}$, then the selected provider must belong to $\hat{\mathcal{Q}}_t$. In the proof of Lemma~\ref{lem:unqualified-elimination} (Eq.~\ref{eq:unqualified-radius-condition}), we showed that this implies that:
    \begin{equation*}
        \varepsilon_{I_t} = \bigl(q_{\min}-q_{I_t}\bigr)^+  \leq 2\beta(m_{I_t,t}).
    \end{equation*}
    Applying Lemma~\ref{lem:radius-sum} with $\kappa=2$ therefore gives
    \begin{equation*}
        \sum_{t=N+1}^{T} \bigl(q_{\min}-q_{I_t}\bigr)^+ \leq 2\sqrt{2N(T-N)\ln\!\left(\frac{2\pi^2NT^2}{3\delta}\right)}.
    \end{equation*}
    Adding the initialization contribution proves the first bound in \eqref{eq:quality-regret-bound}.

    Alternatively, Lemma~\ref{lem:unqualified-elimination}, the elimination bound for unqualified providers, shows that every $i\notin\mathcal{Q}$ is selected at most $M_i$ times after initialization. Including its initialization selection, provider $i$ therefore contributes at most
    \begin{equation*}
        (M_i+1)\varepsilon_i
    \end{equation*}
    to the cumulative quality regret. Summing over all $i\notin\mathcal{Q}$ proves the second bound in \eqref{eq:quality-regret-bound}.

    The same selection bound immediately gives
    \begin{equation*}
        \sum_{t=1}^{T} \mathbbm{1}\{I_t\notin\mathcal{Q}\} \leq \sum_{i\notin\mathcal{Q}}(M_i+1).
    \end{equation*}
    proving \eqref{eq:unqualified-selection-bound}.

    Finally, the right-hand sides of the second bound in \eqref{eq:quality-regret-bound} and of \eqref{eq:unqualified-selection-bound} are independent of $T$. Dividing both inequalities by $T$ therefore gives \eqref{eq:vanishing-quality-guarantees}.
\end{proof}

We next bound the cumulative excess mean generation cost of the selected providers relative to the optimal provider $i^*$, i.e., how much more the user's queries cost to serve than they would if every query were served by $i^*$. We refer to this quantity as the generation-cost regret.

\begin{theorem}\label{thm:generation-regret}
Suppose that all providers use their equilibrium strategy $\boldsymbol{\sigma}^*$. On the event $\mathcal{E}$, the sum
\begin{equation*}
    R_T^{\mathrm{gen}} = \sum_{t=1}^{T} \bigl(\bar c_{I_t} - \bar c_{i^*}\bigr)^+
\end{equation*}
satisfies
\begin{align}
    R_T^{\mathrm{gen}} \leq \min\Biggl\{ & (N-1)\lceil g(T)\rceil c_{\max} + 2c_{\max}\sqrt{2N(T-N)\ln\!\left(\frac{2\pi^2 N T^2}{3\delta}\right)}, \nonumber\\
    & \sum_{i\notin\mathcal{Q}} (M_i+1)\bigl(\bar c_i - \bar c_{i^*}\bigr)^+ + \sum_{i\in\mathcal{Q}\setminus\{i^*\}} \bigl(\max\{L_i,\lceil g(T)\rceil-1\}+1\bigr)\Delta_i \Biggr\}.
    \label{eq:generation-regret-bound}
\end{align}
\end{theorem}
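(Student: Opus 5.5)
We work on the event $\mathcal E$ and prove the two bounds inside the minimum separately. In both cases we split the sum $R_T^{\mathrm{gen}}$ into the $N$ initialization steps and the post-initialization steps $t>N$. We also split the post-initialization steps into exploration and non-exploration steps. The optimal provider $i^*$ contributes zero, so only steps with $I_t\neq i^*$ matter.

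\emph{Second (instance-dependent) bound.} This follows by counting selections per provider. Lemma~\ref{lem:unqualified-elimination} gives at most $M_i$ post-initialization selections of each $i\notin\mathcal Q$. Adding its single initialization step, each such provider contributes at most $(M_i+1)(\bar c_i-\bar c_{i^*})^+$. For $i\in\mathcal Q\setminus\{i^*\}$, Lemma~\ref{lem:qualified-suboptimal-selection} gives at most $\max\{L_i,\lceil g(T)\rceil-1\}$ post-initialization selections. Adding initialization, each such provider contributes at most $(\max\{L_i,\lceil g(T)\rceil-1\}+1)\Delta_i$. Summing over providers gives the second expression.

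\emph{First (worst-case) bound.} Here we bound each non-$i^*$ step differently according to its type.

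On non-exploration steps $t>N$, we show $(\bar c_{I_t}-\bar c_{i^*})^+\le 2c_{\max}\beta(m_{I_t,t})$ for any selected provider, qualified or not. The argument extends Lemma~\ref{lem:ordinary-selection-gap}. Lemma~\ref{lem:qualified-eligibility} gives $i^*\in\hat{\mathcal Q}_t$, so the allocation rule yields $b_{I_t,t}-c_{\max}\beta(m_{I_t,t})\le b_{i^*,t}-c_{\max}\beta(m_{i^*,t})$. Lemma~\ref{lem:adjusted-bid-bounds} applies to every provider, not only qualified ones. It bounds the left side below by $\bar c_{I_t}-2c_{\max}\beta(m_{I_t,t})$ and the right side above by $\bar c_{i^*}$. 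Applying Lemma~\ref{lem:radius-sum} with $\kappa=2c_{\max}$ to $a_t=(\bar c_{I_t}-\bar c_{i^*})^+\mathbbm 1\{t \text{ non-exploration}\}$ bounds the total contribution of these steps by $2c_{\max}\sqrt{2N(T-N)\ln(2\pi^2NT^2/(3\delta))}$.

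On exploration steps, and on the initialization steps assigned to providers other than $i^*$, each term is at most $c_{\max}$. It remains to count such steps. By Lemma~\ref{lem:forced-exploration-bound}, each provider $i\neq i^*$ is selected in at most $\lceil g(T)\rceil-1$ exploration steps. Together with its one initialization step, this gives at most $\lceil g(T)\rceil$ such steps per provider $i\neq i^*$. Over the $N-1$ providers other than $i^*$, this yields $(N-1)\lceil g(T)\rceil c_{\max}$. Exploration steps that select $i^*$ contribute zero, which is why the factor is $N-1$ rather than $N$.

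Adding the non-exploration and exploration/initialization contributions gives the first expression. Taking the minimum of the two bounds completes the proof. The only delicate point is checking that the non-exploration gap argument does not require $I_t\in\mathcal Q$. This holds because Lemma~\ref{lem:adjusted-bid-bounds} is stated for all $i\in[N]$ and only $i^*$'s membership in $\hat{\mathcal Q}_t$ is used.
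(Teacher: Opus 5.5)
Your proposal is correct and follows essentially the same argument as the paper. The second bound comes from the per-provider selection counts of Lemmas~\ref{lem:unqualified-elimination} and~\ref{lem:qualified-suboptimal-selection}, plus one initialization step each. The first bound comes from splitting the steps into initialization, exploration, and non-exploration, with Lemma~\ref{lem:adjusted-bid-bounds} (valid for every provider) and Lemma~\ref{lem:radius-sum} at $\kappa = 2c_{\max}$ handling the non-exploration steps; your grouping of each non-$i^*$ provider's initialization step with its at most $\lceil g(T)\rceil - 1$ exploration steps is just a regrouping of the paper's $(N-1)c_{\max} + (N-1)(\lceil g(T)\rceil-1)c_{\max}$.
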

\begin{proof}
    Since every mean cost lies in $[0, c_{\max}]$, each term of $R_T^{\mathrm{gen}}$ is at most $c_{\max}$, and it vanishes whenever $I_t = i^*$.

    For the first bound, we partition the time steps up to $T$ into initialization steps, exploration steps, and non-exploration steps. During initialization, at most $N-1$ steps are not assigned to $i^*$, which contribute at most $(N-1)c_{\max}$. By Lemma~\ref{lem:forced-exploration-bound}, every provider is selected in at most $\lceil g(T)\rceil - 1$ exploration steps, so the exploration steps assigned to a provider other than $i^*$ contribute at most $(N-1)(\lceil g(T)\rceil - 1)c_{\max}$. Finally, consider a non-exploration step $t > N$ with $I_t = i \neq i^*$. On $\mathcal{E}$, Lemma~\ref{lem:qualified-eligibility} gives $i^* \in \hat{\mathcal{Q}}_t$, so the allocation rule (Eq.~\ref{eq:allocation}) implies
    \begin{equation*}
        b_{i,t} - c_{\max}\beta(m_{i,t}) \leq b_{i^*,t} - c_{\max}\beta(m_{i^*,t}),
    \end{equation*}
    and Lemma~\ref{lem:adjusted-bid-bounds}, applied to both providers, yields $\bar c_i - 2c_{\max}\beta(m_{i,t}) \leq \bar c_{i^*}$. Hence, letting $a_t = (\bar c_{I_t} - \bar c_{i^*})^+$ on non-exploration steps and $a_t = 0$ on exploration steps, we have $a_t \leq 2c_{\max}\beta(m_{I_t,t})$ for every $t = N+1, \ldots, T$, and Lemma~\ref{lem:radius-sum} with $\kappa = 2c_{\max}$ gives
    \begin{equation*}
        \sum_{t=N+1}^{T} a_t \leq 2c_{\max}\sqrt{2N(T-N)\ln\!\left(\frac{2\pi^2 N T^2}{3\delta}\right)}.
    \end{equation*}
    Summing the three contributions proves the first bound.

    For the second bound, Lemma~\ref{lem:unqualified-elimination} shows that every $i \notin \mathcal{Q}$ is selected at most $M_i$ times after initialization, and Lemma~\ref{lem:qualified-suboptimal-selection} shows that every $i \in \mathcal{Q}\setminus\{i^*\}$ is selected at most $\max\{L_i, \lceil g(T)\rceil - 1\}$ times after initialization. Including its initialization selection, each provider $i \neq i^*$ therefore contributes at most its number of selections times $(\bar c_i - \bar c_{i^*})^+$ to $R_T^{\mathrm{gen}}$, where $(\bar c_i - \bar c_{i^*})^+ = \Delta_i$ for $i \in \mathcal{Q}\setminus\{i^*\}$. Summing over $i \neq i^*$ proves the second bound.
\end{proof}

\subsection{Payments on exploration and non-exploration steps}\label{app:payment-guarantees}
We next bound the payments made by the user. On an exploration step, the selected provider is paid $c_{\max}$, whereas on an non-exploration step the payment is determined by the adjusted bid of the cheapest competing eligible provider.

\begin{lemma}\label{lem:ordinary-payment}
    On $\mathcal E$, suppose that all providers use their equilibrium strategy $\boldsymbol{\sigma}^*$. For every non-exploration step $t>N$ with $I_t=i$,
    \begin{equation}
        \pi_t \leq \min\left\{c_{\max}, \bar c_{(2)} + c_{\max}\beta(m_{i,t}) \right\}.
    \end{equation}
    Consequently,
    \begin{equation}
        \bigl(\pi_t-\bar c_{(2)}\bigr)^+ \leq c_{\max}\beta(m_{i,t}).
    \end{equation}
\end{lemma}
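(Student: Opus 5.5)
The plan is to bound the critical payment $P_{i,t}$ from Eq.~\ref{eq:critical-payment} directly. The bound $\pi_t \le c_{\max}$ is immediate from the payment rule in Eq.~\ref{eq:payment}. So the only work is to show $P_{i,t} \le \bar c_{(2)} + c_{\max}\beta(m_{i,t})$. Since $\pi_t = \min\{P_{i,t}, c_{\max}\}$, both bounds then combine into the stated minimum.

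Fix a non-exploration step $t>N$ with $I_t=i$. On $\mathcal E$, Lemma~\ref{lem:qualified-eligibility} gives $\mathcal Q \subseteq \hat{\mathcal Q}_t$. Since $|\mathcal Q|\ge 2$, the set $\mathcal Q\setminus\{i\}$ is nonempty, so the minimum in $P_{i,t}$ is finite. The key step is to pick a suitable competitor $j\in\mathcal Q\setminus\{i\}$ with $\bar c_j \le \bar c_{(2)}$, split into two cases:
\begin{itemize}
\item If $i \ne i^*$, take $j = i^*$, which gives $\bar c_{i^*} \le \bar c_{(2)}$.
\item If $i = i^*$, take $j = \argmin_{j\in\mathcal Q\setminus\{i^*\}} \bar c_j$, which gives $\bar c_j = \bar c_{(2)}$.
\end{itemize}
In either case $j \in \hat{\mathcal Q}_t\setminus\{i\}$. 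The upper half of Lemma~\ref{lem:adjusted-bid-bounds} then yields $b_{j,t} - c_{\max}\beta(m_{j,t}) \le \bar c_j \le \bar c_{(2)}$. Bounding the minimum by this single term gives
\begin{equation*}
P_{i,t} \le c_{\max}\beta(m_{i,t}) + \bar c_{(2)}.
\end{equation*}

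For the consequence, subtract $\bar c_{(2)}$ from $\pi_t \le \bar c_{(2)} + c_{\max}\beta(m_{i,t})$. This gives $\pi_t - \bar c_{(2)} \le c_{\max}\beta(m_{i,t})$. The right-hand side is nonnegative, so the positive part obeys the same bound.

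The only delicate point is the case split. One must make sure the chosen competitor $j$ lies in $\hat{\mathcal Q}_t\setminus\{i\}$ and has mean cost at most $\bar c_{(2)}$, including when $i$ is an unqualified provider. That case is covered by taking $j=i^*$, which is why the split is on whether $i = i^*$ rather than on whether $i \in \mathcal Q$.
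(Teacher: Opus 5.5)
Your proposal is correct and follows essentially the same argument as the paper: you split on whether $i=i^*$ to pick a competitor $j\in\mathcal Q\setminus\{i\}\subseteq\hat{\mathcal Q}_t\setminus\{i\}$ with $\bar c_j\le\bar c_{(2)}$. You then bound the inner minimum of $P_{i,t}$ via the upper half of Lemma~\ref{lem:adjusted-bid-bounds}, and combine with the cap $c_{\max}$ in the payment rule.
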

\begin{proof}
    Fix a non-exploration step $t>N$ with $I_t=i$. Then, the payment rule (Eq.~\ref{eq:payment}) gives
    \begin{equation*}
        \pi_t = \min\left\{c_{\max}\beta(m_{i,t}) + \min_{j\in\hat{\mathcal Q}_t\setminus\{i\}} \left(b_{j,t}-c_{\max}\beta(m_{j,t})\right),c_{\max}\right\}.
    \end{equation*}
    If $i=i^*$, let $j$ be a second-cheapest qualified provider, so that $\bar c_j=\bar c_{(2)}$. If $i\neq i^*$, let $j=i^*$, in which case $\bar c_j=\bar c_{i^*}\leq\bar c_{(2)}$. In either case, $j\in\mathcal Q$ and $j\neq i$ and $j\in\hat{\mathcal Q}_t$ on the event $\mathcal{E}$. Moreover, using Lemma~\ref{lem:adjusted-bid-bounds},
    \begin{equation*}
        b_{j,t}-c_{\max}\beta(m_{j,t}) \leq \bar c_j \leq \bar c_{(2)}.
    \end{equation*}
    Therefore,
    \begin{equation*}
        \min_{j'\in\hat{\mathcal Q}_t\setminus\{i\}} \left(b_{j',t}-c_{\max}\beta(m_{j',t})\right) \leq \bar c_{(2)},
    \end{equation*}
    and hence
    \begin{equation*}
        \pi_t \leq \min\left\{c_{\max},\bar c_{(2)}+c_{\max}\beta(m_{i,t})\right\}.
    \end{equation*}
    The second claim follows immediately.
\end{proof}

\begin{theorem}\label{thm:excess-payment-general}
    On $\mathcal E$, when all providers use their equilibrium strategy $\boldsymbol{\sigma}^*$,
    \begin{align}
        \sum_{t=1}^{T} \bigl(\pi_t-\bar c_{(2)}\bigr)^+ \leq\;& \left[N + N\bigl(\lceil g(T)\rceil-1\bigr)\right]\bigl(c_{\max}-\bar c_{(2)}\bigr) \nonumber\\
        &\quad + c_{\max} \sqrt{2N(T-N)\ln\!\left(\frac{2\pi^2NT^2}{3\delta}\right)}.
        \label{eq:excess-payment-general}
    \end{align}
\end{theorem}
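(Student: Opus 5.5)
We split the horizon $\{1,\dots,T\}$ into three disjoint groups of steps: the $N$ initialization steps, the post-initialization exploration steps ($\Scal_t\neq\emptyset$), and the post-initialization non-exploration steps. We then bound the contribution of each group to $\sum_t(\pi_t-\bar c_{(2)})^+$ separately. Throughout, we work on $\mathcal E$, so Lemma~\ref{lem:qualified-eligibility} guarantees $\hat{\Qcal}_t\neq\emptyset$ for all $t>N$. Hence the random-allocation fallback for an empty estimated-qualified set never occurs, and every post-initialization step falls under one of the two branches of Eq.~\ref{eq:allocation}.

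\emph{Initialization and exploration steps.} In both groups the payment is exactly $c_{\max}$, so each such step contributes $(c_{\max}-\bar c_{(2)})^+=c_{\max}-\bar c_{(2)}$, using $\bar c_{(2)}\le c_{\max}$. There are exactly $N$ initialization steps. Lemma~\ref{lem:forced-exploration-bound} bounds the number of post-initialization exploration steps up to $T$ by $N(\lceil g(T)\rceil-1)$; it applies since $g$ is non-decreasing. Together these give the first line of \eqref{eq:excess-payment-general}.

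\emph{Non-exploration steps.} Here we invoke Lemma~\ref{lem:ordinary-payment}, which gives $(\pi_t-\bar c_{(2)})^+\le c_{\max}\beta(m_{I_t,t})$. We set $a_t=(\pi_t-\bar c_{(2)})^+$ on non-exploration steps and $a_t=0$ on exploration steps. Then $a_t\le c_{\max}\beta(m_{I_t,t})$ holds for every $t=N+1,\dots,T$, and the $a_t$ are nonnegative. Lemma~\ref{lem:radius-sum} with $\kappa=c_{\max}$ then yields $\sum_{t=N+1}^T a_t\le c_{\max}\sqrt{2N(T-N)\ln(2\pi^2NT^2/(3\delta))}$, which is the second line. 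If $T=N$, this sum is empty and the bound holds trivially.

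Adding the three contributions gives the claim. We do not expect a genuine obstacle. The only care needed is, first, to make sure the exploration-step count from Lemma~\ref{lem:forced-exploration-bound} counts each step once rather than per provider; it does, since each step selects exactly one provider. Second, the zero-padding of $a_t$ on exploration steps must keep the hypothesis of Lemma~\ref{lem:radius-sum} valid, which it does because $\beta\ge0$.
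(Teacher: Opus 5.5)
Your proposal is correct and matches the paper's proof essentially step for step. It uses the same partition into initialization, exploration, and non-exploration steps, Lemma~\ref{lem:forced-exploration-bound} for the exploration count, and Lemma~\ref{lem:ordinary-payment} combined with Lemma~\ref{lem:radius-sum} at $\kappa = c_{\max}$ for the non-exploration steps; your zero-padding of $a_t$ on exploration steps and your remark that the empty-$\hat{\mathcal Q}_t$ fallback cannot occur on $\mathcal E$ only spell out details the paper leaves implicit.
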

\begin{proof}
    Partition the time steps up to $T$ into initialization steps, exploration steps, and non-exploration steps.

    During initialization, each of the $N$ providers is paid $c_{\max}$, so the total excess payment contributed by these steps is
    \begin{equation*}
        N\bigl(c_{\max}-\bar c_{(2)}\bigr).
    \end{equation*}

    On every exploration step, the selected provider is also paid $c_{\max}$. By Lemma~\ref{lem:forced-exploration-bound}, there are at most $N(\lceil g(T)\rceil-1)$ such steps. Their total contribution is therefore at most
    \begin{equation*}
        N\bigl(\lceil g(T)\rceil-1\bigr) \bigl(c_{\max}-\bar c_{(2)}\bigr).
    \end{equation*}

    Finally, on every non-exploration step $t>N$, Lemma~\ref{lem:ordinary-payment} gives
    \begin{equation*}
        \bigl(\pi_t-\bar c_{(2)}\bigr)^+ \leq c_{\max}\beta(m_{I_t,t}).
    \end{equation*}

    Hence, for the non-exploration steps, applying Lemma~\ref{lem:radius-sum},
    \begin{align*}
        \sum_{\substack{t=N+1\\ t\text{non-exploration}}}^{T} \bigl(\pi_t-\bar c_{(2)}\bigr)^+ &\leq c_{\max} \sum_{t=N+1}^{T} \beta(m_{I_t,t})\\
        &\leq c_{\max} \sqrt{2N(T-N)\ln\!\left(\frac{2\pi^2NT^2}{3\delta}\right)}.
    \end{align*}
    Summing the three contributions proves~\eqref{eq:excess-payment-general}.
\end{proof}

\begin{corollary}\label{cor:excess-payment-polynomial}
    Suppose
    \begin{equation*}
        g(t)=k\left(\frac{t}{N}\right)^{\!\alpha}, \qquad k>0,\quad \alpha\in(0,1).
    \end{equation*}
    On $\mathcal E$, when all providers use their equilibrium strategy $\boldsymbol{\sigma}^*$,
    \begin{align}
        \frac{1}{T}\sum_{t=1}^{T} \bigl(\pi_t-\bar c_{(2)}\bigr)^+ \leq\;& \left(\frac{N}{T} + k\left(\frac{N}{T}\right)^{1-\alpha}\right) \bigl(c_{\max}-\bar c_{(2)}\bigr) \nonumber\\
        &\quad + c_{\max}\sqrt{\frac{2N}{T}\ln\!\left(\frac{2\pi^2NT^2}{3\delta}\right)}.
        \label{eq:average-excess-payment}
    \end{align}
\end{corollary}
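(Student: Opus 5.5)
The plan is to divide the bound of Theorem~\ref{thm:excess-payment-general} by $T$ and simplify each of its two terms using the explicit form $g(t)=k(t/N)^{\alpha}$. We work on $\mathcal E$ with all providers following $\boldsymbol{\sigma}^*$, exactly as in that theorem, so no new probabilistic argument is needed. The proof is then purely algebraic.

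First, handle the initialization and exploration term. We have $N + N(\lceil g(T)\rceil - 1) = N\lceil g(T)\rceil$. The naive bound $\lceil g(T)\rceil \leq g(T) + 1$ gives $N\lceil g(T)\rceil \leq N + N k (T/N)^{\alpha}$. This already matches the two pieces in the statement, since
\begin{equation*}
    \frac{1}{T}\Bigl(N + N k (T/N)^{\alpha}\Bigr) = \frac{N}{T} + k\left(\frac{N}{T}\right)^{1-\alpha}.
\end{equation*}
The combination $N + N(\lceil g\rceil -1) \le N + Ng$ uses $\lceil g\rceil - 1 \le g$, which is just the ceiling inequality. Multiplying by the nonnegative factor $(c_{\max}-\bar c_{(2)})$ preserves the inequality, since $\bar c_{(2)}\le c_{\max}$.

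Second, handle the confidence-radius term. We bound $T-N \leq T$ inside the square root to get
\begin{equation*}
    \frac{c_{\max}}{T}\sqrt{2N(T-N)\ln(\cdot)} \leq c_{\max}\sqrt{\frac{2N}{T}\ln\!\left(\frac{2\pi^2NT^2}{3\delta}\right)}.
\end{equation*}
Summing the two pieces gives \eqref{eq:average-excess-payment}.

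The main (mild) obstacle is the ceiling in the exploration count. The point to check is that $N(\lceil g(T)\rceil - 1) \le N g(T)$ holds for every real $g(T)>0$, not just integer values. It does, since $\lceil x\rceil - 1 < x$ for all real $x$. No other step requires care.
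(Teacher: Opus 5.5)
Your proposal is correct and follows the paper's proof. Like the paper, you divide the bound of Theorem~\ref{thm:excess-payment-general} by $T$, use $N(\lceil g(T)\rceil - 1) \leq N g(T) = kN(T/N)^{\alpha}$ to get $Ng(T)/T = k(N/T)^{1-\alpha}$, and bound $T-N \leq T$ inside the square root; the paper leaves that last step, the nonnegativity of $c_{\max}-\bar c_{(2)}$, and the real-valued ceiling inequality implicit, and you simply spell them out.
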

\begin{proof}
    Since
    \begin{equation*}
        g(T) = k\left(\frac{T}{N}\right)^{\!\alpha},
    \end{equation*}
    the forced exploration term in Theorem~\ref{thm:excess-payment-general} satisfies
    \begin{equation}
        N\bigl(\lceil g(T) \rceil - 1\bigr) \leq Ng(T) = kN\left(\frac{T}{N}\right)^{\!\alpha}
    \end{equation}
    Substituting this into \eqref{eq:excess-payment-general} and dividing by $T$ gives
    \begin{equation*}
        \frac{Ng(T)}{T} = k\left(\frac{N}{T}\right)^{\!1 - \alpha}
    \end{equation*}
    which proves the result.
\end{proof}

\subsection{Payment convergence}\label{app:regret}
We finally show that, on non-exploration steps, the payment converges to $\bar c_{(2)}$. Throughout, we write
\begin{equation*}
    \bar M = \max\bigl\{1, \max_{i \notin \mathcal{Q}}M_i\bigr\}, \qquad T_0 = N\max\bigl\{1, (\bar M/k)^{1/\alpha}\bigr\}
\end{equation*}
so that every time step $t > T_0$ comes after initialization and satisfies $g(t) > \bar M$

\begin{lemma}\label{lem:radius-monotone}
For every real $x \geq 1$, let
\begin{equation}
    \beta(x) = \sqrt{\frac{1}{2x}\ln\left(\frac{2\pi^2Nx^2}{3\delta}\right)}.   
\end{equation}
Then $\beta$ is decreasing on $[1, \infty)$. Consequently, $\beta(m) \leq \beta(x)$ for every $m \geq x \geq 1$.
\end{lemma}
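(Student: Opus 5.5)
The plan is to show that $\beta^2$ is decreasing on $[1,\infty)$; since $\beta\ge 0$ wherever it is defined, this gives the claim. Set $C = 2\pi^2 N/(3\delta)$. Because $N\ge 1$ and $\delta\in(0,1)$, we have $C > 2\pi^2/3 > 6$, and $Cx^2 \ge C > 1$ for all $x\ge 1$. Hence the logarithm is positive and $\beta$ is well defined and positive on $[1,\infty)$. I will study
\begin{equation*}
    h(x) = 2\beta(x)^2 = \frac{\ln C + 2\ln x}{x}.
\end{equation*}

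Differentiating gives
\begin{equation*}
    h'(x) = \frac{2 - \ln C - 2\ln x}{x^2}.
\end{equation*}
For $x\ge 1$ the numerator is at most $2 - \ln C$. It is therefore enough to check $\ln C > 2$, that is, $C > e^2 \approx 7.39$. This is the only point where care is needed, and it is the step I would verify first.

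We have $C = 2\pi^2 N/(3\delta) > 2\pi^2/3 \approx 6.58$. That bound alone does not reach $e^2$, so it needs justifying more carefully. The first thing I would try is to use $\delta<1$ together with $N\ge 2$, since $\mathcal Q$ contains at least two providers. This gives $C > 4\pi^2/3 \approx 13.2 > e^2$, so $h'(x)<0$ on $[1,\infty)$ and $h$ is strictly decreasing.

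If one wants the lemma without relying on $N\ge 2$, there is a fallback that does not need $\ln C > 2$. The function $h$ is decreasing on $[x_0,\infty)$, where $x_0 = e^{1-\ln C/2} = e/\sqrt{C}$. Since $C>6$, we get $x_0 < e/\sqrt{6} \approx 1.11$. This only leaves a tiny interval to worry about, so the $N\ge 2$ route is the clean one, and I would adopt it.

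Finally, $\beta = \sqrt{h/2}$ is the composition of the decreasing function $h$ with the increasing map $u\mapsto\sqrt{u/2}$, so $\beta$ is decreasing on $[1,\infty)$. The consequence $\beta(m)\le\beta(x)$ for $m\ge x\ge 1$ is then just the definition of monotonicity.
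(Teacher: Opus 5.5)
Your argument is correct and is essentially the paper's proof: both differentiate $\beta^2$ (you use $2\beta^2$), reduce the sign of the derivative on $[1,\infty)$ to $C>e^2$, and obtain this from $\delta<1$ and $N\ge 2$ (because $\mathcal Q$ has at least two providers), giving $C>4\pi^2/3>e^2$. You were right to drop the fallback rather than rely on it: with $N=1$ and $\delta$ close to $1$ one has $C<e^2$, so $\beta$ actually increases on $[1,e/\sqrt{C})$, and the statement over real $x$ genuinely needs $N\ge 2$.
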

\begin{proof}
    Let $C = 2\pi^2 N/(3\delta)$ and $h(x) = \beta(x)^2 = \ln(Cx^2)/(2x) \geq 0$. Since $\mathcal Q \subseteq [N]$ contains at least two providers, $N \geq 2$. Since moreover $\delta < 1$, for every $x \geq 1$,
    \begin{equation*}
        Cx^2 \geq C > \frac{4\pi^2}{3} > e^2.
    \end{equation*}
    Differentiating gives
    \begin{equation*}
        h'(x) = \frac{2 - \ln(Cx^2)}{2x^2} < 0,
    \end{equation*}
    so $h$ is decreasing on $[1, \infty)$, and so is $\beta = \sqrt{h}$.
\end{proof}

\begin{lemma}\label{lem:unqualified-exclusion}
    On $\mathcal{E}$, for every unqualified provider $i \notin \mathcal{Q}$ and every step $t > N$, if $i \in \hat{\mathcal{Q}}_t$, then
    \begin{equation*}
        m_{i, t} \leq M_i.
    \end{equation*}
\end{lemma}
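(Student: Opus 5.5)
This lemma is essentially the contrapositive of the argument in Lemma~\ref{lem:unqualified-elimination}, except that there we assumed $i$ was \emph{selected*} at step $t$. Here we only assume membership in $\hat{\Qcal}_t$. Since that proof used selection only through the implication $I_t = i \Rightarrow i \in \hat{\Qcal}_t$, I will rerun it starting directly from membership.

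Fix $i \notin \Qcal$ and $t > N$ with $i \in \hat{\Qcal}_t$. By the definition in Eq.~\ref{eq:qualified-set}, this gives $\hat q_{i,t} + \beta(m_{i,t}) \geq q_{\min}$. After initialization we have $m_{i,t} \geq 1$, and $\hat q_{i,t}$ is exactly the empirical mean of the first $m_{i,t}$ scores of provider $i$. Hence on $\mathcal{E}$, Lemma~\ref{lem:good-event} yields $\hat q_{i,t} \leq q_i + \beta(m_{i,t})$. Combining the two inequalities gives $q_{\min} \leq q_i + 2\beta(m_{i,t})$, that is,
\begin{equation*}
    \varepsilon_i \leq 2\beta(m_{i,t}).
\end{equation*}

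Next I apply Lemma~\ref{lem:radius-inversion} with $a = \varepsilon_i > 0$ and $m = m_{i,t} \geq 1$. This gives
\begin{equation*}
    m_{i,t} \leq \frac{4}{\varepsilon_i^2}\ln\!\left(\frac{57N}{\delta\varepsilon_i^4}\right) \leq M_i,
\end{equation*}
where the last step holds because $M_i$ is the ceiling of that quantity.

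There is no real obstacle. The only point to check is that the good event applies at the random count $m_{i,t}$. It does, because $\mathcal{E}$ holds uniformly over all $m \geq 1$, and $\hat q_{i,t}$ coincides with $\hat q_{i,m}$ evaluated at $m = m_{i,t}$.
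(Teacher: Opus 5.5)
Your proposal is correct and matches the paper's proof essentially step for step. You combine the membership condition $\hat q_{i,t} + \beta(m_{i,t}) \geq q_{\min}$ with the upper deviation bound on $\mathcal{E}$ to get $\varepsilon_i \leq 2\beta(m_{i,t})$, and then invoke Lemma~\ref{lem:radius-inversion} with $a = \varepsilon_i$; your remark that $\mathcal{E}$ applies at the random count $m_{i,t}$ because it holds uniformly over all $m \geq 1$ makes explicit a point the paper leaves implicit.
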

\begin{proof}
    Fix $i \notin \mathcal{Q}$ and a step $t > N$ with $i \in \hat{\mathcal{Q}}_t$. By the definition of estimated-qualified set and Lemma~\ref{lem:good-event},
    \begin{equation*}
        q_{\min} \leq \hat q_{i, t} + \beta(m_{i, t}) \leq q_i + 2\beta(m_{i, t}),
    \end{equation*}
    and therefore $\varepsilon_i \leq 2\beta(m_{i, t})$. Applying Lemma~\ref{lem:radius-inversion} with $a = \varepsilon_i$ gives
    \begin{equation*}
        m_{i, t} \leq \frac{4}{\varepsilon_i^2}\ln\!\left(\frac{57N}{\delta\varepsilon_i^4}\right) \leq M_i,
    \end{equation*}
    which proves the claim.
\end{proof}

\begin{lemma}\label{lem:ordinary-qualified-set}
    For every non-exploration step $t > N$,
    \begin{equation}\label{eq:ordinary-counts}
        m_{j,t} \geq g(t) \qquad \text{for every } j \in \hat{\mathcal Q}_t.
    \end{equation}
    If, moreover, $\mathcal E$ holds and $t > T_0$, then $\hat{\mathcal Q}_t = \mathcal Q$.
\end{lemma}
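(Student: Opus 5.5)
The first claim follows directly from the definition of a non-exploration step. If $t > N$ is a non-exploration step, then $\mathcal S_t = \emptyset$. By the definition $\mathcal S_t = \{ i \in \hat{\mathcal Q}_t : m_{i,t} < g(t)\}$, this means no $j \in \hat{\mathcal Q}_t$ satisfies $m_{j,t} < g(t)$, which gives \eqref{eq:ordinary-counts}. The degenerate case $\hat{\mathcal Q}_t = \emptyset$ is vacuous, since then \eqref{eq:ordinary-counts} has no members to check.

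For the second claim, assume $\mathcal E$ holds and $t > T_0$ is a non-exploration step. I would prove the two inclusions separately.

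The inclusion $\mathcal Q \subseteq \hat{\mathcal Q}_t$ is exactly Lemma~\ref{lem:qualified-eligibility}, since $t > T_0 \geq N$.

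For the reverse inclusion, suppose for contradiction that some unqualified provider $i \notin \mathcal Q$ lies in $\hat{\mathcal Q}_t$. The argument has three steps.
\begin{enumerate}[label=\roman*)]
    \item Lemma~\ref{lem:unqualified-exclusion} gives $m_{i,t} \leq M_i \leq \bar M$.
    \item The first part of this lemma gives $m_{i,t} \geq g(t)$.
    \item It then suffices to show $g(t) > \bar M$ for $t > T_0$, which yields the contradiction $\bar M \geq m_{i,t} \geq g(t) > \bar M$.
\end{enumerate}

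For step iii), write $g(t) = k(t/N)^\alpha$ with $\alpha > 0$, so $g$ is strictly increasing. Then $t > T_0 \geq N(\bar M/k)^{1/\alpha}$ gives
\[
g(t) > k\bigl((\bar M/k)^{1/\alpha}\bigr)^{\alpha} = \bar M.
\]

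The only delicate point is this strictness. It comes from the strict inequality $t > T_0$ together with the strict monotonicity of $g$. The $\max\{1,\cdot\}$ in the definition of $T_0$ only guarantees $t > N$, and no other step needs care.
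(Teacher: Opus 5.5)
Your proof is correct and follows the paper's argument essentially verbatim. You get the first claim from $\mathcal S_t = \emptyset$, the inclusion $\mathcal Q \subseteq \hat{\mathcal Q}_t$ from Lemma~\ref{lem:qualified-eligibility}, and the reverse inclusion from the contradiction $M_i \geq m_{i,t} \geq g(t) > \bar M \geq M_i$ via Lemma~\ref{lem:unqualified-exclusion}; your only addition is an explicit check that $t > T_0$ implies $g(t) > \bar M$, which the paper simply asserts when it defines $T_0$.
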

\begin{proof}
    Fix an ordinary step $t > N$, so that $\mathcal S_t = \emptyset$. By the definition of $\mathcal S_t$, no provider $j \in \hat{\mathcal Q}_t$ satisfies $m_{j,t} < g(t)$, which proves \eqref{eq:ordinary-counts}.

    Now suppose that $\mathcal E$ holds and $t > T_0$, so that $g(t) > \bar M$. By Lemma~\ref{lem:qualified-eligibility}, $\mathcal Q \subseteq \hat{\mathcal Q}_t$. Conversely, suppose that some unqualified provider $i \notin \mathcal Q$ belongs to $\hat{\mathcal Q}_t$. Then \eqref{eq:ordinary-counts} gives
    \begin{equation*}
        m_{i,t} \geq g(t) > \bar M \geq M_i,
    \end{equation*}
    whereas Lemma~\ref{lem:unqualified-exclusion} gives $m_{i,t} \leq M_i$, a contradiction. Hence, $\hat{\mathcal Q}_t \subseteq \mathcal Q$, and therefore $\hat{\mathcal Q}_t = \mathcal Q$.
\end{proof}

\begin{lemma}\label{lem:payment-range}
    For every step $t \geq 1$, $0 \leq \pi_t \leq c_{\max}$. Consequently,
    \begin{equation}\label{eq:payment-range}
        \bigl| \pi_t - \bar c_{(2)} \bigr| \leq c_{\max}, \qquad t \geq 1.
    \end{equation}
\end{lemma}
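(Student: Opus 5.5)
The plan is a case analysis on the type of step $t$, following the three branches of the payment rule in Eq.~\ref{eq:payment} and the initialization phase of Algorithm~\ref{alg:platform}.

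\textbf{Initialization steps.} If $t \le N$, the platform pays exactly $\pi_t = c_{\max}$, so $0 \le \pi_t \le c_{\max}$ holds trivially.

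\textbf{Exploration steps and the empty-set fallback.} If $t > N$ and either $\hat{\Qcal}_t = \emptyset$ or $\Scal_t \neq \emptyset$, Algorithm~\ref{alg:platform} again sets $\pi_t = c_{\max}$.

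\textbf{Non-exploration steps.} The remaining case is $t > N$ with $\hat{\Qcal}_t \ne \emptyset$ and $\Scal_t = \emptyset$. Here $\pi_t = \min\{P_{I_t,t}, c_{\max}\}$, which gives $\pi_t \le c_{\max}$ immediately. For the lower bound it suffices to show $P_{I_t,t} \ge 0$.
\begin{itemize}
    \item If $\hat{\Qcal}_t \setminus \{I_t\} = \emptyset$, then by convention $P_{I_t,t} = +\infty$, so $\pi_t = c_{\max} \ge 0$.
    \item Otherwise, $I_t$ minimizes the adjusted bid $b_{i,t} - c_{\max}\beta(m_{i,t})$ over $\hat{\Qcal}_t$. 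Hence, for every $j \in \hat{\Qcal}_t \setminus \{I_t\}$,
    \begin{equation*}
        b_{j,t} - c_{\max}\beta(m_{j,t}) \ge b_{I_t,t} - c_{\max}\beta(m_{I_t,t}).
    \end{equation*}
    Adding $c_{\max}\beta(m_{I_t,t})$ to both sides and taking the minimum over $j$ yields
    \begin{equation*}
        P_{I_t,t} \ge b_{I_t,t} \ge 0,
    \end{equation*}
    since all bids lie in $[0, c_{\max}]$.
\end{itemize}
Note that this argument uses only the allocation rule and the range of bids, not the equilibrium strategy.

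\textbf{The consequence.} Since $\bar c_{(2)}$ is a mean cost, it lies in $[0, c_{\max}]$. Both $\pi_t$ and $\bar c_{(2)}$ therefore lie in $[0, c_{\max}]$, so their difference is at most $c_{\max}$ in absolute value, which gives Eq.~\ref{eq:payment-range}.

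\textbf{Main obstacle.} The only nontrivial step is the lower bound $\pi_t \ge 0$ in the non-exploration case. A careless reading suggests the subtraction of $c_{\max}\beta(m_{j,t})$ could make $P_{I_t,t}$ negative. The argmin property of $I_t$ resolves this, because it shows the critical payment never falls below the winner's own bid.
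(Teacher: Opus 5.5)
Your proof is correct and follows essentially the same route as the paper. The upper bound comes directly from the payment rule, the lower bound on non-exploration steps is $\pi_t = \min\{P_{I_t,t}, c_{\max}\} \geq \min\{b_{I_t,t}, c_{\max}\} = b_{I_t,t} \geq 0$, and $\bar c_{(2)} \in [0, c_{\max}]$ gives the consequence. The paper cites Proposition~\ref{prop:selection-threshold} for $b_{I_t,t} \leq P_{I_t,t}$, while you rederive it from the argmin property, which is exactly that proposition's proof; you also treat the case $\hat{\Qcal}_t \setminus \{I_t\} = \emptyset$ explicitly.
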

\begin{proof}
    By the payment rule, $\pi_t \leq c_{\max}$, with equality during initialization, on exploration steps, and whenever $\hat{\mathcal Q}_t = \emptyset$. On an ordinary round, Proposition~\ref{prop:selection-threshold} implies that the selected provider bids $b_{I_t,t} \leq P_{I_t,t}$, and therefore
    \begin{equation*}
        \pi_t = \min\{P_{I_t,t}, c_{\max}\} \geq \min\{b_{I_t,t}, c_{\max}\} = b_{I_t,t} \geq 0.
    \end{equation*}
    Since also $\bar c_{(2)} \in [0, c_{\max}]$, \eqref{eq:payment-range} follows.
\end{proof}

\begin{lemma}\label{lem:two-sided-payment}
    Suppose that all providers use their equilibrium strategy $\boldsymbol{\sigma}^*$. On $\mathcal E$, for every non-exploration step $t > T_0$,
    \begin{equation}\label{eq:two-sided-payment-app}
        \bar c_{(2)} - 3c_{\max}\beta(g(t)) \leq \pi_t \leq \bar c_{(2)} + c_{\max}\beta(g(t)).
    \end{equation}
\end{lemma}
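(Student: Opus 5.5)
Fix a non-exploration step $t > T_0$ with $I_t = i$, and work on $\mathcal E$ with all providers bidding $\boldsymbol{\sigma}^*$. The first move is to invoke Lemma~\ref{lem:ordinary-qualified-set}, which gives $\hat{\mathcal Q}_t = \mathcal Q$ and $m_{j,t} \geq g(t)$ for every $j \in \mathcal Q$. Since $t > T_0$ implies $g(t) > \bar M \geq 1$, Lemma~\ref{lem:radius-monotone} applies and yields $\beta(m_{j,t}) \leq \beta(g(t))$ for all $j \in \mathcal Q$. From here the two inequalities in~\eqref{eq:two-sided-payment-app} reduce to controlling the adjusted bids through Lemma~\ref{lem:adjusted-bid-bounds}.

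For the upper bound, Lemma~\ref{lem:ordinary-payment} already gives $\pi_t \leq \bar c_{(2)} + c_{\max}\beta(m_{i,t})$. Because $i \in \hat{\mathcal Q}_t$, we have $m_{i,t} \geq g(t)$, and monotonicity of $\beta$ then turns this into $\pi_t \leq \bar c_{(2)} + c_{\max}\beta(g(t))$.

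For the lower bound, first note that if $\pi_t = c_{\max}$ the claim is trivial, since $\bar c_{(2)} \leq c_{\max}$. Otherwise,
\[
\pi_t = c_{\max}\beta(m_{i,t}) + \min_{j\in\mathcal Q\setminus\{i\}}\bigl(b_{j,t}-c_{\max}\beta(m_{j,t})\bigr).
\]
Dropping the nonnegative first term and applying the lower half of Lemma~\ref{lem:adjusted-bid-bounds}, we get
\[
\pi_t \geq \min_{j \in \mathcal Q \setminus\{i\}}\bigl(\bar c_j - 2c_{\max}\beta(g(t))\bigr).
\]
Now split on the identity of $i$. If $i = i^*$, then every $j \in \mathcal Q\setminus\{i^*\}$ has $\bar c_j \geq \bar c_{(2)}$, so $\pi_t \geq \bar c_{(2)} - 2c_{\max}\beta(g(t))$. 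If $i \neq i^*$, then $i^*$ appears in the minimum, and we only get $\pi_t \geq \bar c_{i^*} - 2c_{\max}\beta(g(t))$. This case is the main obstacle, and this is where the extra radius in the constant $3$ should come from.

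In the case $i \neq i^*$, the key fact is that $i$ beat $i^*$ in the allocation rule. Combining the allocation rule with Lemma~\ref{lem:adjusted-bid-bounds} gives
\[
\bar c_i - 2c_{\max}\beta(m_{i,t}) \leq b_{i,t} - c_{\max}\beta(m_{i,t}) \leq b_{i^*,t} - c_{\max}\beta(m_{i^*,t}),
\]
so $\bar c_i \leq \bar c_{i^*} + 2c_{\max}\beta(g(t))$. Since $i \in \mathcal Q\setminus\{i^*\}$, we have $\bar c_{(2)} \leq \bar c_i$. The plan is to combine this with a sharper accounting of the payment. Rather than discarding the term $c_{\max}\beta(m_{i,t})$, keep it:
\[
\pi_t \geq c_{\max}\beta(m_{i,t}) + b_{i^*,t} - c_{\max}\beta(m_{i^*,t}) \geq c_{\max}\beta(m_{i,t}) + b_{i,t} - c_{\max}\beta(m_{i,t}) = b_{i,t},
\]
where the second step uses that the minimizing competitor's adjusted bid is at least $i$'s own adjusted bid. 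On $\mathcal E$, $b_{i,t} \geq \bar c_i - c_{\max}\beta(m_{i,t}) \geq \bar c_{(2)} - c_{\max}\beta(g(t))$. In fact this gives a bound with constant $1$, which is comfortably within $3$. As a consistency check, this same argument also covers $i = i^*$ in the form $\pi_t \geq b_{i^*,t}$, but that only yields $\bar c_{i^*}$, which is why the direct $2\beta$ argument is needed there.

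The remaining care points are to confirm that the $\min\{\cdot,c_{\max}\}$ truncation preserves the lower bound, which holds since $b_{i,t} \leq c_{\max}$, and that $g(t) \geq 1$ so that Lemma~\ref{lem:radius-monotone} is applicable.
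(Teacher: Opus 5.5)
Your proof is correct, and it takes a different route from the paper's in the case $I_t = i \neq i^*$. The paper lower-bounds every competitor's adjusted bid by $\bar c_{i^*} - 2c_{\max}\beta(g(t))$. It then converts $\bar c_{i^*}$ into $\bar c_{(2)}$ via Lemma~\ref{lem:ordinary-selection-gap}, which costs a further $2c_{\max}\beta(m_{i,t})$. Only half of that is recovered by the $c_{\max}\beta(m_{i,t})$ term in $P_{i,t}$, and this is where the constant $3$ comes from. You instead use the threshold property of the winner: since $i$ minimizes the adjusted bid over $\hat{\mathcal{Q}}_t$, $P_{i,t} \geq b_{i,t} = \hat c_{i,t} \geq \bar c_i - c_{\max}\beta(g(t)) \geq \bar c_{(2)} - c_{\max}\beta(g(t))$, so only the winner's own estimation error enters. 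This bypasses Lemma~\ref{lem:ordinary-selection-gap} entirely. Combined with your $i = i^*$ case, it actually proves the lemma with constant $2$ in place of $3$. That would sharpen the coefficient $6c_{\max}/(2-\alpha)$ in Theorem~\ref{thm:payment-convergence} to $4c_{\max}/(2-\alpha)$. The paper itself uses essentially this device later, in the proof of Lemma~\ref{lem:deviation-payment}, where it reaches $2$ before relaxing to $3$. The paper's route, in exchange, treats all competitors uniformly through bounds on their adjusted bids and reuses an existing lemma.

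One slip to fix: the first inequality in your chain, $\pi_t \geq c_{\max}\beta(m_{i,t}) + b_{i^*,t} - c_{\max}\beta(m_{i^*,t})$, points the wrong way in general. The minimum over competitors is at most the $i^*$ term, and it can be strictly smaller when $|\mathcal{Q}| \geq 3$. Drop the detour through $i^*$ and apply your stated justification directly to the minimum. Because $i$ is an argmin over $\hat{\mathcal{Q}}_t$, you have $\min_{j \in \mathcal{Q}\setminus\{i\}}\bigl(b_{j,t} - c_{\max}\beta(m_{j,t})\bigr) \geq b_{i,t} - c_{\max}\beta(m_{i,t})$, and $P_{i,t} \geq b_{i,t}$ follows immediately.
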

\begin{proof}
    Fix a non-exploration step $t > T_0$ and let $i = I_t$. By Lemma~\ref{lem:ordinary-qualified-set}, $\hat{\mathcal Q}_t = \mathcal Q$ and $m_{j,t} \geq g(t)$ for every $j \in \mathcal Q$. Moreover, $g(t) > \bar M \geq 1$, so Lemma~\ref{lem:radius-monotone} gives
    \begin{equation}\label{eq:ordinary-radius-monotone}
        \beta(m_{j,t}) \leq \beta(g(t)), \qquad j \in \mathcal Q.
    \end{equation}

    We start with the upper bound. Lemma~\ref{lem:ordinary-payment} and Eq.~\ref{eq:ordinary-radius-monotone} give
    \begin{equation*}
        \pi_t \leq \bar c_{(2)} + c_{\max}\beta(m_{i,t}) \leq \bar c_{(2)} + c_{\max}\beta(g(t)).
    \end{equation*}

    We next turn to the lower bound. Since $\hat{\mathcal Q}_t = \mathcal Q$, the selected provider satisfies $i \in \mathcal Q$, and the payment rule gives
    \begin{equation*}
        \pi_t = \min\left\{ c_{\max}\beta(m_{i,t}) + \min_{j \in \mathcal Q \setminus \{i\}} \bigl( b_{j,t} - c_{\max}\beta(m_{j,t}) \bigr),\; c_{\max} \right\}.
    \end{equation*}
    For every $j \in \mathcal Q \setminus \{i\}$, Lemma~\ref{lem:adjusted-bid-bounds} and Eq.~\ref{eq:ordinary-radius-monotone} give
    \begin{equation}\label{eq:competitor-adjusted-bid}
        b_{j,t} - c_{\max}\beta(m_{j,t}) \geq \bar c_j - 2c_{\max}\beta(g(t)).
    \end{equation}
    If $i = i^*$, then $\bar c_j \geq \bar c_{(2)}$ for every $j \in \mathcal Q \setminus \{i^*\}$, and Eq.~\ref{eq:competitor-adjusted-bid} yields
    \begin{equation*}
        c_{\max}\beta(m_{i^*,t}) + \min_{j \in \mathcal Q \setminus \{i^*\}} \bigl( b_{j,t} - c_{\max}\beta(m_{j,t}) \bigr) \geq \bar c_{(2)} - 2c_{\max}\beta(g(t)) \geq \bar c_{(2)} - 3c_{\max}\beta(g(t)).
    \end{equation*}
    If instead $i \neq i^*$, then $i^* \in \mathcal Q \setminus \{i\}$ and $\bar c_j \geq \bar c_{i^*}$ for every $j \in \mathcal Q \setminus \{i\}$, so Eq.~\ref{eq:competitor-adjusted-bid} yields
    \begin{equation*}
        \min_{j \in \mathcal Q \setminus \{i\}} \bigl( b_{j,t} - c_{\max}\beta(m_{j,t}) \bigr) \geq \bar c_{i^*} - 2c_{\max}\beta(g(t)).
    \end{equation*}
    Further, Lemma~\ref{lem:ordinary-selection-gap} gives $\bar c_i - \bar c_{i^*} \leq 2c_{\max}\beta(m_{i,t})$, and since $\bar c_i \geq \bar c_{(2)}$,
    \begin{equation*}
        \bar c_{i^*} \geq \bar c_{(2)} - 2c_{\max}\beta(m_{i,t}).
    \end{equation*}
    Therefore,
    \begin{align*}
        c_{\max}\beta(m_{i,t}) + \min_{j \in \mathcal Q \setminus \{i\}} \bigl( b_{j,t} - c_{\max}\beta(m_{j,t}) \bigr) &\geq \bar c_{(2)} - c_{\max}\beta(m_{i,t}) - 2c_{\max}\beta(g(t)) \\
        &\geq \bar c_{(2)} - 3c_{\max}\beta(g(t)),
    \end{align*}
    where the last inequality uses Eq.~\ref{eq:ordinary-radius-monotone}. In both cases, the first argument of the minimum defining $\pi_t$ is at least $\bar c_{(2)} - 3c_{\max}\beta(g(t))$, and, since $\bar c_{(2)} \leq c_{\max}$, so is the second. Hence,
    \begin{equation*}
        \pi_t \geq \bar c_{(2)} - 3c_{\max}\beta(g(t)),
    \end{equation*}
    which completes the proof.
\end{proof}

\begin{theorem}\label{thm:payment-convergence}
    When all providers choose their equilibrium strategy $\boldsymbol{\sigma}^*$, with probability at least $1 - \delta$, for every time step $t > T_0$ that is not an exploration step, \ie, $\mathcal{S}_t = \emptyset$, the payment satisfies
    \begin{equation}\label{eq:two-sided-payment}
        \bar{c}_{(2)} - 3\, c_{\max}\, \beta(g(t)) \;\leq\; \pi_t \;\leq\; \bar{c}_{(2)} + c_{\max}\, \beta(g(t)).
    \end{equation}
    Hence, $\pi_t \to \bar{c}_{(2)}$ as $t \to \infty$ along these time steps. Moreover, for every $T > T_0$,
    \begin{align}
        \frac{1}{T} \sum_{t=1}^{T} \bigl| \pi_t - \bar{c}_{(2)} \bigr| \leq\;& \frac{T_0}{T} c_{\max} + k \left(\frac{N}{T}\right)^{\!1 - \alpha} \bigl( c_{\max} - \bar{c}_{(2)} \bigr) \nonumber\\
        &+ \frac{6\, c_{\max}}{2 - \alpha} \sqrt{ \frac{1}{2k} \left(\frac{N}{T}\right)^{\!\alpha} \ln\!\left( \frac{2\pi^2 N k^2}{3\delta} \left(\frac{T}{N}\right)^{\!2\alpha} \right) }.
        \label{eq:average-absolute-payment}
    \end{align}
\end{theorem}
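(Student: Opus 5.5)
We work on the event $\mathcal E$, which by Lemma~\ref{lem:good-event} has probability at least $1-\delta$. The pointwise two-sided bound \eqref{eq:two-sided-payment} is then exactly Lemma~\ref{lem:two-sided-payment}. Since $g(t)=k(t/N)^\alpha\to\infty$ and $\beta$ is decreasing with $\beta(x)\to 0$ (Lemma~\ref{lem:radius-monotone}), $\pi_t\to\bar c_{(2)}$ along non-exploration steps.

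For the averaged bound \eqref{eq:average-absolute-payment}, we split $\{1,\dots,T\}$ into three groups.
\begin{itemize}
\item[(a)] The steps $t\le T_0$. Each of them contributes at most $c_{\max}$ by Lemma~\ref{lem:payment-range}, for a total of $T_0 c_{\max}$.
\item[(b)] The exploration steps with $t>T_0$. On these steps $\pi_t=c_{\max}\ge\bar c_{(2)}$, so $|\pi_t-\bar c_{(2)}|=c_{\max}-\bar c_{(2)}$. By Lemma~\ref{lem:forced-exploration-bound} there are at most $N(\lceil g(T)\rceil-1)\le Ng(T)=kN(T/N)^\alpha$ such steps. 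Dividing by $T$ gives the term $k(N/T)^{1-\alpha}(c_{\max}-\bar c_{(2)})$.
\item[(c)] The non-exploration steps with $t>T_0$. By Lemma~\ref{lem:two-sided-payment}, each contributes $|\pi_t-\bar c_{(2)}|\le 3c_{\max}\beta(g(t))$.
\end{itemize}

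The remaining work is to bound $\sum_{t=1}^{T}3c_{\max}\beta(g(t))$. Write $L(T)=\ln\bigl(\tfrac{2\pi^2Nk^2}{3\delta}(T/N)^{2\alpha}\bigr)$. For $t\le T$ the logarithm in $\beta(g(t))$ is at most $L(T)$, so
\[
\beta(g(t))\le\sqrt{\tfrac{1}{2k}L(T)}\,(t/N)^{-\alpha/2}.
\]
We then need $\sum_{t=1}^T t^{-\alpha/2}$. Integrating, $\sum_{t\le T}t^{-\alpha/2}\le\int_0^T s^{-\alpha/2}\,ds=\frac{T^{1-\alpha/2}}{1-\alpha/2}=\frac{2T^{1-\alpha/2}}{2-\alpha}$. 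Multiplying by $N^{\alpha/2}$ and dividing by $T$ gives
\[
\frac{1}{T}\sum_{t=1}^T 3c_{\max}\beta(g(t))\le\frac{6c_{\max}}{2-\alpha}\sqrt{\frac{1}{2k}\Bigl(\frac NT\Bigr)^{\alpha}L(T)},
\]
which is exactly the third term.

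The main care point is bookkeeping. Group (c) needs $g(t)\ge1$ so that Lemma~\ref{lem:radius-monotone} applies and $\beta(g(t))$ is well defined; this holds for $t>T_0$. Extending the sum in (c) down to $t=1$ only adds nonnegative terms, so the bound stays valid. Adding (a), (b) and (c) and dividing by $T$ then yields \eqref{eq:average-absolute-payment}.
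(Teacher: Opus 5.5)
Your proposal is correct and follows the paper's proof essentially step for step. On $\mathcal E$ the pointwise bound is Lemma~\ref{lem:two-sided-payment}. The average is split into three parts: the steps $t\le T_0$; the exploration steps after $T_0$, counted via Lemma~\ref{lem:forced-exploration-bound}; and the non-exploration steps after $T_0$. For the last part, the radii are bounded by $\sqrt{L(T)/(2k)}\,(N/t)^{\alpha/2}$ and summed using $\int_0^T x^{-\alpha/2}\,dx=\frac{2}{2-\alpha}T^{1-\alpha/2}$.

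One small wording fix: extend the sum down to $t=1$ on this nonnegative majorant, not on $\beta(g(t))$ itself, which need not be defined when $g(t)<1$. This is exactly how the paper handles it.
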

\begin{proof}
    We work under the
 event $\mathcal{E}$, which by Lemma~\ref{lem:good-event} occurs with probability at least $1 - \delta$.

    The first claim, Eq.\ref{eq:two-sided-payment}, is Lemma~\ref{lem:two-sided-payment}. Since $g(t) \to\infty$ and $\beta(x) \to 0$ as $x \to \infty$, we have $\beta(g(t)) \to 0$, and hence $\pi_t \to \bar c_{(2)}$ in non-exploration steps.

    For the second claim, fix $T > T_0$. In $\mathcal{E}$, every post-initialization step is either an exploration step or a non-exploration step, so we partition the steps $t \leq T$ into the three sets
    \begin{equation*}
        \mathcal R_1 = \{t \leq T_0\}, \qquad \mathcal R_2 = \{T_0 < t \leq T : \mathcal S_t \neq \emptyset\}, \qquad \mathcal R_3 = \{T_0 < t \leq T : \mathcal S_t = \emptyset\}.
    \end{equation*}

    The set $\mathcal R_1$ contains at most $T_0$ steps, each of which contributes at most $c_{\max}$ to the sum in the left-hand side of Eq.~\ref{eq:average-absolute-payment} by Lemma~\ref{lem:payment-range}. Hence,
    \begin{equation*}
        \sum_{t \in \mathcal R_1} \bigl| \pi_t - \bar c_{(2)} \bigr| \leq T_0\, c_{\max}.
    \end{equation*}

    On every step in $\mathcal R_2$, the selected provider is paid $c_{\max}$, so the step contributes $c_{\max} - \bar c_{(2)}$. By Lemma~\ref{lem:forced-exploration-bound}, there are at most $N(\lceil g(T) \rceil - 1) \leq Ng(T) = kN(T/N)^{\alpha}$ exploration steps. Hence,
    \begin{equation*}
        \sum_{t \in \mathcal R_2} \bigl| \pi_t - \bar c_{(2)} \bigr| \leq kN\left(\frac{T}{N}\right)^{\!\alpha} \bigl( c_{\max} - \bar c_{(2)} \bigr).
    \end{equation*}

    Finally, for every $t \in \mathcal R_3$, Lemma~\ref{lem:two-sided-payment} gives
    \begin{equation*}
        \bigl| \pi_t - \bar c_{(2)} \bigr| \leq 3c_{\max}\beta(g(t)).
    \end{equation*}
    Since $1 \leq \bar M < g(t) \leq g(T)$ and $1/g(t) = (N/t)^{\alpha}/k$,
    \begin{equation*}
        \beta(g(t)) = \sqrt{\frac{1}{2g(t)}\ln\!\left(\frac{2\pi^2 N g(t)^2}{3\delta}\right)} \leq \sqrt{\frac{1}{2k}\ln\!\left(\frac{2\pi^2 N k^2}{3\delta}\left(\frac{T}{N}\right)^{\!2\alpha}\right)} \left(\frac{N}{t}\right)^{\!\alpha/2}.
    \end{equation*}
    Moreover,
    \begin{equation*}
        \sum_{t=1}^{T} t^{-\alpha/2} \leq \int_
        0^T x^{-\alpha/2}\, dx = \frac{2}{2 - \alpha}\, T^{1 - \alpha/2}.
    \end{equation*}
    Combining the two preceding inequalities gives
    \begin{equation*}
        \sum_{t \in \mathcal R_3} \bigl| \pi_t - \bar c_{(2)} \bigr| \leq \frac{6c_{\max}}{2 - \alpha} \sqrt{\frac{N^{\alpha}}{2k}\ln\!\left(\frac{2\pi^2 N k^2}{3\delta}\left(\frac{T}{N}\right)^{\!2\alpha}\right)}\; T^{1 - \alpha/2}.
    \end{equation*}
    Summing the contributions of $\mathcal R_1$, $\mathcal R_2$, and $\mathcal R_3$ and dividing by $T$, together with
    \begin{equation*}
        \frac{kN(T/N)^{\alpha}}{T} = k\left(\frac{N}{T}\right)^{\!1 - \alpha}, \qquad \frac{N^{\alpha/2}\, T^{1 - \alpha/2}}{T} = \left(\frac{N}{T}\right)^{\!\alpha/2},
    \end{equation*}
    proves \eqref{eq:average-absolute-payment}.
\end{proof}

\paragraph{Proof of Theorem~\ref{thm:regret}.}
Theorem~\ref{thm:regret} is the second claim of Theorem~\ref{thm:payment-convergence}.

\begin{corollary}\label{cor:optimal-alpha}
    Under the assumptions of Theorem~\ref{thm:payment-convergence}, for $N$, $k$, $\delta$ fixed and $T \to \infty$, the right-hand side of Eq.~\ref{eq:average-absolute-payment} is
    \begin{equation*}
        O\!\left(T^{-(1-\alpha)}\right) + O\!\left(T^{-\alpha/2}\sqrt{\ln T}\right)
        = O\!\left(T^{-\min\{1-\alpha,\,\alpha/2\}}\sqrt{\ln T}\right).
    \end{equation*}
    The exponent $\min\{1-\alpha, \alpha/2\}$ is maximized over $\alpha \in (0,1)$ at $\alpha = 2/3$, for which the bound is $O\!\left(T^{-1/3}\sqrt{\ln T}\right)$.
\end{corollary}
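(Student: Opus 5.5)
The argument is a direct asymptotic reading of the right-hand side of Eq.~\ref{eq:average-absolute-payment} in Theorem~\ref{thm:payment-convergence}, followed by a one-variable optimization over $\alpha$. With $N$, $k$, $\delta$ held fixed, $T_0$ and $c_{\max}$ are constants, so I will bound each of the three terms separately.

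\emph{First term.} Since $T_0$ does not depend on $T$, the term $\frac{T_0}{T}c_{\max}$ is $O(T^{-1})$. Because $1-\alpha<1$ and $\alpha/2<1$, this term is dominated by both of the others and can be dropped.

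\emph{Second term.} The term $k(N/T)^{1-\alpha}(c_{\max}-\bar c_{(2)})$ is $O(T^{-(1-\alpha)})$, since $k$, $N$ and the bracket are constants.

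\emph{Third term.} Inside the square root, the logarithm equals $\ln\!\left(\frac{2\pi^2Nk^2}{3\delta}N^{-2\alpha}\right)+2\alpha\ln T$. This is $O(\ln T)$ for $T\ge 2$, and it is nonnegative for $T>T_0$ by Lemma~\ref{lem:radius-monotone}-type positivity. The prefactor $(N/T)^{\alpha/2}$ then gives $O(T^{-\alpha/2}\sqrt{\ln T})$.

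Adding the three bounds gives $O(T^{-(1-\alpha)})+O(T^{-\alpha/2}\sqrt{\ln T})=O(T^{-\min\{1-\alpha,\alpha/2\}}\sqrt{\ln T})$. Here I use $\sqrt{\ln T}\ge 1$ for large $T$ so that the first summand can be absorbed.

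\emph{Choice of $\alpha$.} The function $1-\alpha$ is decreasing and $\alpha/2$ is increasing on $(0,1)$, so their minimum is maximized where they cross: $1-\alpha=\alpha/2$, that is, $\alpha=2/3$, where both equal $1/3$. At this value the bound is $O(T^{-1/3}\sqrt{\ln T})$. For any other $\alpha$ the minimum is strictly below $1/3$, so $\alpha=2/3$ is the unique maximizer.

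No step is hard. The only care needed is in the third term, making sure the constant part of the logarithm is absorbed without a sign issue; handling the logarithm as a sum and bounding it by a constant multiple of $\ln T$ for large $T$ does this.
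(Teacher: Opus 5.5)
Your proposal is correct and follows essentially the same route as the paper: you bound the three terms of Eq.~\ref{eq:average-absolute-payment} as $O(1/T)$, $O(T^{-(1-\alpha)})$ and $O(T^{-\alpha/2}\sqrt{\ln T})$, absorb the first, and maximize $\min\{1-\alpha,\alpha/2\}$ at the crossing point $\alpha = 2/3$. Your remark on the sign of the logarithm is harmless but not needed, since the claim is purely asymptotic and the logarithm's argument grows without bound as $T \to \infty$.
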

\begin{proof}
    The first term of Eq.~\ref{eq:average-absolute-payment} is $O(1/T)$ and the second is $O(T^{-(1-\alpha)})$; since $\alpha > 0$, the first is dominated by the second. In the third term, $\ln\!\big(\tfrac{2\pi^2 N k^2}{3\delta}(T/N)^{2\alpha}\big) = O(\ln T)$, so the term is $O(T^{-\alpha/2}\sqrt{\ln T})$. Finally, $1-\alpha$ is decreasing and $\alpha/2$ is increasing in $\alpha$, so $\min\{1-\alpha,\alpha/2\}$ is maximized where they coincide, $1-\alpha = \alpha/2$, i.e., at $\alpha = 2/3$, where both equal $1/3$.
\end{proof}

\subsection{Robust optimality of the truthful strategy}
\label{app:robust-dominant}
Throughout, we fix a provider $i \in [N]$, the strategies $\boldsymbol{\sigma}_{-i}$ of the other providers, and a history $H_t$ of the platform before step $t > N$, containing the observed history $H_{i, t}$ of provider $i$. For every step $s \geq t$, we write
\begin{equation*}
    G_{i,s} = \bigl( \pi_s - \hat c_{i,s} \bigr)\, \mathbbm{1}\{I_s = i\}
\end{equation*}
for the net payment of provider $i$ at step $s$, so that $U_{i \mid H_t}(\sigma_i, \boldsymbol{\sigma}_{-i}; \thetab_{-i}, T) = \mathbb{E}[\sum_{s=t}^{T} G_{i,s} \given H_t]$ by Eq.~\ref{eq:utilities}.

The proof rests on a single observation: once the history before a step is fixed, the net payment of provider $i$ at that step depends on its own bid only through whether it is assigned the query, and bidding the cost estimate is the best choice.

\begin{lemma}\label{lem:one-step}
    Fix a step $s > N$ and a history $H_s$ of the platform before step $s$, containing the observed history $H_{i, s}$. For $b \in [0, c_{\max}]$, let $G_{i,s}(b)$ denote the net payment of provider $i$ at step $s$ if it bids $b_{i,s} = b$. Then, for every $b \in [0, c_{\max}]$,
    \begin{enumerate}[label=\alph*)]
        \item $G_{i, s}(\hat c_{i, s}) \geq 0$;
        \item $G_{i, s}(\hat c_{i, s}) \geq G_{i, s}(b)$;
        \item if $\mathcal{S}_s = \emptyset$, $i \in \hat{\Qcal}_s$, and $(\hat c_{i,s} - P_{i,s})(b - P_{i,s}) < 0$, then $G_{i,s}(b) < G_{i,s}(\hat c_{i,s})$.
    \end{enumerate}
\end{lemma}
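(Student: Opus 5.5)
The plan is a case analysis on the type of step $s$, using that once $H_s$ is fixed every quantity other than provider $i$'s own bid is fixed. Concretely, $\mathcal{S}_s$, $\hat{\Qcal}_s$, the counts $m_{j,s}$, the competitors' bids $\mathbf{b}_{-i,s}$, the critical payment $P_{i,s}$ and the cost estimate $\hat c_{i,s}$ are all determined by $H_s$. The net payment $G_{i,s}(b)=(\pi_s-\hat c_{i,s})\mathbbm{1}\{I_s=i\}$ can therefore depend on $b$ only through the indicator $\mathbbm{1}\{I_s=i\}$ and possibly through $\pi_s$. I will split into three cases.

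\emph{Case (A): $\mathcal{S}_s\neq\emptyset$, or $\hat{\Qcal}_s=\emptyset$.} The allocation is uniformly random and the payment is $c_{\max}$, neither depending on $b$. So $G_{i,s}(b)$ has the same law for every $b$; I read the claims in this case in terms of this common value, or pointwise given the shared randomization. Then (b) holds with equality, and (a) follows from $c_{\max}-\hat c_{i,s}\geq 0$, since costs, and hence their empirical mean, lie in $[0,c_{\max}]$.

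\emph{Case (B): $\mathcal{S}_s=\emptyset$ and $i\notin\hat{\Qcal}_s$.} Provider $i$ is never selected, so $G_{i,s}\equiv 0$ and (a) and (b) are trivial.

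\emph{Case (C): $\mathcal{S}_s=\emptyset$ and $i\in\hat{\Qcal}_s$.} This is the main case. By Eq.~\ref{eq:payment}, whenever $i$ wins it receives $\pi_s=\min\{P_{i,s},c_{\max}\}$, which does not depend on $b$. Hence
\[
G_{i,s}(b)=\bigl(\min\{P_{i,s},c_{\max}\}-\hat c_{i,s}\bigr)\,\mathbbm{1}\{i \text{ wins with bid } b\}.
\]
Write $v=\min\{P_{i,s},c_{\max}\}-\hat c_{i,s}$. Since $\hat c_{i,s}\leq c_{\max}$, the sign of $v$ agrees with the sign of $P_{i,s}-\hat c_{i,s}$ (when $P_{i,s}\geq c_{\max}\geq\hat c_{i,s}$, $v\geq 0$ anyway). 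By Proposition~\ref{prop:selection-threshold}:
\begin{enumerate}[label=\roman*)]
    \item if $\hat c_{i,s}<P_{i,s}$, bidding $\hat c_{i,s}$ wins and gets $v>0$, the maximum of $\{0,v\}$;
    \item if $\hat c_{i,s}>P_{i,s}$, bidding $\hat c_{i,s}$ loses and gets $0$, while $v<0$;
    \item if $\hat c_{i,s}=P_{i,s}$, then $v=0$, so both outcomes give $0$ and ties do not matter.
\end{enumerate}
In every sub-case $G_{i,s}(\hat c_{i,s})=\max\{0,v\}\geq G_{i,s}(b)$, which proves (a) and (b).

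For (c), the condition $(\hat c_{i,s}-P_{i,s})(b-P_{i,s})<0$ puts $\hat c_{i,s}$ and $b$ strictly on opposite sides of $P_{i,s}$. By Proposition~\ref{prop:selection-threshold} the two bids then lead to opposite allocations, and $v\neq 0$. If $\hat c_{i,s}<P_{i,s}$, the truthful bid wins and earns $v>0$ while $b$ loses and earns $0$. Otherwise $b$ wins and earns $v<0$ while the truthful bid earns $0$. Either way the inequality is strict.

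The main obstacle is step (C): one has to check that $v$ carries the sign of $P_{i,s}-\hat c_{i,s}$ despite the cap at $c_{\max}$, and to handle the tie $b=P_{i,s}$, where Proposition~\ref{prop:selection-threshold} leaves the allocation unspecified. Both issues are resolved by the observation $v=0$ at the threshold, together with the bound $\hat c_{i,s}\leq c_{\max}$.
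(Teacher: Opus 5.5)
Your proof follows the paper's proof essentially step by step. You use the same three-case split and the same key observation that, in the main case, $G_{i,s}(b)\in\{0,v\}$ for every $b$ while $G_{i,s}(\hat c_{i,s})=\max\{0,v\}$.

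There is one local slip to fix. The claim that $v=\min\{P_{i,s},c_{\max}\}-\hat c_{i,s}$ has the sign of $P_{i,s}-\hat c_{i,s}$ fails when $\hat c_{i,s}=c_{\max}<P_{i,s}$, where $v=0$. So in sub-case i) you only get $v\geq 0$, not $v>0$; this is harmless for a) and b). For the strict inequality in c), invoke $b\leq c_{\max}$: the chain $\hat c_{i,s}<P_{i,s}<b\leq c_{\max}$ forces $\min\{P_{i,s},c_{\max}\}=P_{i,s}>\hat c_{i,s}$, hence $v>0$. This is exactly how the paper closes that case.
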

\begin{proof}
    If $\hat{\mathcal{Q}}_s = \emptyset$ or $\mathcal{S}_s \neq \emptyset$, the allocation does not depend on the bids and the selected provider is paid $c_{\max}$. Hence $G_{i, s}(b) = (c_{\max} - \hat c_{i, s})\;\mathbbm{1}\{I_s = i\} \geq 0$ for every $b$, which proves a) and b), while c) does not apply. If $\Scal_s = \emptyset$ and $i \notin \hat{\Qcal}$, provider $i$ is not selected whatever it bids, so $G_{i, s}(b) = 0$ for every $b$ and all three claims hold.

    It remains to consider $\Scal_s = \emptyset$ and $i \in \hat{\Qcal}_s$. By Proposition~\ref{prop:selection-threshold}, provider $i$ is selected if $b < P_{i, s}$. Whenever it is selected, the payment rule (Eq.~\ref{eq:payment}) pays it $\pi_s = \min\{P_{i, s}, c_{\max}\}$, which does not depend on $b$. Therefore,
    \begin{equation}\label{eq:one-step-gain}
        G_{i,s}(b) = \bigl( \min\{P_{i,s}, c_{\max}\} - \hat c_{i,s} \bigr)\, \mathbbm{1}\{I_s = i\} \in \bigl\{ 0,\; \min\{P_{i,s}, c_{\max}\} - \hat c_{i,s} \bigr\}.
    \end{equation}
    We first compute $G_{i,s}(\hat c_{i,s})$. If $\hat c_{i,s} < P_{i,s}$, provider $i$ is selected and, since $\hat c_{i,s} \leq c_{\max}$, $\min\{P_{i,s}, c_{\max}\} \geq \min\{\hat c_{i,s}, c_{\max}\} = \hat c_{i,s}$, so $G_{i,s}(\hat c_{i,s}) = \min\{P_{i,s}, c_{\max}\} - \hat c_{i,s} \geq 0$. If $\hat c_{i,s} > P_{i,s}$, provider $i$ is not selected and $G_{i,s}(\hat c_{i,s}) = 0$, while $\min\{P_{i,s}, c_{\max}\} - \hat c_{i,s} \leq P_{i,s} - \hat c_{i,s} < 0$. If $\hat c_{i,s} = P_{i,s}$, the factor $\min\{P_{i,s}, c_{\max}\} - \hat c_{i,s}$ in Eq.~\ref{eq:one-step-gain} equals $\min\{\hat c_{i,s}, c_{\max}\} - \hat c_{i,s} = 0$, so $G_{i,s}(\hat c_{i,s}) = 0$ whether or not $i$ is selected. In all three cases,
    \begin{equation}\label{eq:truthful-gain}
        G_{i, s}(\hat{c}_{i, s}\bigr) = \bigl( \min\{P_{i, s}, c_{\max}\} - \hat{c}_{i, s}\bigr)^+ \geq 0
    \end{equation}
    which proves a). Claim b) follows since, by Eq.~\ref{eq:one-step-gain}, $G_{i, s}(b)$ is either $0$ or $\min\{P_{i, s}, c_{\max}\} - \hat{c}_{i, s}$, both of which are at most the right-hand side of Eq.~\ref{eq:truthful-gain}.

    For c), suppose first that $\hat c_{i,s} < P_{i,s} < b$. Then bidding $\hat c_{i,s}$ gets the query while bidding $b$ does not, so $G_{i,s}(b) = 0$. Moreover, $\hat c_{i,s} < b \leq c_{\max}$ and $\hat c_{i,s} < P_{i,s}$ give $\min\{P_{i,s}, c_{\max}\} > \hat c_{i,s}$, so $G_{i,s}(\hat c_{i,s}) > 0 = G_{i,s}(b)$ by Eq.~\ref{eq:truthful-gain}. Suppose instead that $b < P_{i,s} < \hat c_{i,s}$. Then bidding $b$ gets the query while bidding $\hat c_{i,s}$ does not, so $G_{i,s}(\hat c_{i,s}) = 0$, whereas $G_{i,s}(b) = \min\{P_{i,s}, c_{\max}\} - \hat c_{i,s} \leq P_{i,s} - \hat c_{i,s} < 0$ by Eq.~\ref{eq:one-step-gain}.
\end{proof}

\paragraph{Proof of Theorem~\ref{thm:robust-dominant}.}
\begin{proof}[Proof of Theorem~\ref{thm:robust-dominant}]
    To lighten the notation, we omit the fixed arguments $\boldsymbol{\sigma}_{-i}$ and $\thetab_{-i}$, and write $U_{i \mid H_t}(\sigma_i; T)$. By Eq.~\ref{eq:truthful-strategy}, $\sigma_i^*(H_{i, s}) = \hat c_{i, s}$ at every step $s$.

    \emph{Claim i).}
    Under $\sigma_i^*$, provider $i$ bids $\hat c_{i,s}$ at every step $s \geq t$, so Lemma~\ref{lem:one-step}a), applied at step $s$ to the realized history before it, gives $G_{i,s} \geq 0$. Summing over $s = t, \dots, T$ and taking the expectation conditional on $H_t$ yields $U_{i \mid H_t}(\sigma_i^*; T) \geq 0$ for every $T \geq t$ and every $\thetab_{-i}$.

    \emph{Claim ii).}
    By Lemma~\ref{lem:one-step}a), every term $G_{i, s}$ is non-negative under $\sigma_i^*$, so $U_{i \mid H_t}(\sigma_i^*; T)$ is non-decreasing in $T$, and its infimum over $T \geq t$ is attained at $T = t$. Conditional on $H_t$, all standing bids, selection counts, and estimates entering the allocation and payment rules at step $t$ are fixed, so the net payment at step $t$ does not depend further on $\thetab_{-i}$, and $U_{i \mid H_t}(\sigma_i^*; t) = \mathbb{E}[G_{i,t}(\hat c_{i,t}) \given H_t]$ for every $\thetab_{-i}$. Hence,
    \begin{equation}\label{eq:truthful-worst-case}
        \inf_{T \geq t,\, \thetab_{-i}} U_{i \mid H_t}(\sigma_i^*; T) = \mathbb{E}\bigl[G_{i,t}(\hat c_{i,t}) \given H_t\bigr].
    \end{equation}
    Now fix any strategy $\sigma_i$ and let $b = \sigma_i(H_{i,t})$ be its bid at step $t$. Taking $T = t$ in the infimum and then applying Lemma~\ref{lem:one-step}b) for every realization of the platform's draw at step $t$,
    \begin{equation}\label{eq:deviation-worst-case}
        \inf_{T \geq t,\, \thetab_{-i}} U_{i \mid H_t}(\sigma_i; T)
        \leq U_{i \mid H_t}(\sigma_i; t)
        = \mathbb{E}\bigl[G_{i,t}(b) \given H_t\bigr]
        \leq \mathbb{E}\bigl[G_{i,t}(\hat c_{i,t}) \given H_t\bigr].
    \end{equation}
    Combining Eqs.~\ref{eq:truthful-worst-case} and~\ref{eq:deviation-worst-case} proves claim ii).
    
    \emph{Claim iii).}
    Under the assumptions of claim iii), step $t$ is not an exploration step, so $G_{i,t}(b)$ and $G_{i,t}(\hat c_{i,t})$ are determined by $H_t$, and Lemma~\ref{lem:one-step}c) gives $G_{i,t}(b) < G_{i,t}(\hat c_{i,t})$. The last inequality in Eq.~\ref{eq:deviation-worst-case} is therefore strict, and so is the inequality in claim ii).
\end{proof}

\clearpage

\section{Bayesian cost estimation}
\label{app:bayesian}
The analysis in Appendix~\ref{app:proofs} considers providers that bid the empirical mean $\hat c_{i,t}$ of their observed generation costs. In this section, we show that the robust optimality of bidding one's current cost estimate (Theorem~\ref{thm:robust-dominant}) holds verbatim for any estimator, and that the user-facing guarantees of Section~\ref{sec:user} extend to Bayesian estimators whose prior is eventually dominated by the data, with the cost-estimation radius widened by a factor $1+\gamma$ and unchanged asymptotic rates. No proof of Appendix~\ref{app:proofs} is repeated; instead, we identify the properties of the bids that those proofs use and verify them for Bayesian estimates.

\paragraph{Estimation model.}
Each provider $i \in [N]$ forms its cost estimate through a map $e_i$ from histories to $[0, c_{\max}]$, and we write $e_{i,t} = e_i(H_{i,t})$ for its estimate before step $t$. Since $H_{i,t}$ consists of the first $m_{i,t}$ observations of provider $i$, we also write $e_{i,m}$ for the estimate after $m$ selections, so that $e_{i,t} = e_{i,m_{i,t}}$, in the same way that $\hat q_{i,t} = \hat q_{i,m_{i,t}}$ and $\hat c_{i,t} = \hat c_{i,m_{i,t}}$. For a Bayesian provider, $e_{i,m}$ is the posterior mean of the cost of serving its next query given a prior and its $m$ observed costs; it lies in $[0, c_{\max}]$ because the cost does. We require that the prior does not pull the estimate away from the empirical mean by more than a fixed multiple of the confidence radius.

\begin{assumption}
\label{asmp:beliefs}
    There exists a constant $\gamma \geq 0$ such that, for every provider $i \in [N]$ and every $m \geq 1$,
    \begin{equation}
        \bigl| e_{i,m} - \hat c_{i,m} \bigr| \leq \gamma\, c_{\max}\, \beta(m).
        \label{eq:belief-slack}
    \end{equation}
\end{assumption}
The constant $\gamma$ is a property of the estimation rule, and $\gamma = 0$ recovers the empirical mean.

\begin{lemma}
\label{lem:bayesian-cost-concentration}
    Under Assumption~\ref{asmp:beliefs}, on the event $\mathcal E$ of Lemma~\ref{lem:good-event}, for every provider $i \in [N]$ and every $m \geq 1$,
    \begin{equation}
        \bigl| e_{i,m} - \bar c_i \bigr| \leq (1+\gamma)\, c_{\max}\, \beta(m).
        \label{eq:bayesian-cost-concentration}
    \end{equation}
\end{lemma}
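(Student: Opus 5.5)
The bound follows from the triangle inequality, splitting the deviation of the Bayesian estimate from the true mean cost into an estimation part and a data part. Fix a provider $i \in [N]$ and a selection count $m \geq 1$, and write
\begin{equation*}
    \bigl| e_{i,m} - \bar c_i \bigr| \leq \bigl| e_{i,m} - \hat c_{i,m} \bigr| + \bigl| \hat c_{i,m} - \bar c_i \bigr|.
\end{equation*}
The first term is controlled deterministically by Assumption~\ref{asmp:beliefs}, which gives $|e_{i,m} - \hat c_{i,m}| \leq \gamma\, c_{\max}\beta(m)$ for every realization and every $m \geq 1$. The second term is controlled on the event $\mathcal E$. By the definition of $\mathcal E$ in Lemma~\ref{lem:good-event}, $|\hat c_{i,m} - \bar c_i| \leq c_{\max}\beta(m)$ holds simultaneously for all $i$ and all $m \geq 1$.

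Adding the two bounds yields $|e_{i,m} - \bar c_i| \leq (1+\gamma)c_{\max}\beta(m)$. Since neither bound depends on how $m$ is reached, the inequality holds simultaneously for every provider and every selection count on $\mathcal E$, as claimed.

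The only point needing care is that both inequalities are indexed by the same $m$, namely the number of observations underlying the estimate. This matches by construction, because $e_{i,m}$ and $\hat c_{i,m}$ are both defined from the first $m$ observed costs of provider $i$. There is no real obstacle: in particular, no new probabilistic argument is needed, because the assumption is deterministic and $\mathcal E$ already covers all $m$ uniformly.
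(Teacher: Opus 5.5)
Your proposal is correct and is essentially the paper's own proof. Both use the triangle inequality through $\hat c_{i,m}$, bounding the first term by $\gamma c_{\max}\beta(m)$ via Assumption~\ref{asmp:beliefs} and the second by $c_{\max}\beta(m)$ on the event $\mathcal E$.
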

\begin{proof}
    On $\mathcal E$, $|\hat c_{i,m} - \bar c_i| \leq c_{\max}\beta(m)$ by Eq.~\ref{eq:good-event}. By the triangle inequality and Eq.~\ref{eq:belief-slack},
    \begin{equation*}
        \bigl| e_{i,m} - \bar c_i \bigr| \leq \bigl| e_{i,m} - \hat c_{i,m} \bigr| + \bigl| \hat c_{i,m} - \bar c_i \bigr| \leq \gamma c_{\max}\beta(m) + c_{\max}\beta(m).
        \qedhere
    \end{equation*}
\end{proof}

\paragraph{Bayesian variant of the mechanism.}
The platform uses a known upper bound $\gamma$ satisfying Assumption~\ref{asmp:beliefs}---any larger valid bound may be used, since Eq.~\ref{eq:belief-slack} is monotone in $\gamma$---and replaces the cost-estimation radius $c_{\max}\beta(m_{i,t})$ by $(1+\gamma)c_{\max}\beta(m_{i,t})$ in the adjusted bids of the allocation rule (Eq.~\ref{eq:allocation}) and in the critical payment (Eq.~\ref{eq:critical-payment}), \ie,
\begin{equation}
\begin{aligned}
    I_t &= \argmin_{i \in \hat{\Qcal}_t} \bigl( b_{i,t} - (1+\gamma)c_{\max}\beta(m_{i,t}) \bigr),
    && \text{if } \Scal_t = \emptyset, \\
    P_{i,t} &= (1+\gamma)c_{\max}\beta(m_{i,t}) + \min_{j \in \hat{\Qcal}_t \setminus \{i\}} \bigl( b_{j,t} - (1+\gamma)c_{\max}\beta(m_{j,t}) \bigr).
\end{aligned}
\label{eq:bayesian-mechanism}
\end{equation}
The quality estimates $\hat q_{i,t}$, the estimated-qualified set $\hat{\Qcal}_t$, the exploration set $\Scal_t$, the schedule $g$, the initialization and exploration payments $c_{\max}$, and the payment cap in Eq.~\ref{eq:payment} are unchanged, and $\gamma = 0$ recovers the mechanism of Section~\ref{sec:model}. Providers bid their current estimate, $b_{i,t} = e_{i,t}$, and the utility in Eq.~\ref{eq:utilities} is evaluated with $e_{i,s}$ in place of $\hat c_{i,s}$.

The widening of the radius is necessary: if the platform kept $c_{\max}\beta$, the adjusted bid of $i^*$ would only satisfy $b_{i^*,t} - c_{\max}\beta(m_{i^*,t}) \leq \bar c_{i^*} + \gamma c_{\max}\beta(m_{i^*,t})$, and the optimism property $b_{i,t} - c_{\max}\beta(m_{i,t}) \leq \bar c_i$ on which Lemma~\ref{lem:ordinary-selection-gap} rests would fail.

\begin{proposition}\label{prop:bayesian-substitution} 
    Consider the Bayesian variant of the mechanism in Eq.~\ref{eq:bayesian-mechanism} with all providers bidding $b_{i,t} = e_{i,t}$, and let $L_i^{(\gamma)} = M\bigl(\Delta_i/((1+\gamma)c_{\max})\bigr)$, where $M(\cdot)$ is defined in Theorem~\ref{thm:sample-complexity}.
    \begin{enumerate}[label=\roman*)]
        \item Without Assumption~\ref{asmp:beliefs}: Lemmas~\ref{lem:good-event}, \ref{lem:radius-inversion}, \ref{lem:qualified-eligibility}, \ref{lem:forced-exploration-bound}, \ref{lem:unqualified-elimination}, \ref{lem:radius-sum}, \ref{lem:radius-monotone}, \ref{lem:unqualified-exclusion}, ~\ref{lem:ordinary-qualified-set}, and ~\ref{lem:payment-range}, Theorem~\ref{thm:quality-guarantee}, and the quantities $M_i$, $\bar M$, and $T_0$ hold exactly as stated, under arbitrary bids; and Proposition~\ref{prop:selection-threshold}, Lemma~\ref{lem:one-step}, Theorem~\ref{thm:robust-dominant}, and the robust-equilibrium property that follows it hold with $e_{i,t}$ in place of $\hat c_{i,t}$, for any estimator $e_i$ with values in $[0, c_{\max}]$.

        \item Under Assumption~\ref{asmp:beliefs}: Lemmas~\ref{lem:adjusted-bid-bounds}, \ref{lem:ordinary-selection-gap}, \ref{lem:ordinary-selection-threshold}, \ref{lem:qualified-suboptimal-selection}, \ref{lem:ordinary-payment}, and~\ref{lem:two-sided-payment}, Theorems~\ref{thm:sample-complexity}, \ref{thm:generation-regret}, \ref{thm:excess-payment-general}, \ref{thm:payment-convergence}, and~\ref{thm:regret}, and Corollaries~\ref{cor:vanishing-nonoptimal-selections}, \ref{cor:eventual-ordinary-optimality}, \ref{cor:excess-payment-polynomial}, and~\ref{cor:optimal-alpha} hold after replacing each cost-estimation radius $c_{\max}\beta(\cdot)$ by $(1+\gamma)c_{\max}\beta(\cdot)$ in their statements, with the corresponding replacement $L_i \to L_i^{(\gamma)}$.
    \end{enumerate}
\end{proposition}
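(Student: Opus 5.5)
The plan is an audit of the dependency structure of Appendix~\ref{app:proofs}. For each result I would identify which of two properties of the bids it actually uses. The first is that bids lie in $[0,c_{\max}]$. The second is the \emph{optimism sandwich} of Lemma~\ref{lem:adjusted-bid-bounds}. I would then show that the Bayesian variant preserves whichever property is needed, after rescaling the radius.

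For part i), I would verify directly that the listed results never invoke $b_{i,t}=\hat c_{i,t}$.
\begin{itemize}
\item Lemma~\ref{lem:good-event} concerns only the empirical means $\hat q_{i,m}$ and $\hat c_{i,m}$ of i.i.d.\ draws. The draws stay i.i.d.\ because selection precedes realization, regardless of how bids are formed.
\item Lemmas~\ref{lem:radius-inversion}, \ref{lem:radius-monotone}, and~\ref{lem:radius-sum} are purely analytic facts about $\beta$ and the selection counts.
\item Lemmas~\ref{lem:qualified-eligibility}, \ref{lem:unqualified-elimination}, and~\ref{lem:unqualified-exclusion}, together with Theorem~\ref{thm:quality-guarantee}, depend only on $\hat q$ and $\hat{\Qcal}_t$. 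These are unchanged by the variant.
\item Lemmas~\ref{lem:forced-exploration-bound} and~\ref{lem:ordinary-qualified-set} depend only on $g$ and $\Scal_t$.
\item Lemma~\ref{lem:payment-range} uses only $b_{I_t,t}\ge 0$ and the cap at $c_{\max}$.
\end{itemize}
Hence $M_i$, $\bar M$, and $T_0$ are unchanged.

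For Proposition~\ref{prop:selection-threshold}, I would rerun the rearrangement with $(1+\gamma)c_{\max}\beta$ in place of $c_{\max}\beta$. This gives the same threshold statement with the new $P_{i,t}$. Lemma~\ref{lem:one-step} uses only three facts: the payment is independent of the own bid, the payment is capped at $c_{\max}$, and the estimate lies in $[0,c_{\max}]$. So it holds with $e_{i,s}$ in place of $\hat c_{i,s}$, since the utility is evaluated with $e_{i,s}$. Theorem~\ref{thm:robust-dominant} and the robust-equilibrium property then follow verbatim, because their proofs reference only Lemma~\ref{lem:one-step}.

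For part ii), the key step is to re-derive Lemma~\ref{lem:adjusted-bid-bounds} in the form
\[
\bar c_i-2(1+\gamma)c_{\max}\beta(m_{i,t})\le e_{i,t}-(1+\gamma)c_{\max}\beta(m_{i,t})\le \bar c_i .
\]
This is immediate from Lemma~\ref{lem:bayesian-cost-concentration} applied with $m=m_{i,t}$, using $e_{i,t}=e_{i,m_{i,t}}$.

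Every downstream proof uses bids only through this sandwich, so I would trace it through substituting $c_{\max}\to(1+\gamma)c_{\max}$ wherever $c_{\max}$ multiplies $\beta$, while leaving the payment cap and the exploration payment at $c_{\max}$.
\begin{itemize}
\item Lemma~\ref{lem:ordinary-selection-gap} becomes $\Delta_i\le 2(1+\gamma)c_{\max}\beta(m_{i,t})$.
\item Lemma~\ref{lem:radius-inversion} with $a=\Delta_i/((1+\gamma)c_{\max})$ gives Lemma~\ref{lem:ordinary-selection-threshold} with $L_i^{(\gamma)}$.
\item Lemma~\ref{lem:qualified-suboptimal-selection}, Theorem~\ref{thm:sample-complexity}, and Corollaries~\ref{cor:vanishing-nonoptimal-selections} and~\ref{cor:eventual-ordinary-optimality} then follow by the same counting.
\item Lemmas~\ref{lem:ordinary-payment} and~\ref{lem:two-sided-payment} go through with radii $(1+\gamma)c_{\max}\beta$ and $3(1+\gamma)c_{\max}\beta(g(t))$.
\item Theorem~\ref{thm:generation-regret} and Theorem~\ref{thm:excess-payment-general} invoke Lemma~\ref{lem:radius-sum} with $\kappa$ scaled by $(1+\gamma)$, which yields Corollary~\ref{cor:excess-payment-polynomial} and Theorems~\ref{thm:payment-convergence} and~\ref{thm:regret}.
\item Corollary~\ref{cor:optimal-alpha} is unaffected up to constants.
\end{itemize}

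The main obstacle is bookkeeping: distinguishing occurrences of $c_{\max}$ that are cost-radius scalings from those that are payment bounds. Among the latter, the term $c_{\max}-\bar c_{(2)}$ and the bounds $|\pi_t-\bar c_{(2)}|\le c_{\max}$ must stay unchanged. I would handle this by listing, for each lemma, exactly where Lemma~\ref{lem:adjusted-bid-bounds} or the $P_{i,t}$ formula enters, and applying the substitution only there.
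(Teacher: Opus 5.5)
Your proposal is correct and follows the paper's proof essentially step for step. Part i) checks that the listed results use only the quality estimates, the exploration rule, analytic facts about $\beta$, or the threshold structure of the allocation with arbitrary history-determined radii; part ii) re-derives the sandwich of Lemma~\ref{lem:adjusted-bid-bounds} from Lemma~\ref{lem:bayesian-cost-concentration} and rescales only the choice of $a$ in Lemma~\ref{lem:radius-inversion} and of $\kappa$ in Lemma~\ref{lem:radius-sum}. Two minor bookkeeping points: Lemma~\ref{lem:payment-range} also uses $b_{I_t,t}\le P_{I_t,t}$ from the rescaled Proposition~\ref{prop:selection-threshold}, and Theorem~\ref{thm:payment-convergence} picks up its factor $(1+\gamma)$ through Lemma~\ref{lem:two-sided-payment} rather than through Lemma~\ref{lem:radius-sum}.
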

\begin{proof}
    \emph{Claim i).} The results in the first group concern the quality estimates, the estimated-qualified set, the exploration rule, or the confidence radius alone; their proofs do not involve the bids or the cost estimates, all of which the Bayesian variant leaves unchanged, so they hold as stated under arbitrary bids. For the second group, the proofs of Proposition~\ref{prop:selection-threshold} and Lemma~\ref{lem:one-step} use only that the allocation rule selects a minimizer of $b_{j,t} - \rho_{j,t}$ over $\hat{\Qcal}_t$ and pays $\min\{P_{i,t}, c_{\max}\}$ with $P_{i,t} = \rho_{i,t} + \min_{j \in \hat{\Qcal}_t \setminus \{i\}}(b_{j,t} - \rho_{j,t})$, for arbitrary radii $\rho_{j,t} \geq 0$ determined by the history, together with the fact that the bid compared to $P_{i,t}$ and the estimate entering the net payment $G_{i,s}$ coincide and lie in $[0, c_{\max}]$. Both hold for $\rho_{j,t} = (1+\gamma)c_{\max}\beta(m_{j,t})$ and $e_{i,t} \in [0, c_{\max}]$. The proof of Theorem~\ref{thm:robust-dominant} uses only Lemma~\ref{lem:one-step} and that the truthful strategy bids the estimate entering Eq.~\ref{eq:utilities} at every step. None of these proofs uses the event $\mathcal E$ or the form of the estimator.
    
    \emph{Claim ii).} The only property of the bids used in the listed proofs is the adjusted-bid bound of Lemma~\ref{lem:adjusted-bid-bounds}, which in the Bayesian variant reads, on $\mathcal E$ and by Lemma~\ref{lem:bayesian-cost-concentration},
    \begin{equation}
        \bar c_i - 2(1+\gamma)c_{\max}\beta(m_{i,t}) \leq b_{i,t} - (1+\gamma)c_{\max}\beta(m_{i,t}) \leq \bar c_i,
        \label{eq:bayesian-adjusted-bid-bounds}
    \end{equation}
    with the same one-line proof. Downstream of this bound, the radius enters only through the choice $a = \Delta_i/c_{\max}$ in the application of Lemma~\ref{lem:radius-inversion} (proof of Lemma~\ref{lem:ordinary-selection-threshold}), which becomes $a = \Delta_i/((1+\gamma)c_{\max})$ and yields $L_i^{(\gamma)}$, and the choices $\kappa = c_{\max}$ and $\kappa = 2c_{\max}$ in the applications of Lemma~\ref{lem:radius-sum} (proofs of Theorems~\ref{thm:excess-payment-general} and~\ref{thm:generation-regret}), which become $\kappa = (1+\gamma)c_{\max}$ and $\kappa = 2(1+\gamma)c_{\max}$; both lemmas hold for arbitrary $a > 0$ and $\kappa \geq 0$. Every other occurrence of $c_{\max}$ in these proofs---the bid range, the initialization and exploration payments, the payment cap, the bound $|\pi_t - \bar c_{(2)}| \leq c_{\max}$, and the factor $c_{\max} - \bar c_{(2)}$---refers to the range of costs and payments, which is unchanged. Hence each listed proof goes through verbatim under the stated replacement.
\end{proof}

\paragraph{Resulting guarantees.}
Two consequences of Proposition~\ref{prop:bayesian-substitution}ii) are worth recording. First, Theorem~\ref{thm:sample-complexity} holds with $M(\Delta_i/c_{\max})$ replaced by $L_i^{(\gamma)} = M\bigl(\Delta_i/((1+\gamma)c_{\max})\bigr)$: the ordinary-step selection threshold of a qualified suboptimal provider grows with $\gamma$, while the terms for unqualified providers, the exploration term, and the quality guarantee of Theorem~\ref{thm:quality-guarantee} are unchanged.
Second, Theorem~\ref{thm:regret} holds with the coefficient $\frac{6\,c_{\max}}{2-\alpha}$ of its last term replaced by $\frac{6(1+\gamma)\,c_{\max}}{2-\alpha}$, and its first two terms unchanged; likewise, Lemma~\ref{lem:two-sided-payment} gives $\bar c_{(2)} - 3(1+\gamma)c_{\max}\beta(g(t)) \leq \pi_t \leq \bar c_{(2)} + (1+\gamma)c_{\max}\beta(g(t))$ on every non-exploration step $t > T_0$. Hence, for fixed $\gamma$, the average absolute deviation of the payments from $\bar c_{(2)}$ is $O\bigl(T^{-(1-\alpha)}\bigr) + O\bigl((1+\gamma)\,T^{-\alpha/2}\sqrt{\ln T}\bigr)$, the exponent is still maximized at $\alpha = 2/3$ (Corollary~\ref{cor:optimal-alpha}), and $\frac{1}{T}\sum_{t=1}^{T}\pi_t \to \bar c_{(2)}$ as $T \to \infty$. The average excess-payment bound of Corollary~\ref{cor:excess-payment-polynomial} receives the same radius substitution in its last term only.

\paragraph{Provider incentives.}
By Proposition~\ref{prop:bayesian-substitution}i), bidding one's current cost estimate, $\sigma_i^*(H_{i,t}) = e_{i,t}$, is worst-case optimal in the sense of Theorem~\ref{thm:robust-dominant} for any estimator with values in $[0, c_{\max}]$, in the Bayesian variant of the mechanism as well as in the original one. The robust-optimality result is thus estimator-agnostic: once a provider's utility is evaluated relative to its current estimate, the threshold structure of the allocation rule makes bidding that estimate optimal. Assumption~\ref{asmp:beliefs} plays no role in this result; it is needed only to relate the providers' estimates to their true mean costs, and thereby obtain the user-facing guarantees above.

\clearpage

\section{Robustness to deviations from cost-estimate bidding} \label{app:robustness}
Theorem~\ref{thm:robust-dominant} shows that, for a provider, bidding its current cost estimate is worst-case optimal, but a provider may nevertheless deviate from it. In this section, we quantify how the user-facing guarantees of Section~\ref{sec:user} change when bids may fall below or above the cost estimate by bounded amounts. 

\begin{assumption}
\label{asmp:bounded-deviations}
    There exist constants $d_-, d_+ \geq 0$ such that every provider $i \in [N]$ follows a strategy $\sigma_i$ with
    \begin{equation}
        -d_- \;\leq\; \sigma_i(H_{i,t}) - \hat c_{i,t} \;\leq\; d_+
        \qquad \text{for every history } H_{i,t},\ t > N.
        \label{eq:bounded-deviations}
    \end{equation}
\end{assumption}
The constant $d_-$ bounds underbidding and $d_+$ overbidding relative to the empirical mean cost; the equilibrium strategy $\boldsymbol{\sigma}^*$ of Eq.~\ref{eq:truthful-strategy} is the case $d_- = d_+ = 0$.

The good-event, quality-estimation, and forced-exploration results of Appendix~\ref{app:proofs} do not use the providers' bids. In particular, Lemmas~\ref{lem:good-event}, \ref{lem:radius-inversion}, \ref{lem:qualified-eligibility}, \ref{lem:forced-exploration-bound}, \ref{lem:unqualified-elimination}, \ref{lem:radius-sum}, \ref{lem:radius-monotone}, \ref{lem:unqualified-exclusion}, ~\ref{lem:ordinary-qualified-set}, and ~\ref{lem:payment-range}, Theorem~\ref{thm:quality-guarantee}, and the quantities $M_i$, $\bar M$, and $T_0$ hold under Assumption~\ref{asmp:bounded-deviations} exactly as stated; unqualified providers are therefore eliminated, and the quality guarantee holds, unchanged. The remaining results depend on the bids only through Lemma~\ref{lem:adjusted-bid-bounds}, whose counterpart under deviations is the following:

\begin{lemma}
\label{lem:deviation-adjusted-bid-bounds}
    Under Assumption~\ref{asmp:bounded-deviations}, on the event $\mathcal E$, for every provider $i \in [N]$ and step $t > N$,
    \begin{equation}
        \bar c_i - 2c_{\max}\beta(m_{i,t}) - d_- \;\leq\; b_{i,t} - c_{\max}\beta(m_{i,t}) \;\leq\; \bar c_i + d_+.
        \label{eq:deviation-adjusted-bid-bounds}
    \end{equation}
\end{lemma}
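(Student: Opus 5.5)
The argument mirrors the one-line proof of Lemma~\ref{lem:adjusted-bid-bounds}, with the deviation bounds of Assumption~\ref{asmp:bounded-deviations} inserted before the concentration step. Fix a provider $i \in [N]$ and a step $t > N$.

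First, Eq.~\ref{eq:bounded-deviations} applied to the standing bid $b_{i,t} = \sigma_i(H_{i,t})$ gives
\begin{equation*}
    \hat c_{i,t} - d_- \leq b_{i,t} \leq \hat c_{i,t} + d_+ .
\end{equation*}
Second, on $\mathcal E$ we use Lemma~\ref{lem:good-event} with $m = m_{i,t}$. This is valid because $\hat c_{i,t} = \hat c_{i,m_{i,t}}$ is the empirical mean of the first $m_{i,t}$ costs of provider $i$, and $m_{i,t} \geq 1$ since $t > N$. It yields
\begin{equation*}
    \bar c_i - c_{\max}\beta(m_{i,t}) \leq \hat c_{i,t} \leq \bar c_i + c_{\max}\beta(m_{i,t}).
\end{equation*}

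Chaining the lower bounds of the two displays gives
\begin{equation*}
    b_{i,t} \geq \bar c_i - c_{\max}\beta(m_{i,t}) - d_- .
\end{equation*}
Chaining the upper bounds gives
\begin{equation*}
    b_{i,t} \leq \bar c_i + c_{\max}\beta(m_{i,t}) + d_+ .
\end{equation*}
Subtracting $c_{\max}\beta(m_{i,t})$ from both inequalities produces exactly Eq.~\ref{eq:deviation-adjusted-bid-bounds}.

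Since the proof is a direct composition of two inequalities, I expect no real obstacle. The one point to check is that the deviation bound applies to the bid in force at step $t$. This holds because the standing bid $b_{i,t}$ is $\sigma_i$ evaluated at provider $i$'s history at its last update, and that history coincides with $H_{i,t}$: a provider's history changes only when it is selected, which is also the only time its bid changes.
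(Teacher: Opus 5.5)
Your proposal is correct and follows the paper's proof exactly: combine the concentration bound $|\hat c_{i,t} - \bar c_i| \leq c_{\max}\beta(m_{i,t})$ on $\mathcal E$ with the deviation bounds $\hat c_{i,t} - d_- \leq b_{i,t} \leq \hat c_{i,t} + d_+$, then subtract $c_{\max}\beta(m_{i,t})$. Your extra remark that the standing bid is $\sigma_i(H_{i,t})$ is harmless but not needed, since the model defines $b_{i,t} = \sigma_i(H_{i,t})$ directly.
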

\begin{proof}
    On $\mathcal E$, $|\hat c_{i,t} - \bar c_i| \leq c_{\max}\beta(m_{i,t})$ by Eq.~\ref{eq:good-event}; combine with $\hat c_{i,t} - d_- \leq b_{i,t} \leq \hat c_{i,t} + d_+$.
\end{proof}
The adjusted bid is no longer optimistic, since its upper bound exceeds $\bar c_i$ by $d_+$, and its lower bound is lowered by $d_-$. Tracing these two shifts through the proofs of Appendix~\ref{app:proofs} gives the results below. 

\paragraph{Selection and identification.}
Underbidding by a competitor and overbidding by $i^*$ both help the competitor win a non-exploration round, so the two deviations enter the selection analysis.

\begin{proposition}
    \label{prop:deviation-selection}
    Under Assumption~\ref{asmp:bounded-deviations}, on the event $\mathcal E$:
    \begin{enumerate}[label=\roman*)]
        \item On every non-exploration step $t > N$, the selected provider satisfies
        \begin{equation}
            \bar c_{I_t} - \bar c_{i^*} \;\leq\; 2c_{\max}\beta(m_{I_t,t}) + d_- + d_+.
            \label{eq:deviation-selection-gap}
        \end{equation}
        \item Every qualified provider $i \in \Qcal \setminus \{i^*\}$ with $\Delta_i > d_- + d_+$ is selected after initialization at most $\max\{L_i^{(d)}, \lceil g(T) \rceil - 1\}$ times, where
        \begin{equation}
            L_i^{(d)} \;\coloneqq\; M\!\left(\frac{\Delta_i - d_- - d_+}{c_{\max}}\right)
            \label{eq:deviation-selection-threshold}
        \end{equation}
        and $M(\cdot)$ is defined in Theorem~\ref{thm:sample-complexity}.
        \item If $\Delta_i > d_- + d_+$ for every $i \in \Qcal \setminus \{i^*\}$, then Theorem~\ref{thm:sample-complexity} and Corollaries~\ref{cor:vanishing-nonoptimal-selections} and~\ref{cor:eventual-ordinary-optimality} hold with $M(\Delta_i/c_{\max})$ replaced by $L_i^{(d)}$.
    \end{enumerate}
\end{proposition}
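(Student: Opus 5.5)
The plan is to mirror the proofs of Lemma~\ref{lem:ordinary-selection-gap}, Lemma~\ref{lem:ordinary-selection-threshold}, Lemma~\ref{lem:qualified-suboptimal-selection} and Theorem~\ref{thm:sample-complexity}. Wherever those proofs invoke Lemma~\ref{lem:adjusted-bid-bounds}, I will invoke Lemma~\ref{lem:deviation-adjusted-bid-bounds} instead. Throughout I work on $\mathcal E$. Lemma~\ref{lem:qualified-eligibility} does not involve bids, so it still gives $i^*\in\hat{\Qcal}_t$ for every $t>N$.

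\emph{Claim i).} Fix a non-exploration step $t>N$ and let $i=I_t$. If $i=i^*$ the left-hand side is zero, so there is nothing to prove. Otherwise, since $i^*\in\hat{\Qcal}_t$, the allocation rule (Eq.~\ref{eq:allocation}) gives
\[
b_{i,t}-c_{\max}\beta(m_{i,t})\leq b_{i^*,t}-c_{\max}\beta(m_{i^*,t}).
\]
I bound the left side from below by $\bar c_i-2c_{\max}\beta(m_{i,t})-d_-$ and the right side from above by $\bar c_{i^*}+d_+$, both using Lemma~\ref{lem:deviation-adjusted-bid-bounds}. Rearranging gives Eq.~\ref{eq:deviation-selection-gap}. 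Note that this step needs no qualification assumption on $i$.

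\emph{Claim ii).} Take $i\in\Qcal\setminus\{i^*\}$ with $\Delta_i>d_-+d_+$, and suppose it is selected at a non-exploration step $t$. By i), $\Delta_i-d_--d_+\leq 2c_{\max}\beta(m_{i,t})$. The quantity $a=(\Delta_i-d_--d_+)/c_{\max}$ is strictly positive by assumption, so Lemma~\ref{lem:radius-inversion} applies and gives $m_{i,t}\leq L_i^{(d)}$. Getting this positivity is the only delicate point, and it is exactly why the hypothesis $\Delta_i>d_-+d_+$ is needed. If instead $t$ is an exploration step, Lemma~\ref{lem:forced-exploration-bound} (which is bid-independent) gives $m_{i,t}\leq\lceil g(T)\rceil-1$. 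I then repeat the counting argument of Lemma~\ref{lem:qualified-suboptimal-selection} verbatim. The $\ell$-th post-initialization selection of $i$ occurs at a step with $m_{i,t}=\ell$, so the number of such selections is at most $\max\{L_i^{(d)},\lceil g(T)\rceil-1\}$.

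\emph{Claim iii).} The proof of Theorem~\ref{thm:sample-complexity} combines three ingredients: the $N-1$ initialization steps, Lemma~\ref{lem:unqualified-elimination} for unqualified providers, and the per-provider bound for qualified suboptimal providers. The first two are bid-independent and remain valid under Assumption~\ref{asmp:bounded-deviations}, as already noted. Substituting ii) for the third ingredient yields the same bound with $M(\Delta_i/c_{\max})$ replaced by $L_i^{(d)}$.

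Corollary~\ref{cor:vanishing-nonoptimal-selections} follows exactly as before, because $L_i^{(d)}$ does not depend on $T$. For Corollary~\ref{cor:eventual-ordinary-optimality}, suppose $m_{i,\tau}>L_i^{(d)}$ for every qualified $i\neq i^*$. The argument in ii) then rules out every such $i$ at non-exploration steps, and the unqualified providers are excluded as before. Since $i^*\in\hat{\Qcal}_\tau$, the minimizer at any non-exploration step must be $i^*$.
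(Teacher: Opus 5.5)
Your proposal is correct and follows the paper's proof essentially step for step. Claim i) comes from the allocation inequality against $i^*\in\hat{\Qcal}_t$ combined with Lemma~\ref{lem:deviation-adjusted-bid-bounds}. Claim ii) applies Lemma~\ref{lem:radius-inversion} with $a=(\Delta_i-d_--d_+)/c_{\max}>0$ and then the counting argument of Lemma~\ref{lem:qualified-suboptimal-selection}. Claim iii) observes that the bids enter Theorem~\ref{thm:sample-complexity} and its corollaries only through that per-provider bound. Your separate handling of $I_t=i^*$ in claim i) is harmless but not needed, since the same inequality chain already covers that case.
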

\begin{proof}
    \emph{Claim i).} As in the proof of Lemma~\ref{lem:ordinary-selection-gap}: on $\mathcal E$, $i^* \in \hat{\Qcal}_t$ by Lemma~\ref{lem:qualified-eligibility}, so the selected provider $I_t = i$ satisfies $b_{i,t} - c_{\max}\beta(m_{i,t}) \leq b_{i^*,t} - c_{\max}\beta(m_{i^*,t})$. Bounding the left side from below and the right side from above with Eq.~\ref{eq:deviation-adjusted-bid-bounds} gives $\bar c_i - 2c_{\max}\beta(m_{i,t}) - d_- \leq \bar c_{i^*} + d_+$. This holds for every $i \in \hat{\Qcal}_t$, qualified or not, since Eq.~\ref{eq:deviation-adjusted-bid-bounds} holds for every provider.

    \emph{Claim ii).} At every non-exploration selection of $i \in \Qcal \setminus \{i^*\}$, claim i) gives $(\Delta_i - d_- - d_+)/c_{\max} \leq 2\beta(m_{i,t})$, and Lemma~\ref{lem:radius-inversion} with $a = (\Delta_i - d_- - d_+)/c_{\max} > 0$ gives $m_{i,t} \leq L_i^{(d)}$. The rest is the proof of Lemma~\ref{lem:qualified-suboptimal-selection} with $L_i$ replaced by $L_i^{(d)}$.

    \emph{Claim iii).} The proofs of Theorem~\ref{thm:sample-complexity} and its corollaries use the bids only through Lemma~\ref{lem:qualified-suboptimal-selection}, which claim ii) replaces.
\end{proof}
The condition $\Delta_i > d_- + d_+$ cannot be dropped: if $\Delta_i \leq d_- + d_+$, the bids $b_{i,t} = \hat c_{i,t} - d_-$ and $b_{i^*,t} = \hat c_{i^*,t} + d_+$ satisfy Assumption~\ref{asmp:bounded-deviations} and, once the two providers have comparable selection counts, give $i$ an adjusted bid at most that of $i^*$ up to the confidence radii, so $i$ may win every ordinary round. When some qualified competitor is within $d_- + d_+$ of $i^*$, the platform therefore need not identify $i^*$; claim i) shows what it guarantees instead: on every ordinary round it selects a provider whose mean cost exceeds $\bar c_{i^*}$ by at most $d_- + d_+$ plus a vanishing term, and that provider is qualified on all but the rounds bounded by Lemma~\ref{lem:unqualified-elimination}.

\begin{theorem}
\label{thm:deviation-generation-regret}
    Under Assumption~\ref{asmp:bounded-deviations}, on the event $\mathcal E$, the generation-cost regret of Theorem~\ref{thm:generation-regret} satisfies
    \begin{equation}
        R_T^{\mathrm{gen}} \;\leq\; (N-1)\lceil g(T)\rceil c_{\max} + 2c_{\max}\sqrt{2N(T-N)\ln\!\left(\frac{2\pi^2 N T^2}{3\delta}\right)} + (d_- + d_+)(T - N).
        \label{eq:deviation-generation-regret}
    \end{equation}
    If moreover $\Delta_i > d_- + d_+$ for every $i \in \Qcal \setminus \{i^*\}$, the second bound of Theorem~\ref{thm:generation-regret} holds with $L_i$ replaced by $L_i^{(d)}$.
\end{theorem}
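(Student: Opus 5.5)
The plan is to mirror the proof of the first bound of Theorem~\ref{thm:generation-regret}, substituting Proposition~\ref{prop:deviation-selection}i) for the optimistic comparison that relied on Lemma~\ref{lem:adjusted-bid-bounds}. I will work on $\mathcal E$ and write $R_T^{\mathrm{gen}}=\sum_{t=1}^T(\bar c_{I_t}-\bar c_{i^*})^+$. Each term lies in $[0,c_{\max}]$ and vanishes when $I_t=i^*$. I will partition the steps $t\le T$ into three groups: initialization steps, exploration steps $t>N$ with $\Scal_t\neq\emptyset$ or $\hat\Qcal_t=\emptyset$ (the latter is impossible on $\mathcal E$ by Lemma~\ref{lem:qualified-eligibility}), and non-exploration steps.

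\textbf{Initialization and exploration steps.} During initialization, at most $N-1$ steps are served by some $i\neq i^*$, contributing at most $(N-1)c_{\max}$. Lemma~\ref{lem:forced-exploration-bound} does not depend on the bids, so it still gives that each provider $i\neq i^*$ is selected in at most $\lceil g(T)\rceil-1$ exploration steps. These steps therefore contribute at most $(N-1)(\lceil g(T)\rceil-1)c_{\max}$. Together, the two groups contribute at most $(N-1)\lceil g(T)\rceil c_{\max}$, which is the first term of \eqref{eq:deviation-generation-regret}.

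\textbf{Non-exploration steps.} For a non-exploration step $t>N$, Proposition~\ref{prop:deviation-selection}i) gives $(\bar c_{I_t}-\bar c_{i^*})^+\le 2c_{\max}\beta(m_{I_t,t})+d_-+d_+$. Note that this holds whether $I_t$ is qualified or not, and the right-hand side is nonnegative. I split this bound into two parts. Set $a_t=\min\{(\bar c_{I_t}-\bar c_{i^*})^+,\,2c_{\max}\beta(m_{I_t,t})\}$ on non-exploration steps and $a_t=0$ otherwise. Then $(\bar c_{I_t}-\bar c_{i^*})^+\le a_t+(d_-+d_+)$ on non-exploration steps, and $0\le a_t\le 2c_{\max}\beta(m_{I_t,t})$ for every $t=N+1,\dots,T$. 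Lemma~\ref{lem:radius-sum} with $\kappa=2c_{\max}$ bounds $\sum a_t$ by the square-root term. There are at most $T-N$ non-exploration steps, so the residual parts sum to at most $(d_-+d_+)(T-N)$. Adding the three contributions yields \eqref{eq:deviation-generation-regret}.

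\textbf{Second claim.} Suppose $\Delta_i>d_-+d_+$ for all $i\in\Qcal\setminus\{i^*\}$. I will repeat the per-provider counting of the second bound of Theorem~\ref{thm:generation-regret}. Unqualified providers are selected at most $M_i$ times after initialization by Lemma~\ref{lem:unqualified-elimination}, which is bid-independent. Qualified suboptimal providers are selected at most $\max\{L_i^{(d)},\lceil g(T)\rceil-1\}$ times by Proposition~\ref{prop:deviation-selection}ii). Adding the single initialization selection and weighting by $(\bar c_i-\bar c_{i^*})^+$ gives the stated bound.

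The only delicate step is the non-exploration decomposition. One must check that the gap bound of Proposition~\ref{prop:deviation-selection}i) applies to unqualified winners as well, and that the $(d_-+d_+)$ slack is charged per step rather than absorbed into the radius sum. That slack is exactly what produces the linear term. Everything else is a direct reuse of the bid-independent lemmas listed before Lemma~\ref{lem:deviation-adjusted-bid-bounds}.
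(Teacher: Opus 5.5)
Your proposal is correct and follows the paper's proof. You keep the initialization and exploration accounting of Theorem~\ref{thm:generation-regret}, replace the optimistic comparison on non-exploration steps by Eq.~\eqref{eq:deviation-selection-gap}, charge the $(d_-+d_+)$ slack per step before applying Lemma~\ref{lem:radius-sum} with $\kappa = 2c_{\max}$, and get the second claim by substituting Proposition~\ref{prop:deviation-selection}ii) for Lemma~\ref{lem:qualified-suboptimal-selection}; your explicit choice $a_t = \min\{(\bar c_{I_t}-\bar c_{i^*})^+,\, 2c_{\max}\beta(m_{I_t,t})\}$ simply makes precise the one-line splitting the paper uses.
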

\begin{proof}
     In the proof of the first bound of Theorem~\ref{thm:generation-regret}, the inequality $\bar c_i - 2c_{\max}\beta(m_{i,t}) \leq \bar c_{i^*}$ on non-exploration steps is replaced by Eq.~\ref{eq:deviation-selection-gap}. Splitting $a_t \leq 2c_{\max}\beta(m_{I_t,t}) + (d_- + d_+)$, Lemma~\ref{lem:radius-sum} with $\kappa = 2c_{\max}$ bounds the sum of the first terms as before, and the second terms contribute at most $(d_- + d_+)(T-N)$. The second bound uses the bids only through Lemma~\ref{lem:qualified-suboptimal-selection}, which Proposition~\ref{prop:deviation-selection}ii) replaces.
\end{proof}
Dividing by $T$, the average generation-cost regret is at most $d_- + d_+$ up to a term of order $T^{-(1-\alpha)} + \sqrt{\ln T / T}$, whatever the cost gaps.

\paragraph{Payments.}
Overbidding and underbidding affect the two sides of the payment bounds separately: the payment is set by the adjusted bid of the cheapest competing provider, which overbidding raises by at most $d_+$ and underbidding lowers by at most $d_-$.

\begin{lemma}
\label{lem:deviation-payment}
    Under Assumption~\ref{asmp:bounded-deviations}, on the event $\mathcal E$, on every non-exploration step $t > N$ with $I_t = i$,
    \begin{equation}
        \pi_t \;\leq\; \min\bigl\{c_{\max},\; \bar c_{(2)} + c_{\max}\beta(m_{i,t}) + d_+\bigr\},
        \label{eq:deviation-payment-upper}
    \end{equation}
    and, if moreover $t > T_0$,
    \begin{equation}
        \pi_t \;\geq\; \bar c_{(2)} - 3c_{\max}\beta(g(t)) - d_-.
        \label{eq:deviation-payment-lower}
    \end{equation}
\end{lemma}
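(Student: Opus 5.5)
The plan is to retrace the proofs of Lemma~\ref{lem:ordinary-payment} (upper bound) and Lemma~\ref{lem:two-sided-payment} (lower bound). Wherever those proofs invoke Lemma~\ref{lem:adjusted-bid-bounds}, we substitute its deviation counterpart, Lemma~\ref{lem:deviation-adjusted-bid-bounds}; wherever they invoke Lemma~\ref{lem:ordinary-selection-gap}, we substitute Proposition~\ref{prop:deviation-selection}i). The payment on a non-exploration step is
\begin{equation*}
\pi_t=\min\Bigl\{c_{\max}\beta(m_{i,t})+\min_{j\in\hat\Qcal_t\setminus\{i\}}\bigl(b_{j,t}-c_{\max}\beta(m_{j,t})\bigr),\,c_{\max}\Bigr\}.
\end{equation*}
This formula involves only the adjusted bids of competitors and the winner's radius, so each side of the bound reduces to bounding a single competitor's adjusted bid.

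\emph{Upper bound.} We choose a witness $j$ exactly as in Lemma~\ref{lem:ordinary-payment}. If $i=i^*$, let $j$ be a second-cheapest qualified provider; otherwise let $j=i^*$. In either case $j\in\Qcal\setminus\{i\}$, so $j\in\hat\Qcal_t$ on $\mathcal E$ by Lemma~\ref{lem:qualified-eligibility}, and $\bar c_j\le\bar c_{(2)}$. The right inequality of Eq.~\ref{eq:deviation-adjusted-bid-bounds} gives $b_{j,t}-c_{\max}\beta(m_{j,t})\le\bar c_j+d_+\le\bar c_{(2)}+d_+$. The inner minimum is therefore at most $\bar c_{(2)}+d_+$, and adding the winner's radius $c_{\max}\beta(m_{i,t})$ and taking the cap yields Eq.~\ref{eq:deviation-payment-upper}. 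No assumption on $t$ beyond $t>N$ is needed.

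\emph{Lower bound.} Take $t>T_0$ non-exploration. Lemma~\ref{lem:ordinary-qualified-set} gives $\hat\Qcal_t=\Qcal$ and $m_{j,t}\ge g(t)>\bar M\ge 1$ for all $j\in\Qcal$; both facts are bid-independent. Lemma~\ref{lem:radius-monotone} then gives $\beta(m_{j,t})\le\beta(g(t))$. For each $j\in\Qcal\setminus\{i\}$, the left inequality of Eq.~\ref{eq:deviation-adjusted-bid-bounds} gives $b_{j,t}-c_{\max}\beta(m_{j,t})\ge\bar c_j-2c_{\max}\beta(g(t))-d_-$. We then split into two cases.
\begin{itemize}
\item If $i=i^*$, every competitor satisfies $\bar c_j\ge\bar c_{(2)}$. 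The first argument of the outer minimum is therefore at least $\bar c_{(2)}-2c_{\max}\beta(g(t))-d_-$.
\item If $i\neq i^*$, the competitors satisfy $\bar c_j\ge\bar c_{i^*}$. Here we need a lower bound on $\bar c_{i^*}$ in terms of $\bar c_{(2)}$. In the truthful proof this came from Lemma~\ref{lem:ordinary-selection-gap}, namely $\bar c_{i^*}\ge\bar c_i-2c_{\max}\beta(m_{i,t})\ge\bar c_{(2)}-2c_{\max}\beta(m_{i,t})$.
\end{itemize}

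This second case is the main obstacle. A naive substitution of Proposition~\ref{prop:deviation-selection}i) costs an extra $d_-+d_+$, which would give a total deficit of $2d_-+d_+$ rather than the claimed $d_-$. To avoid this, I would not pass through $\bar c_{i^*}$ at all. Instead, I would use the selection inequality directly: since $i$ won, every competitor $j$ satisfies $b_{j,t}-c_{\max}\beta(m_{j,t})\ge b_{i,t}-c_{\max}\beta(m_{i,t})$. The inner minimum is therefore at least the winner's own adjusted bid, so the first argument of the outer minimum is at least $b_{i,t}$. Eq.~\ref{eq:deviation-adjusted-bid-bounds} applied to $i$ then gives
\begin{equation*}
b_{i,t}\ge\bar c_i-c_{\max}\beta(m_{i,t})-d_-\ge\bar c_{(2)}-c_{\max}\beta(g(t))-d_-,
\end{equation*}
using $\bar c_i\ge\bar c_{(2)}$ because $i\in\Qcal\setminus\{i^*\}$.

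This trick also makes the case split unnecessary. When $i=i^*$, the competitor-based bound already gives deficit $2c_{\max}\beta(g(t))+d_-$; when $i\neq i^*$, the winner-based bound gives $c_{\max}\beta(g(t))+d_-$. Both are within $3c_{\max}\beta(g(t))+d_-$. Finally, the cap $c_{\max}\ge\bar c_{(2)}$ exceeds the right-hand side of Eq.~\ref{eq:deviation-payment-lower}, so taking the outer minimum preserves the bound, which establishes Eq.~\ref{eq:deviation-payment-lower}.
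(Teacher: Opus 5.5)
Your proposal is correct and follows essentially the paper's proof. The upper bound uses the same witness $j$, and the lower bound uses the same key move: instead of going through Lemma~\ref{lem:ordinary-selection-gap}, you use the winner's selection inequality, so that $P_{i,t}\ge b_{i,t}\ge \bar c_{(2)}-c_{\max}\beta(g(t))-d_-$ when $i\neq i^*$. The only difference is that the paper applies this inequality just to the competitor $i^*$ and bounds the other competitors via Eq.~\ref{eq:deviation-adjusted-bid-bounds}, whereas you apply it to all competitors at once, which is slightly cleaner. Note, though, that this does not remove the case split on $i=i^*$: your last paragraph still uses it, so your remark that the split becomes unnecessary is inaccurate.
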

\begin{proof}
    \emph{Upper bound.} As in the proof of Lemma~\ref{lem:ordinary-payment}, with $j = i^*$ if $i \neq i^*$ and $j$ the second-cheapest qualified provider if $i = i^*$: by Eq.~\ref{eq:deviation-adjusted-bid-bounds}, $P_{i,t} \leq c_{\max}\beta(m_{i,t}) + b_{j,t} - c_{\max}\beta(m_{j,t}) \leq c_{\max}\beta(m_{i,t}) + \bar c_j + d_+ \leq c_{\max}\beta(m_{i,t}) + \bar c_{(2)} + d_+$.

    \emph{Lower bound.} For $t > T_0$, Lemma~\ref{lem:ordinary-qualified-set} gives $\hat{\Qcal}_t = \Qcal$ and $m_{j,t} \geq g(t)$ for every $j \in \Qcal$, so $\beta(m_{j,t}) \leq \beta(g(t))$ by Lemma~\ref{lem:radius-monotone}. Write $A_{j,t} = b_{j,t} - c_{\max}\beta(m_{j,t})$, so that $P_{i,t} = c_{\max}\beta(m_{i,t}) + \min_{j \in \Qcal \setminus \{i\}} A_{j,t}$. For every $j \in \Qcal \setminus \{i, i^*\}$, Eq.~\ref{eq:deviation-adjusted-bid-bounds} and $\bar c_j \geq \bar c_{(2)}$ give $A_{j,t} \geq \bar c_{(2)} - 2c_{\max}\beta(g(t)) - d_-$. If $i = i^*$, this covers every $j \in \Qcal \setminus \{i\}$ and $P_{i,t} \geq \bar c_{(2)} - 2c_{\max}\beta(g(t)) - d_-$. If $i \neq i^*$, the remaining competitor is $i^*$, and since $i$ was selected, $A_{i^*,t} \geq A_{i,t}$, so that
    \begin{align*}
        c_{\max}\beta(m_{i,t}) + A_{i^*,t} 
        &\geq c_{\max}\beta(m_{i,t}) + A_{i,t} = b_{i,t} \geq \hat c_{i,t} - d_- \\
        &\geq \bar c_i - c_{\max}\beta(g(t)) - d_- \geq \bar c_{(2)} - c_{\max}\beta(g(t)) - d_-
    \end{align*}
    using $i \in \Qcal \setminus \{i^*\}$ in the last step. Hence $P_{i,t} \geq \bar c_{(2)} - 2c_{\max}\beta(g(t)) - d_-$ in both cases, and $\pi_t = \min\{P_{i,t}, c_{\max}\} \geq \bar c_{(2)} - 2c_{\max}\beta(g(t)) - d_- \geq \bar c_{(2)} - 3c_{\max}\beta(g(t)) - d_-$, since $\bar c_{(2)} \leq c_{\max}$.
\end{proof}
At $d_- = d_+ = 0$, Eqs.~\ref{eq:deviation-payment-upper} and~\ref{eq:deviation-payment-lower} are Lemmas~\ref{lem:ordinary-payment} and~\ref{lem:two-sided-payment}.

\begin{theorem}\label{thm:deviation-payment-convergence}
    Under Assumption~\ref{asmp:bounded-deviations}, with probability at least $1-\delta$, for every $T \geq T_0$,
    \begin{align}
        \frac{1}{T}\sum_{t=1}^{T} \bigl|\pi_t - \bar c_{(2)}\bigr|
        &\leq \frac{T_0}{T}\,c_{\max} + k\left(\frac{N}{T}\right)^{\!1-\alpha}\bigl(c_{\max} - \bar c_{(2)}\bigr) + \max\{d_-, d_+\} \nonumber \\
        &\quad + \frac{6\,c_{\max}}{2-\alpha}\sqrt{\frac{1}{2k}\left(\frac{N}{T}\right)^{\!\alpha}\ln\!\left(\frac{2\pi^2 N k^2}{3\delta}\left(\frac{T}{N}\right)^{\!2\alpha}\right)}
        \label{eq:deviation-average-absolute-payment}
    \end{align}
    and consequently
    \begin{equation}
        \bar c_{(2)} - d_- \;\leq\; \liminf_{T\to\infty} \frac{1}{T}\sum_{t=1}^{T}\pi_t \;\leq\; \limsup_{T\to\infty} \frac{1}{T}\sum_{t=1}^{T}\pi_t \;\leq\; \bar c_{(2)} + d_+.
        \label{eq:deviation-payment-limits}
    \end{equation}
    Moreover, the average excess payment of Corollary~\ref{cor:excess-payment-polynomial} is bounded by its right-hand side plus $d_+$.
\end{theorem}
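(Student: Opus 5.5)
We follow the proof of Theorem~\ref{thm:payment-convergence} and change only the bound used on non-exploration steps, replacing Lemma~\ref{lem:two-sided-payment} with Lemma~\ref{lem:deviation-payment}. We work on $\mathcal E$, which has probability at least $1-\delta$ by Lemma~\ref{lem:good-event}. Fix $T \geq T_0$ and split the steps $t \leq T$ into the same three sets as before. $\mathcal R_1 = \{t \leq T_0\}$ contributes at most $T_0 c_{\max}$, by Lemma~\ref{lem:payment-range}; this lemma does not depend on the bids, so it still holds. $\mathcal R_2$, the exploration steps after $T_0$, contains at most $kN(T/N)^{\alpha}$ steps by Lemma~\ref{lem:forced-exploration-bound}, each contributing exactly $c_{\max}-\bar c_{(2)}$. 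Both bounds are unchanged, since neither involves the bids.

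For $t \in \mathcal R_3$ (non-exploration steps with $t > T_0$), Lemma~\ref{lem:deviation-payment} gives the upper bound $\pi_t \leq \bar c_{(2)} + c_{\max}\beta(m_{I_t,t}) + d_+$. Lemma~\ref{lem:ordinary-qualified-set} gives $m_{I_t,t} \geq g(t)$, and Lemma~\ref{lem:radius-monotone} then turns this into $\pi_t \leq \bar c_{(2)} + c_{\max}\beta(g(t)) + d_+$. The lower bound is $\pi_t \geq \bar c_{(2)} - 3c_{\max}\beta(g(t)) - d_-$. Together,
\[
|\pi_t - \bar c_{(2)}| \leq 3c_{\max}\beta(g(t)) + \max\{d_-,d_+\}.
\]
The $\beta(g(t))$ part is summed exactly as in Theorem~\ref{thm:payment-convergence}: bound $\beta(g(t))$ by the logarithmic factor at $T$ times $(N/t)^{\alpha/2}$, then use $\sum_t t^{-\alpha/2} \leq \frac{2}{2-\alpha}T^{1-\alpha/2}$. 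The extra part adds at most $\max\{d_-,d_+\}\,|\mathcal R_3| \leq \max\{d_-,d_+\}\,T$. Dividing by $T$ gives Eq.~\ref{eq:deviation-average-absolute-payment}.

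The limit statement Eq.~\ref{eq:deviation-payment-limits} needs one-sided sums, not the absolute bound, because the absolute bound would only give the symmetric slack $\max\{d_-,d_+\}$. For the upper limit, we write
\[
\frac1T\sum_t \pi_t \leq \bar c_{(2)} + \frac1T\sum_t(\pi_t - \bar c_{(2)})^+.
\]
On $\mathcal R_3$ the summand is at most $c_{\max}\beta(g(t)) + d_+$. The $\mathcal R_1$ and $\mathcal R_2$ contributions are $O(T_0/T + (N/T)^{1-\alpha})$, and the $\beta$ part is $O(T^{-\alpha/2}\sqrt{\ln T})$. All of these vanish, so the limsup is at most $\bar c_{(2)} + d_+$.

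Symmetrically, for the lower limit we write
\[
\frac1T\sum_t \pi_t \geq \bar c_{(2)} - \frac1T\sum_t(\bar c_{(2)} - \pi_t)^+.
\]
On exploration and initialization steps $\pi_t = c_{\max} \geq \bar c_{(2)}$, so those steps contribute nothing to this negative part. Only $\mathcal R_1$ steps with $\hat{\mathcal Q}_t$ possibly containing unqualified providers can contribute, and their number is bounded by $T_0$. On $\mathcal R_3$ the summand is at most $3c_{\max}\beta(g(t)) + d_-$. This gives $\liminf \geq \bar c_{(2)} - d_-$.

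For the final claim about Corollary~\ref{cor:excess-payment-polynomial}, we revisit Theorem~\ref{thm:excess-payment-general}. Its non-exploration step used Lemma~\ref{lem:ordinary-payment}; we replace that with the upper bound in Eq.~\ref{eq:deviation-payment-upper}, which holds for every non-exploration step $t>N$, with no need for $t>T_0$. This gives $(\pi_t - \bar c_{(2)})^+ \leq c_{\max}\beta(m_{I_t,t}) + d_+$. Lemma~\ref{lem:radius-sum} with $\kappa = c_{\max}$ handles the first term as before. The $d_+$ term adds at most $d_+(T-N) \leq d_+ T$, that is, $d_+$ after averaging. The initialization and exploration terms are unchanged.

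The only delicate point is keeping the two deviation directions separate in the limit statement rather than collapsing them into $\max\{d_-,d_+\}$. The rest is bookkeeping identical to the proof of Theorem~\ref{thm:payment-convergence}. A minor check is that for $T = T_0$ the set $\mathcal R_3$ is empty, so the bound holds trivially.
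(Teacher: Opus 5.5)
Your proof is correct and follows the paper's argument essentially verbatim. You use the same three-way split as in Theorem~\ref{thm:payment-convergence}, with Lemma~\ref{lem:deviation-payment} substituted on the non-exploration steps $t > T_0$; you redo that decomposition one-sidedly (without absolute values) to keep $d_-$ and $d_+$ separate in the $\liminf$/$\limsup$; and you insert Eq.~\ref{eq:deviation-payment-upper} into the proof of Theorem~\ref{thm:excess-payment-general} to pick up the extra $d_+(T-N)$. Your description of which $\mathcal R_1$ steps can contribute to the negative part is loosely worded. This is harmless, since all you need is that there are at most $T_0$ such steps, each contributing at most $c_{\max}$.
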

\begin{proof}
    In the proof of Theorem~\ref{thm:payment-convergence}, the only change is on the non-exploration steps $t > T_0$, where Lemma~\ref{lem:deviation-payment} gives $|\pi_t - \bar c_{(2)}| \leq 3c_{\max}\beta(g(t)) + \max\{d_-, d_+\}$ in place of $3c_{\max}\beta(g(t))$; summing the additional term over at most $T$ steps and dividing by $T$ gives Eq.~\ref{eq:deviation-average-absolute-payment}. For Eq.~\ref{eq:deviation-payment-limits}, the same decomposition applied to $\pi_t - \bar c_{(2)}$ without absolute values gives, for the non-exploration steps $t > T_0$, $-3c_{\max}\beta(g(t)) - d_- \leq \pi_t - \bar c_{(2)} \leq c_{\max}\beta(g(t)) + d_+$, while the first $T_0$ steps and the exploration steps contribute $O(T_0/T + T^{-(1-\alpha)})$ to the average; the vanishing terms are those of Eq.~\ref{eq:deviation-average-absolute-payment}. The excess-payment claim follows from Eq.~\ref{eq:deviation-payment-upper} in the proof of Theorem~\ref{thm:excess-payment-general}, whose $\sqrt{\cdot}$ term is unchanged and which acquires the additional term $d_+(T-N)$.
\end{proof}

\paragraph{Summary.}
Under bounded deviations, the platform still serves qualified providers on all but a bounded number of rounds; it identifies $i^*$ whenever every qualified competitor is more than $d_- + d_+$ costlier, and otherwise selects a qualified provider within $d_- + d_+$ of $\bar c_{i^*}$ in mean cost; and the average payment converges to within $[\bar c_{(2)} - d_-,\ \bar c_{(2)} + d_+]$ at the rate of Theorem~\ref{thm:regret}. Underbidding thus affects the lower side of the payment guarantee by at most $d_-$, whereas overbidding affects the upper side by at most $d_+$; neither affects the quality guarantee, and both slow identification through the reduced gaps $\Delta_i - d_- - d_+$. Setting $d_- = d_+ = 0$ recovers Theorems~\ref{thm:sample-complexity}, \ref{thm:generation-regret}, and~\ref{thm:regret} verbatim. Finally, for the Bayesian variant of Appendix~\ref{app:bayesian}, with Assumption~\ref{asmp:bounded-deviations} stated relative to $e_{i,t}$ in place of $\hat c_{i,t}$, the same statements hold with $c_{\max}\beta(\cdot)$ replaced by $(1+\gamma)c_{\max}\beta(\cdot)$ throughout and $L_i^{(d)}$ replaced by $M\bigl((\Delta_i - d_- - d_+)/((1+\gamma)c_{\max})\bigr)$, since the proofs above use the empirical mean only through Eq.~\ref{eq:good-event}, which Lemma~\ref{lem:bayesian-cost-concentration} replaces.

\clearpage
\newpage

\section{Experimental details}
\label{app:experimental-details}

In this section, we describe the experimental setup we use to evaluate our platform (Algorithm~\ref{alg:platform}). Here, our goal is twofold: first, to instantiate the model of Section~\ref{sec:model} using recorded generations from real LLMs, so that provider qualities and generation costs are grounded in observed data; and second, to compare the platform against practical routing policies that a platform could deploy today. In what follows, we describe the data, the construction of providers, the experimental settings, the routing policies we compare against, and the metrics we report. 

\paragraph{Recorded generation data.}
We evaluate the platform using a publicly available dataset of recorded LLM generations.\footnote{\url{https://huggingface.co/datasets/Human-Centric-Machine-Learning/strategic-ttc-data}} The data cover \texttt{GSM8K}~\citep{cobbe2021trainingverifierssolvemath} and \texttt{GPQA}~\citep{rein2023gpqagraduatelevelgoogleproofqa}, spanning a wide range of difficulty. For every model--question pair, the data contain 128 recorded generations, each annotated with a binary correctness label, an output-token count, and, when needed for Best-of-$n_i$ service, a score from the reward model \texttt{ArmoRM-Llama3-8B-v0.1}~\citep{wang2024interpretablepreferencesmultiobjectivereward}. Our simulator retains only these quantities and a question identifier; i.e., neither the prompts nor the generated text are used. The data contain 1,319 \texttt{GSM8K} questions and 546 \texttt{GPQA} questions. Since Best-of-$n_i$ service requires the correctness, token, and reward records to be aligned for every included model, we remove one \texttt{GPQA} question that fails this alignment check jointly from all models whenever Best-of-$n_i$ variants are in use, leaving 545 \texttt{GPQA} questions in those experiments; no questions are removed from \texttt{GSM8K}. As a result, every retained model has the same question set and 128 recorded generations per question.

\paragraph{Base models and listed prices.}
We use the nine base models listed in Table~\ref{tab:experimental-list-prices}, together with their publicly listed prices. To determine the per-output-token price of each model, we follow the procedure of \citet{velasco2026ttcgames}: we refer to the Hugging Face list of inference providers\footnote{\url{https://huggingface.co/docs/inference-providers/index}, consulted on December 30, 2025.} and, for each model, compute the average output-token price across the listed providers that offer access to that model. More formally, we denote by $\lambda_m$ the resulting listed price of model $m$ in USD per $10^6$ output tokens. Throughout, all monetary quantities are reported in units of $10^{-6}$ USD, so that the numerical value of $\lambda_m$ coincides with the price per output token in these units.

\begin{table}[th!]
    \centering
    \small
    \begin{tabular}{lr@{\qquad}lr}
        \hline
        Model & $\lambda_m$ &
        Model & $\lambda_m$ \\
        \hline
        \texttt{Llama-3-8B}   & 0.1455 &
        \texttt{Qwen2-0.5B}   & 0.1000 \\
        \texttt{Llama-3.1-8B} & 0.1245 &
        \texttt{Qwen2-1.5B}   & 0.1000 \\
        \texttt{Llama-3.2-1B} & 0.1000 &
        \texttt{Qwen2-7B}     & 0.2000 \\
        \texttt{Llama-3.2-3B} & 0.0800 &
        \texttt{Qwen2.5-3B}   & 0.0650 \\
                              &        &
        \texttt{Qwen2.5-7B}   & 0.1465 \\
        \hline
    \end{tabular}
    \caption{Listed output-token prices used in the experiments, in USD per $10^6$ output tokens.}
    \label{tab:experimental-list-prices}
\end{table}

\paragraph{Provider construction.}
To instantiate providers with a cost--quality trade-off, we build each provider from a base model $m$ and a service multiplicity $n_i$, and write $i=(m,n_i)$ to identify the provider by this pair. A base provider, with $n_i=1$, serves a query with a single generation of model $m$, whereas a Best-of-$n_i$ provider generates $n_i$ candidate responses, incurs the generation cost of all $n_i$ candidates, and returns the candidate with the highest recorded reward score. More concretely, for each question, the simulator samples $n_i$ distinct generations uniformly without replacement from the 128 recorded generations of model $m$ and, if $n_i>1$, returns the sampled generation with the highest reward score, resolving ties uniformly through the random ordering of the sampled generations. The correctness of the response, which serves as the score $r(x_t, y_t)$ of Section~\ref{sec:model}, is the binary correctness of the returned generation, while the token count used for pricing and cost includes all $n_i$ candidates. We consider three rosters. The \emph{full} roster contains all nine base models with $n_i=1$. The \emph{ladder} roster contains
\begin{equation*}
    m\in\{
        \texttt{Llama-3-8B},
        \texttt{Llama-3.2-1B},
        \texttt{Qwen2-1.5B},
        \texttt{Qwen2.5-7B}
    \}
\end{equation*}
at each multiplicity $n_i\in\{1,2,4\}$, for a total of 12 providers. The ladder roster therefore introduces a direct cost--quality trade-off within each base model, in addition to the variation across model families. The \emph{strong-competition} roster is used on \texttt{GPQA} only and contains the newest large model and the smallest model of each family, $\texttt{Llama-3.1-8B}$, $\texttt{Llama-3.2-1B}$, $\texttt{Qwen2.5-7B}$ and $\texttt{Qwen2-0.5B}$, all with $n_i = 1$.

\paragraph{Empirical quality, price, and cost.}
We now specify how the quantities of Section~\ref{sec:model} are instantiated from the recorded data. To this end, it is important to distinguish the listed price, the query price, and the generation cost. The \emph{listed price} $\lambda_m$ is the public per-token rate in Table~\ref{tab:experimental-list-prices}; it is known before routing and does not depend on the realized output length. We define the realized \emph{query price} $V_{i,t}$ of provider $i=(m,n_i)$ in step $t$ as the listed price $\lambda_m$ times the total number of output tokens across the $n_i$ recorded generations sampled in that step, i.e., the amount a platform using the model at its listed rate would be invoiced once the output lengths are known. Thus, a Best-of-$n_i$ query is billed for every generated candidate, not only for the returned response. Since the data do not contain the providers' true generation costs, we construct the private realized \emph{generation cost} as
\begin{equation}
    c_{i,t}
    \coloneqq
    \frac{V_{i,t}}{1+\mu},
    \qquad \mu=0.25,
    \label{eq:experimental-generation-cost}
\end{equation}
where $\mu$ is a (fixed) margin. It is worth noting that the margin $\mu$ is used only to construct the latent cost observations, and it is neither supplied to nor used by the platform. Building on the above, the empirical quality $q_i$ of provider $i$ is their expected correctness under a uniformly sampled benchmark question and the generation-sampling procedure above. For a base provider, it is the mean correctness over all recorded question--generation pairs, while for a Best-of-$n_i$ provider, it is computed exactly from the empirical joint distribution of reward scores and correctness, including uniform tie-breaking. Analogously, the cost distribution $\mathcal{P}^c_i$ of provider $i$ is the distribution of the realized generation cost $c_{i,t}$ in Eq.~\eqref{eq:experimental-generation-cost} under the same sampling procedure as $q_i$, and $\bar{c}_i$ is its mean. Since query price and generation cost are proportional by construction, the mean query price of provider $i$ is $\bar{v}_i = (1+\mu)\bar{c}_i$. These quantities are used only for evaluation and for constructing perfect-information reference quantities; they are not given to the platform. Finally, in our primary experiments, providers bid their empirical-mean cost estimate $\hat{c}_{i,t}$, \ie, they follow the equilibrium strategy $\sigma_i^*$ of Eq.~\eqref{eq:truthful-strategy}, and payments follow the rule of Eq.~\eqref{eq:payment}.

\paragraph{Settings and platform parameters.}
For each roster and benchmark, we place the quality threshold $q_{\min}$ at the midpoint between two adjacent empirical provider qualities. Table~\ref{tab:experimental-environments} reports, for each of the resulting five settings, the exact threshold, the benchmark size $K$, the number of empirically qualified providers, and the lowest-cost qualified provider $i^*$ of Eq.~\eqref{eq:optimal-provider}.

\begin{table}[t]
    \centering
    \small
    \begin{tabular}{llrrrrl}
        \hline
        Benchmark & Roster & $N$ & $K$ & $q_{\min}$ & $|\Qcal|$ & $i^*$ \\
        \hline
        \texttt{GSM8K} & full   &  9 & 1319 & 0.7640 & 3 &
        \texttt{Qwen2.5-3B} \\
        \texttt{GPQA}  & full   &  9 &  546 & 0.0790 & 8 &
        \texttt{Qwen2-1.5B} \\
        \texttt{GSM8K} & ladder & 12 & 1319 & 0.8050 & 3 &
        \texttt{Qwen2.5-7B} \\
        \texttt{GPQA}  & ladder & 12 &  545 & 0.2396 & 4 &
        \texttt{Llama-3-8B} \\
        \texttt{GPQA}  & strong-competition &  4 &  546 & 0.1451 & 2 &
        \texttt{Llama-3.1-8B} \\
        \hline
    \end{tabular}
    \caption{Experimental settings. The benchmark size $K$ equals the number of aligned benchmark questions available to the corresponding roster.}
    \label{tab:experimental-environments}
\end{table}

We set the confidence level to $\delta=0.05$. The output-token cap entering the cost bound is fixed at $L_{\max}=512$, and the cost bound $c_{\max}$ of Section~\ref{sec:model}, which also caps every payment, is set to
\begin{equation}
    c_{\max} = L_{\max}\max_{i=(m,n_i)} n_i\lambda_m,
    \label{eq:experimental-cmax}
\end{equation}
which gives $c_{\max}\approx102$ for the full roster, $c_{\max}\approx300$ for the ladder roster and $c_{\max}\approx75$ for the strong-competition roster. Since private costs are obtained by deflating query prices by $1+\mu$, this ceiling also bounds every realized provider cost, as required by our model. Interestingly, the two full-roster settings provide qualitatively different tests of routing by listed price. The lowest listed rate belongs to \texttt{Qwen2.5-3B}, which is $i^*$ on \texttt{GSM8K} and is qualified but not $i^*$ on \texttt{GPQA}. Listed-price routing is therefore, respectively, aligned with and only partially informative about the platform's target decision.

\paragraph{Episode construction and randomization.}
At the beginning of every episode, the benchmark questions are placed in a seeded random permutation. For every step and every provider, the simulator preassigns a potential outcome using the recorded-generation sampling rule above; only the selected provider's outcome is then revealed, while the potential outcomes of nonselected providers remain unavailable to the platform. This preserves the platform's information design of Section~\ref{sec:model} while allowing different policies to be evaluated on matched potential outcomes. The first $N$ steps initialize the platform as prescribed in Section~\ref{sec:model}, selecting every provider exactly once in a seeded random order, and all subsequent steps follow the platform's selection and payment rules using the pre-step state.

\paragraph{Exploration steps.}
In every experiment of this appendix the platform runs with the exploration rule of Eq.~\eqref{eq:allocation}. At the beginning of step $t > N$, if some estimated-qualified provider $i \in \hat{\Qcal}_t$ (Eq.~\eqref{eq:qualified-set}), which we also refer to as an \emph{eligible} provider, has been selected fewer than
\begin{equation*}
    g(t) = k\,(t/N)^{\alpha}, \qquad k = 2,\; \alpha = 3/4,
\end{equation*}
times, \ie, if $\Scal_t \neq \emptyset$, the step is an exploration step: one of the under-sampled eligible providers in $\Scal_t$ is drawn uniformly at random, serves the query and is paid $c_{\max}$; otherwise it is a non-exploration step, selected and paid according to Eqs.~\eqref{eq:allocation} and~\eqref{eq:payment}. Exploration steps count toward every selection statistic, regret, generation cost, query price and amount paid; the winning bid and the payment on non-exploration steps are reported over those steps only. The routing baselines and the no-quality-filter ablation do not explore.

\paragraph{Routing policies.}
Our primary comparison uses three routing policies, none of which observes private generation costs: (i) \emph{uniform routing among eligible providers}, (ii) \emph{lowest-listed-price routing among eligible providers}, and (iii) \emph{lowest-listed-price routing without quality screening}. Policy (i) constructs the same online estimated-qualified set $\hat{\Qcal}_t$ as the platform, i.e., it may use the observed scores, the selection counts, the threshold $q_{\min}$, and the confidence radius $\beta(m_{i,t})$ of Eq.~\eqref{eq:qualified-set}, and selects uniformly at random from $\hat{\Qcal}_t$; it uses neither bids nor listed prices, and thus isolates the contribution of cost-sensitive selection after online quality screening. Policy (ii) constructs the same $\hat{\Qcal}_t$ and then minimizes the public effective listed rate
\begin{equation*}
    \bar\lambda_i\coloneqq n_i\lambda_m \qquad\text{for } i=(m,n_i),
\end{equation*}
breaking ties uniformly at random; it observes listed rates but neither provider bids nor generation costs, and the realized response length is not known at selection time. Policy (iii) is static: it minimizes $\bar\lambda_i$ over the entire roster, irrespective of observed quality, and requires only the public listed rates. It therefore provides a direct measure of the reliability lost when routing ignores the quality requirement. In addition, we consider a \emph{no-quality-filter} ablation of the platform, which retains the optimistic bidding index $b_{i,t} - c_{\max}\,\beta(m_{i,t})$ of Eq.~\eqref{eq:allocation}, the confidence radius $\beta(m_{i,t})$, and the critical payment of Eq.~\eqref{eq:critical-payment}, but minimizes the index over the full roster rather than over $\hat{\Qcal}_t$; this ablation isolates the contribution of online quality screening. The platform pays $\pi_t$ as in Eq.~\eqref{eq:payment}, the routing policies pay the query price of the selected provider, and both are reported as the amount paid.

\paragraph{Horizons and long-run evaluation.}
To examine longer-run behavior, we additionally run every setting for $T = 70{,}000$ queries and 30 independent runs, paired across rules, corresponding to 54 passes over the \texttt{GSM8K} benchmark and 129 passes over \texttt{GPQA}, with the final pass only partially used. At the beginning of each additional pass, the simulator draws a fresh seeded permutation of all benchmark questions and fresh recorded generations for every provider--question pair, so that the same generation may reappear in different passes; extending the horizon leaves the complete first pass unchanged. This resampling procedure approximates repeated draws from each provider's empirical outcome distribution. It does not create new questions or new model generations, and the resulting long-horizon evidence remains conditional on the observed benchmark and generation pools.

\paragraph{Metrics.}
Our main metrics are the quality regret $R_T^{\mathrm{qual}} = \sum_{t=1}^{T} (q_{\min} - q_{I_t})^+$ of Theorem~\ref{thm:quality-guarantee} and the generation-cost regret $R_T^{\mathrm{gen}} = \sum_{t=1}^{T} (\bar{c}_{I_t} - \bar{c}_{i^*})^+$ of Theorem~\ref{thm:generation-regret}, reported both cumulatively and per query. Note that both regrets are computed using the empirical provider parameters $q_i$ and $\bar{c}_i$ rather than the realized correctness labels and costs, which are subject to sampling noise. We report realized accuracy separately as
\begin{equation*}
    \widehat{\mathrm{Acc}}_T = \frac{1}{T}\sum_{t=1}^T r(x_t, y_t),
\end{equation*}
together with the share of unqualified selections, 
\begin{equation*}
    \frac{1}{T}\sum_{t=1}^T \mathbbm{1}\{I_t\notin \Qcal\},
\end{equation*}
the selection share of $i^*$, provider-level selection counts, and whether $i^*$ is the unique most selected provider. The economic outcomes comprise realized generation cost per query, query price per query, and the amount paid per query under the policy-specific convention above. For the platform and its no-quality-filter ablation, we additionally report the winning standing bid, the total payment on non-exploration steps, the cumulative excess payment $E_T^{\mathrm{pay}} = \sum_{t=1}^{T} (\pi_t - \bar{c}_{(2)})^+$ bounded in Theorem~\ref{thm:excess-payment-general}, and the provider payoff $\sum_{t:I_t=i}\bigl(\pi_t-c_{i,t}\bigr)$. Where useful, these quantities are reported both over the full episode and after initialization, since initialization pays $c_{\max}$ by construction. Payment diagnostics compare the realized payments on non-exploration steps with the upper bound $\pi_t \leq \min\{c_{\max},\, \bar{c}_{(2)} + c_{\max}\,\beta(m_{I_t,t})\}$ of Lemma~\ref{lem:ordinary-payment} and with the lower bound $\pi_t \geq \max\{0,\, \bar{c}_{I_t} - c_{\max}\,\beta(m_{I_t,t})\}$, which holds on $\mathcal{E}$ because, by Proposition~\ref{prop:selection-threshold}, the selected provider bids $\hat{c}_{I_t,t} \leq P_{I_t,t}$. We further report the realized frequency of the good event $\mathcal{E}$ of Lemma~\ref{lem:good-event} and compare the post-initialization selections of providers $i\neq i^*$ with the corresponding bounds $M_i$ of Lemma~\ref{lem:unqualified-elimination} for unqualified providers and $\max\{L_i,\, \lceil g(T)\rceil - 1\}$ of Lemma~\ref{lem:qualified-suboptimal-selection} for qualified ones, where $L_i$ is defined in Lemma~\ref{lem:ordinary-selection-threshold}. Accordingly, when reporting $B_{\mathrm{id}}(T)$ and $T_{\mathrm{id}}$, we replace the exploration term $k(T/N)^{\alpha}$ of Theorem~\ref{thm:sample-complexity} by the cap $\lceil g(T)\rceil - 1 < k(T/N)^{\alpha}$ of Lemma~\ref{lem:qualified-suboptimal-selection}, which yields a slightly tighter bound that is also valid by the proof of Theorem~\ref{thm:sample-complexity}. For the long-horizon experiment, we additionally report the size of the estimated-qualified set $\hat{\Qcal}_t$, the probability that $i^*$ is the unique most selected provider, and moving-average selection and payment statistics with window size $K$.

\paragraph{Aggregation and uncertainty.}
Episode-level tables report the mean and standard deviation across runs. For one-pass figures, points and selection shares are shown with normal-approximation 95\% intervals, $\overline{x}\pm 1.96\,s_x/\sqrt{R}$, where $R$ is the number of runs and $s_x$ is their sample standard deviation. Long-horizon trajectory figures use the pointwise median across runs as the central trajectory and the empirical 10th--90th percentile range as a band. Long-horizon checkpoint tables report means across runs, while identification probabilities are the corresponding empirical fractions of runs.

\clearpage

\renewcommand{\resultsdir}{analysis/variants/count_scale2_alpha0.75/gsm8k}
\section{GSM8K weak-competition experiments}\label{app:gsm8k}
This section reports the complete \texttt{GSM8K} results for the full and ladder rosters; Appendix~\ref{app:experimental-details} provides the experimental details. The platform runs with the exploration rule described there, $k = 2$ and $\alpha = 3/4$. One-pass results are averaged over $30$ independent runs, paired across rules and reported with $95\%$ confidence intervals, while long-horizon trajectories report the median and $10$--$90\%$ range over $30$ independent runs. Monetary quantities are per query unless stated otherwise. A moving average has window size $K = 1{,}319$, one pass over the benchmark, so that it reflects a rule's current behaviour rather than its whole history. For the platform and its no-quality-filter ablation the amount paid is the payment $\pi_t$; for the routing baselines it is the query price of the selected provider. On the full roster $\bar c_{(2)} = 34.4$ lies above the mean query price of $i^*$, $\bar{v}_{i^*} = 20.1$, so this is the weak-competition case in which a payment that approaches $\bar c_{(2)}$ exceeds the listed price of $i^*$.

\subsection{Providers and environment}\label{app:gsm8k-env}

\begin{figure}[!htbp]
  \centering
  \sharedlegend{landscape}\\[4pt]
  \rosterpair{quality_vs_query_price}{Quality against mean query price}
  \caption{\texttt{GSM8K} providers. Each point is one provider at its quality $q_i$ and mean query price; the dotted line is the quality threshold $q_{\min}$, and the ring marks the provider with the lowest listed price. Monetary quantities are in units of $10^{-6}$ USD. On the full roster the lowest listed price belongs to $i^*$; on the ladder roster it is shared by two unqualified providers.}
  \label{fig:gsm8k-landscape}
\end{figure}

\begin{table}[!htbp]
  \centering\small
  \caption{\texttt{GSM8K} setting constants for the two rosters. Costs and prices are expected values
  per query.}
  \label{tab:gsm8k-env}
  \begin{tabular}{lrr}
    \toprule
    Quantity & Full roster & Ladder roster \\
    \midrule
    Providers $N$ & 9 & 12 \\
    Benchmark size $K$ (queries in one pass) & 1,319 & 1,319 \\
    Quality threshold $q_{\min}$ & 0.764 & 0.805 \\
    Qualified providers $|\Qcal|$ & 3 & 3 \\
    $i^*$ & Qwen2.5-3B & Qwen2.5-7B \\
    $\bar c_{i^*}$ & 16.1 & 34.4 \\
    $\bar{v}_{i^*}$ & 20.1 & 43.0 \\
    $\bar c_{(2)}$ & 34.4 & 68.7 \\
    $\bar{c}_{(2)} - \bar{c}_{i^*}$ & 18.3 & 34.4 \\
    Lowest listed price & Qwen2.5-3B & Llama-3.2-1B, Qwen2-1.5B \\
    \quad qualified & yes & no \\
    \quad equals $i^*$ & yes & no \\
    $c_{\max}$ & 102.4 & 300.0 \\
    $\sum_{i \notin \Qcal} M_i$ & 28,296 & 23,132 \\
    $\sum_{i \in \Qcal \setminus \{i^*\}} L_i$ & 3,464 & 6,012 \\
    $\lceil g(T) \rceil - 1$ & 1,656 & 1,334 \\
    $B_{\mathrm{id}}(T)$ & 31,977 & 30,010 \\
    $T_{\mathrm{id}}$ & 63,740 & 59,735 \\
    \bottomrule
  \end{tabular}
\end{table}

\begin{table}[!htbp]
  \centering\footnotesize
  \setlength{\tabcolsep}{4pt}
  \caption{\texttt{GSM8K} providers, sorted by mean generation cost. Listed price is in USD per
  $10^6$ output tokens; tokens are the mean number of generated tokens per query (summed over
  the $n_i$ generations of a Best-of-$n_i$ provider); query price and $\bar c_i$ are expected values
  per query. The gap is $\varepsilon_i = q_{\min} - q_i$ for an unqualified provider and
  $\Delta_i = \bar c_i - \bar c_{i^*}$ for a qualified one, with the corresponding selection bound
  $M_i$ from~\eqref{eq:unqualified-selection-threshold} or $L_i$ from~\eqref{eq:ordinary-selection-threshold}; where the exploration-step cap $\lceil g(T) \rceil - 1$ at $T = 70{,}000$ exceeds $L_i$, the applicable cap $\max\{L_i, \lceil g(T) \rceil - 1\}$ of Lemma~\ref{lem:qualified-suboptimal-selection} follows it.}
  \label{tab:gsm8k-providers}
  \begin{tabular}{lrrrrrcll}
    \toprule
    Provider & Listed price & Tokens & Query price & $\bar c_i$ & $q_i$ & Qualified & Gap & Bound \\
    \midrule
    \multicolumn{9}{l}{\textit{Full roster}} \\
    Llama-3.2-3B & 0.08 & 196 & 15.7 & 12.5 & 0.692 & no & $\varepsilon_i = 0.072$ & $M = 15{,}141$ \\
    Qwen2-0.5B & 0.1 & 181 & 18.1 & 14.5 & 0.351 & no & $\varepsilon_i = 0.413$ & $M = 299$ \\
    Qwen2-1.5B & 0.1 & 189 & 18.9 & 15.1 & 0.574 & no & $\varepsilon_i = 0.190$ & $M = 1{,}770$ \\
    Llama-3.2-1B & 0.1 & 199 & 19.9 & 15.9 & 0.364 & no & $\varepsilon_i = 0.400$ & $M = 324$ \\
    Qwen2.5-3B ($i^*$) & 0.065 & 309 & 20.1 & 16.1 & 0.839 & yes & -- & -- \\
    Llama-3-8B & 0.1455 & 192 & 27.9 & 22.3 & 0.650 & no & $\varepsilon_i = 0.114$ & $M = 5{,}466$ \\
    Llama-3.1-8B & 0.1245 & 233 & 29.1 & 23.3 & 0.648 & no & $\varepsilon_i = 0.116$ & $M = 5{,}296$ \\
    Qwen2.5-7B & 0.1465 & 293 & 43.0 & 34.4 & 0.907 & yes & $\Delta_i = 18.3$ & $L = 2{,}025$ \\
    Qwen2-7B & 0.2 & 234 & 46.7 & 37.4 & 0.836 & yes & $\Delta_i = 21.3$ & $L = 1{,}439$; cap 1{,}656 \\
    \midrule
    \multicolumn{9}{l}{\textit{Ladder roster}} \\
    Qwen2-1.5B & 0.1 & 189 & 18.9 & 15.1 & 0.574 & no & $\varepsilon_i = 0.231$ & $M = 1{,}159$ \\
    Llama-3.2-1B & 0.1 & 199 & 19.9 & 15.9 & 0.364 & no & $\varepsilon_i = 0.441$ & $M = 264$ \\
    Llama-3-8B & 0.1455 & 192 & 27.9 & 22.3 & 0.650 & no & $\varepsilon_i = 0.155$ & $M = 2{,}809$ \\
    Qwen2-1.5B@2 & 0.1 & 379 & 37.9 & 30.3 & 0.652 & no & $\varepsilon_i = 0.153$ & $M = 2{,}910$ \\
    Llama-3.2-1B@2 & 0.1 & 398 & 39.8 & 31.8 & 0.410 & no & $\varepsilon_i = 0.395$ & $M = 340$ \\
    Qwen2.5-7B ($i^*$) & 0.1465 & 293 & 43.0 & 34.4 & 0.907 & yes & -- & -- \\
    Llama-3-8B@2 & 0.1455 & 384 & 55.9 & 44.7 & 0.669 & no & $\varepsilon_i = 0.136$ & $M = 3{,}768$ \\
    Qwen2-1.5B@4 & 0.1 & 757 & 75.7 & 60.6 & 0.703 & no & $\varepsilon_i = 0.102$ & $M = 7{,}118$ \\
    Llama-3.2-1B@4 & 0.1 & 795 & 79.5 & 63.6 & 0.440 & no & $\varepsilon_i = 0.365$ & $M = 408$ \\
    Qwen2.5-7B@2 & 0.1465 & 587 & 85.9 & 68.7 & 0.923 & yes & $\Delta_i = 34.4$ & $L = 5{,}544$ \\
    Llama-3-8B@4 & 0.1455 & 768 & 111.7 & 89.4 & 0.677 & no & $\varepsilon_i = 0.128$ & $M = 4{,}356$ \\
    Qwen2.5-7B@4 & 0.1465 & 1,173 & 171.9 & 137.5 & 0.930 & yes & $\Delta_i = 103.1$ & $L = 468$; cap 1{,}334 \\
    \bottomrule
  \end{tabular}
\end{table}
\clearpage
\subsection{One-pass behavior}\label{app:gsm8k-one-pass}

\begin{table}[!htbp]
  \centering\scriptsize
  \setlength{\tabcolsep}{3pt}
  \caption{\texttt{GSM8K} one-pass outcomes, mean $\pm$ $95\%$ half-width over $30$ independent runs, paired across rules.
  Generation cost and amount paid are per query; the amount paid is the payment $\pi_t$ for the platform and its ablation, and the query price for the other rules.}
  \label{tab:gsm8k-one-pass}
  \begin{tabular}{Zccccccc}
    \toprule
    Policy & Unqualified share & $R_T^{\mathrm{qual}}/T$ & $R_T^{\mathrm{gen}}/T$ & Accuracy & Share of $i^*$ & Generation cost & Amount paid \\
    \midrule
    \multicolumn{8}{l}{\textit{Full roster}} \\
    Platform & $0.692 \pm 0.003$ & $0.099 \pm 0.002$ & $3.81 \pm 0.02$ & $0.692 \pm 0.004$ & $0.179 \pm 0.003$ & $18.7 \pm 0.0$ & $37.0 \pm 0.3$ \\
    Uniform among eligible & $0.598 \pm 0.007$ & $0.094 \pm 0.002$ & $7.05 \pm 0.12$ & $0.707 \pm 0.003$ & $0.136 \pm 0.004$ & $22.5 \pm 0.1$ & $28.1 \pm 0.2$ \\
    Lowest listed price among eligible & $0.005 \pm 0.000$ & $0.001 \pm 0.000$ & $0.04 \pm 0.00$ & $0.837 \pm 0.002$ & $0.994 \pm 0.000$ & $16.1 \pm 0.0$ & $20.1 \pm 0.0$ \\
    Lowest listed price, no quality filter & $0.005 \pm 0.000$ & $0.001 \pm 0.000$ & $0.04 \pm 0.00$ & $0.837 \pm 0.002$ & $0.994 \pm 0.000$ & $16.1 \pm 0.0$ & $20.1 \pm 0.0$ \\
    No quality filter (ablation) & $0.799 \pm 0.003$ & $0.182 \pm 0.001$ & $2.30 \pm 0.03$ & $0.600 \pm 0.004$ & $0.135 \pm 0.002$ & $17.2 \pm 0.0$ & $17.9 \pm 0.1$ \\
    \midrule
    \multicolumn{8}{l}{\textit{Ladder roster}} \\
    Platform & $0.771 \pm 0.005$ & $0.148 \pm 0.003$ & $13.69 \pm 0.15$ & $0.684 \pm 0.003$ & $0.125 \pm 0.005$ & $42.1 \pm 0.4$ & $118.1 \pm 1.5$ \\
    Uniform among eligible & $0.690 \pm 0.005$ & $0.126 \pm 0.003$ & $24.81 \pm 0.37$ & $0.711 \pm 0.003$ & $0.103 \pm 0.003$ & $55.3 \pm 0.5$ & $69.2 \pm 0.7$ \\
    Lowest listed price among eligible & $0.446 \pm 0.031$ & $0.087 \pm 0.005$ & $0.20 \pm 0.00$ & $0.775 \pm 0.007$ & $0.552 \pm 0.031$ & $28.2 \pm 0.4$ & $35.2 \pm 0.5$ \\
    Lowest listed price, no quality filter & $0.998 \pm 0.000$ & $0.333 \pm 0.001$ & $0.20 \pm 0.00$ & $0.477 \pm 0.004$ & $0.001 \pm 0.000$ & $15.8 \pm 0.1$ & $19.8 \pm 0.1$ \\
    No quality filter (ablation) & $0.855 \pm 0.001$ & $0.220 \pm 0.000$ & $7.57 \pm 0.09$ & $0.600 \pm 0.003$ & $0.091 \pm 0.001$ & $33.5 \pm 0.1$ & $36.0 \pm 0.1$ \\
    \bottomrule
  \end{tabular}
\end{table}

\begin{figure}[!htbp]
  \centering
  \sharedlegend{policies_markers}\\[4pt]
  \rosterpair{one_pass_quality_regret_vs_generation_cost}{Quality regret against generation cost}
  \caption{\texttt{GSM8K} one pass: quality regret per step against generation cost per query for every routing rule, mean with $95\%$ confidence intervals across runs. Monetary quantities are in units of $10^{-6}$ USD. The two listed-price rules coincide on the full roster, where the lowest listed price is $i^*$.}
  \label{fig:gsm8k-one-pass}
\end{figure}

\begin{figure}[!htbp]
  \centering
  \rosterpairb{one_pass_mechanism_selection_shares}{Platform selection shares}
  \caption{\texttt{GSM8K} one pass: share of steps the platform assigned to each provider, exploration steps included, mean with $95\%$ confidence intervals across runs. Green marks $i^*$, blue the other qualified providers and orange the unqualified providers; the platform never observes these labels.}
  \label{fig:gsm8k-selection-shares}
\end{figure}
\clearpage
\subsection{Long-horizon learning and identification}\label{app:gsm8k-long}

\begin{figure}[!htbp]
  \centering
  \sharedlegend{policies_lines}\\[4pt]
  \rosterpairs{long_horizon_rolling_i_star_share}{Share of $i^*$}\\[-2pt]
  \rosterpairs{long_horizon_unqualified_share}{Share of unqualified selections}\\[-2pt]
  \rosterpairs{long_horizon_i_star_identification_probability}{$i^*$ is the unique most selected provider}
  \caption{\texttt{GSM8K} long horizon: identification of $i^*$ under every routing rule. Shares are moving averages with window size $K$, exploration steps included, median over $30$ independent runs with a $10$--$90\%$ band; the bottom row is the fraction of runs in which $i^*$ has strictly more selections than every other provider. Vertical lines mark one pass and the identification horizon $T_{\mathrm{id}}$. On the full roster the two listed-price rules coincide.}
  \label{fig:gsm8k-identification}
\end{figure}

\begin{figure}[!htbp]
  \centering
  \sharedlegend{policies_lines}\\[4pt]
  \rosterpair{long_horizon_eligible_providers}{Number of eligible providers}
  \caption{\texttt{GSM8K} long horizon: the number of providers not yet ruled out by the quality filter under every rule, median over $30$ independent runs with a $10$--$90\%$ band. The two rules without a filter stay at $N$; the listed-price rule inside a quality set learned with the same filter rules out only the providers it samples before settling on $i^*$; the dash-dotted line marks $|\Qcal|$.}
  \label{fig:gsm8k-eligible}
\end{figure}

\begin{figure}[!htbp]
  \centering
  \sharedlegend{policies_lines}\\[4pt]
  \rosterpair{long_horizon_generation_cost_per_query}{Generation cost per query}\\[2pt]
  \rosterpair{long_horizon_amount_paid_per_query}{Amount paid per query}
  \caption{\texttt{GSM8K} long horizon: generation cost per query as a moving average with window size $K$ (top) and amount paid per query as a moving average with window size $K$ over the steps after initialization (bottom), under every rule, median over $30$ independent runs with a $10$--$90\%$ band. Monetary quantities are in units of $10^{-6}$ USD. The reference lines mark $\bar c_{i^*}$, the mean query price $\bar{v}_{i^*}$ of $i^*$ and $\bar c_{(2)}$. The platform's payment $\pi_t$ includes its exploration steps at $c_{\max}$, which are most of the steps early in the run; the vertical axis of the bottom panels is cut, so the platform's curve enters from above.}
  \label{fig:gsm8k-routing}
\end{figure}

\begin{table}[!htbp]
  \centering\scriptsize
  \setlength{\tabcolsep}{3pt}
  \caption{\texttt{GSM8K} long horizon: outcomes at $T = 70{,}000$, means over $30$ independent runs. Shares of $i^*$ and of unqualified selections are cumulative unless marked as moving averages and include the exploration steps; ``$i^*$ most selected'' is the fraction of runs in which $i^*$ is the unique most selected provider; $|\hat{\Qcal}_t|$ is the number of eligible providers at $T$. Generation cost and amount paid are cumulative per query; for the platform the amount paid includes the exploration steps.}
  \label{tab:gsm8k-long}
  \begin{tabular}{Yrrrrrrrrr}
    \toprule
    Policy & \shortstack{Share\\of $i^*$} & \shortstack{Moving\\avg.} & \shortstack{Unqualified\\share} & $R_t^{\mathrm{qual}}/t$ & $R_t^{\mathrm{gen}}/t$ & \shortstack{Generation\\cost} & \shortstack{Amount\\paid} & \shortstack{$i^*$ most\\selected} & $|\hat{\Qcal}_t|$ \\
    \midrule
    \multicolumn{10}{l}{\textit{Full roster}} \\
    Platform & 0.899 & 0.964 & 0.054 & 0.0056 & 1.077 & 17.1 & 28.2 & 1.00 & 3.0 \\
    Uniform among eligible & 0.314 & 0.333 & 0.057 & 0.0059 & 12.580 & 28.6 & 35.7 & 0.30 & 3.0 \\
    Lowest listed price among eligible & 1.000 & 1.000 & 0.000 & 0.0000 & 0.001 & 16.1 & 20.1 & 1.00 & 9.0 \\
    Lowest listed price, no quality filter & 1.000 & 1.000 & 0.000 & 0.0000 & 0.001 & 16.1 & 20.1 & 1.00 & 9.0 \\
    No quality filter (ablation) & 0.061 & 0.029 & 0.935 & 0.1466 & 0.219 & 13.8 & 13.8 & 0.00 & 9.0 \\
    \midrule
    \multicolumn{10}{l}{\textit{Ladder roster}} \\
    Platform & 0.914 & 0.970 & 0.048 & 0.0070 & 3.643 & 37.9 & 50.5 & 1.00 & 3.2 \\
    Uniform among eligible & 0.319 & 0.334 & 0.044 & 0.0064 & 44.670 & 78.9 & 98.6 & 0.33 & 3.0 \\
    Lowest listed price among eligible & 0.992 & 1.000 & 0.008 & 0.0016 & 0.004 & 34.3 & 42.8 & 1.00 & 9.0 \\
    Lowest listed price, no quality filter & 0.000 & 0.000 & 1.000 & 0.3355 & 0.004 & 15.5 & 19.4 & 0.00 & 12.0 \\
    No quality filter (ablation) & 0.022 & 0.007 & 0.974 & 0.2934 & 0.609 & 18.5 & 18.5 & 0.00 & 12.0 \\
    \bottomrule
  \end{tabular}
\end{table}

\begin{table}[!htbp]
  \centering\small
  \caption{\texttt{GSM8K} exploration steps of the platform, means over $30$ independent runs, over one pass and over the long run of $T = 70{,}000$ queries. Shares are of the steps after initialization; payments and costs are per step of the corresponding horizon, the long-run values as moving averages with window size $K$ at $T$. The last two rows give the platform's moving-average payment $\pi_t$ on non-exploration steps and over all steps at $T$, per query.}
  \label{tab:gsm8k-exploration}
  \begin{tabular}{lrrrr}
    \toprule
    Quantity & \multicolumn{2}{c}{Full roster} & \multicolumn{2}{c}{Ladder roster} \\
     & one pass & long run & one pass & long run \\
    \midrule
    Exploration steps (mean) & 314 & 3,458 & 422 & 3,348 \\
    Share of exploration steps after initialization & 0.240 & 0.049 & 0.323 & 0.048 \\
    Share of exploration steps at $T$ (moving avg.) & -- & 0.036 & -- & 0.030 \\
    Exploration-step payment per step & 24.40 & 3.73 & 95.92 & 9.14 \\
    Exploration-step generation cost per step & 6.58 & 1.27 & 22.26 & 3.01 \\
    Share of exploration steps given to $i^*$ & 0.057 & 0.005 & 0.047 & 0.006 \\
    Share of exploration steps given to unqualified providers & 0.408 & 0.037 & 0.641 & 0.307 \\
    Payment $\pi_t$ at $T$, non-exploration steps (moving avg.) & -- & 27.52 & -- & 44.27 \\
    Payment $\pi_t$ at $T$, all steps (moving avg.) & -- & 30.25 & -- & 52.09 \\
    \bottomrule
  \end{tabular}
\end{table}
\clearpage
\subsection{Regret and theoretical bounds}\label{app:gsm8k-bounds}

\begin{figure}[!htbp]
  \centering
  \sharedlegend{policies_lines}\\[4pt]
  \rosterpair{long_horizon_quality_regret_per_round}{Average quality regret}\\[2pt]
  \rosterpair{long_horizon_generation_regret_per_round}{Average generation regret}
  \caption{\texttt{GSM8K} long horizon: average quality regret $R_t^{\mathrm{qual}}/t$ and average generation regret $R_t^{\mathrm{gen}}/t$ of every routing rule, median over $30$ independent runs with a $10$--$90\%$ band, on logarithmic axes. Monetary quantities are in units of $10^{-6}$ USD. The platform's regrets include its exploration steps.}
  \label{fig:gsm8k-regret}
\end{figure}

\begin{figure}[!htbp]
  \centering
  \sharedlegend{bounds}\\[4pt]
  \rosterpair{long_horizon_quality_regret_vs_bound}{Quality regret}\\[2pt]
  \rosterpair{long_horizon_generation_regret_vs_bound}{Generation regret}\\[2pt]
  \rosterpair{long_horizon_excess_payment_vs_bound}{Excess payment}
  \caption{\texttt{GSM8K} long horizon: the platform's cumulative quality regret, generation regret and excess payment against their bounds, evaluated at every time step. Monetary quantities are in units of $10^{-6}$ USD. The bounds are those for the platform with exploration steps, each with an anytime branch and a gap-dependent branch; the excess-payment bound \eqref{eq:excess-payment-general} without its exploration term has a single branch. The regrets include the exploration steps; the excess payment excludes them.}
  \label{fig:gsm8k-bounds}
\end{figure}

\begin{table}[!htbp]
  \centering\small
  \caption{\texttt{GSM8K} theoretical diagnostics of the platform over the long run ($T = 70{,}000$, $30$ independent runs). Realized quantities are means over runs; each regret bound is the smaller of its two branches at $T$, for the platform with exploration steps. Selections and regrets include the exploration steps and are compared with $B_{\mathrm{id}}(T)$ and the bounds that account for them; the excess payment is given without the exploration steps, against \eqref{eq:excess-payment-general} without its exploration term, and with them, against \eqref{eq:excess-payment-general}. The good event is checked at every selection count of every provider; the payment bounds are those of Appendix~\ref{app:experimental-details}, checked on non-exploration steps.}
  \label{tab:gsm8k-diagnostics}
  \begin{tabular}{lrr}
    \toprule
    Quantity & Full roster & Ladder roster \\
    \midrule
    $B_{\mathrm{id}}(T)$ & 31,977 & 30,010 \\
    $T_{\mathrm{id}}$ & 63,740 & 59,735 \\
    Horizon $T$ of the long run & 70,000 & 70,000 \\
    Selections of providers other than $i^*$ after initialization (mean) & 7,094 & 6,034 \\
    \quad ratio to $B_{\mathrm{id}}(T)$ & 0.222 & 0.201 \\
    Runs in which $i^*$ is the most selected provider at $t = 5{,}000$ & 0.20 & 1.00 \\
    \quad at $t = 10{,}000$ & 1.00 & 1.00 \\
    \quad at $t = T_{\mathrm{id}}$ & 1.00 & 1.00 \\
    \quad at $t = T$ & 1.00 & 1.00 \\
    $R_T^{\mathrm{qual}}$, realized (mean) & 392 & 487 \\
    \quad bound \eqref{eq:quality-regret-bound} & 2,924 & 3,349 \\
    \quad ratio & 0.134 & 0.145 \\
    $R_T^{\mathrm{gen}}$, realized (mean) & 75,384 & 254,988 \\
    \quad bound \eqref{eq:generation-regret-bound} & 144,428 & 805,387 \\
    \quad ratio & 0.522 & 0.317 \\
    $E_T^{\mathrm{pay}}$ without the exploration steps, realized (mean) & 612 & 2,956 \\
    \quad bound \eqref{eq:excess-payment-general} without its exploration term & 623,706 & 2,121,117 \\
    \quad ratio & 0.001 & 0.001 \\
    $E_T^{\mathrm{pay}}$ incl. exploration steps, realized (mean) & 235,875 & 777,199 \\
    \quad bound \eqref{eq:excess-payment-general} & 1,637,592 & 5,823,569 \\
    \quad ratio & 0.144 & 0.133 \\
    Good event held, one pass (runs) & 30 of 30 & 30 of 30 \\
    Good event held, long run, platform & 30 of 30 & 30 of 30 \\
    Good event held, long run, no quality filter & 30 of 30 & 30 of 30 \\
    Payment-bound violations, one pass (steps) & 0 & 0 \\
    \bottomrule
  \end{tabular}
\end{table}

\begin{table}[!htbp]
  \centering\footnotesize
  \setlength{\tabcolsep}{4pt}
  \caption{\texttt{GSM8K} long horizon: selections of every provider other than $i^*$ by the platform after initialization, over $30$ independent runs of $T = 70{,}000$ queries, against its cap, $M_i$ for an unqualified provider and $\max\{L_i, \lceil g(T) \rceil - 1\}$ for a qualified one (Lemma~\ref{lem:qualified-suboptimal-selection}), shown as $\lceil g(T) \rceil - 1$ where the exploration-step cap exceeds $L_i$; the last row of each roster compares the total with $B_{\mathrm{id}}(T)$. Providers are sorted by mean generation cost, as in Table~\ref{tab:gsm8k-providers}. Selections are the mean and the maximum over runs of the number of post-initialization selections, exploration steps included; the exploration column is the mean number of exploration steps among them; the last selection is the mean over runs of the step of the provider's last selection; the ratio is the mean number of selections divided by the cap.}
  \label{tab:gsm8k-selections}
  \begin{tabular}{llrrrrlr}
    \toprule
    Provider & Gap & \multicolumn{2}{c}{Selections} & Exploration & Last selection & Bound & Ratio \\
    & & mean & max & (mean) & (mean step) & & \\
    \midrule
    \multicolumn{8}{l}{\textit{Full roster}} \\
    Llama-3.2-3B & $\varepsilon_i = 0.072$ & 2,008 & 3,240 & 16 & 4,182 & $M = 15{,}141$ & 0.133 \\
    Qwen2-0.5B & $\varepsilon_i = 0.413$ & 35 & 59 & 16 & 280 & $M = 299$ & 0.118 \\
    Qwen2-1.5B & $\varepsilon_i = 0.190$ & 221 & 361 & 17 & 1,184 & $M = 1{,}770$ & 0.125 \\
    Llama-3.2-1B & $\varepsilon_i = 0.400$ & 51 & 142 & 17 & 391 & $M = 324$ & 0.158 \\
    Llama-3-8B & $\varepsilon_i = 0.114$ & 745 & 1,271 & 30 & 8,845 & $M = 5{,}466$ & 0.136 \\
    Llama-3.1-8B & $\varepsilon_i = 0.116$ & 721 & 1,156 & 32 & 11,280 & $M = 5{,}296$ & 0.136 \\
    Qwen2.5-7B & $\Delta_i = 18.3$ & 1,656 & 1,656 & 1,656 & 69,977 & $L = 2{,}025$ & 0.818 \\
    Qwen2-7B & $\Delta_i = 21.3$ & 1,656 & 1,656 & 1,656 & 69,978 & $\lceil g(T) \rceil - 1 = 1{,}656$ & 1.000 \\
    All providers other than $i^*$ & & 7,094 & & 3,440 &  & $B_{\mathrm{id}}(T) = 31{,}977$ & 0.222 \\
    \midrule
    \multicolumn{8}{l}{\textit{Ladder roster}} \\
    Qwen2-1.5B & $\varepsilon_i = 0.231$ & 181 & 340 & 16 & 963 & $M = 1{,}159$ & 0.157 \\
    Llama-3.2-1B & $\varepsilon_i = 0.441$ & 37 & 76 & 15 & 337 & $M = 264$ & 0.140 \\
    Llama-3-8B & $\varepsilon_i = 0.155$ & 401 & 820 & 17 & 1,746 & $M = 2{,}809$ & 0.143 \\
    Qwen2-1.5B@2 & $\varepsilon_i = 0.153$ & 426 & 599 & 18 & 2,144 & $M = 2{,}910$ & 0.146 \\
    Llama-3.2-1B@2 & $\varepsilon_i = 0.395$ & 45 & 103 & 18 & 482 & $M = 340$ & 0.132 \\
    Llama-3-8B@2 & $\varepsilon_i = 0.136$ & 490 & 1,015 & 25 & 3,231 & $M = 3{,}768$ & 0.130 \\
    Qwen2-1.5B@4 & $\varepsilon_i = 0.102$ & 1,059 & 1,334 & 215 & 49,140 & $M = 7{,}118$ & 0.149 \\
    Llama-3.2-1B@4 & $\varepsilon_i = 0.365$ & 58 & 106 & 46 & 1,013 & $M = 408$ & 0.141 \\
    Qwen2.5-7B@2 & $\Delta_i = 34.4$ & 1,334 & 1,334 & 954 & 69,935 & $L = 5{,}544$ & 0.241 \\
    Llama-3-8B@4 & $\varepsilon_i = 0.128$ & 669 & 1,227 & 669 & 28,411 & $M = 4{,}356$ & 0.154 \\
    Qwen2.5-7B@4 & $\Delta_i = 103.1$ & 1,334 & 1,334 & 1,334 & 69,934 & $\lceil g(T) \rceil - 1 = 1{,}334$ & 1.000 \\
    All providers other than $i^*$ & & 6,034 & & 3,328 &  & $B_{\mathrm{id}}(T) = 30{,}010$ & 0.201 \\
    \bottomrule
  \end{tabular}
\end{table}
\clearpage
\subsection{Payments and provider outcomes}\label{app:gsm8k-payments}
% \paragraph{What is evaluated.}
% Figure~\ref{fig:gsm8k-economic} follows the mechanism's realized generation cost, winning bid, critical payment, query price and amount paid over the long run, Figure~\ref{fig:gsm8k-payment-bounds} shows the critical payment of one long-horizon repetition inside the payment bounds of Appendix~\ref{app:experimental-details}, and Tables~\ref{tab:gsm8k-payments} and~\ref{tab:gsm8k-payoffs} report the one-pass payment quantities and every provider's payoff.

\begin{figure}[!htbp]
  \centering
  \sharedlegend{economic}\\[4pt]
  \rosterpair{long_horizon_mechanism_economic_quantities}{Cost, bid, payment and query price}
  \caption{\texttt{GSM8K} long horizon: the platform's realized generation cost, the query price of the same generations and the payment $\pi_t$, as a moving average with window size $K$ over the steps after initialization, and its winning bid and payment on non-exploration steps in the same window, median over $30$ independent runs with a $10$--$90\%$ band. Monetary quantities are in units of $10^{-6}$ USD. The dotted, dash-dotted and dashed lines mark $\bar c_{i^*}$, $\bar c_{(2)}$ and the mean query price $\bar{v}_{i^*}$ of $i^*$. The vertical axis is cut; $\pi_t$ enters from above, since early in the run most steps are exploration steps and paid $c_{\max}$.}
  \label{fig:gsm8k-economic}
\end{figure}

\begin{figure}[!htbp]
  \centering
  \sharedlegend{payment_bounds}\\[4pt]
  \rosterpair{payment_bounds_long_horizon}{Payment on non-exploration steps and its bounds}
  \caption{\texttt{GSM8K} long horizon: the payment $\pi_t$ of the first run on non-exploration steps, sampled at logarithmically spaced time steps, inside the payment bounds of Appendix~\ref{app:experimental-details} for the provider selected in that step; sampled steps that were exploration steps are left blank. Monetary quantities are in units of $10^{-6}$ USD. The bounds depend on the selected provider's cost and selection count, and close in as the counts grow.}
  \label{fig:gsm8k-payment-bounds}
\end{figure}

\begin{table}[!htbp]
  \centering\small
  \caption{\texttt{GSM8K} one pass: the platform's economic quantities, mean $\pm$ $95\%$ half-width over $30$ independent runs. The payment $\pi_t$ is over all steps; the winning bid, the payment on non-exploration steps and the second excess-payment row are over non-exploration steps only. A provider's payoff is its total payment minus its total generation cost over the pass.}
  \label{tab:gsm8k-payments}
  \begin{tabular}{lrr}
    \toprule
    Quantity & Full roster & Ladder roster \\
    \midrule
    Generation cost per query, whole pass & $18.68 \pm 0.05$ & $42.10 \pm 0.36$ \\
    Query price per query, whole pass & $23.35 \pm 0.06$ & $52.62 \pm 0.44$ \\
    Payment $\pi_t$ per query, whole pass & $37.03 \pm 0.29$ & $118.10 \pm 1.49$ \\
    Generation cost per query, after initialization & $18.67 \pm 0.05$ & $42.01 \pm 0.36$ \\
    Query price per query, after initialization & $23.34 \pm 0.06$ & $52.51 \pm 0.45$ \\
    Payment $\pi_t$ per query, after initialization & $36.58 \pm 0.29$ & $116.43 \pm 1.51$ \\
    Winning bid, non-exploration steps & $15.75 \pm 0.07$ & $28.77 \pm 0.75$ \\
    Payment $\pi_t$ per query, non-exploration steps & $15.80 \pm 0.07$ & $28.93 \pm 0.75$ \\
    Exploration-step payment per query, after initialization & $24.40 \pm 0.36$ & $95.92 \pm 1.98$ \\
    Share of exploration steps after initialization & $0.240 \pm 0.004$ & $0.323 \pm 0.007$ \\
    Excess payment per step $E_T^{\mathrm{pay}}/T$, whole pass & $16.676 \pm 0.238$ & $76.051 \pm 1.524$ \\
    Excess payment per query, non-exploration steps & $0.000 \pm 0.000$ & $0.002 \pm 0.004$ \\
    Mean provider payoff over the pass & $2688.6 \pm 43.6$ & $8353.3 \pm 170.9$ \\
    Lowest provider payoff over providers and runs & 903.8 & 2814.4 \\
    \bottomrule
  \end{tabular}
\end{table}

\begin{table}[!htbp]
  \centering\footnotesize
  \caption{\texttt{GSM8K} one pass: each provider's selections and payoff under the platform over $30$ independent runs, exploration steps included, with providers sorted by mean generation cost as in Table~\ref{tab:gsm8k-providers}. The last column counts the runs in which the provider's payoff over the pass was negative.}
  \label{tab:gsm8k-payoffs}
  \begin{tabular}{lrrrrr}
    \toprule
    Provider & Selections (mean) & Payoff (mean) & Payoff (min) & Payoff (max) & Negative runs \\
    \midrule
    \multicolumn{6}{l}{\textit{Full roster}} \\
    Llama-3.2-3B & 366.6 & 1506.2 & 1264.8 & 1675.0 & 0 \\
    Qwen2-0.5B & 36.3 & 1516.9 & 903.8 & 1872.4 & 0 \\
    Qwen2-1.5B & 210.2 & 1576.2 & 1257.8 & 1877.1 & 0 \\
    Llama-3.2-1B & 52.3 & 1584.3 & 1166.0 & 1943.0 & 0 \\
    Qwen2.5-3B ($i^*$) & 236.0 & 1618.9 & 1243.1 & 1848.1 & 0 \\
    Llama-3-8B & 128.6 & 2464.5 & 1764.6 & 3205.6 & 0 \\
    Llama-3.1-8B & 118.9 & 2613.9 & 1708.6 & 3915.3 & 0 \\
    Qwen2.5-7B & 85.0 & 5795.4 & 5587.7 & 5932.6 & 0 \\
    Qwen2-7B & 85.0 & 5520.5 & 5261.4 & 5733.9 & 0 \\
    \midrule
    \multicolumn{6}{l}{\textit{Ladder roster}} \\
    Qwen2-1.5B & 177.8 & 4814.9 & 4303.1 & 4955.2 & 0 \\
    Llama-3.2-1B & 37.8 & 4671.6 & 2814.4 & 4881.2 & 0 \\
    Llama-3-8B & 241.5 & 4908.6 & 4503.1 & 5163.9 & 0 \\
    Qwen2-1.5B@2 & 189.9 & 5156.8 & 4516.7 & 5882.3 & 0 \\
    Llama-3.2-1B@2 & 45.8 & 5225.4 & 4016.8 & 5863.3 & 0 \\
    Qwen2.5-7B ($i^*$) & 165.2 & 5567.6 & 4594.4 & 6821.7 & 0 \\
    Llama-3-8B@2 & 121.9 & 6622.6 & 5220.5 & 7875.4 & 0 \\
    Qwen2-1.5B@4 & 80.2 & 11988.5 & 6944.9 & 16058.1 & 0 \\
    Llama-3.2-1B@4 & 53.6 & 10912.0 & 3421.6 & 16011.6 & 0 \\
    Qwen2.5-7B@2 & 69.2 & 15030.6 & 10862.7 & 15919.4 & 0 \\
    Llama-3-8B@4 & 68.0 & 14278.0 & 13922.0 & 14696.6 & 0 \\
    Qwen2.5-7B@4 & 68.0 & 11062.7 & 10464.0 & 11551.1 & 0 \\
    \bottomrule
  \end{tabular}
\end{table}

\clearpage
\renewcommand{\resultsdir}{analysis/variants/count_scale2_alpha0.75/gpqa}
\section{GPQA weak-competition experiments}\label{app:gpqa}
This section reports the complete \texttt{GPQA} results for the full and ladder rosters, in the same form as the \texttt{GSM8K} section; Appendix~\ref{app:experimental-details} provides the experimental details, and the platform runs with the exploration rule described there. One-pass results are averaged over $30$ independent runs, paired across rules and reported with $95\%$ confidence intervals, while long-horizon trajectories report the median and $10$--$90\%$ range over $30$ independent runs. Monetary quantities are per query unless stated otherwise. A moving average has window size $K$, one pass over the benchmark, \ie, $546$ steps on the full roster and $545$ on the ladder roster, where the reward alignment drops one question. The amount paid follows the convention of Appendix~\ref{app:gsm8k}. On both rosters the identification horizon $T_{\mathrm{id}}$ lies beyond the long run of $T = 70{,}000$ queries, so the long-horizon results describe learning before that horizon, and no vertical line marks it in the figures. On both rosters $\bar c_{(2)}$ lies above the mean query price $\bar{v}_{i^*}$ of $i^*$, as on \texttt{GSM8K}, but here unqualified providers remain in $\hat{\Qcal}_t$ throughout the run, so the payment on non-exploration steps does not approach $\bar c_{(2)}$; the strong-competition roster of Appendix~\ref{app:gpqa-strong} is the \texttt{GPQA} setting in which it does.

\subsection{Providers and environment}\label{app:gpqa-env}

\begin{figure}[!htbp]
  \centering
  \sharedlegend{landscape}\\[4pt]
  \rosterpair{quality_vs_query_price}{Quality against mean query price}
  \caption{\texttt{GPQA} providers. Each point is one provider at its quality $q_i$ and mean query price; the dotted line is the quality threshold $q_{\min}$, and the ring marks the provider with the lowest listed price. Monetary quantities are in units of $10^{-6}$ USD. On the full roster the lowest listed price belongs to a qualified provider that is not $i^*$; on the ladder roster it is shared by two unqualified providers.}
  \label{fig:gpqa-landscape}
\end{figure}

\begin{table}[!htbp]
  \centering\small
  \caption{\texttt{GPQA} setting constants for the two rosters. Costs and prices are mean values per query.}
  \label{tab:gpqa-env}
  \begin{tabular}{lrr}
    \toprule
    Quantity & Full roster & Ladder roster \\
    \midrule
    Providers $N$ & 9 & 12 \\
    Benchmark size $K$ (queries in one pass) & 546 & 545 \\
    Quality threshold $q_{\min}$ & 0.079 & 0.2396 \\
    Qualified providers $|\Qcal|$ & 8 & 4 \\
    $i^*$ & Qwen2-1.5B & Llama-3-8B \\
    $\bar c_{i^*}$ & 13.0 & 37.1 \\
    $\bar{v}_{i^*}$ & 16.2 & 46.4 \\
    $\bar c_{(2)}$ & 23.6 & 74.2 \\
    $\bar{c}_{(2)} - \bar{c}_{i^*}$ & 10.6 & 37.1 \\
    Lowest listed price & Qwen2.5-3B & Llama-3.2-1B, Qwen2-1.5B \\
    \quad qualified & yes & no \\
    \quad equals $i^*$ & no & no \\
    $c_{\max}$ & 102.4 & 300.0 \\
    $\sum_{i \notin \Qcal} M_i$ & 120,055 & 1,211,234 \\
    $\sum_{i \in \Qcal \setminus \{i^*\}} L_i$ & 17,090 & 5,222 \\
    $\lceil g(T) \rceil - 1$ & 1,656 & 1,334 \\
    $B_{\mathrm{id}}(T)$ & 141,210 & 1,218,576 \\
    $T_{\mathrm{id}}$ & 315,294 & 2,540,927 \\
    \bottomrule
  \end{tabular}
\end{table}

\begin{table}[!htbp]
  \centering\footnotesize
  \setlength{\tabcolsep}{4pt}
  \caption{\texttt{GPQA} providers, sorted by mean generation cost. Listed price is in USD per $10^6$ output tokens; tokens are the mean number of generated tokens per query (summed over the $n_i$ generations of a Best-of-$n_i$ provider); query price and $\bar c_i$ are mean values per query. The gap is $\varepsilon_i = q_{\min} - q_i$ for an unqualified provider and $\Delta_i = \bar c_i - \bar c_{i^*}$ for a qualified one, with the corresponding selection bound $M_i$ from~\eqref{eq:unqualified-selection-threshold} or $L_i$ from~\eqref{eq:ordinary-selection-threshold}; where the exploration-step cap $\lceil g(T) \rceil - 1$ at $T = 70{,}000$ exceeds $L_i$, the applicable cap $\max\{L_i, \lceil g(T) \rceil - 1\}$ of Lemma~\ref{lem:qualified-suboptimal-selection} follows it.}
  \label{tab:gpqa-providers}
  \begin{tabular}{lrrrrrcll}
    \toprule
    Provider & Listed price & Tokens & Query price & $\bar c_i$ & $q_i$ & Qualified & Gap & Bound \\
    \midrule
    \multicolumn{9}{l}{\textit{Full roster}} \\
    Qwen2-1.5B ($i^*$) & 0.1 & 162 & 16.2 & 13.0 & 0.152 & yes & -- & -- \\
    Qwen2-0.5B & 0.1 & 184 & 18.4 & 14.7 & 0.051 & no & $\varepsilon_i = 0.028$ & $M = 120{,}055$ \\
    Qwen2.5-3B & 0.065 & 453 & 29.5 & 23.6 & 0.141 & yes & $\Delta_i = 10.6$ & $L = 6{,}861$ \\
    Llama-3.2-3B & 0.08 & 392 & 31.3 & 25.1 & 0.194 & yes & $\Delta_i = 12.1$ & $L = 5{,}106$ \\
    Llama-3.2-1B & 0.1 & 368 & 36.8 & 29.4 & 0.108 & yes & $\Delta_i = 16.5$ & $L = 2{,}564$ \\
    Llama-3-8B & 0.1455 & 319 & 46.4 & 37.1 & 0.259 & yes & $\Delta_i = 24.1$ & $L = 1{,}085$; cap 1{,}656 \\
    Llama-3.1-8B & 0.1245 & 432 & 53.8 & 43.0 & 0.182 & yes & $\Delta_i = 30.0$ & $L = 658$; cap 1{,}656 \\
    Qwen2-7B & 0.2 & 308 & 61.7 & 49.3 & 0.206 & yes & $\Delta_i = 36.3$ & $L = 426$; cap 1{,}656 \\
    Qwen2.5-7B & 0.1465 & 433 & 63.4 & 50.7 & 0.197 & yes & $\Delta_i = 37.7$ & $L = 390$; cap 1{,}656 \\
    \midrule
    \multicolumn{9}{l}{\textit{Ladder roster}} \\
    Qwen2-1.5B & 0.1 & 163 & 16.3 & 13.0 & 0.152 & no & $\varepsilon_i = 0.088$ & $M = 9{,}964$ \\
    Qwen2-1.5B@2 & 0.1 & 325 & 32.5 & 26.0 & 0.192 & no & $\varepsilon_i = 0.047$ & $M = 38{,}751$ \\
    Llama-3.2-1B & 0.1 & 368 & 36.8 & 29.5 & 0.108 & no & $\varepsilon_i = 0.132$ & $M = 4{,}063$ \\
    Llama-3-8B ($i^*$) & 0.1455 & 319 & 46.4 & 37.1 & 0.260 & yes & -- & -- \\
    Qwen2.5-7B & 0.1465 & 433 & 63.4 & 50.7 & 0.197 & no & $\varepsilon_i = 0.043$ & $M = 47{,}908$ \\
    Qwen2-1.5B@4 & 0.1 & 651 & 65.1 & 52.1 & 0.223 & no & $\varepsilon_i = 0.017$ & $M = 357{,}226$ \\
    Llama-3.2-1B@2 & 0.1 & 736 & 73.6 & 58.9 & 0.151 & no & $\varepsilon_i = 0.088$ & $M = 9{,}906$ \\
    Llama-3-8B@2 & 0.1455 & 638 & 92.8 & 74.2 & 0.280 & yes & $\Delta_i = 37.1$ & $L = 4{,}674$ \\
    Qwen2.5-7B@2 & 0.1465 & 866 & 126.9 & 101.5 & 0.227 & no & $\varepsilon_i = 0.012$ & $M = 705{,}391$ \\
    Llama-3.2-1B@4 & 0.1 & 1,473 & 147.3 & 117.8 & 0.192 & no & $\varepsilon_i = 0.048$ & $M = 38{,}025$ \\
    Llama-3-8B@4 & 0.1455 & 1,276 & 185.6 & 148.5 & 0.290 & yes & $\Delta_i = 111.4$ & $L = 392$; cap 1{,}334 \\
    Qwen2.5-7B@4 & 0.1465 & 1,732 & 253.7 & 203.0 & 0.252 & yes & $\Delta_i = 165.8$ & $L = 156$; cap 1{,}334 \\
    \bottomrule
  \end{tabular}
\end{table}
\clearpage
\subsection{One-pass behavior}\label{app:gpqa-one-pass}

\begin{table}[!htbp]
  \centering\scriptsize
  \setlength{\tabcolsep}{3pt}
  \caption{\texttt{GPQA} one-pass outcomes, mean $\pm$ $95\%$ half-width over $30$ independent runs, paired across rules. Generation cost and amount paid are per query; the amount paid is the payment $\pi_t$ for the platform and its ablation, and the query price for the other rules. Within one pass the quality filter rules out no provider on either roster, so the two listed-price rules coincide.}
  \label{tab:gpqa-one-pass}
  \begin{tabular}{Zccccccc}
    \toprule
    Policy & Unqualified share & $R_T^{\mathrm{qual}}/T$ & $R_T^{\mathrm{gen}}/T$ & Accuracy & Share of $i^*$ & Generation cost & Amount paid \\
    \midrule
    \multicolumn{8}{l}{\textit{Full roster}} \\
    Platform & $0.185 \pm 0.006$ & $0.005 \pm 0.000$ & $14.22 \pm 0.04$ & $0.154 \pm 0.004$ & $0.214 \pm 0.006$ & $27.4 \pm 0.2$ & $64.4 \pm 0.2$ \\
    Uniform among eligible & $0.112 \pm 0.005$ & $0.003 \pm 0.000$ & $18.84 \pm 0.22$ & $0.161 \pm 0.005$ & $0.109 \pm 0.005$ & $31.9 \pm 0.3$ & $39.9 \pm 0.4$ \\
    Lowest listed price among eligible & $0.002 \pm 0.000$ & $0.000 \pm 0.000$ & $10.72 \pm 0.00$ & $0.143 \pm 0.004$ & $0.002 \pm 0.000$ & $23.7 \pm 0.0$ & $29.6 \pm 0.1$ \\
    Lowest listed price, no quality filter & $0.002 \pm 0.000$ & $0.000 \pm 0.000$ & $10.72 \pm 0.00$ & $0.143 \pm 0.004$ & $0.002 \pm 0.000$ & $23.7 \pm 0.0$ & $29.6 \pm 0.1$ \\
    No quality filter (ablation) & $0.233 \pm 0.007$ & $0.007 \pm 0.000$ & $9.73 \pm 0.14$ & $0.139 \pm 0.004$ & $0.279 \pm 0.008$ & $22.6 \pm 0.2$ & $24.0 \pm 0.3$ \\
    \midrule
    \multicolumn{8}{l}{\textit{Ladder roster}} \\
    Platform & $0.715 \pm 0.001$ & $0.047 \pm 0.000$ & $32.96 \pm 0.04$ & $0.201 \pm 0.006$ & $0.092 \pm 0.001$ & $64.4 \pm 0.2$ & $196.5 \pm 0.8$ \\
    Uniform among eligible & $0.664 \pm 0.008$ & $0.039 \pm 0.001$ & $42.11 \pm 0.85$ & $0.211 \pm 0.005$ & $0.087 \pm 0.005$ & $75.9 \pm 1.0$ & $94.9 \pm 1.2$ \\
    Lowest listed price among eligible & $0.993 \pm 0.000$ & $0.108 \pm 0.000$ & $0.94 \pm 0.00$ & $0.130 \pm 0.003$ & $0.002 \pm 0.000$ & $22.3 \pm 0.2$ & $27.9 \pm 0.3$ \\
    Lowest listed price, no quality filter & $0.993 \pm 0.000$ & $0.108 \pm 0.000$ & $0.94 \pm 0.00$ & $0.130 \pm 0.003$ & $0.002 \pm 0.000$ & $22.3 \pm 0.2$ & $27.9 \pm 0.3$ \\
    No quality filter (ablation) & $0.794 \pm 0.002$ & $0.056 \pm 0.000$ & $15.94 \pm 0.20$ & $0.193 \pm 0.005$ & $0.113 \pm 0.001$ & $45.4 \pm 0.3$ & $51.0 \pm 0.5$ \\
    \bottomrule
  \end{tabular}
\end{table}

\begin{figure}[!htbp]
  \centering
  \sharedlegend{policies_markers}\\[4pt]
  \rosterpair{one_pass_quality_regret_vs_generation_cost}{Quality regret against generation cost}
  \caption{\texttt{GPQA} one pass: quality regret per step against generation cost per query for every routing rule, mean with $95\%$ confidence intervals across runs. Monetary quantities are in units of $10^{-6}$ USD. On both rosters the two listed-price rules coincide, since the quality filter rules out no provider within one pass.}
  \label{fig:gpqa-one-pass}
\end{figure}

\begin{figure}[!htbp]
  \centering
  \rosterpairb{one_pass_mechanism_selection_shares}{Platform selection shares}
  \caption{\texttt{GPQA} one pass: share of steps the platform assigned to each provider, exploration steps included, mean with $95\%$ confidence intervals across runs. Green marks $i^*$, blue the other qualified providers and orange the unqualified providers; the platform never observes these labels.}
  \label{fig:gpqa-selection-shares}
\end{figure}
\clearpage
\subsection{Long-horizon learning and identification}\label{app:gpqa-long}

\begin{figure}[!htbp]
  \centering
  \sharedlegend{policies_lines}\\[4pt]
  \rosterpairs{long_horizon_rolling_i_star_share}{Share of $i^*$}\\[-2pt]
  \rosterpairs{long_horizon_unqualified_share}{Share of unqualified selections}\\[-2pt]
  \rosterpairs{long_horizon_i_star_identification_probability}{$i^*$ is the unique most selected provider}
  \caption{\texttt{GPQA} long horizon: identification of $i^*$ under every routing rule. Shares are moving averages with window size $K$, exploration steps included, median over $30$ independent runs with a $10$--$90\%$ band; the bottom row is the fraction of runs in which $i^*$ has strictly more selections than every other provider. The vertical line marks one pass over the benchmark; the identification horizon lies beyond $T$. On the full roster the two listed-price rules coincide.}
  \label{fig:gpqa-identification}
\end{figure}

\begin{figure}[!htbp]
  \centering
  \sharedlegend{policies_lines}\\[4pt]
  \rosterpair{long_horizon_eligible_providers}{Number of eligible providers}
  \caption{\texttt{GPQA} long horizon: the number of providers not yet ruled out by the quality filter under every rule, median over $30$ independent runs with a $10$--$90\%$ band. On the full roster the single unqualified provider is never ruled out, so every curve stays at $N$; on the ladder roster the two rules without a filter stay at $N$, and the dash-dotted line marks $|\Qcal|$.}
  \label{fig:gpqa-eligible}
\end{figure}

\begin{figure}[!htbp]
  \centering
  \sharedlegend{policies_lines}\\[4pt]
  \rosterpair{long_horizon_generation_cost_per_query}{Generation cost per query}\\[2pt]
  \rosterpair{long_horizon_amount_paid_per_query}{Amount paid per query}
  \caption{\texttt{GPQA} long horizon: generation cost per query as a moving average with window size $K$ (top) and amount paid per query as a moving average with window size $K$ over the steps after initialization (bottom), under every rule, median over $30$ independent runs with a $10$--$90\%$ band. Monetary quantities are in units of $10^{-6}$ USD. The reference lines mark $\bar c_{i^*}$, the mean query price $\bar{v}_{i^*}$ of $i^*$, $\bar c_{(2)}$ and, on the full roster, the cost of the qualified provider with the lowest listed price. The platform's payment $\pi_t$ includes its exploration steps at $c_{\max}$; the vertical axis of the bottom panels is cut, so the platform's curve enters from above.}
  \label{fig:gpqa-routing}
\end{figure}

\begin{table}[!htbp]
  \centering\scriptsize
  \setlength{\tabcolsep}{3pt}
  \caption{\texttt{GPQA} long horizon: outcomes at $T = 70{,}000$, means over $30$ independent runs. Shares of $i^*$ and of unqualified selections are cumulative unless marked as moving averages and include the exploration steps; ``$i^*$ most selected'' is the fraction of runs in which $i^*$ is the unique most selected provider; $|\hat{\Qcal}_t|$ is the number of eligible providers at $T$. Generation cost and amount paid are cumulative per query; for the platform the amount paid includes the exploration steps.}
  \label{tab:gpqa-long}
  \begin{tabular}{Yrrrrrrrrr}
    \toprule
    Policy & \shortstack{Share\\of $i^*$} & \shortstack{Moving\\avg.} & \shortstack{Unqualified\\share} & $R_t^{\mathrm{qual}}/t$ & $R_t^{\mathrm{gen}}/t$ & \shortstack{Generation\\cost} & \shortstack{Amount\\paid} & \shortstack{$i^*$ most\\selected} & $|\hat{\Qcal}_t|$ \\
    \midrule
    \multicolumn{10}{l}{\textit{Full roster}} \\
    Platform & 0.676 & 0.779 & 0.158 & 0.0044 & 4.234 & 17.2 & 27.4 & 1.00 & 9.0 \\
    Uniform among eligible & 0.111 & 0.111 & 0.111 & 0.0031 & 18.785 & 31.8 & 39.7 & 0.10 & 9.0 \\
    Lowest listed price among eligible & 0.000 & 0.000 & 0.000 & 0.0000 & 10.582 & 23.6 & 29.5 & 0.00 & 9.0 \\
    Lowest listed price, no quality filter & 0.000 & 0.000 & 0.000 & 0.0000 & 10.582 & 23.6 & 29.5 & 0.00 & 9.0 \\
    No quality filter (ablation) & 0.801 & 0.913 & 0.172 & 0.0048 & 0.709 & 13.7 & 13.7 & 1.00 & 9.0 \\
    \midrule
    \multicolumn{10}{l}{\textit{Ladder roster}} \\
    Platform & 0.701 & 0.909 & 0.242 & 0.0117 & 10.288 & 46.0 & 63.2 & 1.00 & 8.5 \\
    Uniform among eligible & 0.117 & 0.161 & 0.533 & 0.0196 & 54.062 & 89.7 & 112.2 & 0.10 & 6.3 \\
    Lowest listed price among eligible & 0.972 & 1.000 & 0.027 & 0.0028 & 0.007 & 36.6 & 45.7 & 1.00 & 10.0 \\
    Lowest listed price, no quality filter & 0.000 & 0.000 & 1.000 & 0.1098 & 0.007 & 21.2 & 26.6 & 0.00 & 12.0 \\
    No quality filter (ablation) & 0.017 & 0.003 & 0.980 & 0.0847 & 0.635 & 15.8 & 15.9 & 0.00 & 12.0 \\
    \bottomrule
  \end{tabular}
\end{table}

\begin{table}[!htbp]
    \centering\small
    \caption{\texttt{GPQA} exploration steps of the platform, means over $30$ independent runs, over one pass and over the long run of $T = 70{,}000$ queries. Shares are of the steps after initialization; payments and costs are per step of the corresponding horizon, the long-run values as moving averages with window size $K$ at $T$. The last two rows give the platform's moving-average payment $\pi_t$ on non-exploration steps and over all steps at $T$, per query.}
    \label{tab:gpqa-exploration}
    \begin{tabular}{lrrrr}
    \toprule
    Quantity & \multicolumn{2}{c}{Full roster} & \multicolumn{2}{c}{Ladder roster} \\
     & one pass & long run & one pass & long run \\
    \midrule
    Exploration steps (mean) & 295 & 10,967 & 326 & 6,916 \\
    Share of exploration steps after initialization & 0.550 & 0.157 & 0.612 & 0.099 \\
    Share of exploration steps at $T$ (moving avg.) & -- & 0.128 & -- & 0.066 \\
    Exploration-step payment per step & 55.39 & 13.13 & 179.69 & 19.78 \\
    Exploration-step generation cost per step & 19.62 & 4.68 & 52.58 & 8.38 \\
    Share of exploration steps given to $i^*$ & 0.055 & 0.001 & 0.059 & 0.003 \\
    Share of exploration steps given to unqualified providers & 0.057 & 0.002 & 0.629 & 0.419 \\
    Payment $\pi_t$ at $T$, non-exploration steps (moving avg.) & -- & 13.19 & -- & 37.49 \\
    Payment $\pi_t$ at $T$, all steps (moving avg.) & -- & 24.63 & -- & 54.80 \\
    \bottomrule
    \end{tabular}
\end{table}
\clearpage
\subsection{Regret and theoretical bounds}\label{app:gpqa-bounds}

\begin{figure}[!htbp]
  \centering
  \sharedlegend{policies_lines}\\[4pt]
  \rosterpair{long_horizon_quality_regret_per_round}{Average quality regret}\\[2pt]
  \rosterpair{long_horizon_generation_regret_per_round}{Average generation regret}
  \caption{\texttt{GPQA} long horizon: average quality regret $R_t^{\mathrm{qual}}/t$ and average generation regret $R_t^{\mathrm{gen}}/t$ of every routing rule, median over $30$ independent runs with a $10$--$90\%$ band, on logarithmic axes. Monetary quantities are in units of $10^{-6}$ USD. The platform's regrets include its exploration steps.}
  \label{fig:gpqa-regret}
\end{figure}

\begin{figure}[!htbp]
  \centering
  \sharedlegend{bounds}\\[4pt]
  \rosterpair{long_horizon_quality_regret_vs_bound}{Quality regret}\\[2pt]
  \rosterpair{long_horizon_generation_regret_vs_bound}{Generation regret}\\[2pt]
  \rosterpair{long_horizon_excess_payment_vs_bound}{Excess payment}
  \caption{\texttt{GPQA} long horizon: the platform's cumulative quality regret, generation regret and excess payment against their bounds, evaluated at every time step. Monetary quantities are in units of $10^{-6}$ USD. The bounds are those for the platform with exploration steps, Theorems~\ref{thm:quality-guarantee} and~\ref{thm:generation-regret}, each with an anytime branch and a gap-dependent branch; the excess-payment bound \eqref{eq:excess-payment-general} without its exploration term has a single branch. The regrets include the exploration steps; the excess payment excludes them.}
  \label{fig:gpqa-bounds}
\end{figure}

\begin{table}[!htbp]
  \centering\small
  \caption{\texttt{GPQA} theoretical diagnostics of the platform over the long run ($T = 70{,}000$, $30$ independent runs). Realized quantities are means over runs; each regret bound is the smaller of its two branches at $T$, for the platform with exploration steps. Selections and regrets include the exploration steps and are compared with $B_{\mathrm{id}}(T)$ and the bounds that account for them; the excess payment is given without the exploration steps, against \eqref{eq:excess-payment-general} without its exploration term, and with them, against \eqref{eq:excess-payment-general}. The identification horizon $T_{\mathrm{id}}$ lies beyond $T$ on both rosters, so the corresponding row is empty. The good event is checked at every selection count of every provider; the payment bounds are those of Appendix~\ref{app:experimental-details}, checked on non-exploration steps.}
  \label{tab:gpqa-diagnostics}
  \begin{tabular}{lrr}
    \toprule
    Quantity & Full roster & Ladder roster \\
    \midrule
    $B_{\mathrm{id}}(T)$ & 141,210 & 1,218,576 \\
    $T_{\mathrm{id}}$ & 315,294 & 2,540,927 \\
    Horizon $T$ of the long run & 70,000 & 70,000 \\
    Selections of providers other than $i^*$ after initialization (mean) & 22,641 & 20,907 \\
    \quad ratio to $B_{\mathrm{id}}(T)$ & 0.160 & 0.017 \\
    Runs in which $i^*$ is the most selected provider at $t = 5{,}000$ & 1.00 & 0.00 \\
    \quad at $t = 10{,}000$ & 1.00 & 0.00 \\
    \quad at $t = T_{\mathrm{id}}$ & -- & -- \\
    \quad at $t = T$ & 1.00 & 1.00 \\
    $R_T^{\mathrm{qual}}$, realized (mean) & 309 & 818 \\
    \quad bound \eqref{eq:quality-regret-bound} & 3,362 & 14,133 \\
    \quad ratio & 0.092 & 0.058 \\
    $R_T^{\mathrm{gen}}$, realized (mean) & 296,412 & 720,159 \\
    \quad bound \eqref{eq:generation-regret-bound} & 596,889 & 8,642,653 \\
    \quad ratio & 0.497 & 0.083 \\
    $E_T^{\mathrm{pay}}$ without the exploration steps, realized (mean) & 1,149 & 2,710 \\
    \quad bound \eqref{eq:excess-payment-general} without its exploration term & 623,803 & 2,121,051 \\
    \quad ratio & 0.002 & 0.001 \\
    $E_T^{\mathrm{pay}}$ incl. exploration steps, realized (mean) & 865,643 & 1,564,346 \\
    \quad bound \eqref{eq:excess-payment-general} & 1,798,620 & 5,735,545 \\
    \quad ratio & 0.481 & 0.273 \\
    Good event held, one pass (runs) & 30 of 30 & 30 of 30 \\
    Good event held, long run, platform & 30 of 30 & 30 of 30 \\
    Good event held, long run, no quality filter & 30 of 30 & 30 of 30 \\
    Payment-bound violations, one pass (steps) & 0 & 0 \\
    \bottomrule
  \end{tabular}
\end{table}

\begin{table}[!htbp]
  \centering\footnotesize
  \setlength{\tabcolsep}{4pt}
  \caption{\texttt{GPQA} long horizon: selections of every provider other than $i^*$ by the platform after initialization, over $30$ independent runs of $T = 70{,}000$ queries, against its cap, $M_i$ for an unqualified provider and $\max\{L_i, \lceil g(T) \rceil - 1\}$ for a qualified one (Lemma~\ref{lem:qualified-suboptimal-selection}), shown as $\lceil g(T) \rceil - 1$ where the exploration-step cap exceeds $L_i$; the last row of each roster compares the total with $B_{\mathrm{id}}(T)$. Providers are sorted by mean generation cost, as in Table~\ref{tab:gpqa-providers}. Selections are the mean and the maximum over runs of the number of post-initialization selections, exploration steps included; the exploration column is the mean number of exploration steps among them; the last selection is the mean over runs of the step of the provider's last selection; the ratio is the mean number of selections divided by the cap.}
  \label{tab:gpqa-selections}
  \begin{tabular}{llrrrrlr}
    \toprule
    Provider & Gap & \multicolumn{2}{c}{Selections} & Exploration & Last selection & Bound & Ratio \\
    & & mean & max & (mean) & (mean step) & & \\
    \midrule
    \multicolumn{8}{l}{\textit{Full roster}} \\
    Qwen2-0.5B & $\varepsilon_i = 0.028$ & 11,049 & 12,859 & 17 & 69,838 & $M = 120{,}055$ & 0.092 \\
    Qwen2.5-3B & $\Delta_i = 10.6$ & 1,656 & 1,656 & 1,255 & 69,981 & $L = 6{,}861$ & 0.241 \\
    Llama-3.2-3B & $\Delta_i = 12.1$ & 1,656 & 1,656 & 1,406 & 69,980 & $L = 5{,}106$ & 0.324 \\
    Llama-3.2-1B & $\Delta_i = 16.5$ & 1,656 & 1,656 & 1,649 & 69,980 & $L = 2{,}564$ & 0.646 \\
    Llama-3-8B & $\Delta_i = 24.1$ & 1,656 & 1,656 & 1,656 & 69,979 & $\lceil g(T) \rceil - 1 = 1{,}656$ & 1.000 \\
    Llama-3.1-8B & $\Delta_i = 30.0$ & 1,656 & 1,656 & 1,656 & 69,980 & $\lceil g(T) \rceil - 1 = 1{,}656$ & 1.000 \\
    Qwen2-7B & $\Delta_i = 36.3$ & 1,656 & 1,656 & 1,656 & 69,980 & $\lceil g(T) \rceil - 1 = 1{,}656$ & 1.000 \\
    Qwen2.5-7B & $\Delta_i = 37.7$ & 1,656 & 1,656 & 1,656 & 69,980 & $\lceil g(T) \rceil - 1 = 1{,}656$ & 1.000 \\
    All providers other than $i^*$ & & 22,641 & & 10,951 &  & $B_{\mathrm{id}}(T) = 141{,}210$ & 0.160 \\
    \midrule
    \multicolumn{8}{l}{\textit{Ladder roster}} \\
    Qwen2-1.5B & $\varepsilon_i = 0.088$ & 1,379 & 1,979 & 16 & 4,064 & $M = 9{,}964$ & 0.138 \\
    Qwen2-1.5B@2 & $\varepsilon_i = 0.047$ & 5,486 & 7,649 & 17 & 12,345 & $M = 38{,}751$ & 0.142 \\
    Llama-3.2-1B & $\varepsilon_i = 0.132$ & 545 & 716 & 17 & 4,314 & $M = 4{,}063$ & 0.134 \\
    Qwen2.5-7B & $\varepsilon_i = 0.043$ & 2,983 & 3,157 & 27 & 69,918 & $M = 47{,}908$ & 0.062 \\
    Qwen2-1.5B@4 & $\varepsilon_i = 0.017$ & 2,563 & 3,051 & 37 & 69,716 & $M = 357{,}226$ & 0.007 \\
    Llama-3.2-1B@2 & $\varepsilon_i = 0.088$ & 1,281 & 1,456 & 113 & 59,060 & $M = 9{,}906$ & 0.129 \\
    Llama-3-8B@2 & $\Delta_i = 37.1$ & 1,334 & 1,334 & 1,334 & 69,936 & $L = 4{,}674$ & 0.285 \\
    Qwen2.5-7B@2 & $\varepsilon_i = 0.012$ & 1,334 & 1,334 & 1,334 & 69,936 & $M = 705{,}391$ & 0.002 \\
    Llama-3.2-1B@4 & $\varepsilon_i = 0.048$ & 1,334 & 1,334 & 1,334 & 69,936 & $M = 38{,}025$ & 0.035 \\
    Llama-3-8B@4 & $\Delta_i = 111.4$ & 1,334 & 1,334 & 1,334 & 69,936 & $\lceil g(T) \rceil - 1 = 1{,}334$ & 1.000 \\
    Qwen2.5-7B@4 & $\Delta_i = 165.8$ & 1,334 & 1,334 & 1,334 & 69,936 & $\lceil g(T) \rceil - 1 = 1{,}334$ & 1.000 \\
    All providers other than $i^*$ & & 20,907 & & 6,897 &  & $B_{\mathrm{id}}(T) = 1{,}218{,}576$ & 0.017 \\
    \bottomrule
  \end{tabular}
\end{table}
\clearpage
\subsection{Payments and provider outcomes}\label{app:gpqa-payments}

\begin{figure}[!htbp]
  \centering
  \sharedlegend{economic}\\[4pt]
  \rosterpair{long_horizon_mechanism_economic_quantities}{Cost, bid, payment and query price}
  \caption{\texttt{GPQA} long horizon: the platform's realized generation cost, the query price of the same generations and the payment $\pi_t$, as a moving average with window size $K$ over the steps after initialization, and its winning bid and payment on non-exploration steps in the same window, median over $30$ independent runs with a $10$--$90\%$ band. Monetary quantities are in units of $10^{-6}$ USD. The dotted, dash-dotted and dashed lines mark $\bar c_{i^*}$, $\bar c_{(2)}$ and the mean query price $\bar{v}_{i^*}$ of $i^*$. The vertical axis is cut; $\pi_t$ enters from above, since early in the run most steps are exploration steps and paid $c_{\max}$.}
  \label{fig:gpqa-economic}
\end{figure}

\begin{figure}[!htbp]
  \centering
  \sharedlegend{payment_bounds}\\[4pt]
  \rosterpair{payment_bounds_long_horizon}{Payment on non-exploration steps and its bounds}
  \caption{\texttt{GPQA} long horizon: the payment $\pi_t$ of the first run on non-exploration steps, sampled at logarithmically spaced time steps, inside the payment bounds of Appendix~\ref{app:experimental-details} for the provider selected in that step; sampled steps that were exploration steps are left blank. Monetary quantities are in units of $10^{-6}$ USD. The bounds depend on the selected provider's cost and selection count, and close in as the counts grow; the vertical axis is cut at $1.6\,\bar c_{(2)}$, above which the upper bound sits at $c_{\max}$ in the early time steps.}
  \label{fig:gpqa-payment-bounds}
\end{figure}

\begin{table}[!htbp]
  \centering\small
  \caption{\texttt{GPQA} one pass: the platform's economic quantities, mean $\pm$ $95\%$ half-width over $30$ independent runs. The payment $\pi_t$ is over all steps; the winning bid, the payment on non-exploration steps and the second excess-payment row are over non-exploration steps only. A provider's payoff is its total payment minus its total generation cost over the pass.}
  \label{tab:gpqa-payments}
  \begin{tabular}{lrr}
    \toprule
    Quantity & Full roster & Ladder roster \\
    \midrule
    Generation cost per query, whole pass & $27.45 \pm 0.20$ & $64.42 \pm 0.21$ \\
    Query price per query, whole pass & $34.31 \pm 0.24$ & $80.53 \pm 0.26$ \\
    Payment $\pi_t$ per query, whole pass & $64.43 \pm 0.25$ & $196.53 \pm 0.84$ \\
    Generation cost per query, after initialization & $27.37 \pm 0.20$ & $64.14 \pm 0.21$ \\
    Query price per query, after initialization & $34.21 \pm 0.25$ & $80.18 \pm 0.26$ \\
    Payment $\pi_t$ per query, after initialization & $63.80 \pm 0.25$ & $194.20 \pm 0.86$ \\
    Winning bid, non-exploration steps & $16.35 \pm 0.45$ & $26.43 \pm 0.59$ \\
    Payment $\pi_t$ per query, non-exploration steps & $16.59 \pm 0.45$ & $26.96 \pm 0.59$ \\
    Exploration-step payment per query, after initialization & $55.39 \pm 0.47$ & $179.69 \pm 1.08$ \\
    Share of exploration steps after initialization & $0.550 \pm 0.005$ & $0.612 \pm 0.004$ \\
    Excess payment per step $E_T^{\mathrm{pay}}/T$, whole pass & $44.018 \pm 0.352$ & $140.199 \pm 0.810$ \\
    Excess payment per query, non-exploration steps & $0.182 \pm 0.042$ & $0.000 \pm 0.000$ \\
    Mean provider payoff over the pass & $2243.8 \pm 19.5$ & $5999.7 \pm 40.6$ \\
    Lowest provider payoff over providers and runs & 1343.3 & 2870.6 \\
    \bottomrule
  \end{tabular}
\end{table}

\begin{table}[!htbp]
  \centering\footnotesize
  \caption{\texttt{GPQA} one pass: each provider's selections and payoff under the platform over $30$ independent runs, exploration steps included, with providers sorted by mean generation cost as in Table~\ref{tab:gpqa-providers}. The last column counts the runs in which the provider's payoff over the pass was negative.}
  \label{tab:gpqa-payoffs}
  \begin{tabular}{lrrrrr}
    \toprule
    Provider & Selections (mean) & Payoff (mean) & Payoff (min) & Payoff (max) & Negative runs \\
    \midrule
    \multicolumn{6}{l}{\textit{Full roster}} \\
    Qwen2-1.5B ($i^*$) & 116.7 & 1555.1 & 1343.3 & 1986.5 & 0 \\
    Qwen2-0.5B & 101.1 & 1594.5 & 1435.4 & 1888.2 & 0 \\
    Qwen2.5-3B & 56.5 & 1883.0 & 1560.1 & 2205.5 & 0 \\
    Llama-3.2-3B & 51.4 & 2100.5 & 1593.1 & 2870.4 & 0 \\
    Llama-3.2-1B & 44.2 & 3063.6 & 1996.9 & 3278.2 & 0 \\
    Llama-3-8B & 44.0 & 2861.2 & 2719.4 & 2994.6 & 0 \\
    Llama-3.1-8B & 44.0 & 2609.1 & 2493.5 & 2728.7 & 0 \\
    Qwen2-7B & 44.0 & 2247.9 & 1844.5 & 2560.2 & 0 \\
    Qwen2.5-7B & 44.0 & 2278.8 & 2122.5 & 2431.9 & 0 \\
    \midrule
    \multicolumn{6}{l}{\textit{Ladder roster}} \\
    Qwen2-1.5B & 84.5 & 4935.4 & 4805.1 & 5068.8 & 0 \\
    Qwen2-1.5B@2 & 62.8 & 4886.1 & 4441.9 & 5491.0 & 0 \\
    Llama-3.2-1B & 58.8 & 4942.7 & 4600.5 & 5403.9 & 0 \\
    Llama-3-8B ($i^*$) & 50.3 & 5308.7 & 4857.2 & 5847.9 & 0 \\
    Qwen2.5-7B & 38.7 & 6965.4 & 5994.6 & 7906.3 & 0 \\
    Qwen2-1.5B@4 & 39.4 & 6897.7 & 4887.2 & 8577.4 & 0 \\
    Llama-3.2-1B@2 & 35.3 & 8251.0 & 7355.8 & 8529.1 & 0 \\
    Llama-3-8B@2 & 35.0 & 7929.8 & 7581.1 & 8159.3 & 0 \\
    Qwen2.5-7B@2 & 35.0 & 6918.9 & 6730.4 & 7165.6 & 0 \\
    Llama-3.2-1B@4 & 35.0 & 6327.6 & 6072.9 & 6828.6 & 0 \\
    Llama-3-8B@4 & 35.0 & 5278.7 & 4828.8 & 5951.7 & 0 \\
    Qwen2.5-7B@4 & 35.0 & 3354.2 & 2870.6 & 3942.7 & 0 \\
    \bottomrule
  \end{tabular}
\end{table}

\clearpage
\renewcommand{\resultsdir}{analysis/variants/count_scale2_alpha0.75/gpqa}
\section{GPQA strong-competition experiment}\label{app:gpqa-strong}
This section reports the results for the strong-competition roster on \texttt{GPQA}, in the same form as the two preceding sections; Appendix~\ref{app:experimental-details} provides the experimental details, and the platform runs with the exploration rule described there. The roster is the newest large model and the smallest model of each family, and the threshold is placed so that both large models qualify and both small ones do not. The setting is chosen for its payment ordering: the second-lowest qualified cost $\bar c_{(2)} = 50.7$ lies below the mean query price $\bar{v}_{i^*}$ of $i^*$, $53.8$, and above $\bar c_{i^*} = 43.0$, so a payment that approaches $\bar c_{(2)}$ pays the providers more than the generation cost of $i^*$ and the platform less than the listed price of $i^*$. Conventions are those of Appendix~\ref{app:gpqa}; a moving average has window size $K = 546$. The exploration-step cap at $T = 70{,}000$, $\lceil g(T) \rceil - 1 = 3{,}043$, lies below $L_i = 6{,}659$ of the single qualified competitor, so the cap is $L_i$ itself, and $B_{\mathrm{id}}(T) = 76{,}846$; the identification horizon $T_{\mathrm{id}} = 153{,}697$ lies beyond the long run, so no vertical line marks it.

\subsection{Providers and environment}\label{app:strong-env}

\begin{figure}[!htbp]
  \centering
  \sharedlegendof{strong}{landscape}\\[4pt]
  \strongpanel{quality_vs_query_price}{Quality against mean query price}
  \caption{\texttt{GPQA} strong-competition roster. Each point is one provider at its quality $q_i$ and mean query price; the dotted line is the quality threshold $q_{\min}$, and the ring marks the providers with the lowest listed price, both unqualified. Monetary quantities are in units of $10^{-6}$ USD.}
  \label{fig:strong-landscape}
\end{figure}

\begin{table}[!htbp]
  \centering\small
  \caption{\texttt{GPQA} strong-competition setting constants. Costs and prices are mean values per query.}
  \label{tab:strong-env}
  \begin{tabular}{lr}
    \toprule
    Quantity & Strong-competition roster \\
    \midrule
    Providers $N$ & 4 \\
    Benchmark size $K$ (queries in one pass) & 546 \\
    Quality threshold $q_{\min}$ & 0.1451107 \\
    Qualified providers $|\Qcal|$ & 2 \\
    $i^*$ & Llama-3.1-8B \\
    $\bar c_{i^*}$ & 43.0 \\
    $\bar{v}_{i^*}$ & 53.8 \\
    $\bar c_{(2)}$ & 50.7 \\
    $\bar{c}_{(2)} - \bar{c}_{i^*}$ & 7.7 \\
    Lowest listed price & Llama-3.2-1B, Qwen2-0.5B \\
    \quad qualified & no \\
    \quad equals $i^*$ & no \\
    $c_{\max}$ & 75.0 \\
    $\sum_{i \notin \Qcal} M_i$ & 70,187 \\
    $\sum_{i \in \Qcal \setminus \{i^*\}} L_i$ & 6,659 \\
    $\lceil g(T) \rceil - 1$ & 3,043 \\
    $B_{\mathrm{id}}(T)$ & 76,846 \\
    $T_{\mathrm{id}}$ & 153,697 \\
    \bottomrule
  \end{tabular}
\end{table}

\begin{table}[!htbp]
  \centering\footnotesize
  \setlength{\tabcolsep}{4pt}
  \caption{\texttt{GPQA} strong-competition providers, sorted by mean generation cost; columns as in Table~\ref{tab:gpqa-providers}.}
  \label{tab:strong-providers}
  \begin{tabular}{lrrrrrcll}
    \toprule
    Provider & Listed price & Tokens & Query price & $\bar c_i$ & $q_i$ & Qualified & Gap & Bound \\
    \midrule
    \multicolumn{9}{l}{\textit{Strong-competition roster}} \\
    Qwen2-0.5B & 0.1 & 184 & 18.4 & 14.7 & 0.051 & no & $\varepsilon_i = 0.094$ & $M = 8{,}074$ \\
    Llama-3.2-1B & 0.1 & 368 & 36.8 & 29.4 & 0.108 & no & $\varepsilon_i = 0.037$ & $M = 62{,}113$ \\
    Llama-3.1-8B ($i^*$) & 0.1245 & 432 & 53.8 & 43.0 & 0.182 & yes & -- & -- \\
    Qwen2.5-7B & 0.1465 & 433 & 63.4 & 50.7 & 0.197 & yes & $\Delta_i = 7.7$ & $L = 6{,}659$ \\
    \bottomrule
  \end{tabular}
\end{table}
\clearpage
\subsection{One-pass behavior}\label{app:strong-one-pass}

\begin{table}[!htbp]
  \centering\scriptsize
  \setlength{\tabcolsep}{3pt}
  \caption{\texttt{GPQA} strong-competition one-pass outcomes, mean $\pm$ $95\%$ half-width over $30$ independent runs, paired across rules; columns and the amount-paid convention as in Table~\ref{tab:gpqa-one-pass}. Within one pass the quality filter rules out no provider, so the two listed-price rules coincide.}
  \label{tab:strong-one-pass}
  \begin{tabular}{Zccccccc}
    \toprule
    Policy & Unqualified share & $R_T^{\mathrm{qual}}/T$ & $R_T^{\mathrm{gen}}/T$ & Accuracy & Share of $i^*$ & Generation cost & Amount paid \\
    \midrule
    \multicolumn{8}{l}{\textit{Strong-competition roster}} \\
    Platform & $0.707 \pm 0.000$ & $0.058 \pm 0.000$ & $1.13 \pm 0.00$ & $0.100 \pm 0.004$ & $0.147 \pm 0.000$ & $26.4 \pm 0.1$ & $46.0 \pm 0.3$ \\
    Uniform among eligible & $0.503 \pm 0.008$ & $0.033 \pm 0.001$ & $1.93 \pm 0.04$ & $0.136 \pm 0.005$ & $0.246 \pm 0.007$ & $34.4 \pm 0.2$ & $42.9 \pm 0.3$ \\
    Lowest listed price among eligible & $0.996 \pm 0.000$ & $0.066 \pm 0.000$ & $0.01 \pm 0.00$ & $0.085 \pm 0.004$ & $0.002 \pm 0.000$ & $22.1 \pm 0.2$ & $27.6 \pm 0.2$ \\
    Lowest listed price, no quality filter & $0.996 \pm 0.000$ & $0.066 \pm 0.000$ & $0.01 \pm 0.00$ & $0.085 \pm 0.004$ & $0.002 \pm 0.000$ & $22.1 \pm 0.2$ & $27.6 \pm 0.2$ \\
    No quality filter (ablation) & $0.930 \pm 0.002$ & $0.081 \pm 0.000$ & $0.22 \pm 0.01$ & $0.071 \pm 0.004$ & $0.042 \pm 0.002$ & $18.6 \pm 0.2$ & $19.2 \pm 0.2$ \\
    \bottomrule
  \end{tabular}
\end{table}

\begin{figure}[!htbp]
  \centering
  \sharedlegendof{strong}{policies_markers}\\[4pt]
  \strongpair{one_pass_quality_regret_vs_generation_cost}{Quality regret against generation cost}{one_pass_mechanism_selection_shares}{Platform selection shares}
  \caption{\texttt{GPQA} strong-competition one pass: quality regret per step against generation cost per query for every routing rule (left), and the share of steps the platform assigned to each provider, exploration steps included (right), means with $95\%$ confidence intervals across runs. Monetary quantities are in units of $10^{-6}$ USD. In the right panel green marks $i^*$, blue the other qualified provider and orange the unqualified providers.}
  \label{fig:strong-one-pass}
\end{figure}
\clearpage
\subsection{Long-horizon learning and identification}\label{app:strong-long}

\begin{figure}[!htbp]
  \centering
  \sharedlegendof{strong}{policies_lines}\\[4pt]
  \strongpair{long_horizon_rolling_i_star_share}{Share of $i^*$}{long_horizon_unqualified_share}{Share of unqualified selections}\\[2pt]
  \strongpair{long_horizon_i_star_identification_probability}{$i^*$ is the unique most selected provider}{long_horizon_eligible_providers}{Number of eligible providers}
  \caption{\texttt{GPQA} strong-competition long horizon: identification of $i^*$ under every routing rule. Shares are moving averages with window size $K$, exploration steps included, median over $30$ independent runs with a $10$--$90\%$ band; the leader panel is the fraction of runs in which $i^*$ has strictly more selections than every other provider; the last panel is the number of providers not yet ruled out, with the dash-dotted line at $|\Qcal|$. The vertical line marks one pass over the benchmark. The two listed-price rules coincide until the filter rules out the two cheap providers.}
  \label{fig:strong-identification}
\end{figure}

\begin{figure}[!htbp]
  \centering
  \sharedlegendof{strong}{policies_lines}\\[4pt]
  \strongpair{long_horizon_generation_cost_per_query}{Generation cost per query}{long_horizon_amount_paid_per_query}{Amount paid per query}
  \caption{\texttt{GPQA} strong-competition long horizon: generation cost per query as a moving average with window size $K$ (left) and amount paid per query as a moving average with window size $K$ over the steps after initialization (right), under every rule, median over $30$ independent runs with a $10$--$90\%$ band. Monetary quantities are in units of $10^{-6}$ USD. The reference lines mark $\bar c_{i^*}$, $\bar c_{(2)}$ and the mean query price $\bar{v}_{i^*}$ of $i^*$. The platform's payment $\pi_t$ includes its exploration steps at $c_{\max}$; the vertical axis of the right panel is cut.}
  \label{fig:strong-routing}
\end{figure}

\begin{table}[!htbp]
  \centering\scriptsize
  \setlength{\tabcolsep}{3pt}
  \caption{\texttt{GPQA} strong-competition long horizon: outcomes at $T = 70{,}000$, means over $30$ independent runs; columns as in Table~\ref{tab:gpqa-long}.}
  \label{tab:strong-long}
  \begin{tabular}{Yrrrrrrrrr}
    \toprule
    Policy & \shortstack{Share\\of $i^*$} & \shortstack{Moving\\avg.} & \shortstack{Unqualified\\share} & $R_t^{\mathrm{qual}}/t$ & $R_t^{\mathrm{gen}}/t$ & \shortstack{Generation\\cost} & \shortstack{Amount\\paid} & \shortstack{$i^*$ most\\selected} & $|\hat{\Qcal}_t|$ \\
    \midrule
    \multicolumn{10}{l}{\textit{Strong-competition roster}} \\
    Platform & 0.821 & 0.967 & 0.135 & 0.0060 & 0.335 & 41.3 & 47.0 & 1.00 & 2.0 \\
    Uniform among eligible & 0.428 & 0.498 & 0.144 & 0.0063 & 3.291 & 44.1 & 55.1 & 0.60 & 2.0 \\
    Lowest listed price among eligible & 0.865 & 1.000 & 0.135 & 0.0060 & 0.000 & 41.0 & 51.2 & 1.00 & 2.0 \\
    Lowest listed price, no quality filter & 0.000 & 0.000 & 1.000 & 0.0657 & 0.000 & 22.1 & 27.6 & 0.00 & 4.0 \\
    No quality filter (ablation) & 0.001 & 0.000 & 0.999 & 0.0939 & 0.003 & 14.8 & 14.8 & 0.00 & 4.0 \\
    \bottomrule
  \end{tabular}
\end{table}

\begin{table}[!htbp]
  \centering\small
  \caption{\texttt{GPQA} strong-competition exploration steps of the platform, means over $30$ independent runs, over one pass and over the long run; rows as in Table~\ref{tab:gpqa-exploration}.}
  \label{tab:strong-exploration}
  \begin{tabular}{lrr}
    \toprule
    Quantity & \multicolumn{2}{c}{Strong-competition roster} \\
     & one pass & long run \\
    \midrule
    Exploration steps (mean) & 253 & 4,005 \\
    Share of exploration steps after initialization & 0.467 & 0.057 \\
    Share of exploration steps at $T$ (moving avg.) & -- & 0.033 \\
    Exploration-step payment per step & 34.76 & 2.47 \\
    Exploration-step generation cost per step & 18.30 & 1.70 \\
    Share of exploration steps given to $i^*$ & 0.312 & 0.188 \\
    Share of exploration steps given to unqualified providers & 0.375 & 0.051 \\
    Payment $\pi_t$ at $T$, non-exploration steps (moving avg.) & -- & 47.37 \\
    Payment $\pi_t$ at $T$, all steps (moving avg.) & -- & 48.28 \\
    \bottomrule
  \end{tabular}
\end{table}
\clearpage
\subsection{Regret and theoretical bounds}\label{app:strong-bounds}

\begin{figure}[!htbp]
  \centering
  \sharedlegendof{strong}{policies_lines}\\[4pt]
  \strongpair{long_horizon_quality_regret_per_round}{Average quality regret}{long_horizon_generation_regret_per_round}{Average generation regret}
  \caption{\texttt{GPQA} strong-competition long horizon: average quality regret $R_t^{\mathrm{qual}}/t$ and average generation regret $R_t^{\mathrm{gen}}/t$ of every routing rule, median over $30$ independent runs with a $10$--$90\%$ band, on logarithmic axes. Monetary quantities are in units of $10^{-6}$ USD. The platform's regrets include its exploration steps.}
  \label{fig:strong-regret}
\end{figure}

\begin{figure}[!htbp]
  \centering
  \sharedlegendof{strong}{bounds}\\[4pt]
  \strongpair{long_horizon_quality_regret_vs_bound}{Quality regret}{long_horizon_generation_regret_vs_bound}{Generation regret}\\[2pt]
  \strongpanel{long_horizon_excess_payment_vs_bound}{Excess payment}
  \caption{\texttt{GPQA} strong-competition long horizon: the platform's cumulative quality regret, generation regret and excess payment against their bounds, evaluated at every time step. Monetary quantities are in units of $10^{-6}$ USD. The bounds are those of Theorems~\ref{thm:quality-guarantee} and~\ref{thm:generation-regret} and of \eqref{eq:excess-payment-general} without its exploration term, as in Figure~\ref{fig:gpqa-bounds}. The regrets include the exploration steps; the excess payment excludes them.}
  \label{fig:strong-bounds}
\end{figure}

\begin{table}[!htbp]
  \centering\small
  \caption{\texttt{GPQA} strong-competition theoretical diagnostics of the platform over the long run ($T = 70{,}000$, $30$ independent runs); rows as in Table~\ref{tab:gpqa-diagnostics}.}
  \label{tab:strong-diagnostics}
  \begin{tabular}{lr}
    \toprule
    Quantity & Strong-competition roster \\
    \midrule
    $B_{\mathrm{id}}(T)$ & 76,846 \\
    $T_{\mathrm{id}}$ & 153,697 \\
    Horizon $T$ of the long run & 70,000 \\
    Selections of providers other than $i^*$ after initialization (mean) & 12,508 \\
    \quad ratio to $B_{\mathrm{id}}(T)$ & 0.163 \\
    Runs in which $i^*$ is the most selected provider at $t = 5{,}000$ & 0.00 \\
    \quad at $t = 10{,}000$ & 0.00 \\
    \quad at $t = T_{\mathrm{id}}$ & -- \\
    \quad at $t = T$ & 1.00 \\
    $R_T^{\mathrm{qual}}$, realized (mean) & 420 \\
    \quad bound \eqref{eq:quality-regret-bound} & 3,076 \\
    \quad ratio & 0.136 \\
    $R_T^{\mathrm{gen}}$, realized (mean) & 23,433 \\
    \quad bound \eqref{eq:generation-regret-bound} & 51,268 \\
    \quad ratio & 0.457 \\
    $E_T^{\mathrm{pay}}$ without the exploration steps, realized (mean) & 111 \\
    \quad bound \eqref{eq:excess-payment-general} without its exploration term & 300,158 \\
    \quad ratio & 0.000 \\
    $E_T^{\mathrm{pay}}$ incl. exploration steps, realized (mean) & 97,343 \\
    \quad bound \eqref{eq:excess-payment-general} & 595,681 \\
    \quad ratio & 0.163 \\
    Good event held, one pass (runs) & 30 of 30 \\
    Good event held, long run, platform & 30 of 30 \\
    Good event held, long run, no quality filter & 30 of 30 \\
    Payment-bound violations, one pass (steps) & 0 \\
    \bottomrule
  \end{tabular}
\end{table}

\begin{table}[!htbp]
  \centering\footnotesize
  \setlength{\tabcolsep}{4pt}
  \caption{\texttt{GPQA} strong-competition long horizon: selections of every provider other than $i^*$ by the platform after initialization, over $30$ independent runs of $T = 70{,}000$ queries, against its cap; columns as in Table~\ref{tab:gpqa-selections}.}
  \label{tab:strong-selections}
  \begin{tabular}{llrrrrlr}
    \toprule
    Provider & Gap & \multicolumn{2}{c}{Selections} & Exploration & Last selection & Bound & Ratio \\
    & & mean & max & (mean) & (mean step) & & \\
    \midrule
    \multicolumn{8}{l}{\textit{Strong-competition roster}} \\
    Qwen2-0.5B & $\varepsilon_i = 0.094$ & 1,172 & 1,585 & 16 & 1,746 & $M = 8{,}074$ & 0.145 \\
    Llama-3.2-1B & $\varepsilon_i = 0.037$ & 8,293 & 12,542 & 190 & 10,980 & $M = 62{,}113$ & 0.134 \\
    Qwen2.5-7B & $\Delta_i = 7.7$ & 3,043 & 3,043 & 3,043 & 69,999 & $L = 6{,}659$ & 0.457 \\
    All providers other than $i^*$ & & 12,508 & & 3,249 &  & $B_{\mathrm{id}}(T) = 76{,}846$ & 0.163 \\
    \bottomrule
  \end{tabular}
\end{table}
\clearpage
\subsection{Payments and provider outcomes}\label{app:strong-payments}

\begin{figure}[!htbp]
  \centering
  \sharedlegendof{strong}{economic}\\[4pt]
  \sharedlegendof{strong}{payment_bounds}\\[4pt]
  \strongpair{long_horizon_mechanism_economic_quantities}{Cost, bid, payment and query price}{payment_bounds_long_horizon}{Payment on non-exploration steps and its bounds}
  \caption{\texttt{GPQA} strong-competition long horizon. Left: the platform's realized generation cost, the query price of the same generations and the payment $\pi_t$, as a moving average with window size $K$ over the steps after initialization, and its winning bid and payment on non-exploration steps in the same window, median over $30$ independent runs with a $10$--$90\%$ band; the dotted, dash-dotted and dashed lines mark $\bar c_{i^*}$, $\bar c_{(2)}$ and the mean query price $\bar{v}_{i^*}$ of $i^*$, and the vertical axis is cut. Right: the payment $\pi_t$ of the first run on non-exploration steps, sampled at logarithmically spaced time steps, inside the payment bounds of Appendix~\ref{app:experimental-details} for the provider selected in that step; sampled steps that were exploration steps are left blank. Monetary quantities are in units of $10^{-6}$ USD.}
  \label{fig:strong-economic}
\end{figure}

\begin{table}[!htbp]
  \centering\small
  \caption{\texttt{GPQA} strong-competition one pass: the platform's economic quantities, mean $\pm$ $95\%$ half-width over $30$ independent runs; rows as in Table~\ref{tab:gpqa-payments}.}
  \label{tab:strong-payments}
  \begin{tabular}{lr}
    \toprule
    Quantity & Strong-competition roster \\
    \midrule
    Generation cost per query, whole pass & $26.36 \pm 0.13$ \\
    Query price per query, whole pass & $32.94 \pm 0.16$ \\
    Payment $\pi_t$ per query, whole pass & $45.96 \pm 0.27$ \\
    Generation cost per query, after initialization & $26.29 \pm 0.13$ \\
    Query price per query, after initialization & $32.87 \pm 0.16$ \\
    Payment $\pi_t$ per query, after initialization & $45.75 \pm 0.27$ \\
    Winning bid, non-exploration steps & $14.66 \pm 0.32$ \\
    Payment $\pi_t$ per query, non-exploration steps & $20.13 \pm 0.51$ \\
    Exploration-step payment per query, after initialization & $34.76 \pm 0.00$ \\
    Share of exploration steps after initialization & $0.467 \pm 0.000$ \\
    Excess payment per step $E_T^{\mathrm{pay}}/T$, whole pass & $11.428 \pm 0.000$ \\
    Excess payment per query, non-exploration steps & $0.000 \pm 0.000$ \\
    Mean provider payoff over the pass & $2675.9 \pm 35.2$ \\
    Lowest provider payoff over providers and runs & 1726.3 \\
    \bottomrule
  \end{tabular}
\end{table}

\begin{table}[!htbp]
  \centering\footnotesize
  \caption{\texttt{GPQA} strong-competition one pass: each provider's selections and payoff under the platform over $30$ independent runs, exploration steps included, with providers sorted by mean generation cost. The last column counts the runs in which the provider's payoff over the pass was negative.}
  \label{tab:strong-payoffs}
  \begin{tabular}{lrrrrr}
    \toprule
    Provider & Selections (mean) & Payoff (mean) & Payoff (min) & Payoff (max) & Negative runs \\
    \midrule
    \multicolumn{6}{l}{\textit{Strong-competition roster}} \\
    Qwen2-0.5B & 306.0 & 2581.9 & 1726.3 & 3657.6 & 0 \\
    Llama-3.2-1B & 80.0 & 3637.5 & 3389.7 & 3815.3 & 0 \\
    Llama-3.1-8B ($i^*$) & 80.0 & 2564.5 & 2386.3 & 2730.8 & 0 \\
    Qwen2.5-7B & 80.0 & 1919.8 & 1749.9 & 2130.2 & 0 \\
    \bottomrule
  \end{tabular}
\end{table}

\FloatBarrier

\end{document}